\documentclass[11pt]{article}
\usepackage[final]{neurips_2023} %added

\usepackage{natbib}
\usepackage{graphicx}
\usepackage[dvipsnames]{xcolor}

\newcommand{\Expthin}[1]{\mathbb{E}[#1]}

%for table colors
%\usepackage{colortbl}
\usepackage{nicematrix}

\newcommand{\squeeze}{\textstyle}

\usepackage[utf8]{inputenc} % allow utf-8 input
\usepackage[T1]{fontenc}    % use 8-bit T1 fonts
\usepackage{hyperref}       % hyperlinks
\usepackage{url}            % simple URL typesetting
\usepackage{booktabs}       % professional-quality tables
\usepackage{amsfonts}       % blackboard math symbols
\usepackage{nicefrac}       % compact symbols for 1/2, etc.
\usepackage{microtype}      % microtypography
\usepackage{xcolor}         % colors\usepackage{xcolor}
\hypersetup{
  colorlinks   = true, %Colours links instead of ugly boxes
  urlcolor     = blue, %Colour for external hyperlinks
  linkcolor    = {blue!50!black}, %Colour of internal links
  citecolor   = {blue!50!black} %Colour of citations
}
\usepackage[flushleft]{threeparttable}
\usepackage{colortbl} % for \rowcolor command
\usepackage{cellspace}
\setlength\cellspacetoplimit{2pt} % add extra space above cell content
\setlength\cellspacebottomlimit{2pt} % add extra space below cell content
\usepackage{multirow}

\usepackage{cancel}

\usepackage[colorinlistoftodos,bordercolor=orange,backgroundcolor=orange!20,linecolor=orange,textsize=scriptsize]{todonotes}

%for nice colored check mark 
\newcommand*\colourcheck[1]{%
	\expandafter\newcommand\csname gcmark\endcsname{\textcolor{#1}{\ding{52}}}%
}
\newcommand*\colourxmark[1]{%
	\expandafter\newcommand\csname rxmark\endcsname{\textcolor{#1}{\ding{55}}}%
}
\definecolor{mygreen}{HTML}{02862a}
\definecolor{myred}{HTML}{9a0000}
%{RGB}{2, 115, 36}
\colourcheck{mygreen}
\colourxmark{myred}
\definecolor{linen}{HTML}{FAF0E6} % define linen color as mycolor
\definecolor{darkteal}{RGB}{0, 110, 110}

%{RGB}{2, 115, 36}
%\colourcheck{mygreen}
%\colourxmark{myred}

\usepackage{xspace}
\newcommand{\algname}[1]{{\sf {\color{darkteal}\footnotesize #1}}\xspace}
\newcommand{\algnamesmall}[1]{{\sf {\color{darkteal}\scriptsize #1}}\xspace}

\input{preamble.tex}
\usepackage{algorithm}
\usepackage{algpseudocode}

\title{Momentum Provably Improves Error Feedback!}

%\title{Momentum Provably Improves Error Feedback \\ in Distributed  Nonconvex  Stochastic Optimization}
%\title{Momentum Improves Stochastic Error Feedback}
%\title{Momentum Speeds-up Non-Convex Error Feedback}
%\title{Momentum Improves Stochastic Error Feedback:\\ Better Rates and Linear Speed Up with Biased Compressors}
%\title{Momentum SGD for Distributed Training:\\ Better Rates and Linear Speed Up with Biased Compressors}
%\title{Momentum Improves Compressed SGD in Disrtibuted Learning}

%\author{Ilyas Fatkhullin \and Alexander Tyurin \and Peter Richt\'{a}rik}

% \date{\date}

\allowdisplaybreaks
\setlength{\abovecaptionskip}{2pt}

\author{Ilyas Fatkhullin \\ ETH AI Center \& ETH Zurich \and \textbf{Alexander Tyurin}\\ KAUST\thanks{King Abdullah University of Science and Technology, Thuwal, Saudi Arabia.} \and \textbf{Peter Richt\'{a}rik} \\KAUST }

\begin{document}

\maketitle

\begin{abstract}

Due to the high communication overhead when training machine learning models in a distributed environment, modern algorithms invariably rely on lossy communication compression. However, when untreated, the errors caused by compression propagate, and can lead to severely unstable behavior, including exponential divergence. Almost a decade ago, \citet{Seide2014} proposed an error feedback (EF) mechanism, which we refer to as \algname{EF14}, as an immensely effective heuristic for mitigating this issue. However, despite steady algorithmic and theoretical  advances in the EF field in the last decade, our understanding is far from complete. In this work we address one of the most pressing issues. In particular, in the canonical nonconvex setting, all known variants of EF rely on very large batch sizes to converge, which can be prohibitive in practice. We propose a surprisingly simple fix which removes this issue both theoretically, and in practice: the application of Polyak's momentum to the latest incarnation of EF due to \citet{EF21}  known as \algname{EF21}. Our algorithm, for which we coin the name  \algname{EF21-SGDM}, improves the communication and sample complexities of previous error feedback algorithms under standard smoothness and bounded variance assumptions, and does not require any further strong assumptions such as bounded gradient dissimilarity. Moreover, we propose a double momentum version of our method that improves the complexities even further. Our proof seems to be novel even when compression is removed from the method, and as such, our proof technique is of independent interest in the study of nonconvex stochastic optimization enriched with Polyak's momentum.

%provide new  insights into the challenges of combining stochastic gradient-based methods with biased operators and demonstrates the provable benefit of using momentum in stochastic nonconvex optimization.

	%Our insights and proof techniques have broader implications and can be extended to establish improved sample complexity for methods that use other compression schemes and communication-saving techniques such as lazy aggregation of gradients, local training, and decentralized training.

\end{abstract}

%Error feedback (\algname{EF}) mechanism was originally discovered by \citet{Seide2014} for distributed training of deep learning models in order to compensate for an error ocurring due to aggressive biased compression of stochastic gradients. Later, \algname{EF} and its numerous variants became extremely popular, and when combined with biased Top$K$ sparsification constitute for one of the most practically efficient communication reduction mechanisms. 
%Communication and sampling efficiency are two important aspects contributing to the success of modern large scale distributed training. Despite a huge effort directed at the development of fast distributed stochastic algorithms, understanding of their theoretical  properties is still limited. 

\tableofcontents
\newpage 

\section{Introduction}
Since the practical utility of modern machine learning models crucially depends on our ability to train them on large quantities of training data,  it is  imperative  to perform the training in a distributed storage and compute environment. In federated learning (FL) \citep{FEDLEARN,FL-big}, for example, data is naturally stored in a distributed fashion across a large number of clients (who capture and own the data in the first place), and the goal is to train a single machine learning model from the wealth of all this distributed data, in a private fashion, directly on their devices.

\phantom{XX} {\bf 1.1 Formalism.}
%\subsection{Formalism}
%\phantom{XX}{\bf Formalism.}
We consider the problem of collaborative training of a single model by several clients in a data-parallel fashion. In particular, 
we aim to solve the  \textit{distributed nonconvex  stochastic optimization problem} 
\begin{equation}\label{eq:problem}
\squeeze	\min \limits_{x \in \R^d } \sb{ f(x) \eqdef \fr{1}{n} \sum \limits_{i = 1}^{n} f_i(x)}, \qquad f_i(x) \eqdef \Expu{\xi_i\sim \cD_i}{f_i(x, \xi_i)},  \qquad i = 1, \ldots, n, 
\end{equation}
where $n$ is the number of clients, $x\in \R^d$ represents the parameters of the model we wish to train, and $f_i(x)$ is the (typically nonconvex) loss  of  model parameterized by the vector $x$  on the data $\cD_i$ owned by client $i$. Unlike most works in federated learning, we do not assume the datasets to be similar, i.e., we allow the distributions $\cD_1, \dots, \cD_n$ to be arbitrarily different. 

We are interested in the fundamental problem of finding an approximately stationary point of $f$ in expectation, i.e., we wish to find a (possibly random) vector $\hat{x}\in \R^d$ such that $\Exp{\|\nabla f(\hat{x}) \|} \leq \varepsilon$. In order to solve this problem, we assume that the $n$ clients communicate via an orchestrating server. Typically, the role of the server is to first perform aggregation of the messages obtained from the workers, and to subsequently broadcast the aggregated information back to the workers. Following an implicit assumption made in virtually all theoretically-focused papers on communication-efficient training, we also assume that the speed of client-to-workers broadcast is so fast (compared to speed of workers-to-client communication) that the cost associated with broadcast can be neglected\footnote{While this is a reasonable assumption in many practical situations~\citep{DIANA,FL-big}, some works consider the regime when the server-to-workers broadcast cannot be neglected~\citep{Cnat,DoubleSqueeze,Artemis2020,D-DIANA,EF21BW_2021,Gruntkowska_EF21_P_2022}.}. 

\phantom{XX} {\bf 1.2 Aiming for communication and computation efficiency at the same time.}
%\subsection{Aiming for communication and computation efficiency at the same time}
In our work, we pay attention to two key aspects of efficient distributed training---{\em communication cost} and {\em computation cost} for finding an approximate stationary point $\hat{x}$. The former refers to the number of bits that need to be communicated by the workers to the server, and the latter refers to the number of stochastic gradients that need to be sampled by each client. The rest of the paper can be summarized as follows: {\em We pick one of the most popular communication-efficient gradient-type methods (the \algname{EF21} method of \citet{EF21} -- the latest variant of error feedback pioneered by \citet{Seide2014})   and modify it in a way which provably preserves its communication complexity, but massively improves its computation/sample complexity, both theoretically and in practice. 
}

%Two of the most prominent challenges in distributed training are i) taming the \textit{communication cost} caused by  limited connection speed between the clients and the server, and ii) taming the \textit{sample/computation cost} incurred by each client. While the first problem is usually the main focus in the FL community, the latter also plays an important role in practice. For example, clients may wish to reduce the number of shared samples to preserve privacy or due to high costs of obtaining samples (e.g., medical or financial data). To efficiently solve this problem, special techniques from distributed and stochastic optimization are needed. 

%Recently developed popular techniques for reducing the communication overhead can be broadly divided into three categories: local training \citep{Stich_Local_2019}, lazy aggregation of gradients \citep{Chen_LAG_2018}, and gradient compression \citep{alistarh2017qsgd}. In this work, we focus on the latter approach and aim to develop simple algorithms that effectively reduce the total communication complexity as well as the sample complexity for solving \eqref{eq:problem}.

\section{Communication Compression, Error Feedback, and Sample Complexity}

%We now very briefly describe {\em communication compression} as one of the most popular approaches to enhancing communication efficiency of gradient-based methods. We first explain that while a naive application of  communication compression can be highly problematic,  there is a fix known as  error feedback (\algname{EF}). We then review the progress of the community's understanding of \algname{EF}, from its inception in 2014 \citep{Seide2014}, to the most advanced variants introduced in 2021 \citep{EF21}. Subsequently, we highlight  a key issue with the SOTA methods in this category, and propose a fix.

%\subsection{Communication compression}

Communication compression techniques such as {\em quantization} \citep{alistarh2017qsgd,Cnat} and {\em sparsification} \citep{Seide2014,beznosikov2020biased} are known to be immensely powerful for reducing the communication footprint of gradient-type\footnote{For Newton-type methods, see \citep{Islamov_Newton_3PC_2022} and references therein.} methods. Arguably the most studied, versatile and practically useful class of compression mappings are {\em contractive} compressors. 

 \begin{definition}[Contractive compressors]\label{def:contractive_compressor}
 	We say that a (possibly randomized) mapping $\cC: \R^{d} \to \R^{d}$ is a  contractive compression operator if there exists a constant $0<\alpha\leq 1$ such that 
 	\begin{eqnarray}\label{eq:b_compressor}
 		\Exp{\|\cC(x) - x\|^{2}} \leq \rb{1 - \alpha} \|x\|^{2}, \qquad \forall x\in \R^d.
 	\end{eqnarray}
 	%The family of such operators will be denoted by $\bB(\alpha)$.
 \end{definition}
 
Inequality \eqref{eq:b_compressor} is satisfied by a vast array of compressors considered in the literature, including numerous variants of sparsification operators \citep{Alistarh-EF-NIPS2018,Stich-EF-NIPS2018}, quantization operators \citep{alistarh2017qsgd,Cnat}, and  low-rank  approximation~\citep{PowerSGD,FedNL} and more~\citep{beznosikov2020biased,UP2021}. The canonical examples %of sparsification operators satisfying Definition~\ref{def:contractive_compressor} 
are i) the Top$K$ sparsifier, which preserves the $K$  largest   components  of $x$ in magnitude and sets all remaining coordinates to zero~\citep{Stich-EF-NIPS2018}, and ii) the (scaled) Rand$K$ sparsifier, which preserves a subset of $K$ components of $x$ chosen uniformly at random and sets all remaining coordinates to zero \citep{beznosikov2020biased}. In both cases, \eqref{eq:b_compressor} is satisfied with $\alpha = \nicefrac{K}{d}$.

\subsection{Brief history of error-feedback}
 When greedy contractive compressors, such as Top$K$, are used in a direct way to compress the local gradients in distributed gradient descent (\algname{GD}), the resulting method may diverge exponentially, even on strongly convex quadratics~\citep{beznosikov2020biased}.  Empirically, instability caused by such a naive application of greedy compressors was observed much earlier, and a fix was proposed in the form of the \textit{error feedback} (EF) mechanism by \citet{Seide2014}, which we henceforth call \algname{EF14} or \algname{EF14-SGD} (in the stochastic case).\footnote{In Appendix~\ref{sec:appendix_literature_momentum}, we provide a more detailed discussion on theoretical develepments for this method.}
 To the best of our knowledge, the best {\em sample complexity} of \algname{EF14-SGD} for finding a stationary point in the distributed nonconvex setting is given by \citet{Koloskova2019DecentralizedDL}:  after $\cO (  G  \al^{-1} \varepsilon^{-3}  +  \sigma^2 n^{-1} \varepsilon^{-4} )$ samples\footnote{Here $\sigma^2$ is the bound on the variance of stochastic gradients at each node, see Assumption~\ref{as:BV}. 
When referring the sample complexity we count the number of stochastic gradients used only at one of the $n$ nodes rather than by all nodes in total. This is a meaningful notion because the computations are done in parallel.}, \algname{EF14-SGD} finds a point $x$ such that $\Expthin{\norm{\nabla f(x) }} \leq \varepsilon$, where $\alpha$ is the contraction parameter (see Definition~\ref{def:contractive_compressor}). However, such an analysis has two important deficiencies. First, in the deterministic case (when exact gradients are computable by each node), the analysis only gives the suboptimal $\cO(\varepsilon^{-3})$ {\em iteration complexity}, which is suboptimal compared to vanilla (i.e., non-compressed) gradient descent, whose iteration complexity is $\cO(\varepsilon^{-2})$. Second, their analysis relies heavily  on additional strong assumptions, such as the {\em  bounded gradient} (BG) assumption, $\Expthin{\|\nabla f_i(x, \xi_i) \|^2 } \leq G^2$ for all $x\in \R^d$, $i\in [n]$, $\xi_i \sim \cD_i$, or the bounded gradient similarity (BGS) assumption,  
 $ \suminn \| \nabla f_i(x) - \nabla f(x) \|^2 \leq G^2$ for all $x\in \R^d$. Such assumptions are restrictive and sometimes even unrealistic. In particular, both BG and BGS might not hold even in the case of convex quadratic functions.\footnote{For example, one can consider $f_i(x) = x^{\top} A_i x$ with $A_i \in \R^{d\times d}$, for which BG or BGS assumptions hold only in the trivial cases: matrices $A_i$ are all zero or all equal to each other (homogeneous data regime).}
 Moreover, it was recently shown that nonconvex analysis of  stochastic gradient methods using a BG assumption may hide an exponential dependence on the smoothness constant in the complexity~\citep{Yang_Two_Sides_2023}.  
 
In 2021, these issues were {\em partially} resolved by \citet{EF21}, who propose a modification of the EF mechanism, which they call \algname{EF21}. They address both deficiencies of the original \algname{EF14} method: i) they removed the BG/BGS assumptions, and improved the iteration complexity to $\cO(\varepsilon^{-2})$ in the full gradient regime. Subsequently, the \algname{EF21} method was modified in several directions, e.g., extended to bidirectional compression, variance reduction and proximal setup~\citep{EF21BW_2021}, generalized from contractive to three-point compressors~\citep{3PC} and adaptive compressors~\citep{Makarenko_AdaCGD_2022}, modified from dual (gradient) to primal (model)  compression~\citep{Gruntkowska_EF21_P_2022} and from centralized to decentralized setting~\citep{Zhao_BEER_2022}. For further work, we refer to \citep{Wang_CD_ADAM-AMSGrad_2022,Dorfman_DoCoFL_2023,Islamov_Newton_3PC_2022}. 
   
%\phantom{XX} {\bf 2.2 Key issue: error feedback has an unhealthy appetite for samples!}\label{subsec:EF_Stoch}
\subsection{Key issue: error feedback has an unhealthy appetite for samples!}\label{subsec:EF_Stoch}
Unfortunately, the current theory of \algname{EF21} with \emph{stochastic gradients} has weak sample complexity guarantees. In particular, \citet{EF21BW_2021} extended the \algname{EF21-GD} method, which is the basic variant of \algname{EF21} using full gradient at the clients, to  \algname{EF21-SGD}, which uses a ``large minibatch'' of stochastic gradients instead. They obtained   $\cO(  \fr{1}{\al \varepsilon^{2}} +  \fr{\sigma^2}{\al^3 \varepsilon^{4}} )$ sample complexity for their method. Later, \citet{Zhao_BEER_2022}  improved this result slightly\footnote{The result was obtained under a more general setting of decentralized optimization over a network.}  to $\cO (  \fr{1}{\al \varepsilon^{2}} +  \fr{\sigma^2}{\al^2 \varepsilon^{4}} )$, shaving off one $\alpha$ in the stochastic term. However, it is easy to notice several issues in these results, which generally feature the fundamental challenge of combining biased gradient methods with stochastic gradients.

\begin{figure}
	\centering
	\begin{subfigure}{.5\textwidth}
		\centering
		\includegraphics[width=0.85\linewidth]{./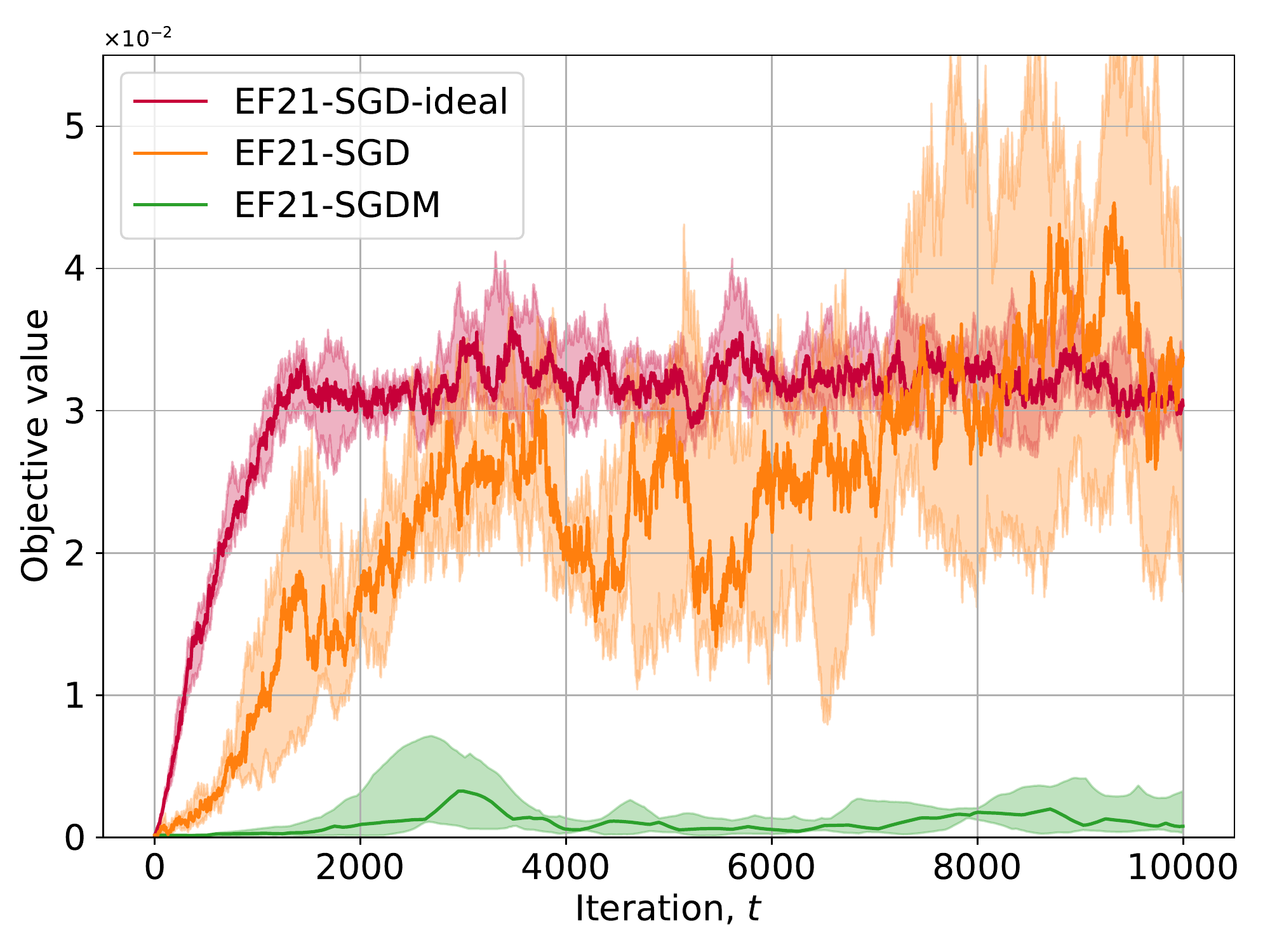}
		\caption{Divergence for $n=1$. }
		\label{fig:diverg_const_sz}
	\end{subfigure}%
	\begin{subfigure}{.5\textwidth}
		\centering
		\includegraphics[width=0.85\linewidth]{./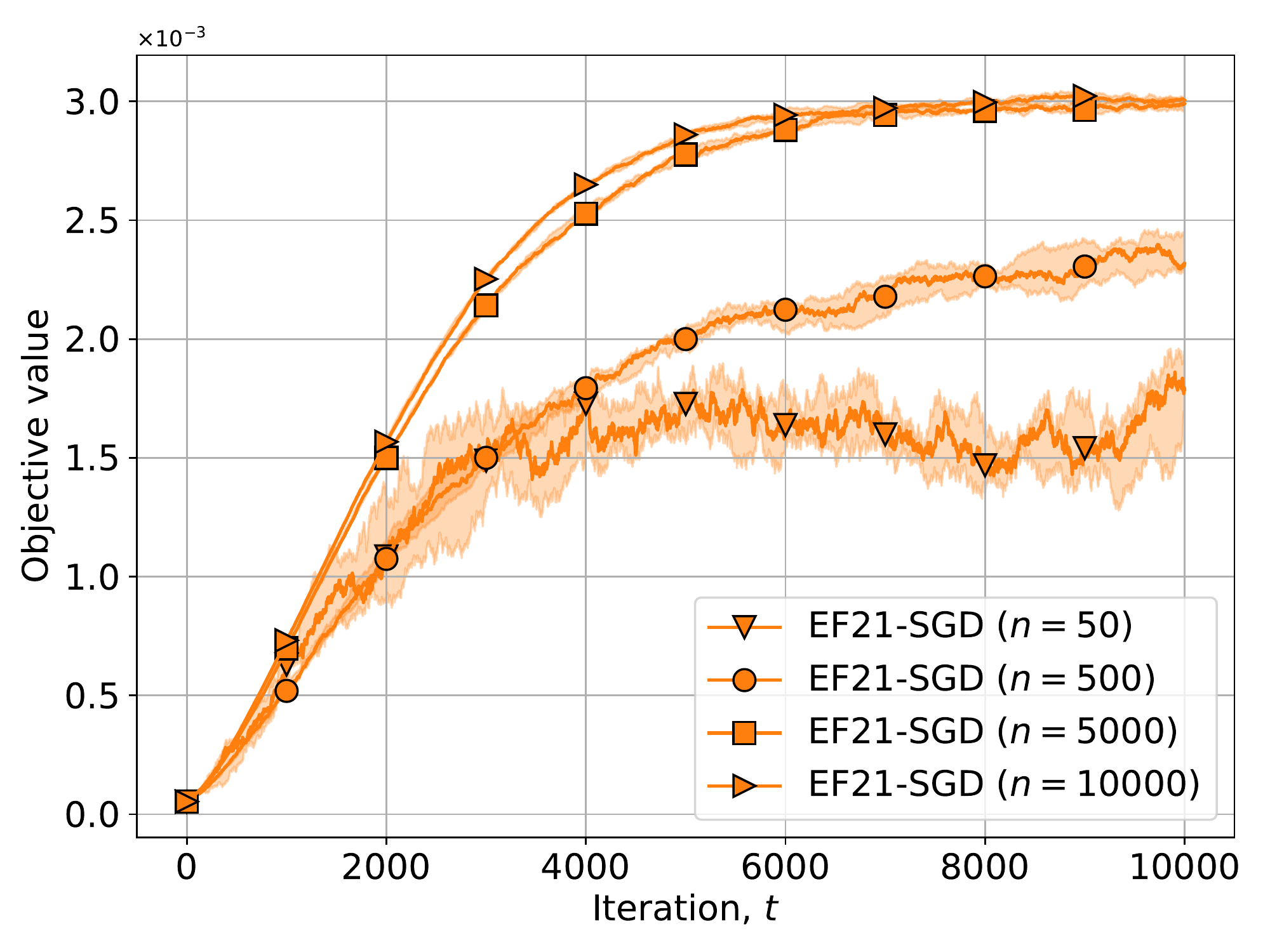}
		\caption{No improvement with $n$.}
		\label{fig:no_speedup_const_sz}
	\end{subfigure}
	\caption{\footnotesize Divergence of \algnamesmall{EF21-SGD} on the quadratic function $f(x)=\frac{1}{2}\sqnorm{x}$, $x\in \R^2$, using the Top$1$ compressor. See the proof of Theorem~\ref{thm:non_convergence_ef21_like_SGD} for details on the construction of the noise $\xi$; we use $\sigma = 1$, $B = 1$. The starting point  is $x^0  = (0,-0.01)^{\top}$. Unlike \algnamesmall{EF21-SGD}, our method \algnamesmall{EF21-SGDM} does not suffer from divergence and is stable near optimum. Figure~\ref{fig:no_speedup_const_sz} shows that when increasing the number of nodes $n$, \algnamesmall{EF21-SGD} applied with $B=1$ does not improve, and, moreover, diverges from the optimum even faster. All experiments use constant parameters $\gamma = \eta = \nicefrac{0.1}{\sqrt{T}} = 10^{-3}$; see Figure~\ref{fig:divergence_var} for diminishing parameters.  Each method is run $10$ times and the plot shows the median performance alongside the $25\%$ and $75\%$ quantiles. }
	\label{fig:divergence}
\end{figure}

\textbf{$\bullet$ Mega-batches.} These works require all clients to sample ``mega-batches'' of stochastic gradients/datapoints in each iteration, of order $\cO(\varepsilon^{-2})$, in order to control the variance coming from stochastic gradients. In Figure~\ref{fig:divergence}, we find that, in fact, a batch-free (i.e., with mini-batch size $B=1$) version of \algname{EF21-SGD} diverges even on a very simple quadratic function. We also observe a similar behavior when a small batch $B > 1$ is applied.  This implies that there is a fundamental flaw in the \algname{EF21-SGD} method itself, rather ``just'' a problem of the theoretical analysis. While mega-batch methods are common in optimization literature, smaller batches are often preferred whenever they ``work''. For example, the time/cost required to obtain such a large number of samples at each iteration might be unreasonably large compared to the communication time, which is already reduced using compression. Moreover, when dealing with medical data, large batches might simply be unavailable~\citep{Rieke_Dig_Health_FL_2020}. In certain applications, such as federated reinforcement learning (RL) or multi-agent RL, it is often intractable to sample more than one trajectory of the environment in order to form a gradient estimator \citep{Mitra_TD_EF_2023,Doan_FT_TD_4_MARL_2019,Jin_FedRL_Env_Heterogeneity_2022,Khodadadian_FedRL_Linear_speed_2022}. 
Further, a method using a mega-batch at each iteration effectively follows the gradient descent (\algname{GD}) dynamics instead of the dynamics of (mini-batch) \algname{SGD}, which may hinder the training and generalization performance of such algorithms since it is both empirically  \citep{Keskar_OnLargeBatch_DL_2017,Kleinberg_When_SGD_Escape_Local_2018} and theoretically \citep{Kale_SGD_Role_of_Implicit_Reg_2021} observed that mini-batch \algname{SGD} is superior to mega-batch \algname{SGD} or \algname{GD} in a number of machine learning tasks.

\textbf{$\bullet$ Dependence on $\alpha$.} The total sample complexity results derived by \citet{EF21BW_2021,Zhao_BEER_2022} suffer from poor dependence on the contraction parameter $\alpha$. %This contrasts to previously derived rates for the original version of \algname{EF} by \citet{Koloskova2019DecentralizedDL}. 
Typically, EF methods are used with the  Top$K$ sparsifier, which only communicates $K$ largest entries in magnitude. In this case, $\alpha = \nicefrac{K}{d}$, and the stochastic part of sample complexity scales quadratically with dimension.
%$\frac{d^2 \sigma^2}{\varepsilon^4}$.
%can be $d^2$ times worse than the complexity of uncompressed \algname{SGD}.

\textbf{$\bullet$ No improvement with $n$.} The stochastic term in the sample complexity of \algname{EF21-SGD} does {\em not} improve when increasing the number of nodes. However, the opposite behavior is typically desired, and is present in several latest non-EF methods based on {\em unbiased} compressors, such as \algname{MARINA} \citep{MARINA} and \algname{DASHA} \citep{DASHA_2022}.  We are not aware of any distributed algorithms utilizing the Top$K$ compressor  achieving  linear speedup in $n$ in the stochastic term without relying on restrictive BG or BGS assumptions.

These observations motivate our work with the following central questions:
\begin{quote}
\em
   Can we design a batch-free distributed \algname{SGD} method utilizing contractive communication compression (such as Top$K$) without relying on restrictive BG/BGS assumptions? Is it possible to improve over the current state-of-the-art $\cO\rb{  \alpha^{-1} \varepsilon^{-2} +  \sigma^2 \alpha^{-2} \varepsilon^{-4} }$ sample complexity under the standard smoothness and bounded variance assumptions? 
\end{quote} 
We answer both questions in the affirmative by incorporating a momentum update into \algname{EF21-SGD}.  %Moreover, our sample complexity improvement comes from a careful construction of a new Lyapunov function, which we feature in more detail in Section~\ref{sec:main_results}.

%\phantom{XX} {\bf 2.3 Mysterious effectiveness of momentum in nonconvex optimization.}
\subsection{Mysterious effectiveness of momentum in nonconvex optimization}
  An immensely popular modification of \algname{SGD} (and its distributed variants) is the use of {\em momentum}. This technique, initially inspired by the developments in convex optimization \citep{Polyak_Some_methods_1964}, is often applied in machine learning for stabilizing convergence and speeding up the training. In particular, momentum is an important part of an immensely popular and empirically successful line of adaptive methods for deep learning, including \algname{ADAM}~\citep{ADAM} and a plethora of variants. The classical \algname{SGD} method with Polyak (i.e., heavy ball) momentum (\algname{SGDM}) reads:
% \begin{eqnarray*}
% 	v^{t} &=& (1-\eta) v^{t-1} + \eta \nabla f(x^t, \xi^{t+1}), \\
% 	x^{t+1} &=& x^t - \gamma_t v^t .
% \end{eqnarray*}	
 \begin{equation}
 	x^{t+1} = x^t - \gamma v^t , \qquad	v^{t+1} = (1-\eta) v^{t} + \eta \nabla f(x^{t+1}, \xi^{t+1}),
 \label{eq:HB}\end{equation}	
where $\gamma>0$ is a learning rate and $\eta>0$ is the momentum parameter.  
%Alternative but equivalent ways to write down the momentum update exist~\citep{Yang_Unified_momentum_2016}.  

We provide a concise walk through the key theoretical developments in the analysis of \algname{SGDM} in stochastic nonconvex optimization in Appendix~\ref{sec:appendix_literature_momentum}; and only mention the most relevant works here.  The most closely related  works to ours are \citep{DIANA}, \citep{CSER}, and \citep{EF21BW_2021}, which analyze momentum together with communication compression. The analysis in \citep{DIANA,CSER} requires BG/BGS assumption, and does not provide any theoretical improvement over the variants without momentum. Finally, the analysis of \citet{EF21BW_2021} is only established for deterministic case, and it is unclear if its extension to stochastic case can bring any convergence improvement over \algname{EF21-SGD}.  Recently, several other works attempt to explain the benefit of momentum~\citep{Plattner_SGDM_Thesis_2022}; some  consider structured nonconvex problems~\citep{Wang_Quickly_Finding_HB_2021}, and others focus on generalization \citep{Jelassi_Mom_Improves_Generalization_2022}.

%\phantom{XX} {\bf 2.4 Summary of contributions.}
\phantom{XX} {\bf Summary of contributions.}
 Despite the vast amount of work trying to explain the benefits of momentum,  there is no work obtaining any theoretical improvement over vanilla \algname{SGD} in the smooth nonconvex setting under the standard assumptions of smoothness and bounded variance.  %Our key contributions are:

  \begin{table*}[t]
 	\caption{\footnotesize Summary of related works on distributed error compensated SGD methods using a Top$K$ compressor under Assumptions~\ref{as:main} and~\ref{as:BV}. The goal is to find an $\varepsilon$-stationary point of a smooth nonconvex function of the form \eqref{eq:problem}, i.e., a point $x$ such that $\Exp{\norm{\nabla f(x)}} \leq \varepsilon$. "\textbf{Communication complexity}": the total \# of communicated bits if the method is applied with sufficiently large batch-size; see Table~\ref{table:related-works} for batch-size. "\textbf{Asymptotic sample complexity}": the total \# of samples required at each node to find an $\varepsilon$-stationary point for batch-size $B = 1$ in the regime $\varepsilon\rightarrow 0$.
 	%	"\textbf{Batch free}" marks with \gcmark\, if the analysis ensures convergence with batch-size equal to $1$. 
 		"\textbf{No extra assumptions}": \gcmark means that  no additional assumption is required. %We denote $L_{\textnormal{max}}  \eqdef \max_{i\in[n]} L_i$. Notice that it always holds $L \leq \wt L \leq L_{\textnormal{max}}$ and these inequalities only become equalities in the homogeneous case. 
 	}
 	\label{table:related-works_small}
 	\centering
	\scriptsize
 	%\begin{NiceTabular}{|c|c|cc|}
 	\begin{threeparttable}
 		\begin{tabular}{|c|c|c|c|c|}
 			\hline
 			\bf Method & \bf \makecell{Communication complexity}  & \bf \makecell{Asymptotic \\sample  complexity}  & \bf \makecell{Batch-free?}   &\bf \makecell{No extra  assumptions?}   \\
 			%\hline
 			%\makecell{\algname{Distributed SGD}}  & $\fr{d L}{\varepsilon^2}$ & \makecell{$  \fr{ \sigma^2}{n \varepsilon^2}$} & \makecell{$  \fr{L \sigma^2}{n \varepsilon^4}$} & $1$   &  \makecell{-} \\
 			\hline
 			\makecell{\algnamesmall{EF14-SGD}\\ \citep{Koloskova2019DecentralizedDL}}  &  $ \fr{K {\color{myred}{G}} }{\alpha \color{myred}{\varepsilon^3}}$  & \makecell{$  \fr{  \sigma^2}{n \varepsilon^4}$} & \gcmark  &  \makecell{\rxmark}\tnote{{\color{blue}(a)}}   \\
 			\hline	
 			%\cline{2-8}
 			\makecell{\algnamesmall{NEOLITHIC}\\ \citep{huang2022lower}}  &\makecell{$ \fr{K  }{ \alpha \varepsilon^{2} }\color{myred}{\log\left( \frac{ G }{ \varepsilon }  \right)} $}\tnote{{\color{blue}(b)}}   & \makecell{$ \fr{  \sigma^2}{n \varepsilon^4}$} & \rxmark   &  \makecell{\rxmark}\tnote{{\color{blue}(c)}}  \\
 			%$\frac{1}{\alpha} \log\left( \frac{\alpha T (G^2 + \alpha \sigma^2 )}{\wt L}  \right) $
 			\hline
 			\makecell{\algnamesmall{EF21-SGD}\\ \citep{EF21BW_2021}}  & $ \fr{K }{ \alpha \varepsilon^{2} } $ & \makecell{ $ \fr{ \sigma^2}{{\color{myred}\alpha^3} \varepsilon^4}$}\tnote{{\color{blue}(d) }}
 			&  \makecell{\rxmark} &  \makecell{\gcmark}  \\
 			\hline
 			\makecell{\algnamesmall{BEER}\\ \citep{Zhao_BEER_2022}}  & $ \fr{K }{ \alpha \varepsilon^{2} }$ &   \makecell{$  \fr{ \sigma^2}{ {\color{myred} \alpha^2} \varepsilon^4}$}\tnote{{\color{blue}(d) }}
 			& \makecell{\rxmark}  &  \makecell{\gcmark}  \\
 			\hline
 			%\cline{2-8}
 			%\rowcolor{linen}  
 			\cellcolor{linen}\begin{tabular}{c}\algnamesmall{EF21-SGDM} \, (Corollary~\ref{cor:EF21-M_conv}) \\
 				\algnamesmall{EF21-SGD2M} \, (Corollary~\ref{cor:EF21-DM_conv}) \end{tabular}  & \cellcolor{linen} $ \fr{ K  }{\al \varepsilon^2} $ & \cellcolor{linen} $   \fr{  \sigma^2    }{n \varepsilon^4}$ &\cellcolor{linen} \gcmark   &  \cellcolor{linen} \gcmark \\
 			\hline
 		\end{tabular}
 		\begin{tablenotes}
 			\scriptsize
 			 \item [{\color{blue}(a)}] Analysis requires a bound of the second moment of the stochastic gradients, i.e., $\Exp{\sqnorm{\nabla f_i(x, \xi_i)}} \leq G^2$ for all $x \in \R^d$. 
 			\item [{\color{blue}(b)}] This complexity is achieved by using a large mini-batch and communicating $\left\lceil \nicefrac{K}{\alpha} \right\rceil$ coordinates per iteration, see Appendix~\ref{sec:appendix_literature_momentum}.
 			\item [{\color{blue}(c)}] Analysis requires a bounded gradient disimilarity assumption, i.e.,  $
 			\suminn \sqnorm{\nabla f_i(x) - \nabla f(x)} \leq G^2  
 			$ for all $x \in \R^d$. 
 			 \item [{\color{blue}(d)}] Analysis requires a batch-size at least $B \geq \fr{\sigma^2}{ \alpha^2 \varepsilon^2}$ for \algnamesmall{EF21-SGD} and $B \geq \fr{\sigma^2}{ \alpha \varepsilon^2}$ for \algnamesmall{BEER}.
 			
 		\end{tablenotes}
 	\end{threeparttable}
 \end{table*} 

\phantom{X}$\bullet$ First, we establish a {\em negative result} for a simplified/idealized version of \algname{EF21-SGD}, which shows that this algorithm does not converge with constant batch-size, and that a mega-batch of order $\Omega(\sigma^2 \varepsilon^{-2})$ is required. This provides a strong indication that \algname{EF21-SGD} method is inherently sensitive to stochastic gradients, which is also confirmed by our numerical simulations. 

\phantom{X}$\bullet$  We propose a simple fix for this problem by incorporating {\em momentum} step into \algname{EF21-SGD}, which leads to our one-batch  \algname{EF21-SGDM}  algorithm. By leveraging our {\em new Lyapunov function construction and new analysis}, we establish $\cO\left(\alpha^{-1} \varepsilon^{-2} + \sigma^2 \varepsilon^{-4} \right)$ sample complexity in the single node case. 

\phantom{X}$\bullet$  We extend our algorithm to the distributed setting and derive an improved sample complexity result compared to other methods using the Top$K$ compressor without resorting to the BG/BGS assumptions. In particular,  \algname{EF21-SGDM}  achieves {\em asymptotically optimal} $\cO\rb{\sigma^2 n^{-1} \varepsilon^{-4}}$ \textit{sample complexity}. Moreover, when  \algname{EF21-SGDM}  is applied with large enough batch size, we prove that it reaches the \textit{optimal communication complexity} $\cO\rb{K \alpha^{-1} \varepsilon^{-2}}$; see Tables~\ref{table:related-works_small} \& \ref{table:related-works} for more details. 

\phantom{X}$\bullet$  Finally, we propose a {\em double momentum} variant of  \algname{EF21-SGDM}, and find that it further improves the sample complexity of  \algname{EF21-SGDM}  in the non-asymptotic regime. 
 	%\item Finally, we corroborate our theoretical findings with carefully designed experiments on synthetic and real datasets. 
 	
 	%\item We extend \algname{EF21-SGD} to distributed setting and improve the sample complexity of error compensated methods reaching $\cO(\frac{1}{\alpha \varepsilon^2} + \frac{\sigma^2}{\sqrt{\alpha}\varepsilon^3}  + \frac{ \sigma^{2/3}}{\alpha^{2/3} \varepsilon^{8/3}}  +\frac{\sigma^2}{n\varepsilon^4})$ sample complexity. This sample complexity improves over the previous state-of-the-art complexity of large-batch \algname{EF21-SGD}, which is $\cO(\frac{1}{\alpha \varepsilon^2}  +\frac{\sigma^2}{\alpha^2\varepsilon^4})$. Moreover, our  \algname{EF21-SGDM}  method achieves a linear speedup in the number of nodes $n$ in the asymptotic regime $\varepsilon \rightarrow 0$. 
 	%To the best of our knowledge such linear speed-up is obtained for the first time for methods using biased contractive compressors such as Top$K$ and without relying on any data similarity assumptions.

 	%\item Finally, we analyze a variance reduced version of our method which achieves even faster rates under the additional individual smoothness assumption. 

% The proposed method uses only one data point at every iteration and is guaranteed to find $\varepsilon$ stationary point after $T = \O\rb{ \fr{L \delta_0}{\al \varepsilon^2} + \fr{\delta_0 L \sigma^2}{\varepsilon^4} }$ iterations.

We highlight that, interestingly, we {\em prove that momentum helps}: \algname{EF21-SGDM} is theoretically better compared to its non-momentum variant -- large-batch \algname{EF21-SGD}. We believe that our new technique can be extended in many ways, e.g., to dealing with other biased compressors and other (biased) communication saving techniques such as lazy aggregation of gradients, model compression, bidirectional compression, partial participation, decentralized training, adaptive compression; other important optimization techniques involving biased updates such as proximal \algname{SGD} with momentum, gradient clipping and adaptive step-size schedules. We also hope that our proof techniques can be useful to establish linear speedup for other classes of distributed methods, e.g, algorithms based on local training %\algname{Scaffold} \citep{Karimireddy_SCAFFOLD_2020},
\algname{ProxSkip}/\algname{Scaffnew} \citep{Mishchenko_Proxskip_2022}.
% without relying on data similarity assumptions. 

Additionally, we extend our results to the class of absolute compressors in Appendix~\ref{sec:appendix_abs} and study the variance reduced variant of error feedback in Appendix~\ref{sec:appendix_STORM}. 

 \section{Main Results}\label{sec:main_results}
Throughout the paper we work under the following standard assumptions.  
  \begin{assumption}[Smoothness and lower boundedness]\label{as:main}	 
 	We assume that $f$ has $L$-Lipschitz gradient, i.e., $\norm{\nabla f(x) - \nabla f(y)} \leq L\norm{x - y}$ for all $x, y\in \R^d$, and each $f_i$ has $L_i$-Lipschitz gradient, i.e., $\norm{\nabla f_i(x) - \nabla f_i(y)} \leq L_i\norm{x - y}$ for all $i \in [n],$ $x, y\in \R^d$. We denote $\wL^2 \eqdef \suminn L_i^2$.	Moreover, we assume that $f$ is lower bounded, i.e., $f^* \eqdef \inf_{x\in \R^d} f(x)>-\infty $.
 \end{assumption}
 
 \begin{assumption}[Bounded variance (BV)] \label{as:BV} There exists $\sigma>0$ such that 
 	\begin{eqnarray}\label{eq:BV}
 		\Exp{ \sqnorm{\nabla f_{i}(x, \xi_i) - \nabla f_i(x)} } \leq \sigma^2, \qquad \forall x\in \R^d, \qquad \forall i\in [n],
 	\end{eqnarray}	
 where $ \xi_i \sim \cD_i$ are i.i.d.\ random samples for each $i \in [n]$.
 \end{assumption}

%\phantom{XX} {\bf 3.1 A deeper dive into the issues \algname{EF21} has with stochastic gradients.}
\subsection{A deeper dive into the issues \algname{EF21} has with stochastic gradients}
As remarked before, the current analysis of \algname{EF21} in the stochastic setting requires each client to sample a mega-batch in each iteration, and it is not clear how to avoid this. In order to understand this phenomenon, we propose to step back and examine an ``idealized'' version of \algname{EF21-SGD}, which we call \algname{EF21-SGD-ideal}, defined by the update rules \eqref{eq:ef21_like_sgd_1} + \eqref{eq:ef21_like_sgd_2}:
\begin{subequations}
	\begin{alignat}{2}
	\label{eq:ef21_like_sgd_1}
              x^{t+1} &=  x^t - \gamma g^t, \quad  g^t &&\squeeze = \frac{1}{n}\sum \limits_{i=1}^n g_i^t  \\
		\label{eq:ef21_like_sgd_2}
		\text{\algname{EF21-SGD-ideal:}}\qquad g_i^{t+1} &= \textcolor{blue}{ \nabla f_i(x^{t+1})} &&+ \cC\left( \nabla f_i(x^{t+1}, \xi_i^{t+1}) - \textcolor{blue}{ \nabla f_i(x^{t+1}) } \right) \tag{\theequation a} , \\
		\label{eq:ef21_sgd}
		\text{\algname{EF21-SGD:}}\qquad g_i^{t+1} &= \qquad \textcolor{blue}{ g_i^t } &&+ \cC\left( \nabla f_i(x^{t+1}, \xi_i^{t+1}) -  \qquad \textcolor{blue}{ g_i^t } \right). \tag{\theequation b}
	\end{alignat}
\end{subequations}

%\begin{eqnarray}\label{eq:ef21_like_sgd_1}
%	g_i^{t} &=& \nabla f_i(x^t) + \cC\rb{ \nabla f_i(x^{t}, \xi^{t+1}) -   \nabla f_i(x^t) },  \\
%	x^{t+1} &=& x^t - \gamma \suminn g_i^t , \label{eq:ef21_like_sgd_2}
%\end{eqnarray}
Compared to \algname{EF21-SGD}, given by \eqref{eq:ef21_like_sgd_1} + \eqref{eq:ef21_sgd}, we replace the previous state $g_i^{t}$ by the {\em exact gradient} at the current iteration. %\footnote{In comparison, the update rule of EF21-SGD method is given by $g_i^{t+1} = g_i^{t} + \cC\rb{ \nabla f_i(x^{t+1}, \xi^{t+1}) -   g_i^{t} }$.} 
%$g_i^{t} = g_i^{t-1} + \cC\rb{ \nabla f_i(x^{t}, \xi^{t+1}) -   g_i^{t-1} }$
Since \algname{EF21-SGD}  heavily relies on the approximation $g_i^{t} \approx \nabla f_i(x^{t+1})$, and according to the proof of convergence of \algname{EF21-SGD}, such discrepancy tends to zero as $t\rightarrow \infty$, this change can only improve the method. While we admit this is  a conceptual algorithm only (it does not lead to any communication or sample complexity reduction in practice)\footnote{This is because full gradients would need to be computed and communicated for its implementation. Notice also that if $\sigma= 0$, this method becomes the exact distributed gradient descent.}, it  serves us well to illustrate the drawbacks of \algname{EF21-SGD}. We now establish the following negative result for \algname{EF21-SGD-ideal}.
\begin{theorem}\label{thm:non_convergence_ef21_like_SGD}
	Let $L$, $\sigma >0$, $0<\gamma\leq \nicefrac{1}{L}$ and $n=1.$ There exists a convex, $L$-smooth function $f:\R^2\to \R$, a contractive compressor $\cC(\cdot)$ satisfying Definition~\ref{def:contractive_compressor}, and an unbiased stochastic gradient with bounded variance $\sigma^2$ such that if the method \algname{EF21-SGD-ideal} (\eqref{eq:ef21_like_sgd_1} + \eqref{eq:ef21_like_sgd_2}) is  run with step-size $\gamma$, then for all $T \geq 0$ and for all $x^0 \in \{(0,  x_{(2)}^{0})^{\top} \in \R^2 \,|\, x_{(2)}^0 < 0\},$ we have
	$$
	\squeeze \Exp{\sqnorm{ \nabla f(x^T) } } \geq  \frac{1}{60} \min\left\{\sigma^2, \sqnorm{ \nabla f(x^0) }\right\} . 
	$$
	%where $\delta_0  \eqdef f(x^0) - f^*$. 
	Fix $0 < \varepsilon \leq \nicefrac{L}{\sqrt{60}}$ and $x^0 = (0,  -1)^{\top}.$ Additionally assume that $n \geq 1$ and the variance of unbiased stochastic gradient is controlled by $\nicefrac{\sigma^2}{B}$ for some $B\geq1$. If $B < \frac{ \sigma^2}{60 \varepsilon^2}$, then we have $\Exp{\norm{\nabla f(x^T)}} > \varepsilon $ for all $T \geq 0$. 
\end{theorem}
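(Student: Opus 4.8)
\emph{Proof strategy.} The plan is to produce a single fixed hard instance and then control only the \emph{expected} iterate, which suffices because $\Exp{\sqnorm{\nabla f(x^T)}}\ge\sqnorm{\Exp{\nabla f(x^T)}}$ for an affine gradient. Take $f(x)=\tfrac{L}{2}\sqnorm{x}$ on $\R^2$ (convex, $L$-smooth, $\nabla f(x)=Lx$) and let $\cC$ be the deterministic Top$1$ sparsifier; since the discarded coordinate is the smaller one in magnitude, $\sqnorm{\cC(v)-v}\le\tfrac12\sqnorm{v}$, so $\cC$ is contractive with $\al=\tfrac12$. Because $\nabla f_i(x,\xi)-\nabla f_i(x)=\xi$ is the \emph{pure noise}, in the single-node case (with the natural initialization $g^0=\nabla f(x^0)+\cC(\xi^0)$) the \algname{EF21-SGD-ideal} update collapses to $g^t=\nabla f(x^t)+\cC(\xi^t)$, hence
\[
 x^{t+1} = (1-\gamma L)\,x^t - \gamma\,\cC(\xi^t).
\]
I choose $x^0=(0,x_{(2)}^0)^\top$ precisely so that the first coordinate starts at $0$ (clean drift term) while $x_{(2)}^0\neq0$ keeps $\nabla f(x^0)\neq 0$; only $(x_{(2)}^0)^2$ will enter, so the sign is immaterial and is fixed to match $x^0=(0,-1)^\top$.

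The crux is a \emph{mean-zero} noise whose \emph{compressed} version is biased. I would use the three-atom law $\xi=(a,0)$ w.p.\ $\tfrac12$, $\xi=(-a,c)$ w.p.\ $\tfrac14$, $\xi=(-a,-c)$ w.p.\ $\tfrac14$, with $0<a<c$. Then $\Exp{\xi}=0$ and $\Exp{\sqnorm{\xi}}=a^2+\tfrac12 c^2$, and the choice $a=\sigma/\sqrt2$, $c=\sigma$ makes the variance exactly $\sigma^2$ while keeping $c>a$. Crucially Top$1$ retains coordinate $1$ on the first atom and coordinate $2$ on the other two, so $b\eqdef\Exp{\cC(\xi)}=\tfrac12(a,0)+\tfrac14(0,c)+\tfrac14(0,-c)=(a/2,0)$, a nonzero bias \emph{orthogonal} to the initial gradient direction, with $\sqnorm{b}=a^2/4=\sigma^2/8$. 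This is the whole mechanism of divergence: an unbiased perturbation, once greedily compressed, leaves a persistent sideways push that the iterate can never cancel.

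Since $\xi^t$ is i.i.d., $\Exp{\cC(\xi^t)}=b$ is constant, so by linearity of expectation alone the mean iterate $\bar x^t\eqdef\Exp{x^t}$ satisfies $\bar x^{t+1}=\rho\,\bar x^t-\gamma b$ with $\rho\eqdef 1-\gamma L\in[0,1)$. Solving this linear recursion coordinatewise from $\bar x^0=x^0$ gives $\bar x^T_{(1)}=-\tfrac{a/2}{L}(1-\rho^T)$ and $\bar x^T_{(2)}=\rho^T x_{(2)}^0$. Then, using the second moment dominating the squared mean coordinatewise,
\[
 \Exp{\sqnorm{\nabla f(x^T)}}\ \ge\ L^2\sqnorm{\bar x^T}\ =\ \sqnorm{b}\,(1-\rho^T)^2+\sqnorm{\nabla f(x^0)}\,\rho^{2T}\ \eqdef\ h(\rho^T).
\]
The map $s\mapsto \sqnorm{b}(1-s)^2+\sqnorm{\nabla f(x^0)}s^2$ is a convex quadratic whose unconstrained minimum is $\tfrac{\sqnorm{b}\sqnorm{\nabla f(x^0)}}{\sqnorm{b}+\sqnorm{\nabla f(x^0)}}$, so $h(\rho^T)$ is bounded below by this value for \emph{every} $T\ge0$. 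A short two-case estimate with $\sqnorm{b}=\sigma^2/8$ (splitting on whether $\sqnorm{\nabla f(x^0)}\gtrless\sigma^2=8\sqnorm{b}$) yields $\tfrac{\sqnorm{b}\sqnorm{\nabla f(x^0)}}{\sqnorm{b}+\sqnorm{\nabla f(x^0)}}\ge\tfrac19\min\{\sigma^2,\sqnorm{\nabla f(x^0)}\}\ge\tfrac1{60}\min\{\sigma^2,\sqnorm{\nabla f(x^0)}\}$, which is the first claim.

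For the distributed claim I take $f_i=f$ and the same noise law at every node; averaging the per-node compressed noises preserves the bias, $\Exp{\tfrac1n\sum_i\cC(\xi_i^t)}=b$, so the mean recursion and hence $h$ are \emph{identical for all} $n$. Modelling a batch of size $B$ by the effective noise level $\sigma/\sqrt B$ (rescale the construction with $\tilde\sigma^2=\sigma^2/B$), and now using the sharper Jensen bound for the norm, $\Exp{\norm{\nabla f(x^T)}}\ge\norm{\Exp{\nabla f(x^T)}}=\sqrt{h(\rho^T)}\ge\sqrt{\tfrac19\min\{\sigma^2/B,\,L^2\}}$, since $\sqnorm{\nabla f(x^0)}=L^2$ for $x^0=(0,-1)^\top$. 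The hypotheses $B<\sigma^2/(60\varepsilon^2)$ and $\varepsilon\le L/\sqrt{60}$ force $\min\{\sigma^2/B,\,L^2\}\ge 60\varepsilon^2$, whence $\Exp{\norm{\nabla f(x^T)}}\ge\sqrt{60/9}\,\varepsilon>\varepsilon$ for all $T$. I expect the only genuinely delicate step to be the noise design: one must simultaneously make $\xi$ unbiased with the prescribed variance, make $\cC$ a bona fide contractive compressor, and make $\Exp{\cC(\xi)}$ nonzero and orthogonal to $\nabla f(x^0)$; once this is in place, the mean recursion and the quadratic-in-$\rho^T$ minimization are routine.
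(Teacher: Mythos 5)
Your proposal is correct and follows essentially the same route as the paper's proof: the same hard instance $f(x)=\tfrac{L}{2}\sqnorm{x}$ with a Top$1$ compressor, a three-atom mean-zero noise whose compressed expectation is nonzero, a linear recursion for the expected iterate, the Jensen bounds $\Exp{\sqnorm{\nabla f(x^T)}}\ge\sqnorm{\Exp{\nabla f(x^T)}}$ and $\Exp{\norm{\nabla f(x^T)}}\ge\norm{\Exp{\nabla f(x^T)}}$, and a final minimization of a quadratic in $(1-\gamma L)^T$ over $[0,1]$ yielding a bound of the form $\tfrac{AB}{A+B}$. The only substantive difference is in the noise design: the paper's atoms $(2,0),(0,1),(-2,-1)$ (each with probability $\nicefrac{1}{3}$) give a compression bias parallel to $x^0$, so the paper must drop a nonnegative cross term, whereas your bias $(a/2,0)$ is orthogonal to $x^0$, which makes the decomposition exactly Pythagorean and yields the slightly sharper constant $\nicefrac{1}{9}$ in place of the paper's $\nicefrac{1}{60}$.
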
	

The above theorem implies that the method \eqref{eq:ef21_like_sgd_1}, \eqref{eq:ef21_like_sgd_2}, does not converge with small batch-size (e.g., equal to one) for any fixed step-size choice.\footnote{In fact, the example can be easily extended to the case of polynomially decaying step-size. } Moreover, in distributed setting with $n$ nodes, a mini-batch of order $B = \Omega\rb{ \nicefrac{\sigma^2}{ \varepsilon^2}}$ is required for convergence. Notice that this batch-size is independent of $n$, which further implies that a linear speedup in the number of nodes $n$ cannot be achieved for this method. While we only prove these negative results for an "idealized" version of \algname{EF21-SGD} rather than for the method itself, in Figures~\ref{fig:diverg_const_sz} and \ref{fig:diverg_var_sz}, we empirically verify that \algname{EF21-SGD} also suffers from a similar divergence on the same problem instance provided in the proof of Theorem~\ref{thm:non_convergence_ef21_like_SGD}. Additionally, Figures~\ref{fig:no_speedup_const_sz} and~\ref{fig:no_speedup_var_sz} illustrate that the situation does not improve for \algname{EF21-SGD} when increasing $n$.

%\phantom{XX} {\bf 3.2 Momentum for avoiding mega-batches.} 
\subsection{Momentum for avoiding mega-batches}
Let us now focus on the single node setting\footnote{In this case, we can drop index $i$ everywhere and write $g_i^t = g^t$,\, $\xi_i^t = \xi^t$ for all $t\geq0$.} and try to fix the divergence issue shown above. As we can learn from Theorem~\ref{thm:non_convergence_ef21_like_SGD}, the key reason for non-convergence of \algname{EF21-SGD} is that even if the state vector $g^{t}$ sufficiently approximates the current gradient, i.e., $g^{t} \approx \nabla f (x^{t+1})$, the design of this method cannot guarantee that the quantity
$
\sqnorm{ g^{t} - \nabla f (x^t) } \approx \sqnorm{ \cC\rb{ \nabla f (x^t, \xi^{t}) - \nabla f (x^t) } }  %=  \sqnorm{ \cC\rb{ \xi_i^{t+1} } }
$
is small enough. Indeed, the last term above can be bounded by $2 (2 - \alpha) \sigma^2$ in expectation, but it is not sufficient as formally illustrated in Theorem~\ref{thm:non_convergence_ef21_like_SGD}. %, see the proof of Theorem~\ref{prop:ef21_m_SGDv0} in the Appendix for more details. 
To fix this problem, we propose to modify our ``idealized'' \algname{EF21-SGD-ideal} method so that the compressed difference can be controlled and made arbitrarily small, which leads us to another (more advanced) conceptual algorithm, %\algname{EF21-SGDM-ideal}, \eqref{eq:EF21-SGDMv0}. 
\begin{align}
	\text{\algname{EF21-SGDM-ideal:}}&
	\begin{split}
		v^{t+1} &= \textcolor{mygreen}{\nabla f(x^{t+1}) } + \eta (\nabla f(x^{t+1}, \xi^{t+1})  - \textcolor{mygreen}{\nabla f(x^{t+1}) }  ) , \\
		g^{t+1} &= \textcolor{blue}{ \nabla f(x^{t+1}) } + \cC\rb{  v^{t+1}   - \textcolor{blue}{ \nabla f(x^{t+1}) } } . 
	\end{split} 
	\label{eq:EF21-SGDMv0} 
\end{align}
In this method, instead of using $v^{t+1} = \nabla f(x^{t+1}, \xi^{t+1}) $ as in \algname{EF21-SGD-ideal}, we introduce a correction, which allows to control variance of the difference $\nabla f(x^{t+1}, \xi^{t+1}) - \nabla f(x^{t+1})$. This allows us to derive the following convergence result. Let $\delta_0  \eqdef f(x^0) - f^*$.
\begin{proposition}\label{prop:ef21_m_SGDv0}
	Let Assumptions~\ref{as:main}, \ref{as:BV} hold, and let $\cC$ satisfy Definition~\ref{def:contractive_compressor}. Let $g^0 = 0$ and the step-size in method~\eqref{eq:ef21_like_sgd_1}, \eqref{eq:EF21-SGDMv0} be set as $\gamma \leq \nicefrac{1}{L}$. Let $\hat x^T$ be sampled uniformly at random from the iterates of the method. Then for any $\eta> 0$ after $T$ iterations, we have
	$
	\Exp{\sqnorm{\nabla f(\hat x^T)}} \leq \fr{2 \delta_0 }{\gamma T} + 4 \eta^2 \sigma^2  .
	$
\end{proposition}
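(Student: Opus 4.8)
The plan is to run a standard descent argument in which the \emph{only} quantity I need to control is the gradient-estimation error $\Exp{\sqnorm{g^t - \nabla f(x^t)}}$, and then to exploit the idealized structure of \eqref{eq:EF21-SGDMv0} to show that this error never exceeds $4\eta^2\sigma^2$, uniformly in $t$. First I would invoke the $L$-smoothness descent lemma on $x^{t+1} = x^t - \gamma g^t$ and combine it with the polarization identity $\lin{\nabla f(x^t), g^t} = \tfrac12\sqnorm{\nabla f(x^t)} + \tfrac12\sqnorm{g^t} - \tfrac12\sqnorm{g^t - \nabla f(x^t)}$ to obtain
\[
f(x^{t+1}) \leq f(x^t) - \tfrac{\gamma}{2}\sqnorm{\nabla f(x^t)} + \tfrac{\gamma}{2}\sqnorm{g^t - \nabla f(x^t)} - \tfrac{\gamma}{2}\rb{1 - L\gamma}\sqnorm{g^t}.
\]
The hypothesis $\gamma \leq \nicefrac{1}{L}$ renders the last term nonpositive, so I drop it, leaving a clean per-step descent in which $\sqnorm{g^t - \nabla f(x^t)}$ is the sole obstruction.

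The key step is bounding this error. By construction in \eqref{eq:EF21-SGDMv0} we have $g^t - \nabla f(x^t) = \cC(v^t - \nabla f(x^t))$ with $v^t - \nabla f(x^t) = \eta\,(\nabla f(x^t, \xi^t) - \nabla f(x^t))$. Writing $z \eqdef v^t - \nabla f(x^t)$, I would use the elementary bound $\sqnorm{\cC(z)} \leq 2\sqnorm{\cC(z) - z} + 2\sqnorm{z}$ together with Definition~\ref{def:contractive_compressor}, which gives $\Exp{\sqnorm{\cC(z)} \mid z} \leq (4 - 2\alpha)\sqnorm{z}$; taking the further expectation over $\xi^t$ and applying Assumption~\ref{as:BV} (so that $\Exp{\sqnorm{\nabla f(x^t,\xi^t) - \nabla f(x^t)} \mid x^t} \leq \sigma^2$) yields the per-step estimate $\Exp{\sqnorm{g^t - \nabla f(x^t)} \mid x^t} \leq (4-2\alpha)\eta^2\sigma^2 \leq 4\eta^2\sigma^2$. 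The essential feature here, and the reason this is only a warm-up proposition, is that the bound holds \emph{pointwise in $t$ with no dependence on the history}: unlike the genuine \algname{EF21-SGD} analysis, the idealized recursion never accumulates error across iterations, so no Lyapunov/potential construction is needed, and the momentum parameter $\eta$ scales the variance of the error directly.

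Finally I would telescope. Because $g^0 = 0$, the very first update is trivial ($x^1 = x^0$), so I sum the descent inequality over the $T$ effective steps $t = 1, \dots, T$, take total expectation, and use lower boundedness $f(x^{T+1}) \geq f^*$ together with $f(x^1) - f^* = \delta_0$ to get $\tfrac{\gamma}{2}\sum_{t=1}^{T}\Exp{\sqnorm{\nabla f(x^t)}} \leq \delta_0 + \tfrac{\gamma}{2}\,T\cdot 4\eta^2\sigma^2$. Dividing by $\gamma T/2$ and identifying $\tfrac{1}{T}\sum_{t=1}^{T}\Exp{\sqnorm{\nabla f(x^t)}}$ with $\Exp{\sqnorm{\nabla f(\hat x^T)}}$ under uniform sampling produces exactly $\Exp{\sqnorm{\nabla f(\hat x^T)}} \leq \tfrac{2\delta_0}{\gamma T} + 4\eta^2\sigma^2$. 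I expect no serious obstacle: the computation is essentially an \algname{SGD} descent with a variance term rescaled by $\eta^2$, and the only points demanding care are getting the constant $4$ right in the compression estimate and the minor bookkeeping around the trivial initial step forced by $g^0 = 0$.
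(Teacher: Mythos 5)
Your proposal is correct and follows essentially the same route as the paper's proof: the smoothness descent step (your polarization-identity derivation is exactly the paper's Lemma~\ref{le:descent} applied to $x^{t+1}=x^t-\gamma g^t$ with the $\gamma\le\nicefrac{1}{L}$ term dropped), the bound $\Exp{\sqnorm{g^t-\nabla f(x^t)}}\le 2(2-\alpha)\eta^2\sigma^2\le 4\eta^2\sigma^2$ via Young's inequality, Definition~\ref{def:contractive_compressor}, and Assumption~\ref{as:BV}, and then a telescoping sum. The only difference is your bookkeeping of the telescope over $t=1,\dots,T$ to sidestep the trivial first step forced by $g^0=0$; this is in fact slightly more careful than the paper's own sum starting at $t=0$, where the compression bound does not literally apply to $\sqnorm{g^0-\nabla f(x^0)}=\sqnorm{\nabla f(x^0)}$.
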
	

Notice that if $\eta = 1$, then algorithm \algname{EF21-SGDM-ideal}~\eqref{eq:ef21_like_sgd_1}, \eqref{eq:EF21-SGDMv0} reduces to \algname{EF21-SGD-ideal} method \eqref{eq:ef21_like_sgd_1}, \eqref{eq:ef21_like_sgd_2}, and this result shows that the lower bound for the batch-size established in Theorem~\ref{thm:non_convergence_ef21_like_SGD} is tight, i.e., $B = \Theta (\nicefrac{\sigma^2}{\varepsilon^2})$ is necessary and sufficient\footnote{This follows by replacing $\sigma^2$ in the batch free algorithm by $\nicefrac{\sigma^2}{B}$ if the batch-size of size $B>1$ is used. } for convergence. For $\eta<1$, the above theorem suggests that using a small enough parameter $\eta$, the variance term can be completely eliminated. This observation motivates us to design a practical variant of this method. Similarly to the design of \algname{EF21} mechanism (from \algname{EF21-SGD-ideal}), we propose to do this by replacing the exact gradients $\nabla f(x^{t+1})$ by state vectors $v^{t}$ and $g^{t}$ as follows:
\begin{align}
\text{\algname{EF21-SGDM:}}&
	\begin{split}
		v^{t+1} &= \textcolor{mygreen}{ v^{t}}  + \eta (\nabla f(x^{t+1}, \xi^{t+1}) - \textcolor{mygreen}{ v^{t}}  ) , \\
		g^{t+1} &= \textcolor{blue}{ g^{t} } + \cC\rb{  v^{t+1}   - \textcolor{blue}{g^{t}} } 
	\end{split} 		
 \label{eq:EF21-SGDM}
\end{align}

 \begin{theorem}\label{thm:ef21-sgdm-one-node}
Let Assumptions~\ref{as:main}, \ref{as:BV} hold, and let $\cC$ satisfy Definition~\ref{def:contractive_compressor}. Let method \eqref{eq:ef21_like_sgd_1}, \eqref{eq:EF21-SGDM} be run with $g^0 = v^0 = \nabla f(x^0)$, and $\hat x^T$ be sampled uniformly at random from the iterates of the method. Then for all $\eta \in (0, 1]$ with $\gamma \leq \gamma_0 = \min\cb{ \fr{\al}{20 L}, \fr{\eta}{7 L}},$ we have $ \Exp{  \sqnorm{\nabla f(\hat x^T) }  } \leq \cO(\fr{\delta_0}{\gamma T}  + \eta \sigma^2  ).$ The choice $\eta = \min\cb{1,  \rb{ \fr{ L \delta_0   }{\sigma^2 T}}^{\nfr{1}{2} }}$, $\gamma = \gamma_0$ results in 
% \begin{eqnarray*}
$
	\squeeze	\Exp{  \sqnorm{\nabla f(\hat x^T) }  } \leq 	\squeeze \cO \big( \fr{L \delta_0}{\alpha  T }  + \big( \fr{  L \delta_0 \sigma^2 }{ T } \big)^{\nfr{1}{2}}  \big).
$
	% \end{eqnarray*}
\end{theorem}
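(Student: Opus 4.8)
The plan is to combine a smoothness-based descent inequality with two \emph{coupled} error recursions---one tracking the momentum (variance-reduction) error and one tracking the compression (error-feedback) residual---and to control them jointly through a three-term Lyapunov function. First I would apply $L$-smoothness to the step $x^{t+1} = x^t - \gamma g^t$ and use the polarization identity $\langle \nabla f(x^t), g^t\rangle = \tfrac12(\sqnorm{\nabla f(x^t)} + \sqnorm{g^t} - \sqnorm{g^t - \nabla f(x^t)})$ to obtain
\[
\Exp{f(x^{t+1})} \leq f(x^t) - \tfrac{\gamma}{2}\sqnorm{\nabla f(x^t)} - \tfrac{\gamma}{2}\rb{1-\gamma L}\sqnorm{g^t} + \tfrac{\gamma}{2}\sqnorm{g^t - \nabla f(x^t)}.
\]
The distortion term I would split by Young's inequality into a compression residual $C^t \eqdef \sqnorm{g^t - v^t}$ and a momentum error $A^t \eqdef \sqnorm{v^t - \nabla f(x^t)}$, giving $\sqnorm{g^t - \nabla f(x^t)} \le 2C^t + 2A^t$. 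It is essential to \emph{keep} the negative term $-\tfrac{\gamma}{2}(1-\gamma L)\sqnorm{g^t}$: it is the only reservoir available to absorb the gradient-drift terms (proportional to $L^2\gamma^2\sqnorm{g^t}$) generated by the two recursions below.

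Next I would derive the two recursions. For the momentum error, writing $v^{t+1} - \nabla f(x^{t+1}) = (1-\eta)(v^t - \nabla f(x^{t+1})) + \eta(\nabla f(x^{t+1},\xi^{t+1}) - \nabla f(x^{t+1}))$ and taking conditional expectation (the second term is mean-zero and has variance $\le\sigma^2$ by Assumption~\ref{as:BV}, since $x^{t+1}$ is $\mathcal F_t$-measurable), then using Young's inequality together with $\sqnorm{\nabla f(x^t)-\nabla f(x^{t+1})} \le L^2\gamma^2\sqnorm{g^t}$, yields $\Exp{A^{t+1}} \le (1-\tfrac{\eta}{2})A^t + \tfrac{3L^2\gamma^2}{\eta}\sqnorm{g^t} + \eta^2\sigma^2$. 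For the compression residual I would use contractivity, $\Exp{\sqnorm{g^{t+1}-v^{t+1}}} \le (1-\al)\sqnorm{v^{t+1}-g^t}$, and the key algebraic identity $(1-\eta)(v^t - g^t) + \eta(\nabla f(x^{t+1}) - g^t) = (v^t - g^t) - \eta(v^t - \nabla f(x^{t+1}))$, which (after isolating the mean-zero stochastic part) keeps the $\sigma^2$ contribution at order $\eta^2\sigma^2$ rather than $\tfrac{\eta^2}{\al}\sigma^2$; this produces $\Exp{C^{t+1}} \le (1-\tfrac{\al}{2})C^t + \tfrac{6\eta^2}{\al}A^t + \tfrac{6L^2\gamma^2\eta^2}{\al}\sqnorm{g^t} + \eta^2\sigma^2$.

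I would then form the Lyapunov function $\Phi^t \eqdef f(x^t) - f^* + c_C\, C^t + c_A\, A^t$ with $c_C \asymp \tfrac{\gamma}{\al}$ and $c_A \asymp \tfrac{\gamma}{\eta}$, and verify that in $\Exp{\Phi^{t+1}} - \Phi^t$ the coefficients of $C^t$, of $A^t$, and of $\sqnorm{g^t}$ are all nonpositive. Nonpositivity of the $C^t$- and $A^t$-coefficients fixes the scaling of $c_C,c_A$ (using the contraction rates $\al/2$ and $\eta/2$ and absorbing the $\tfrac{6\eta^2}{\al}A^t$ cross term), while nonpositivity of the $\sqnorm{g^t}$-coefficient is exactly what forces the two step-size conditions: the momentum-drift term requires $\gamma \lesssim \eta/L$ (matching $\gamma\le\tfrac{\eta}{7L}$) and the compression-drift term requires $\gamma \lesssim \al/L$ (matching $\gamma\le\tfrac{\al}{20L}$), with $\gamma\le\tfrac1{2L}$ ensuring $1-\gamma L\ge\tfrac12$. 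Telescoping, using the initialization $g^0=v^0=\nabla f(x^0)$ so that $C^0=A^0=0$ and $\Phi^0=\delta_0$, and invoking that $\hat x^T$ is drawn uniformly, gives $\Exp{\sqnorm{\nabla f(\hat x^T)}} \le \tfrac{2\delta_0}{\gamma T} + \cO(\eta\sigma^2)$. The final rate then follows by plugging $\gamma=\gamma_0$ (so $\tfrac1\gamma \le \tfrac{20L}{\al}+\tfrac{7L}{\eta}$, contributing $\tfrac{L\delta_0}{\al T}+\tfrac{L\delta_0}{\eta T}$) and optimizing $\eta=\min\cb{1,(\tfrac{L\delta_0}{\sigma^2 T})^{1/2}}$, which balances $\tfrac{L\delta_0}{\eta T}$ against $\eta\sigma^2$ to produce the $(\tfrac{L\delta_0\sigma^2}{T})^{1/2}$ term.

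The main obstacle is the simultaneous absorption of the three coupling terms: the $\sqnorm{g^t}$-drift feeds into \emph{both} recursions, and the momentum error $A^t$ feeds into the compression recursion with coefficient $\tfrac{6\eta^2}{\al}$, so the Lyapunov weights $c_A,c_C$ cannot be chosen independently. The delicate point is to choose them so that every cross term is dominated while the residual stochastic noise collapses to $\cO(\eta\sigma^2)$; this is precisely where the algebraic identity above (keeping the compression noise at $\eta^2\sigma^2$) and the regime $\eta\lesssim\al$ induced by the final choice of $\eta$ are needed, and it is the novel ingredient of the Lyapunov construction relative to standard \algname{EF21} analyses.
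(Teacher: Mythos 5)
Your overall skeleton --- descent lemma keeping the negative $-\tfrac{\gamma}{2}(1-\gamma L)\sqnorm{g^t}$ term, the split $\sqnorm{g^t-\nabla f(x^t)}\le 2C^t+2A^t$, the momentum recursion $A^{t+1}\le(1-\eta)A^t+\tfrac{3L^2\gamma^2}{\eta}\sqnorm{g^t}+\eta^2\sigma^2$, the compression recursion with cross term $\tfrac{\cO(\eta^2)}{\al}A^t$ and noise kept at $\eta^2\sigma^2$ via conditioning before Young's inequality, and a weighted Lyapunov combination --- is exactly the paper's route: the paper proves the distributed Theorem~\ref{thm:main-distrib} this way (Lemmas~\ref{le:descent}, \ref{le:key_HB_recursion}, \ref{le:EF21}) and obtains Theorem~\ref{thm:ef21-sgdm-one-node} as the $n=1$ special case. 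Up to constants, your two recursions and step-size conditions match the paper's.

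There is, however, a genuine gap in your endgame. You set $c_A\asymp\tfrac{\gamma}{\eta}$ and justify it by claiming the regime $\eta\lesssim\al$ is ``induced by the final choice of $\eta$.'' It is not: the prescribed choice $\eta=\min\cb{1,\rb{\nfr{L\delta_0}{\sigma^2T}}^{\nfr{1}{2}}}$ is independent of $\al$, and $\eta>\al$ is perfectly possible (small $T$, small $\al$). Outside the regime $\eta\le\al$, absorbing the cross term $c_C\cdot\tfrac{\cO(\eta^2)}{\al}A^t$ with $c_C\asymp\tfrac{\gamma}{\al}$ forces $c_A\gtrsim\tfrac{\gamma}{\eta}+\tfrac{\gamma\eta}{\al^2}$ (this is why the paper's Lyapunov function \eqref{eq:Lyapunov} carries \emph{both} a $\tfrac{\gamma\eta}{\al^2}$-weighted and a $\tfrac{\gamma}{\eta}$-weighted momentum-error term), and the noise floor then becomes $\cO\rb{\eta\sigma^2+\tfrac{\eta^2\sigma^2}{\al}+\tfrac{\eta^3\sigma^2}{\al^2}}$, not $\cO(\eta\sigma^2)$. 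With your $\eta$, the worst term is $\tfrac{\eta^3\sigma^2}{\al^2}=\tfrac{L\delta_0}{\al T}\cdot\tfrac{\eta}{\al}$, which exceeds the claimed rate by the unbounded factor $\nfr{\eta}{\al}$ whenever $\eta>\al$; so the final bound does not follow as written. The repair is what the paper actually does: additionally cap the momentum, $\eta\le\rb{\nfr{L\delta_0\al^2}{\sigma^2T}}^{\nfr{1}{4}}$ and $\eta\le\rb{\nfr{L\delta_0\al}{\sigma^2T}}^{\nfr{1}{3}}$ (as in Theorem~\ref{thm:main-distrib}), so that the two extra noise terms are dominated by $\nfr{L\delta_0}{\eta T}$, yielding the intermediate terms $\rb{\nfr{L\delta_0\sigma^{2/3}}{\al^{2/3}T}}^{\nfr{3}{4}}+\rb{\nfr{L\delta_0\sigma}{\sqrt{\al}T}}^{\nfr{2}{3}}$, and then absorb these into $\tfrac{L\delta_0}{\al T}+\rb{\nfr{L\delta_0\sigma^2}{T}}^{\nfr{1}{2}}$ by Young's inequality exactly as in Remark~\ref{rem:after_main_thm}. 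Without this extra capping-plus-Young step your argument fails in the regime $\eta>\al$.
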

Compared to Proposition~\ref{prop:ef21_m_SGDv0}, where $\eta$ can be made arbitrarily small, Theorem~\ref{thm:ef21-sgdm-one-node} suggests that there is a trade-off for the choice of $\eta \in (0, 1]$ in algorithm~\eqref{eq:ef21_like_sgd_1}, \eqref{eq:EF21-SGDM}. The above theorem implies that in single node setting  \algname{EF21-SGDM}  has $\cO(\frac{L}{\alpha \varepsilon^2} + \frac{L \sigma^2}{\varepsilon^4})$ sample complexity. %\footnote{In order to simplify the presentation, we require the initialization $g^{-1}=\nabla f(x^0)$, which might not be tractable in the stochastic optimization setting. This, however, is not crucial for our analysis, and the state vector can be initialized, for instance, by a (mini-batch) stochastic gradient $g^{-1}=\nabla f(x^0, \xi^{0})$ without affecting the total sample complexity. } 
For $\alpha = 1$, this result matches with the sample complexity of \algname{SGD} and is known to be unimprovable under Assumptions~\ref{as:main} and~\ref{as:BV} \citep{arjevani2019lower}. Moreover, when $\alpha = 1$, our sample complexity matches with previous analysis of momentum methods in \citep{Liu_Improved_analysis_SGDM_2020} and \citep{Defazio_Mom_PrimalAvg_2021}. However, even in this single node ($n=1$), uncompressed ($\alpha = 1$) setting our analysis is different from the previous work, in particular, our choice of momentum parameter and the Lyapunov function are different, see Appendix~\ref{sec:appendix_literature_momentum} and~\ref{sec:appendix_mom_simple}. For $\alpha < 1$, the above result matches with sample complexity of \algname{EF14-SGD} (single node setting) \citep{Stich_Delayed_2019}, which was recently shown to be optimal \citep{huang2022lower} for biased compressors satisfying Definition~\ref{def:contractive_compressor}. However, notice that the extension of the analysis by \citet{Stich_Delayed_2019} for \algname{EF14-SGD} to distributed setting meets additional challenges and it is unclear whether it is possible without imposing additional BG or BGS assumptions as in  \citep{Koloskova2019DecentralizedDL}. We revisit this analysis in Appendix~\ref{sec:revisiting_EF14} to showcase the difficulty of removing BG/BGS. In the following we will demonstrate the benefit of our  \algname{EF21-SGDM}  method by extending it to distributed setting without imposing any additional assumptions.

%\phantom{XX} {\bf 3.3 Distributed stochastic error feedback with momentum.} 
\subsection{Distributed stochastic error feedback with momentum} 
 Now we are ready to present a distributed variant of  \algname{EF21-SGDM} , see Algorithm~\ref{alg:EF21-M}. Letting $\delta_t  \eqdef f(x^t)  - f^*$, our convergence analysis of this method relies on the monotonicity of the following Lyapunov function:
 \begin{eqnarray}\label{eq:Lyapunov}
	%\squeeze \Lambda_t \eqdef \delta_t +  \frac{c_1}{n}\sum \limits_{i=1}^{n} \sqnorm{ g_i^t - v_i^t } + \frac{c_2}{n}\sum\limits_{i=1}^{n}\sqnorm{ v_i^t - \nabla f_i(x^t)} + c_3 \sqnorm{\sum\limits_{i=1}^n (v_i^t - \nabla f_i(x^t))},
	\squeeze \Lambda_t \eqdef \delta_t +  \frac{\gamma}{\alpha n}\sum \limits_{i=1}^{n} \sqnorm{ g_i^t - v_i^t } + \frac{\gamma \eta}{\alpha^2 n }\sum\limits_{i=1}^{n}\sqnorm{ v_i^t - \nabla f_i(x^t)} + \frac{\gamma}{\eta} \sqnorm{\sum\limits_{i=1}^n (v_i^t - \nabla f_i(x^t))}.
\end{eqnarray}

 \begin{algorithm}[H]
 	\centering
 	\caption{ \algname{EF21-SGDM}  (Error Feedback  2021 Enhanced with Polyak Momentum)}\label{alg:EF21-M}
 	\begin{algorithmic}[1]
 		\State \textbf{Input:} starting point $x^{0}$, step-size $\gamma>0$, momentum $\eta \in (0, 1],$ initial batch size $B_{\textnormal{init}} $
		\State Initialize $v_i^{0} = g_i^{0} = \frac{1}{B_{\textnormal{init}}} \sum_{j=1}^{B_{\textnormal{init}}} \nabla f_{i}(x^0, \xi_{i, j}^{0})$ for  $ i = 1, \ldots,  n$; $g^{0} = \frac{1}{n} \sum_{i=1}^n  g_i^{0}$
 		\For{$t=0,1, 2, \dots , T-1 $}
		 \State Master computes $x^{t+1} = x^t - \gamma g^t$ and broadcasts $x^{t+1}$ to all nodes \label{line:EF21_HB_master_step}
 		\For{{\bf all nodes $i =1,\dots, n$ in parallel}}
 		%\State Compute a stochastic gradient  $\nabla f_{i}(\xtpo, \xi_{i}^{t+1})$
 		\State \textcolor{mygreen}{ Compute momentum estimator $v_i^{t+1} = (1-\eta) v_i^{t}+ \eta  \nabla f_{i}(x^{t+1}, \xi_{i}^{t+1})  $ } \label{alg:eq:sgdm}
 		\State Compress $c_i^{t+1} =  \cC(  v_i^{t+1} - g_i^{t}  )$ and send $c_i^{t+1} $ to the master
 		\State Update local state $g_i^{t+1} = g_i^{t} +  c_i^{t+1}$
 		\EndFor
 		\State Master computes $g^{t+1} = \frac{1}{n} \sum_{i=1}^n  g_i^{t+1}$ via  $g^{t+1} = g^{t} + \frac{1}{n} \sum_{i=1}^n c_i^{t+1} $
 		\EndFor
 	\end{algorithmic}
 \end{algorithm}
  
{\bf $\bullet$ Convergence of  \algname{EF21-SGDM}  with contractive compressors.} We obtain the following result:

 \begin{theorem}\label{thm:main-distrib}
 	Let Assumptions~\ref{as:main} and \ref{as:BV} hold. Let $\hat x^T$ be sampled uniformly at random from the $T$ iterates of the method. Let  \algname{EF21-SGDM} (Algorithm~\ref{alg:EF21-M}) be run with a contractive compressor. For all $\eta \in (0, 1]$ and $B_{\textnormal{init}} \geq 1$, with $\gamma \leq \min\cb{ \fr{\al}{20 \wL}, \fr{\eta}{7 L}},$ we have
	\begin{align}
	\squeeze	\label{eq:main-distrib:general}
		\Exp{  \sqnorm{\nabla f(\hat x^T) }  } \leq \cO\rb{\fr{\Lambda_0}{\gamma T} + \fr{\eta^3 \sigma^2 }{\al^2}  +  \fr{\eta^2 \sigma^2 }{\al} + \frac{\eta \sigma^2}{n}},
	\end{align}
	where $\Lambda_0$ is given by \eqref{eq:Lyapunov}. %$\Psi_0 \eqdef \delta_0 + \frac{\gamma}{\eta}\Exp{\norm{v^0 - \nabla f(x^0)}^2} + \frac{\gamma \eta}{\alpha^2} \frac{1}{n} \sum_{i=1}^n \Exp{\norm{v^0_i - \nabla f_i(x^0)}^2}$ is a Lyapunov function. 
	%With the following step-size, momentum parameter, and initial batch size 
	Choosing the batch size $B_{\textnormal{init}} =  \left\lceil\frac{\sigma^2}{L \delta_0}\right\rceil $, and stepsize 
$\gamma = \min\cb{ \fr{\al}{20 \wL}, \fr{\eta}{7 L}}$,  and momentum $ \eta = \min\cb{1, \rb{ \fr{ L \delta_0 \al^2 }{\sigma^2 T}}^{\nfr{1}{4} }, \rb{ \fr{ L \delta_0 \al }{\sigma^2 T}}^{\nfr{1}{3} }, \rb{ \fr{ L \delta_0 n  }{\sigma^2 T}}^{\nfr{1}{2} }, \frac{\alpha \sqrt{L \delta_0 B_{\textnormal{init}}}}{\sigma}},
$ \footnote{In Appendix~\ref{sec:appendix_mom_simple}, we show how to deal with time varying $\gamma_t$ and $\eta_t$.}
we get
 		\begin{eqnarray*}
 \squeeze		 \Exp{  \sqnorm{\nabla f(\hat x^T) }  } &\leq &\squeeze \cO\rb{ \fr{\wL \delta_0}{\alpha  T } +  \rb{\fr{   L \delta_0 \sigma^{2/3} }{\al^{2/3}  T } }^{\nfr{3}{4}} +  \rb{\fr{   L \delta_0 \sigma }{\sqrt{\al}  T } }^{\nfr{2}{3}}  + \rb{\fr{  L \delta_0 \sigma^2}{ n T } }^{\nfr{1}{2}} }  .
		\end{eqnarray*}
 \end{theorem}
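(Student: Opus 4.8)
The plan is to prove the general bound~\eqref{eq:main-distrib:general} by exhibiting $\Lambda_t$ as an approximate potential that drops by a constant multiple of $\gamma\,\Exp{\sqnorm{\nabla f(x^t)}}$ per step up to additive noise, and then to telescope and optimize over $\eta$. First I would write the descent lemma: by $L$-smoothness and $x^{t+1}=x^t-\gamma g^t$,
\[
\Exp{\delta_{t+1}} \le \Exp{\delta_t} - \tfrac{\gamma}{2}\Exp{\sqnorm{\nabla f(x^t)}} - \tfrac{\gamma}{2}(1-L\gamma)\Exp{\sqnorm{g^t}} + \tfrac{\gamma}{2}\Exp{\sqnorm{g^t-\nabla f(x^t)}}.
\]
I would then split the estimator error through the momentum variables, $g^t-\nabla f(x^t)=\tfrac1n\sum_i(g_i^t-v_i^t)+\tfrac1n\sum_i(v_i^t-\nabla f_i(x^t))$, so that by Jensen and Young $\sqnorm{g^t-\nabla f(x^t)}$ is controlled by the compression-error block $\tfrac1n\sum_i\sqnorm{g_i^t-v_i^t}$ and the aggregated momentum-variance block $\sqnorm{\sum_i(v_i^t-\nabla f_i(x^t))}$ (entering with a $1/n^2$ weight) — exactly the quantities appearing in~\eqref{eq:Lyapunov}. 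The negative term $-\tfrac\gamma2(1-L\gamma)\sqnorm{g^t}$ is kept on purpose; it will absorb all the drift created below.

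Next I would derive one-step recursions for the three error blocks of~\eqref{eq:Lyapunov}. For the \emph{compression error}, the identity $g_i^{t+1}-v_i^{t+1}=\cC(v_i^{t+1}-g_i^t)-(v_i^{t+1}-g_i^t)$ together with~\eqref{eq:b_compressor} and Young's inequality yields a contraction by $(1-\tfrac\alpha2)$ up to a term $\propto\tfrac1\alpha\Exp{\sqnorm{v_i^{t+1}-v_i^t}}$; expanding $v_i^{t+1}-v_i^t=\eta(\nabla f_i(x^{t+1},\xi_i^{t+1})-v_i^t)$ produces fresh noise $\eta^2\sigma^2$, a drift $\propto\eta^2\gamma^2\wL^2\sqnorm{g^t}$, and a feedback into the per-node momentum-variance block. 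For the \emph{per-node momentum variance} $\tfrac1n\sum_i\sqnorm{v_i^t-\nabla f_i(x^t)}$, unbiasedness kills the cross term and gives contraction by $(1-\tfrac\eta2)$ up to $\eta^2\sigma^2$ and a drift $\propto\tfrac1\eta\gamma^2\wL^2\sqnorm{g^t}$. For the \emph{aggregated} block the same computation applies, but here the crucial observation — the source of the linear speedup — is that the injected noise is $\Exp{\sqnorm{\sum_i(\nabla f_i(x^{t+1},\xi_i^{t+1})-\nabla f_i(x^{t+1}))}}=\sum_i\Exp{\sqnorm{\cdots}}\le n\sigma^2$ because the node noises are independent and mean-zero, \emph{not} $n^2\sigma^2$; combined with the $1/n^2$ weight with which this block enters the gradient-error estimate, this is what turns $\sigma^2$ into $\sigma^2/n$ in the final bound.

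I would then combine the descent inequality with these three recursions weighted exactly as in~\eqref{eq:Lyapunov} and verify that $\Lambda_t$ is a genuine potential: each block's contraction must dominate the same-order positive contributions it receives, namely the descent's feed into the compression and aggregated blocks and the compression block's feedback into the per-node block. All of these match in their powers of $\gamma,\eta,\alpha,n$, so the verification reduces to a finite constant-chasing in which the Young parameters and the weights $\tfrac\gamma\alpha,\tfrac{\gamma\eta}{\alpha^2},\tfrac\gamma\eta$ are tuned; the leftover drift, of the form $\gamma^3\wL^2(\cdots)\sqnorm{g^t}$, is swallowed by $-\tfrac\gamma2(1-L\gamma)\sqnorm{g^t}$ precisely when $\gamma\le\min\{\tfrac{\alpha}{20\wL},\tfrac{\eta}{7L}\}$ — which is where the two step-size thresholds originate. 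Telescoping the resulting inequality $\Exp{\Lambda_{t+1}}\le\Exp{\Lambda_t}-\tfrac\gamma2\Exp{\sqnorm{\nabla f(x^t)}}+(\text{per-step noise})$ over $t=0,\dots,T-1$, dividing by $\tfrac{\gamma T}{2}$, and using that $\hat x^T$ is uniform yields~\eqref{eq:main-distrib:general}, whose three noise terms $\tfrac{\eta^3\sigma^2}{\alpha^2},\tfrac{\eta^2\sigma^2}{\alpha},\tfrac{\eta\sigma^2}{n}$ are exactly the noise injected into the three blocks rescaled by $2/\gamma$.

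Finally, for the explicit rate I would bound $\Lambda_0$: the initialization $g_i^0=v_i^0=\tfrac1{B_{\mathrm{init}}}\sum_j\nabla f_i(x^0,\xi_{i,j}^0)$ makes the compression block vanish and, by independence and unbiasedness, gives $\Exp{\sqnorm{v_i^0-\nabla f_i(x^0)}}\le\sigma^2/B_{\mathrm{init}}$, so $B_{\mathrm{init}}=\lceil\sigma^2/(L\delta_0)\rceil$ forces $\Exp{\Lambda_0}=\cO(\delta_0)$ up to the weighting factors. With $1/\gamma=\cO(\wL/\alpha+L/\eta)$ the bound becomes $\cO\big(\tfrac{\wL\delta_0}{\alpha T}+\tfrac{L\delta_0}{\eta T}+\tfrac{\eta^3\sigma^2}{\alpha^2}+\tfrac{\eta^2\sigma^2}{\alpha}+\tfrac{\eta\sigma^2}{n}\big)$, and balancing the decreasing term $\tfrac{L\delta_0}{\eta T}$ against each of the three increasing noise terms produces the three candidate values of $\eta$ in the stated minimum (the entry $\tfrac{\alpha\sqrt{L\delta_0 B_{\mathrm{init}}}}{\sigma}$ just enforces that the $B_{\mathrm{init}}$-dependent part of $\Lambda_0$ is dominated); taking the minimum over $\eta\in(0,1]$ gives the advertised four-term rate. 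The main obstacle is the third step: setting up the four-component Lyapunov with the correctly normalized aggregated-variance block so that the independent-noise bound $n\sigma^2$ survives into a $\sigma^2/n$ floor while \emph{all} cross-couplings among the four components remain simultaneously absorbable. This coupled balancing, rather than any single inequality, is the delicate part, and it is what dictates both the shape of $\Lambda_t$ and the precise step-size constraints.
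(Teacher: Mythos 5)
Your proposal is correct and follows essentially the same route as the paper's proof: the same decomposition of $g^t-\nabla f(x^t)$ into a compression block and momentum blocks, the same three one-step recursions (per-node momentum error, aggregated momentum error with the independence-driven $\sigma^2/n$ noise, and compressor error), absorption of the drift by the descent lemma's negative $\sqnorm{g^t}$ term under the stated step-size thresholds, telescoping, and the same initialization/$\eta$-balancing to get the final rate. The only difference is presentational — you phrase it as per-step monotonicity of the weighted potential $\Lambda_t$, whereas the paper sums each recursion over $t$ first and then combines the time averages — which is the same algebra.
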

 \begin{remark}
 	Note that using large initial batch size $B_{\text{init}} > 1$ is not necessary for convergence of \algname{EF21-SGDM}. If we set $B_{\text{init}} = 1$, the above theorem still holds by replacing $\delta_0$ with $\Lambda_0$. 
 \end{remark}
 \begin{remark}\label{rem:after_main_thm}
 	In the single node setting ($n=1$), the above result recovers the statement of Theorem~\ref{thm:ef21-sgdm-one-node} (with the same choice of parameters) since by Young's inequality $
 		\rb{\fr{   L \delta_0 \sigma^{2/3} }{\al^{2/3}  T } }^{\nfr{3}{4}} \leq \frac{1}{2}\frac{L \delta_0}{\alpha T} + \frac{1}{2}\left(\frac{L\delta_0 \sigma^2}{T}\right)^{\nicefrac{1}{2}} $, 
 		$\rb{\fr{   L \delta_0 \sigma }{\sqrt{\al}  T } }^{\nfr{2}{3}} \leq \frac{1}{3}\frac{L \delta_0}{\alpha T} + \frac{2}{3} \left(\frac{L\delta_0 \sigma^2}{T}\right)^{\nicefrac{1}{2}}$, and $ \wt L = L $.
 \end{remark}

{\bf $\bullet$ Recovering previous rates in case of full gradients.} Compared to the iteration complexity $\cO (  \fr{ L_{\textnormal{max}} G}{ \al \varepsilon^{3}}  )$ of \algname{EF14}  \citep{Koloskova2019DecentralizedDL}, our result, summarized in 
  \begin{corollary}\label{cor:EF21-M_conv-determ}
 	If $\sigma = 0$,  then $\Exp{\norm{\nabla f(\hat{x}^T) }} \leq \varepsilon$ after $T = \cO\rb{ \fr{\wL  }{\al \varepsilon^2}   }$ iterations. 
 \end{corollary}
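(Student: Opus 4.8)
The plan is to obtain the corollary as a direct specialization of Theorem~\ref{thm:main-distrib} to the noiseless regime $\sigma = 0$, with the momentum parameter set to the natural value $\eta = 1$. First I would record the key simplification: when $\sigma = 0$, Assumption~\ref{as:BV} forces $\nabla f_i(x, \xi_i) = \nabla f_i(x)$ almost surely for every $i$ and every $x$, so the sampled gradients are in fact exact. Consequently the initialization in Algorithm~\ref{alg:EF21-M} satisfies $v_i^0 = g_i^0 = \nabla f_i(x^0)$ exactly, regardless of the value of $B_{\textnormal{init}}$ (so no initial mega-batch is required here).

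Next I would evaluate the Lyapunov function \eqref{eq:Lyapunov} at $t = 0$. Since $g_i^0 = v_i^0$ and $v_i^0 = \nabla f_i(x^0)$ for all $i$, each of the three discrepancy terms $\sqnorm{g_i^0 - v_i^0}$, $\sqnorm{v_i^0 - \nabla f_i(x^0)}$, and $\sqnorm{\sum_i (v_i^0 - \nabla f_i(x^0))}$ vanishes identically, leaving $\Lambda_0 = \delta_0 = f(x^0) - f^*$. This is the first place where the deterministic setting is genuinely used, and it is what lets us replace $\Lambda_0$ by $\delta_0$ in the final rate.

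With $\eta = 1$ the step-size ceiling of Theorem~\ref{thm:main-distrib} reads $\gamma \leq \min\cb{\fr{\al}{20\wL}, \fr{1}{7L}}$; since $L \leq \wL$ (because $\nabla f$ is the average of the $\nabla f_i$, so $L \leq \tfrac{1}{n}\sum_i L_i \leq \wL$ by Cauchy--Schwarz) and $\al \leq 1$, the binding term is $\gamma = \fr{\al}{20\wL}$. Substituting $\sigma = 0$ into the general bound \eqref{eq:main-distrib:general} annihilates the three variance terms $\tfrac{\eta^3\sigma^2}{\al^2}$, $\tfrac{\eta^2\sigma^2}{\al}$, and $\tfrac{\eta\sigma^2}{n}$, so I am left with $\Exp{\sqnorm{\nabla f(\hat{x}^T)}} \leq \cO\rb{\fr{\Lambda_0}{\gamma T}} = \cO\rb{\fr{\wL\delta_0}{\al T}}$. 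Finally, Jensen's inequality converts the squared-norm guarantee into a norm guarantee, $\Exp{\norm{\nabla f(\hat{x}^T)}} \leq \sqrt{\Exp{\sqnorm{\nabla f(\hat{x}^T)}}} \leq \cO\rb{\sqrt{\fr{\wL\delta_0}{\al T}}}$; setting the right-hand side equal to $\varepsilon$ and solving for $T$ yields $T = \cO\rb{\fr{\wL}{\al\varepsilon^2}}$, which is the claimed iteration complexity.

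I expect no genuine obstacle here, since this is a routine specialization of an already-proved theorem. The only points requiring a moment's care are (i) verifying that the initial Lyapunov value collapses to $\delta_0$ in the deterministic case, and (ii) tracking that the factor $\wL$ rather than $L$ in the final rate is inherited \emph{entirely} from the step-size ceiling $\gamma = \al/(20\wL)$; this is the precise quantity one compares against the $\cO\rb{\fr{L_{\textnormal{max}} G}{\al\varepsilon^3}}$ iteration complexity of \algname{EF14} in the surrounding discussion, confirming the improvement from $\varepsilon^{-3}$ to $\varepsilon^{-2}$ and the removal of the $G$ factor.
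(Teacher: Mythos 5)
Your proposal is correct and matches the paper's (implicit) proof: Corollary~\ref{cor:EF21-M_conv-determ} is obtained exactly by specializing Theorem~\ref{thm:main-distrib} to $\sigma = 0$, where all variance terms vanish, the momentum choice collapses to $\eta = 1$, the step-size bound becomes $\gamma = \al/(20\wL)$, and Jensen's inequality converts the squared-norm rate $\cO\rb{\wL\delta_0/(\al T)}$ into the stated iteration complexity. Your additional observations — that $\Lambda_0 = \delta_0$ automatically in the deterministic case and that $L \leq \wL$ makes the $\al/(20\wL)$ term binding — are accurate and consistent with the paper's treatment.
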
	
is better by an order of magnitude, and does not require the BG assumption. The result of Corollary~\ref{cor:EF21-M_conv-determ} is the same as for \algname{EF21} method  \citep{EF21}, and \algname{EF21-HB} method \citep{EF21BW_2021}. Notice, however, that even in this deterministic setting ($\sigma=0$)  \algname{EF21-SGDM}  method is different from \algname{EF21} and \algname{EF21-HB}: while the original \algname{EF21} does not use momentum, \algname{EF21-HB} method incorporates momentum on the server side to update $x^t$, which is different from our Algorithm~\ref{alg:EF21-M}, where momentum is applied by each node. This iteration complexity $\cO\rb{ \fr{1 }{\al \varepsilon^2}   }$ is optimal in both $\alpha$ and $\varepsilon$. The matching lower bound was recently established by~\citet{huang2022lower} for smooth nonconvex optimization in the class of centralized, zero-respecting algorithms with contractive compressors. % \algname{EF21-SGDM}  method also belongs to this class. 

{\bf $\bullet$ Comparison to previous work.} %Let us now compare our result in the non-asymptotic regime. 
Our sample complexity\footnote{Note that the initial batch size contributes to the sample complexity only an additive constant independent of $\varepsilon$. Moreover, $B_{\textnormal{init}} =  \left\lceil\frac{\sigma^2}{L \delta_0}\right\rceil \leq \left\lceil\frac{2 \sigma^2}{\varepsilon^2}\right\rceil$ since, otherwise, $\norm{\nabla f(x^0)}^2 \leq 2 L \delta_0 \leq \varepsilon^2$, and $x^0$ is a solution. In the following, we ignore the dependece on $B_{\textnormal{init}}$ for a fair comparison with other works. }  in 
 \begin{corollary}\label{cor:EF21-M_conv}
 $\Exp{\norm{\nabla f(\hat{x}^T) }} \leq \varepsilon$ after $T = \cO\rb{ \fr{\wL  }{\al \varepsilon^2}   + \frac{L \sigma^{2/3}}{\alpha^{2/3} \varepsilon^{8/3}} + \fr{ L \sigma    }{  \alpha^{\nfr{1}{2}} \varepsilon^{3}}  +  \fr{ L \sigma^2    }{n \varepsilon^4} }$ iterations.
 \end{corollary}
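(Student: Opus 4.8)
The plan is to invert the convergence rate established in Theorem~\ref{thm:main-distrib}. Recall that under the prescribed choices of $\gamma$, $\eta$, and $B_{\textnormal{init}}$, that theorem guarantees
\[
\Exp{\sqnorm{\nabla f(\hat x^T)}} \leq \cO\rb{\fr{\wL\delta_0}{\alpha T} + \rb{\fr{L\delta_0\sigma^{2/3}}{\al^{2/3}T}}^{\nfr{3}{4}} + \rb{\fr{L\delta_0\sigma}{\sqrt{\al}T}}^{\nfr{2}{3}} + \rb{\fr{L\delta_0\sigma^2}{nT}}^{\nfr{1}{2}}}.
\]
Since the target accuracy in the corollary is stated for the (non-squared) gradient norm, I would first pass from this squared bound to a bound on $\Exp{\norm{\nabla f(\hat x^T)}}$ via Jensen's inequality, $\Exp{\norm{\nabla f(\hat x^T)}} \leq \sqrt{\Exp{\sqnorm{\nabla f(\hat x^T)}}}$. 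Consequently it suffices to drive the right-hand side above below $\varepsilon^2$, from which $\Exp{\norm{\nabla f(\hat x^T)}}\leq \varepsilon$ follows.

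Next, because the right-hand side is a finite sum of four nonnegative terms, it is enough to make each term at most $\nfr{\varepsilon^2}{4}$, which splits the single requirement into four separate lower bounds on $T$. I would solve each in turn. The first term, $\fr{\wL\delta_0}{\alpha T}\leq \fr{\varepsilon^2}{4}$, gives $T \gtrsim \fr{\wL\delta_0}{\alpha\varepsilon^2}$. For the remaining terms I would raise both sides to the reciprocal power before isolating $T$: the term with exponent $\nfr{3}{4}$ yields, after raising to the power $\nfr{4}{3}$, the condition $\fr{L\delta_0\sigma^{2/3}}{\al^{2/3}T}\lesssim \varepsilon^{8/3}$, i.e.\ $T\gtrsim \fr{L\delta_0\sigma^{2/3}}{\al^{2/3}\varepsilon^{8/3}}$; the term with exponent $\nfr{2}{3}$ gives, after raising to the power $\nfr{3}{2}$, $T\gtrsim \fr{L\delta_0\sigma}{\sqrt{\al}\varepsilon^3}$; and the term with exponent $\nfr{1}{2}$ gives, after squaring, $T\gtrsim \fr{L\delta_0\sigma^2}{n\varepsilon^4}$.

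Finally, I would take $T$ to be the maximum of these four thresholds. Absorbing the constant $\delta_0$ into the $\cO$ notation (it is a fixed problem quantity) and using that the maximum of finitely many positive numbers is within a constant factor of their sum, this is exactly the claimed
\[
T = \cO\rb{\fr{\wL}{\al\varepsilon^2} + \fr{L\sigma^{2/3}}{\alpha^{2/3}\varepsilon^{8/3}} + \fr{L\sigma}{\alpha^{\nfr{1}{2}}\varepsilon^3} + \fr{L\sigma^2}{n\varepsilon^4}}.
\]
The computation is entirely elementary; the only point requiring care is the consistent inversion of the four fractional-power terms and the bookkeeping of exponents. Notably, the $T$-dependent momentum parameter $\eta$ from Theorem~\ref{thm:main-distrib} has already been folded into the displayed rate, so no further consistency check on the parameter choices (in particular that $\eta \in (0,1]$) is needed at this stage.
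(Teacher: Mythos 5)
Your proposal is correct and matches the paper's (implicit) argument: the paper states Corollary~\ref{cor:EF21-M_conv} as an immediate consequence of Theorem~\ref{thm:main-distrib}, obtained exactly by passing to the non-squared norm via Jensen's inequality and inverting each of the four terms of the rate, with $\delta_0$ absorbed into the $\cO(\cdot)$. Your exponent bookkeeping ($\nicefrac{4}{3}$, $\nicefrac{3}{2}$, and $2$ powers) and the observation that the $\min\{1,\dots\}$ form of $\eta$ makes any further consistency check unnecessary are both accurate.
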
	
strictly improves over the complexity $\cO ( \fr{G L_{\textnormal{max}} }{ \al \varepsilon^{3}}  +  \fr{L_{\textnormal{max}} \sigma^2}{ n \varepsilon^{4}} ) $ of \algname{EF14-SGD} by \citet{Koloskova2019DecentralizedDL}, even in case when $G < +\infty$. Notice that it always holds that $\sigma \leq G$. %\footnote{But BG/BGS may fail even in the case of convex quadratic functions. For example, one can consider $f_i(x) = x^{\top} A_i x$ with $A_i \in \R^{d\times d}$, for which BG or BGS assumptions hold only in the trivial cases: matrices $A_i$ are all zero or all equal to each other (homogeneous data regime). }
 If we assume that $G \approx \sigma$, our three first terms in the complexity improve the first term from \citet{Koloskova2019DecentralizedDL} by the factor of $\nicefrac{\varepsilon}{\sigma},$ $(\nicefrac{\varepsilon \alpha}{\sigma})^{1/3},$ or $\alpha^{1/2}.$ 
 %In the stochastic, distributed setting, the above sample complexity improves over the result of \citet{Koloskova2019DecentralizedDL}, which is given by $\cO\left( \fr{G \wt L }{ \al \varepsilon^{3}}  +  \fr{\wt L \sigma^2}{ n \varepsilon^{4}} \right) $, in several aspects. First, the result of Corollary~\ref{cor:EF21-M_conv} shows the same order of sample complexity with linear speedup in the number of nodes $n$ for the dominating (for $\varepsilon\rightarrow 0$) term $\fr{ \wt L \sigma^2   }{n \varepsilon^4} $. However, compared to the previous work, this result is achieved without the limiting BG/BGS assumption, which is crucial in the previous analysis \citep{Koloskova2019DecentralizedDL}. Notice that if $G < +\infty$, then $\sigma \leq G$, but BG/BGS might not be satisfied even in the case of convex quadratic functions.\footnote{For example, one can consider $f_i(x) = x^{\top} A_i x$ with $A_i \in \R^{d\times d}$, for which BG or BGS assumptions hold only in the trivial cases: matrices $A_i$ are all zero or all equal to each other (homogeneous data regime).   } Second, even if we assume that $G \approx \sigma$, our middle term in the complexity improves by the factor of $\sqrt{\alpha}$. 
 Compared to the \algname{BEER} algorithm of \citet{Zhao_BEER_2022}, with sample complexity $\cO ( \frac{ L_{\textnormal{max}}}{\alpha \varepsilon} + \frac{L_{\textnormal{max}} \sigma^2 }{\alpha^2 \varepsilon^4} )$, the result of Corollary~\ref{cor:EF21-M_conv} is strictly better in terms of $\alpha$, $n$, and the smoothness constants.\footnote{$L_{\textnormal{max}}  \eqdef \max_{i\in[n]} L_i$. Notice that  $L \leq \wt L \leq L_{\textnormal{max}}$ and the inequalities are strict in heterogeneous setting. } In addition, we remove the large batch requirement for convergence compared to \citep{EF21BW_2021,Zhao_BEER_2022}. Moreover, notice that Corollary~\ref{cor:EF21-M_conv} implies that  \algname{EF21-SGDM}  achieves asymptotically optimal sample complexity $\cO (\frac{L \sigma^2 }{n \varepsilon^4} )$ in the regime $\varepsilon \rightarrow 0$. 
 
%\phantom{XX} {\bf 3.4 Further improvement using {\em double} momentum!}
\subsection{Further improvement using {\em double} momentum!}
% \paragraph{Further improvement using double momentum.} %In summary, in the non-asymptotic regime, the sample complexity given by Corollary~\ref{cor:EF21-M_conv} strictly improves over all previous work in several aspects. 
 Unfortunately, in the non-asymptotic regime, our sample complexity does not match with the lower bound in all problem parameters simultanuously due to the middle term $\frac{L \sigma^{2/3}}{\alpha^{2/3} \varepsilon^{8/3}} + \fr{ L \sigma    }{  \alpha^{\nfr{1}{2}} \varepsilon^{3}} $, which can potentially dominate over $\fr{ L \sigma^2    }{  n \varepsilon^{4}} $ term for large enough  $n$ and $\varepsilon$, and small enough $\alpha$ and $\sigma$. 
%  Although, we  believe this is not a fundamentally important regime, 
 We propose a {\em double-momentum} method, which can further improve the middle term in the sample complexity of  \algname{EF21-SGDM}. We replace the momentum estimator $v_i^t$ in line \ref{alg:eq:sgdm} of Algorithm~\ref{alg:EF21-M} by the following two-step momentum update
 \begin{align}\label{eq:double_mom}
 	\text{\algname{EF21-SGD2M:}}\quad
 v_i^{t+1} = (1-\eta) v_i^{t}+ \eta  \nabla f_{i}(x^{t+1}, \xi_{i}^{t+1}) , \quad
  u_i^{t+1} = (1-\eta) u_i^{t}+ \eta  v_i^{t+1} . 
  \end{align}

We formally present this method in  Algorithm~\ref{alg:EF21-DM} in Appendix~\ref{sec:appendix_DM}. Compared to \algname{EF21-SGDM} (Algorithm~\ref{alg:EF21-M}), the only change is that instead of compressing $v_i^t - g_i^t$, in \algname{EF21-SGD2M}, we compress $u_i^t - g_i^t$, where $u_i$ is a two step (double) momentum estimator. The intuition behind this modification is that a double momentum estimator $u_i^t$ has richer "memory" of the past gradients compared to $v_i^t$. Notice that for each node, \algname{EF21-SGD2M} requires to save $3$ vectors ($v_i^t$, $u_i^t$, $g_i^t$) instead of $2$ in \algname{EF21-SGDM} ($v_i^t$, $g_i^t$) and \algname{EF14-SGD} ($e_i^t$, $g_i^t$).\footnote{See Appendix~\ref{sec:revisiting_EF14} for details on \algname{EF14-SGD}. In contrast, \algname{EF21-SGD} needs to save only one vector ($g_i^t$). } When interacting with biased compression operator $\cC(\cdot)$, such effect becomes crucial in improving the sample complexity. For \algname{EF21-SGD2M}, we derive

 \begin{corollary}\label{cor:EF21-DM_conv}
Let $v_i^t$ in Algorithm~\ref{alg:EF21-M} be replaced by $u_i^t$ given by \eqref{eq:double_mom} (Algorithm~\ref{alg:EF21-DM} in Appendix~\ref{sec:appendix_DM}). 
Then with appropriate choice of $\gamma$ and $\eta$ (given in Theorem~\ref{thm:EF21-DM-SGD}), %, for Algorithm~\ref{alg:EF21-DM} (\algname{EF21-SGD2M}), 
we have $\Exp{\norm{\nabla f(\hat{x}^T) }} \leq \varepsilon$ after $T = \cO\rb{ \fr{\wL \delta_0 }{\al \varepsilon^2}    + \fr{L \delta_0 \sigma^{2/3}     }{  \alpha^{\nfr{2}{3}} \varepsilon^{8/3}}  +  \fr{ L \delta_0 \sigma^2     }{n \varepsilon^4} }$ iterations.
\end{corollary}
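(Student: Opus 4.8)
The plan is to obtain the corollary as a direct consequence of the general-form convergence guarantee for the double-momentum iterates, i.e.\ Theorem~\ref{thm:EF21-DM-SGD}, followed by a parameter-tuning computation. So the real work is to establish a bound of the shape $\Exp{\sqnorm{\nabla f(\hat x^T)}} \leq \cO\rb{\fr{\Lambda_0}{\gamma T} + \fr{\eta^3 \sigma^2}{\al^2} + \fr{\eta \sigma^2}{n}}$, mirroring the bound \eqref{eq:main-distrib:general} of Theorem~\ref{thm:main-distrib} but \emph{without} the intermediate $\fr{\eta^2 \sigma^2}{\al}$ term; it is precisely the removal of this term that upgrades the offending $\fr{L \sigma}{\al^{1/2}\varepsilon^3}$ contribution into the cleaner complexity claimed here.

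First I would set up a Lyapunov function adapted to the extra momentum layer. Since \algname{EF21-SGD2M} compresses $u_i^t - g_i^t$ (rather than $v_i^t - g_i^t$), I would replace the compression-error term in \eqref{eq:Lyapunov} by $\fr{\gamma}{\al n}\sum_i \sqnorm{g_i^t - u_i^t}$, and add a new layer measuring the gap $\sqnorm{u_i^t - v_i^t}$ between the two momentum estimators, on top of the single-momentum error $\sqnorm{v_i^t - \nabla f_i(x^t)}$ and an aggregated momentum-error term $\sqnorm{\sum_{i=1}^n (v_i^t - \nabla f_i(x^t))}$ analogous to the last term of \eqref{eq:Lyapunov} (which is responsible for the $\fr{1}{n}$ speedup). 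For each quantity I would derive a one-step recursion: a standard $L$-smoothness descent inequality for $\delta_t = f(x^t) - f^*$ together with the step $x^{t+1} = x^t - \gamma g^t$ from \eqref{eq:ef21_like_sgd_1}; a $(1-\al)$-contraction for the compression error driven by the increment $\norm{u_i^{t+1}-u_i^t}$ via Definition~\ref{def:contractive_compressor} and Young's inequality; and contraction-type recursions for the two momentum gaps using the updates \eqref{eq:double_mom}.

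The crucial, and hardest, step is to show that the stochastic fluctuation injected into the compression-error recursion is of order $\eta^3\sigma^2/\al^2$ rather than $\eta^2\sigma^2/\al$. The mechanism is the double averaging: from \eqref{eq:double_mom} one has $u_i^{t+1} - u_i^t = \eta(v_i^{t+1} - u_i^t)$ and $v_i^{t+1} - v_i^t = \eta(\nabla f_i(x^{t+1},\xi_i^{t+1}) - v_i^t)$, so the raw stochastic noise enters $u$ only after being damped by a second factor of $\eta$. I would split each increment into a deterministic drift part, controlled by $\norm{x^{t+1}-x^t} = \gamma\norm{g^t}$ (smoothness plus the step), and a martingale noise part whose conditional second moment is bounded by $\sigma^2$ via Assumption~\ref{as:BV}. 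When this refined increment bound is fed through the $1/\al$ amplification of the contractive compression-error recursion, the noise contribution carries an extra $\eta^2$ relative to the single-momentum case, which is exactly what eliminates the $\eta^2\sigma^2/\al$ term. Balancing the Lyapunov weights (the prefactors $\gamma/\al$, $\gamma\eta/\al^2$, $\gamma/\eta$ and their double-momentum analogues) so that all cross terms telescope, and using $\gamma \leq \min\cb{\al/(20\wL),\, \eta/(7L)}$ to absorb the gradient-drift terms into the descent, then delivers the general bound displayed above; this is the content of Theorem~\ref{thm:EF21-DM-SGD}.

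Finally I would tune the parameters. With $\gamma = \min\cb{\fr{\al}{20\wL}, \fr{\eta}{7L}}$ the leading term splits as $\fr{\Lambda_0}{\gamma T} \leq \fr{20\wL \Lambda_0}{\al T} + \fr{7 L \Lambda_0}{\eta T}$, yielding the $\fr{\wL\delta_0}{\al\varepsilon^2}$ term (after absorbing $\Lambda_0$ into $\delta_0$ via the initial batch $B_{\textnormal{init}} = \lceil \sigma^2/(L\delta_0)\rceil$, which only shifts $\Lambda_0$ by an $\varepsilon$-independent constant). Choosing $\eta = \min\cb{1, \rb{\fr{L\delta_0\al^2}{\sigma^2 T}}^{\nfr{1}{4}}, \rb{\fr{L\delta_0 n}{\sigma^2 T}}^{\nfr{1}{2}}}$ balances $\fr{L\delta_0}{\eta T}$ against $\fr{\eta^3\sigma^2}{\al^2}$ and against $\fr{\eta\sigma^2}{n}$ respectively; substituting and solving $\Exp{\sqnorm{\nabla f(\hat x^T)}} \leq \varepsilon^2$ for $T$ gives $T = \cO\rb{\fr{\wL\delta_0}{\al\varepsilon^2} + \fr{L\delta_0\sigma^{2/3}}{\al^{2/3}\varepsilon^{8/3}} + \fr{L\delta_0\sigma^2}{n\varepsilon^4}}$, which is the claim. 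I expect the main obstacle to be the increment analysis of the third paragraph: correctly isolating the doubly-damped noise term and engineering the weighted sum of recursions so that it closes with no leftover positive terms is where the new Lyapunov construction must be chosen precisely.
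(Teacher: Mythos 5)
Your route is the same as the paper's: prove a general convergence bound for \algname{EF21-SGD2M} (the paper's Theorem~\ref{thm:EF21-DM-SGD}) via layered recursions --- compression error $\sqnorm{g_i^t-u_i^t}$, inter-momentum gap $\sqnorm{u_i^t-v_i^t}$, momentum error $\sqnorm{v_i^t-\nabla f_i(x^t)}$ together with its aggregated version --- combined with the descent lemma, and then tune $\gamma=\min\cb{\al/(c\wL),\,\eta/(cL)}$ and $\eta=\min\cb{1,\rb{L\delta_0\al^2/(\sigma^2T)}^{\nfr{1}{4}},\rb{L\delta_0 n/(\sigma^2T)}^{\nfr{1}{2}}}$ exactly as in \eqref{eqEF21-DM-SGD-stepsize} (the paper adds one more term involving $B_{\textnormal{init}}$ to absorb the initialization errors $\wt P_0,P_0$, which plays the role of your ``absorb $\Lambda_0$ into $\delta_0$'' step). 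Your key mechanism is also the paper's: since $u_i^{t+1}-g_i^t=(u_i^t-g_i^t)+\eta(v_i^t-u_i^t)+\eta^2\rb{\nabla f_i(x^{t+1},\xi_i^{t+1})-v_i^t}$, the raw stochastic noise enters the contractive recursion with prefactor $\eta^2$, hence $\eta^4\sigma^2$ after squaring (Lemma~\ref{le:EF21-DM}). One cosmetic difference: the paper does not remove the middle term outright but obtains $\eta^4\sigma^2/\al$ in place of $\eta^2\sigma^2/\al$ in \eqref{eq:EF21-DM-SGD:general}; since $\eta\al\le 1$ this is dominated by $\eta^3\sigma^2/\al^2$, so your target bound without a middle term is equivalent to the paper's.

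There is, however, one concrete gap in your Lyapunov construction as listed: you track the gap between the two momentum estimators only per node, $\sqnorm{u_i^t-v_i^t}$. The descent lemma forces you to control the \emph{aggregated} quantity $\sqnorm{u^t-v^t}$ with $u^t=\suminn u_i^t$, $v^t=\suminn v_i^t$, and if you bound it by Jensen as $\suminn\sqnorm{u_i^t-v_i^t}$, the per-node recursion (the paper's \eqref{eq:lem_wtQ_rec_DM}) injects noise $\eta^2\sigma^2$ per step, i.e.\ $\eta\sigma^2$ after telescoping, with no $1/n$. That would replace the $\eta\sigma^2/n$ term in your claimed intermediate bound by $\eta\sigma^2$, and the last term of the corollary would degrade to $L\delta_0\sigma^2/\varepsilon^4$, losing the linear speedup in $n$. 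The fix is precisely the trick you already apply to the $v$-layer: run a separate recursion for the aggregated gap $\sqnorm{u^t-v^t}$ (the paper's Part II(b), \eqref{eq:lem_Q_rec_DM}), whose martingale noise is $\eta^2\sigma^2/n$ by independence of the $\xi_i^t$ across nodes, and use the per-node version only where it is actually needed --- inside the compression-error recursion, where it is multiplied by $\eta^2/\al^2$ and therefore contributes only $\eta^3\sigma^2/\al^2$.
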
	
 
%The above result further improves Corollary~\ref{cor:EF21-M_conv} removing the term $\fr{ L \sigma    }{  \alpha^{\nfr{1}{2}} \varepsilon^{3}}$. 

%The more detailed statement and the formal proof are relegated to Appendix~\ref{sec:appendix_DM}.

\section{Experiments}
\label{sec:experiments_main}
% We now compare error compensated methods from Table~\ref{table:related-works} on medium-size problems and real data. 
We consider a nonconvex logistic regression problem: $f_i(x_1, \dots, x_{c}) = -\frac{1}{m} \sum_{j=1}^m\log(\exp(a_{ij}^\top x_{y_{ij}}) / \sum_{y=1}^c \exp(a_{ij}^\top x_{y}))$ with a nonconvex regularizer $h(x_1, \dots, x_c) = \lambda \sum_{y = 1}^c \sum_{k = 1}^l [x_{y}]_k^2 / (1 + [x_{y}]_k^2)$
with $\lambda = 10^{-3},$ where $x_1, \dots, x_{c} \in \R^{l},$ $[\cdot]_k$ is an indexing operation of a vector, $c \geq 2$ is the number of classes, $l$ is the number of features, $m$ is the size of a dataset, $a_{ij} \in \R^{l}$ and $y_{ij} \in \{1, \dots, c\}$ are features and labels. The datasets used are \textit{MNIST} (with $l =  784$, $m = 60 \, 000$, $c = 10$) and \textit{real-sim} (with $l = 20 \, 958$, $m = 72\,309$, $c = 2$) \citep{lecun2010mnist, chang2011libsvm}. The dimension of the problem is $d = (l + 1) c$, i.e., $d = 7\,850$ for \textit{MNIST} and $d = 41\,918$ for \textit{real-sim}. In each experiment, we show relations between the total number of transmitted coordinates and gradient/function values. The stochastic gradients in each algorithm are replaced by a mini-batch estimator $\frac{1}{B} \sum_{j = 1}^{B} \nabla f_i(x, \xi_{i j})$ with the same $B\geq 1$ in each plot. Notice that all methods (except for \algname{NEOLITHIC})\footnote{For \algname{NEOLITHIC}, we use the parameter $R = \lceil d/K \rceil$ following the requirement in their Theorem 3. Experiments in \cite{huang2022lower} use a heuristic choice $R=4$, and thus can show faster convergence. } calculate the same number of samples at each communication round, thus the dependence on the number of samples used will be qualitatively the same.  In all algorithms, the step sizes are fine-tuned from a set $\{2^k\,|\, k \in [-20, 20]\}$ and the Top$K$ compressor is used to compress information from the nodes to the master. For  \algname{EF21-SGDM} , we fix momentum parameter $\eta = 0.1$ in all experiments. Prior to that, we tuned $\eta \in \{0.01, 0.1\}$ on the independent dataset \textit{w8a} (with $l = 300$, $m = 49 \, 749$, $c = 2$). We omit \algname{BEER} method from the plots since it showed worse performance than \algname{EF21-SGD} in all runs. 

%{\bf 4.1 Experiment 1: increasing batch-size.}
\subsection{Experiment 1: increasing batch-size}
In this experiment, we use \textit{MNIST} dataset and fix the number of transmitted coordinates to $K = 10$ (thus $\alpha \geq \nicefrac{K}{ d} \approx 10^{-3} $), and set $n = 10$. Figure~\ref{fig:log_reg_batch_increase} shows convergence plots for $B\in \cb{1 ,32,128}$.  \algname{EF21-SGDM} and its double momentum version \algname{EF21-SGD2M} have fast convergence and show a significant improvement when increasing batch-size compared to \algname{EF14-SGD}. In contrast, \algname{EF21-SGD} suffers from poor performance for small $B$, which confirms our observations in previous sections. \algname{NEOLITHIC} has order times slower convergence rate due to the fact that it sends $\left\lceil \nicefrac{d}{K} \right\rceil$ compressed vectors in each iteration, while other methods send only one.
\begin{figure}[h]
\centering
\begin{subfigure}{.33\textwidth}
	\centering
	\includegraphics[width=0.95\textwidth]{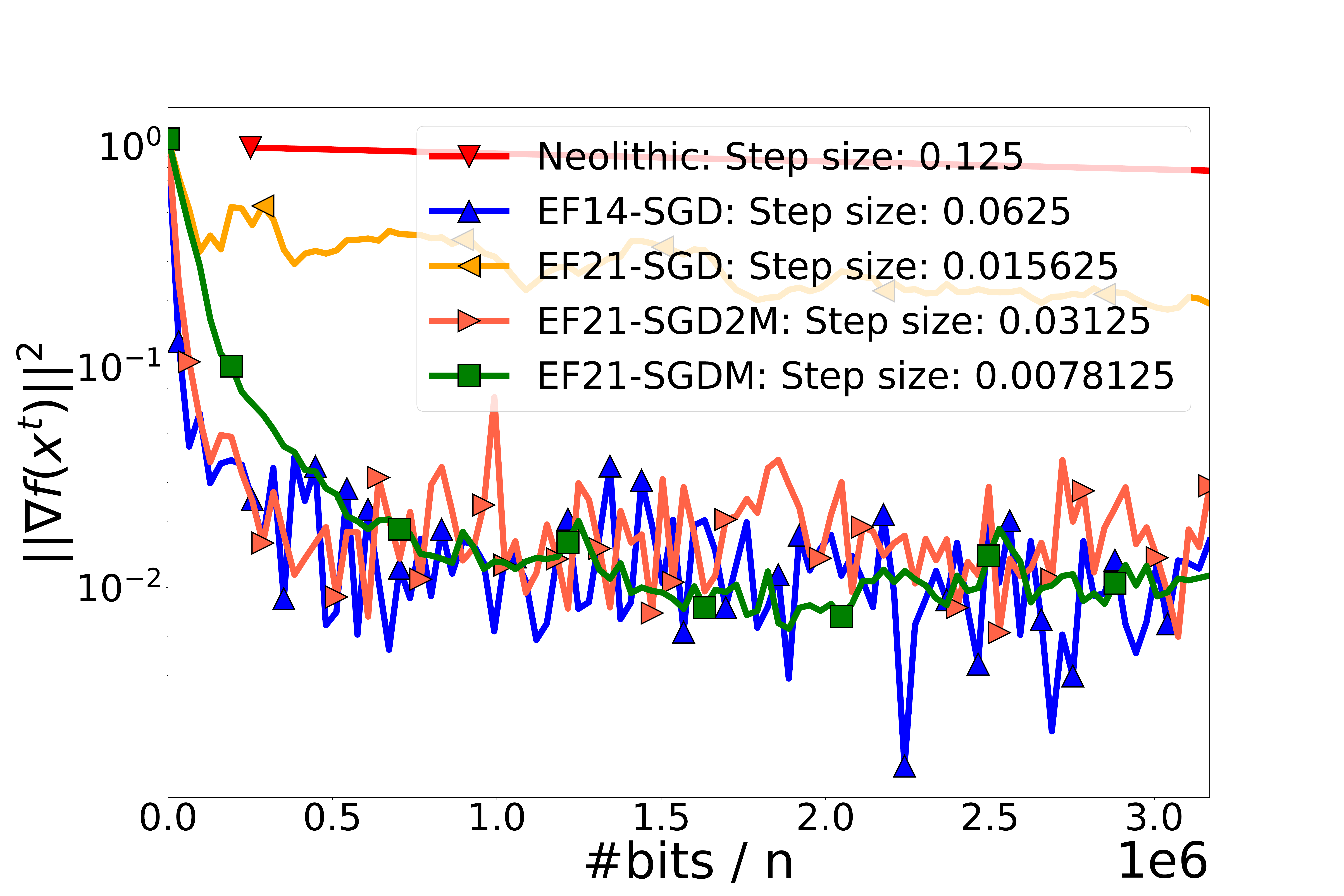}
	\caption{ $B = 1$}
\end{subfigure}\hfill
\begin{subfigure}{.33\textwidth}
	\centering
	\includegraphics[width=0.95\textwidth]{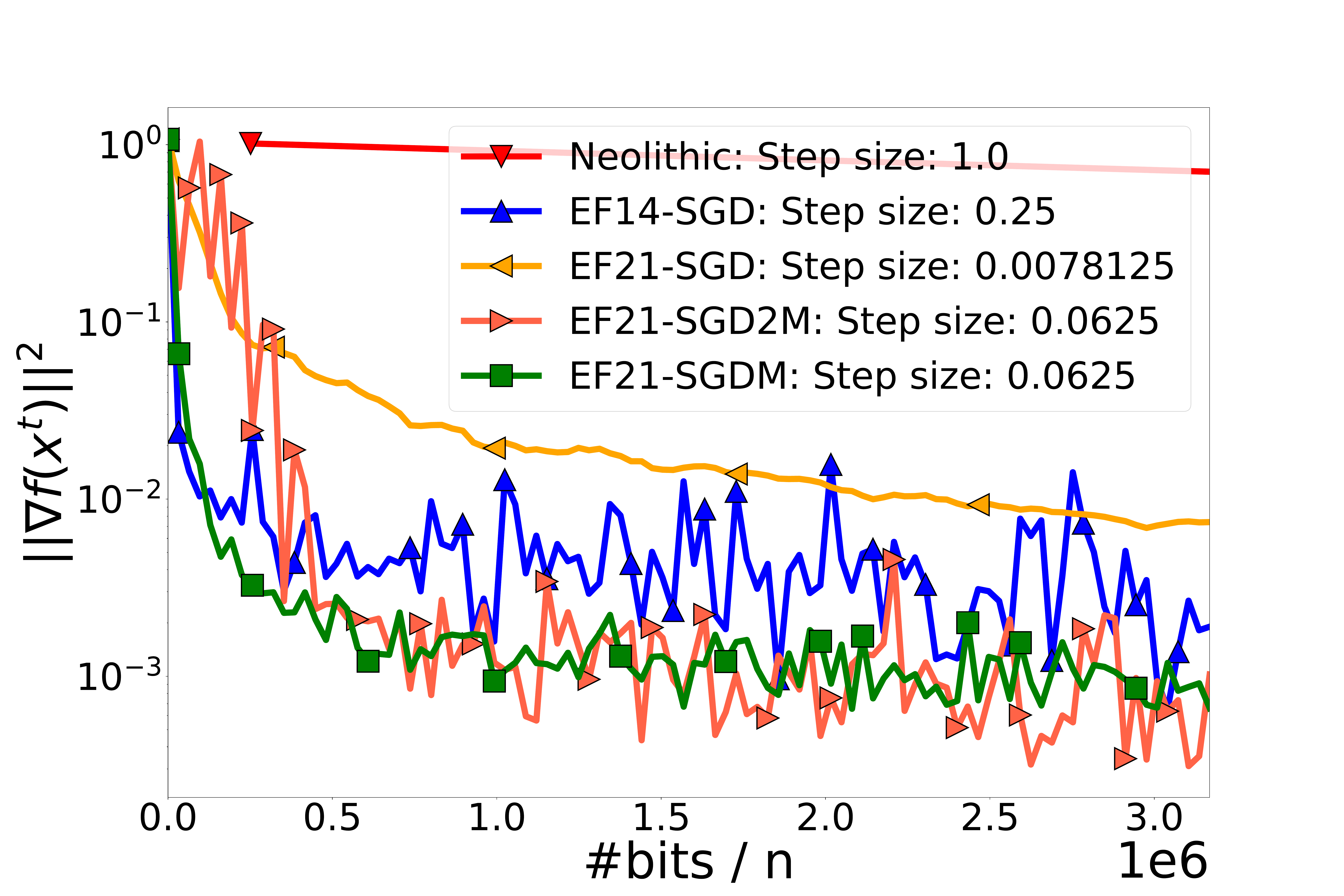}
	\caption{ $B = 32$}
\end{subfigure}\hfill
%\begin{subfigure}{.25\textwidth}
%	\centering
%	\includegraphics[width=\textwidth]{}
%	\caption{ $B = 64$}
%\end{subfigure}\hfill
\begin{subfigure}{.33\textwidth}
	\centering
	\includegraphics[width=0.95\textwidth]{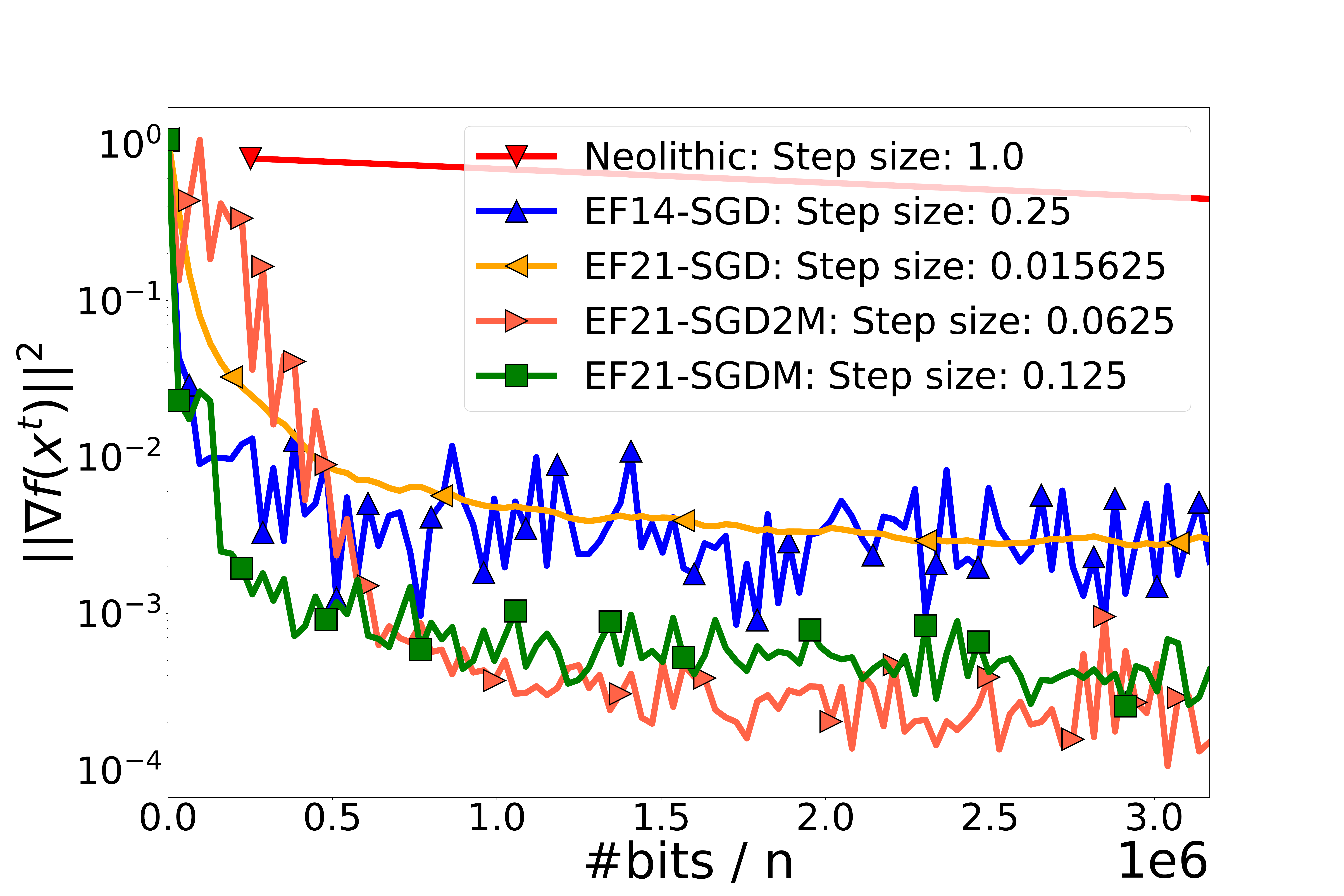}
	\caption{ $B = 128$}
\end{subfigure}\hfill
	\caption{Experiment on \textit{MNIST} dataset with $n = 10$, and Top$10$ compressor.}
\label{fig:log_reg_batch_increase}
\end{figure}

%{\bf 4.2 Experiment 2: improving convergence with $n$.}
\subsection{Experiment 2: improving convergence with $n$}
This experiment uses \textit{real-sim} dataset, $K = 100$ (thus $\alpha \geq \nicefrac{K}{d} \approx 2 \cdot 10^{-3} $), and with $B = 128 \ll m $. We vary the number of nodes within $n \in \cb{1, 10, 100}$, see Figure~\ref{fig:log_reg_real_sim_mini_batch}. In this case, \algname{EF21-SGDM} and \algname{EF21-SGD2M} have much faster convergence compared to other methods for all $n$. Moreover, the proposed algorithms show a significant improvement when $n$ increases. We also observe that on this task, \algname{EF21-SGD2M} performs slightly worse than \algname{EF21-SGDM} for $n = 10, 100$ , but it is still much faster than other other methods. 

\begin{figure}[h]
	\centering
	\begin{subfigure}{.33\textwidth}
		\centering
		\includegraphics[width=0.95\textwidth]{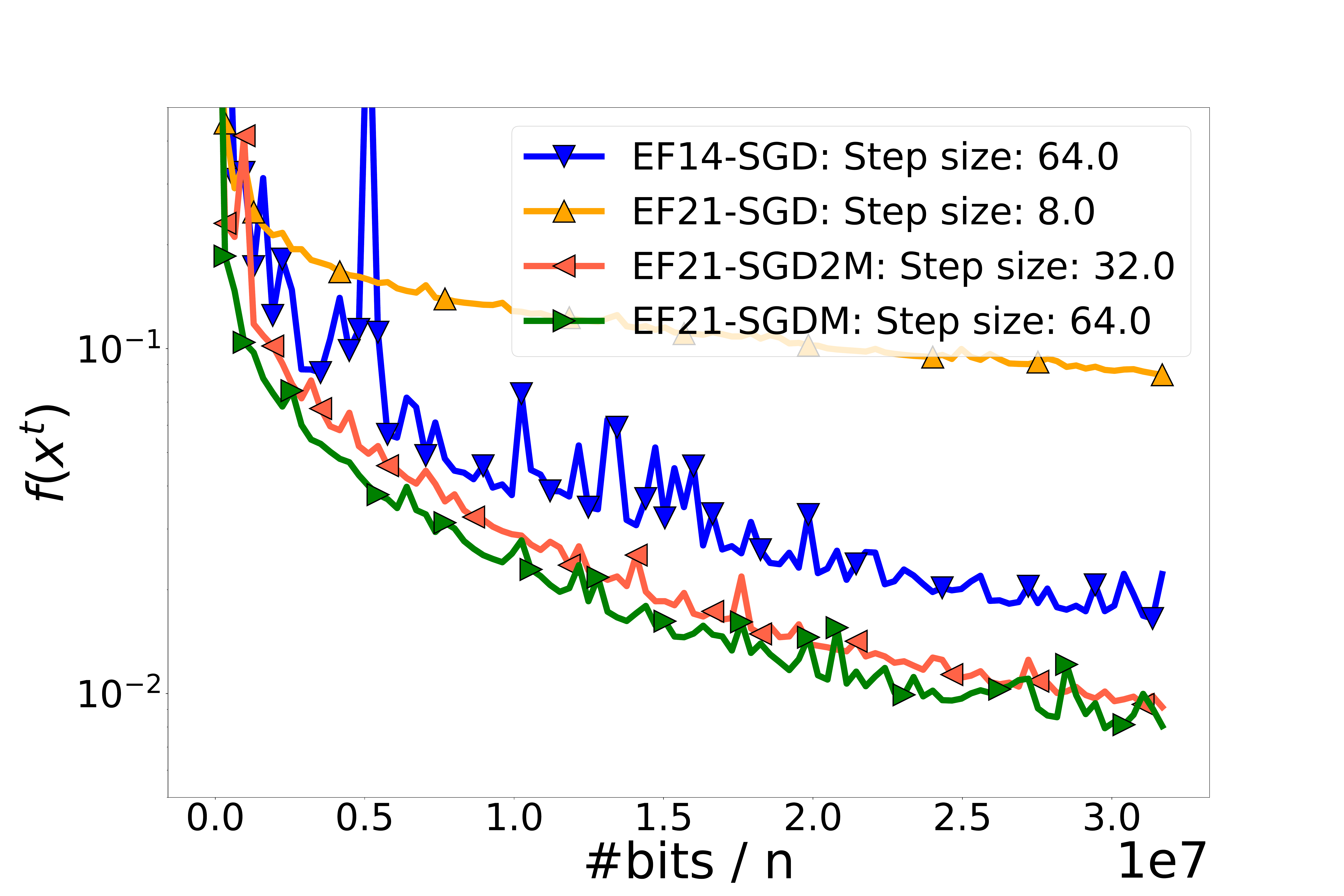}
		\caption{$n = 1$}
	\end{subfigure}
	\begin{subfigure}{.33\textwidth}
		\centering
		\includegraphics[width=0.95\textwidth]{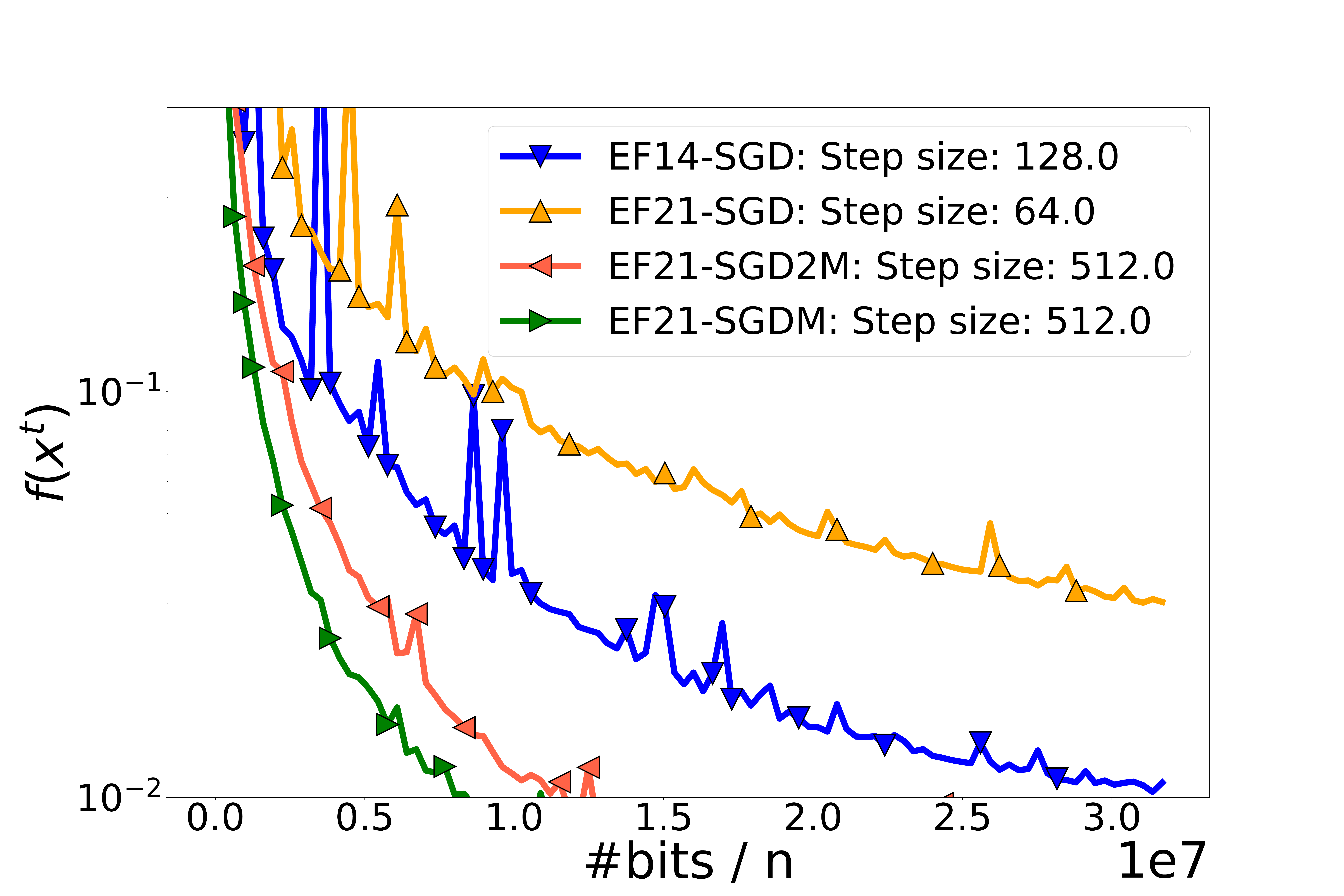}
	\caption{ $n = 10$}
	\end{subfigure}\hfill
	\begin{subfigure}{.33\textwidth}
		\centering
		\includegraphics[width=0.95\textwidth]{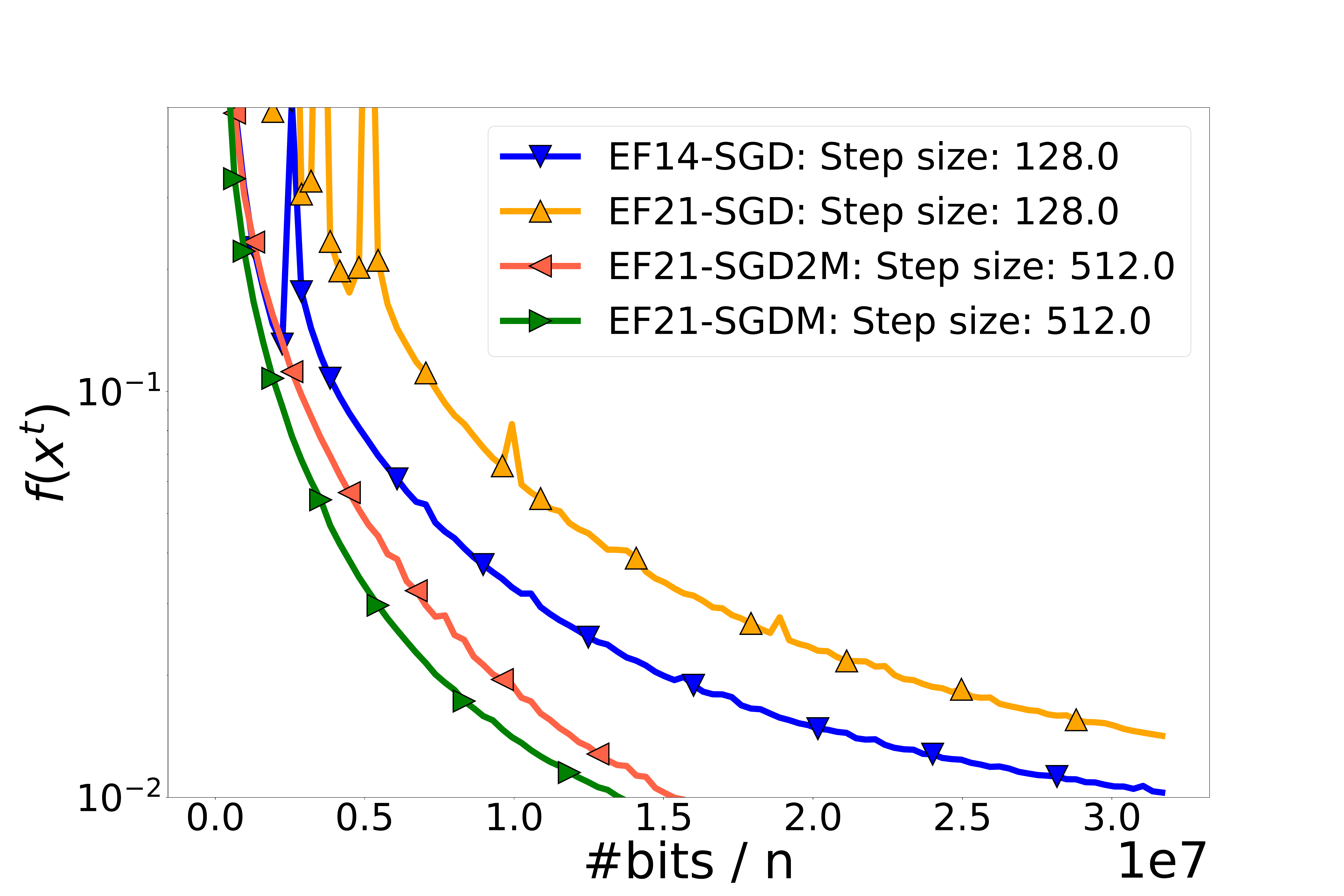}
		\caption{  $n = 100$}
	\end{subfigure}\hfill
\caption{Experiment on \textit{real-sim} dataset with batch-size $B = 128$, and Top$100$ compressor.}
\label{fig:log_reg_real_sim_mini_batch}
	\end{figure}

In Section~\ref{sec:add_experiments}, we present extra simulations with different parameters for above experiments. Additionally, we inlclude experiemnts on simple quadratic problems and perform training of larger image recognition models. In all cases, \algname{EF21-SGDM} and \algname{EF21-SGD2M} outperform other algorithms. 

%and also show that \algname{EF14-SGD} converges to solutions of simple quadratic optimization problems several orders of magnitude less accurately than \algname{EF21-SGDM}. 

\section*{Acknowledgments and Disclosure of Funding}
 This work of I. Fatkhullin was supported by ETH AI Center doctoral fellowship. The work of P. Richt\'{a}rik and A. Tyurin was supported by the KAUST Baseline Research Scheme (KAUST BRF) and the KAUST Extreme Computing Research Center (KAUST ECRC); P. Richt\'{a}rik was also supported by the SDAIA-KAUST Center of Excellence in Data Science and Artificial Intelligence (SDAIA-KAUST AI).

\bibliographystyle{plainnat}
\bibliography{bibliography.bib}

\clearpage
\appendix

%\tableofcontents

\clearpage

%\section{Additional Discussion of Related Work on Error Feedback and Momentum}\label{sec:appendix_literature_momentum}
\section{More on Contractive Compressors, Error Feedback and Momentum}\label{sec:appendix_literature_momentum}

\begin{table}[t]
	\caption{\footnotesize Extended summary of related works on distributed error compensated SGD methods using a Top$K$ compressor under Assumptions~\ref{as:main} and~\ref{as:BV}. The goal is to find an $\varepsilon$-stationary point of a smooth nonconvex function of the form \eqref{eq:problem}, i.e., a point $x$ such that $\Exp{\norm{\nabla f(x)}} \leq \varepsilon$. "\textbf{Comm. compl.}" reports the total number of communicated bits if the method is applied with batch-size equal to "Batch-size" at each node. When Top$K$ compressor is applied, then $\alpha \geq K / d$, and the comm. compl. of error compensated methods can be reduced by a factor of $\alpha d / K$. "\textbf{Batch-size for comm. compl.}" means the batch-size for achieving the reported "Comm. compl.". "\textbf{Asymp. sample compl.}" reports the asymptotic sample complexity of the algorithm with batch-size $B = 1$ in the regime $\varepsilon\rightarrow0$, i.e., the total number of samples required at each node to achieve $\varepsilon$-stationary point.
		"\textbf{Batch free}" marks with \gcmark\, if the analysis ensures convergence with batch-size equal to $1$. 
		"\textbf{No extra assump.}" marks with \gcmark \, if  no additional assumption beyond Assumptions~\ref{as:main} and \ref{as:BV} is required for analysis. We denote $L_{\textnormal{max}}  \eqdef \max_{i\in[n]} L_i$. Notice that it always holds $L \leq \wt L \leq L_{\textnormal{max}}$ and these inequalities only become equalities in the homogeneous case. It could be that $\alpha L / \wt L \ll 1$ making the batch-size of \algnamesmall{EF21-SGDM} and \algnamesmall{EF21-SGD2M} much smaller than those of \algnamesmall{EF21-SGD} and \algnamesmall{BEER}. Symbol $\vee$ denotes the maximum of two scalars. }
	\label{table:related-works}
	\centering
	\footnotesize
	%\begin{NiceTabular}{|c|c|cc|}
	\begin{threeparttable}
		\begin{tabular}{|c|c|c|c|c|c|}
			\hline
			\bf Method & \bf \makecell{Comm. \\compl.}  &\bf \makecell{ Batch-size  for \\
			comm. compl.}  & \bf \makecell{Asymp. \\sample \\ compl.}  & \bf \makecell{Batch\\ free?}   &\bf \makecell{No extra \\ assump.? }   \\
			%\hline
			%\makecell{\algname{Distributed SGD}}  & $\fr{d L}{\varepsilon^2}$ & \makecell{$  \fr{ \sigma^2}{n \varepsilon^2}$} & \makecell{$  \fr{L \sigma^2}{n \varepsilon^4}$} & $1$   &  \makecell{-} \\
			\hline
			\makecell{\algnamesmall{EF14-SGD} \\ \citep{Koloskova2019DecentralizedDL}}  &  $ \fr{K {\color{myred}{G}} L_{\textnormal{max}} }{\alpha \color{myred}{\varepsilon^3}}$ & \makecell{$  \fr{\alpha \sigma^2}{n \varepsilon G}$\textsuperscript{{\color{blue}(*) }} }   & \makecell{$  \fr{ L_{\textnormal{max}} \sigma^2}{n \varepsilon^4}$} & \gcmark  &  \makecell{\rxmark}\textsuperscript{{\color{blue}(a)}}   \\
			\hline	
			%\cline{2-8}
			\makecell{\algnamesmall{NEOLITHIC}\\ \citep{huang2022lower}}  &$ \fr{K  L_{\textnormal{max}}}{ \alpha \varepsilon^{2} }\color{myred}{\log\left( \frac{ G }{ \varepsilon }  \right)} $ \textsuperscript{{\color{blue}(b)}}  & \makecell{ $ \frac{ \sigma^2}{n \varepsilon^2  } \vee \frac{1}{\alpha} \log\left( \frac{ G }{ \varepsilon }  \right)   $\textsuperscript{{\color{blue}(*) }}} & \makecell{$ \fr{ L_{\textnormal{max}} \sigma^2}{n \varepsilon^4}$} & \rxmark   &  \makecell{\rxmark}\textsuperscript{{\color{blue}(c)}}  \\
			%$\frac{1}{\alpha} \log\left( \frac{\alpha T (G^2 + \alpha \sigma^2 )}{\wt L}  \right) $
			\hline
			\makecell{\algnamesmall{EF21-SGD}\\ \citep{EF21BW_2021}}  & $ \fr{K \wt L}{ \alpha \varepsilon^{2} } $ & \makecell{$  \fr{ \sigma^2}{{\color{myred}\alpha^2} \varepsilon^2}$} & $ \fr{\wt L \sigma^2}{{\color{myred}\alpha^3} \varepsilon^4}$\textsuperscript{{\color{blue}(d) }} 
			&  \makecell{\rxmark} &  \makecell{\gcmark}  \\
			\hline
			\makecell{\algnamesmall{BEER}\\ \citep{Zhao_BEER_2022}}  & $ \fr{K L_{\textnormal{max}}}{ \alpha \varepsilon^{2} }$ & $\fr{\sigma^2}{ {\color{myred}\alpha} \varepsilon^2}$ &  $  \fr{ L_{\textnormal{max}} \sigma^2}{ {\color{myred} \alpha^2} \varepsilon^4}$\textsuperscript{{\color{blue}(d) }}
			& \makecell{\rxmark}  &  \makecell{\gcmark}  \\
			\hline
			%\cline{2-8}
			%\rowcolor{linen}  
			\makecell{\algnamesmall{EF21-SGDM} \\ Corollary~\ref{cor:EF21-M_conv}}  & \cellcolor{linen} $ \fr{ K\wL  }{\al \varepsilon^2} $ & \cellcolor{linen} $  \frac{\alpha L}{\wt L }\fr{ \sigma^2}{ n \varepsilon^2} \vee \frac{\alpha L^2}{\wt L^2}\fr{ \sigma^2}{ \varepsilon^2}   $ & \cellcolor{linen} $   \fr{ L \sigma^2    }{n \varepsilon^4}$ &\cellcolor{linen} \gcmark   &  \cellcolor{linen} \gcmark \\
			%\cline{2-8}
			\hline
			% 		\rowcolor{linen}  
			\makecell{\algnamesmall{EF21-SGD2M}\\  Corollary~\ref{cor:EF21-DM_conv} }   & \cellcolor{linen} $ \fr{ K\wL  }{\al \varepsilon^2} $ & $ \cellcolor{linen} \frac{\alpha L}{\wt L }\fr{ \sigma^2}{ n \varepsilon^2} \vee \frac{\alpha L^3}{\wt L^3}\fr{ \sigma^2}{ \varepsilon^2}   $ & \cellcolor{linen}$   \fr{ L \sigma^2    }{n \varepsilon^4}$ &\cellcolor{linen} \gcmark  &  \cellcolor{linen} \gcmark\\
			%\cline{2-8}
			\hline
			%\rowcolor{linen}   
			%\end{NiceTabular}
		\end{tabular}
		\begin{tablenotes}
			\scriptsize
			\item [{\color{blue}(a)}] Analysis requires a bound of the second moment of the stochastic gradients, i.e., $\Exp{\sqnorm{\nabla f_i(x, \xi_i)}} \leq G^2$ for all $x \in \R^d$. 
			\item [{\color{blue}(b)}] This complexity is achieved by using a large mini-batch and communicating $\left\lceil \nicefrac{K}{\alpha} \right\rceil \approx d$ coordinates per iteration.
			\item [{\color{blue}(c)}] Analysis requires a bounded gradient disimilarity assumption, i.e.,  $
			\suminn \sqnorm{\nabla f_i(x) - \nabla f(x)} \leq G^2  
			$ for all $x \in \R^d$. 
			\item [{\color{blue}(d)}] Analysis requires a batch-size at least $B \geq \fr{\sigma^2}{ \alpha^2 \varepsilon^2}$ for \algnamesmall{EF21-SGD} and $B \geq \fr{\sigma^2}{ \alpha \varepsilon^2}$ for \algnamesmall{BEER}.
			\item [{\color{blue}(*)}] For a fair comparison, we take the (minimal) batch-size for these methods which guarantees the reported communication complexity. %For a fair comparison, we take larger batch-size for this method than reported in the paper. In particular, we take $R = \frac{1}{\alpha} \log\left( \frac{ G }{ \varepsilon }  \right) $ rounds and additionally use "inner" batch-size $B = 1 \vee \frac{ \alpha \sigma^2}{n \varepsilon^2 \log\left( \frac{ G }{ \varepsilon }  \right) }  $ for each $r = 1, \ldots, R$ in order to reduce the variance $\sigma^2$. So their gradient estimator becomes $\hat g_i^{(k)}  = \frac{1}{R} \sum_{r = 0}^{R-1} \frac{1}{B} \sum_{i = 1}^{B} O_i(x^{(k)}, \xi_i^{(k,r)}) $. Then all their derivation will apply with $\sigma^2$ replaced by $\sigma^2 / B$, which will guarantee that  $\frac{\wt L}{\alpha \varepsilon^2} \log\left( \frac{ G }{ \varepsilon }  \right) = \frac{\wt L \sigma^2}{n \varepsilon^4}$. In the end,  this gives the batch-size $R \cdot B = \frac{  \sigma^2}{n \varepsilon^2 } \vee \frac{1}{\alpha} \log\left( \frac{ G }{ \varepsilon }  \right) $. 
		\end{tablenotes}
	\end{threeparttable}
\end{table}

\paragraph{Greedy vs uniform.}
In our work, we specifically focus on the class of contractive compressors satisfying Definition~\ref{def:contractive_compressor}, which contains a greedy Top$K$ compressor as a special case. Note that Top$K$ is greedy in that it minimizes the error $\|{\rm Top}K(x) - x\|^{2}$ subject to the sparsity constraint $\|\cC(x)\|_0 \leq K$, where $\|u\|_0$ counts the number of nonzero entries in $u$. In practice, greediness is almost always\footnote{Greediness is not useful, for example, when $\cD_i=\cD_j$ for all $i,j$ and when Top$K$ is applied to the full-batch gradient $\nabla f_i(x)$ by each client. However, situations of this type arise rarely in practice.} very useful, translating into excellent empirical performance, especially when compared to the performance of the Rand$K$ sparsifier. On the other hand, it appears to be very hard to formalize these practical gains theoretically\footnote{No theoretical results of this type exist for $n>1$ .}. In fact, while greedy compressors such as Top$K$ outperform their randomized cousins such as Rand$K$ in practice, and often by huge margins \citep{lin2018deep}, the theoretical picture is exactly reversed, and the theoretical communication complexity of gradient-type methods based on randomized compressors~\citep{alistarh2017qsgd,DIANA,DIANA2,ADIANA,MARINA} is much better than of those based on greedy compressors~\citep{Koloskova2019DecentralizedDL,EF21,EF21BW_2021,3PC}. The key reason behind this is the fact that popular randomized compressors such as Rand$K$ become {\em unbiased} mappings after appropriate scaling (e.g., $\Expthin{\frac{d}{K}{\rm Rand}K(x)}\equiv x$), and that the inherent randomness is typically drawn {\em independently} by all clients. This leads to several key simplifications in the analysis, and consequently, to theoretical gains over methods that do not compress, and over methods that compress greedily. Further improvements are possible when the randomness is {\em correlated} in an appropriate way \citep{szlendak2021permutation}.

Due to the superior empirical properties of greedy contractive compressors, and our desire to push this very potent line of work further, in this paper we work with the general class of compressors satisfying Definition~\ref{def:contractive_compressor}, and do not invoke any additional restrictive assumptions. For example, we do not assume $\cC$ can be made unbiased after scaling.

\paragraph{Error Feedback.}
The first theoretical analysis of \algname{EF14} was presented in the works of \citet{Stich-EF-NIPS2018,Alistarh-EF-NIPS2018} and further revisited in convex case in \citep{Karimireddy_SignSGD,beznosikov2020biased,Lin_EC_SGD,EC-Katyusha} and analysis was extended to nonconvex setting in \citep{Stich_Delayed_2019}. Later, in nonconvex case, various extensions and combinations of \algname{EF14} with other optimization techniques were considered and analyzed, which include bidirectional compression \citep{DoubleSqueeze}, decentralized training \citep{Koloskova2019DecentralizedDL,Singh_SquarmSGD_2021}, server level momentum \citep{CSER}, client level momentum \citep{Zheng_EF_SGD_Nesterov_2019}, combination with adaptive methods \citep{Li_COMP_AMS_2022}. To our knowledge, the best sample complexity for finding a stationary point for this method (including its momentum and adaptive variants) in the distributed nonconvex setting is given by~\citet{Koloskova2019DecentralizedDL}, which is  $\cO (  \fr{G}{ \al \varepsilon^{3}}  +  \fr{\sigma^2}{ n \varepsilon^{4}} )$. More recently, \citet{huang2022lower} propose a modification of \algname{EF14} method achieving $\cO \rb{  \fr{1}{ \al \varepsilon^{2}} \log(\frac{G}{\varepsilon}  ) +  \fr{\sigma^2}{ n \varepsilon^{4}} }$ sample complexity by using the BGS assumption. When applied with Top$K$ compressor, this method requires to communicate $\wt \cO\rb{ \nicefrac{K}{\alpha}  } $ coordinates at every iteration. This makes it impractical since when the effective $\alpha$ is unknown and is set to $\alpha = K / d$, it means that their method communicates all $d$ coordinates at every iteration, mimicking vanilla \algname{(S)GD} method. Moreover, their algorithm uses an additional subroutine and applies updates with a large batch-size of samples of order $\cO(\frac{1}{\alpha} \log\left(\frac{G}{\varepsilon} \right) )$, making the algorithm less practical and difficult to implement. It is worth to mention, that error feedback was also analyzed for other classes of compressors such as absolute (see Definition~\ref{def:absolute_compressor}) or additive compressors (i.e., $\cC(x+y ) = \cC(x) + \cC(y)$ for all $x, y\in\R^d$) \citep{DoubleSqueeze,xu2022detached}, which do not include Top$K$ sparsifier. 

\paragraph{Momentum.}
The first convergence analysis of gradient descent with momentum was proposed by B.T. Polyak in his seminal work~\citep{Polyak_Some_methods_1964} studying the benefit of multi-step methods. The proof technique proposed in this work is based on the analysis of the spectral norm of a certain matrix arising from the dynamics of a multi-step process on a quadratic function. Unfortunately, such technique is restricted to the case of strongly convex quadratic objective and the setting of full gradients. Later~\citet{Zavriev_heavy_ball_1993} prove an asymptotic convergence of this method in nonconvex deterministic case without specifying the rate of convergence. %For smooth convex objectives, \citet{Ghadimi_Glob_HB_Convex_2014} provide the first non-asymptotic analysis of momentum method in deterministic case. 
% \citet{Yang_Unified_momentum_2016} propose a unified analysis of SGDM and stochastic variant of Nesterov's accelerated method, but their analysis requires BG assumtion.  Later, \citep{Liu_Improved_analysis_SGDM_2020} remove this assumption by offering an improved analysis of SGDM. %They propose a new Lyapunov function $f(z^t) - f^* +  \sum_{\tau=0}^t c_{\tau} \sqnorm{x^{t-\tau} - x^{t-1-\tau} }$, where $z^t$ is some auxiliary sequence.
 %Momentum method was also analyzed for P{\L} objectives in~\citep{Danilova_Nonmonotone_HB_2018}.%, Lyapunov is $f(x^t) - f^* + c \sqnorm{x^t - x^{t-1}}$.
% Acheiving acceleration in convex setting using Polyak style momentum is still an open problem, despite the recent development \citep{Ochs_IPiasco_2015,Wang_Provable_Acceleration_HB_2022}.

To our knowledge, the first non-asymptotic analysis of \algname{SGDM} in the smooth nonconvex setting is due to \citet{Yu_LinearSpeedup_Com_Efficient_SGDM_2019}. Their analysis, however, heavily relies on BG assumption. Later, \citet{Liu_Improved_analysis_SGDM_2020} provide a refined analysis of \algname{SGDM}, removing the BG assumption and improving the dependence on the momentum parameter $\eta$. Notice that the analysis of \citet{Liu_Improved_analysis_SGDM_2020} and the majority of other works relies on some variant of the following Lyapunov function: 
\begin{equation}\label{eq:HB_lyapunov_dynamic}
	\Lambda_t  \eqdef f(z^t) - f^* +  \sum_{\tau=0}^t c_{\tau} \sqnorm{x^{t-\tau} - x^{t-1-\tau} } ,
\end{equation}
 where $\cb{z^t}_{t\geq0}$ is some auxiliary sequence (often) different from the iterates $\cb{x^t}_{t\geq0}$, and $\cb{c_{\tau}}_{\tau\geq0}$ is a diminishing non-negative sequence. This approach is motivated by the dynamical system point of view at Polyak's heavy ball momentum, where the two terms in \eqref{eq:HB_lyapunov_dynamic} are interpreted as the potential and kinetic energy of the system~\citep{sebbouh2019nesterov}. In contrast, the Lyapunov function used in this work is conceptually different even in the single node ($n=1$), uncompressed ($\alpha=1$) setting. Later, \citet{Defazio_Mom_PrimalAvg_2021} revisit the analysis in \citep{Liu_Improved_analysis_SGDM_2020} through the lens of primal averaging and provide insights on why momentum helps in practice.  %Unfortunately, this analysis is quite complicated and it is not clear whether such approach can be provably combined with other optimization techniques. % In contrast, inspired by the connection of SGDM with EF21 technique, we reengineer another much simpler analysis of SGDM, which highlights variance reduction nature of SGDM. 
The momentum is also used for stabilizing adaptive algorithms such as normalized SGD \citep{Cutkosky_MomNSGD_2020}. In particular, it was shown that by using momentum, one can ensure convergence without large batches for normalized SGD (while keeping the same sample complexity as a large batch normalized SGD). However, their analysis is specific to the normalized method, which allows using the function value as a Lyapunov function. High probability analysis of momentum methods was investigated in \citep{Cutkosky_High_Prob_Tails_2021,Li_Orabona_High_Prob_2020}. In the distributed setting, \citep{Yu_LinearSpeedup_Com_Efficient_SGDM_2019,Karimireddy_Mime_2021} extend the analysis of \algname{SGDM} under BGS assumption. Later \citep{Takezawa_Mom_Tracking_2022,Gao_Distr_Stoch_Grad_Tracking_2023} remove this assumption providing a refined analysis based on the techniques developed in \citep{Liu_Improved_analysis_SGDM_2020}. However, the algorithms in these works do not apply any bandwidth reduction technique such as communication compression.

We would like to mention that understanding the behavior of \algname{SGDM} in convex case also remains an active area of research  \citep{Ghadimi_Glob_HB_Convex_2014,Yang_Unified_momentum_2016,Sebbouh_AS_Conv_SHB_2021,Li_On_Last_Iterate_Mom_2022,Xiao_Conv_SGD_type_2022} . 
 
 %Analysis of momentum in (convex) online learning setting \citep{Li_On_Last_Iterate_Mom_2022}. In smooth convex setting, \citep{Sebbouh_AS_Conv_SHB_2021} studies stochastic heavy ball method, their analysis requires convexity of each stochastic function. Convergence of SGDM under more general assumptions were investigated in \citep{Xiao_Conv_SGD_type_2022}, but no sample complexity is derived. 

\clearpage
\section{Variance Reduction Effect of SGDM and Comparison to STORM} 
%\paragraph{Variance reduction effect of \algname{SGDM} and comparison to \algname{STORM}.}
Notice that the choice of our Lyapunov function $\Lambda_t$ \eqref{eq:Lyapunov}, which is used in the analysis of  \algname{EF21-SGDM}  implies that the gradient estimators $g_i^t$ and $v_i^t$ improve over the iterations, i.e., 
$$g_i^t \rightarrow  \nabla f_i(x^t),  \qquad v_i^t \rightarrow \nabla f_i(x^t) \qquad \text{for } t\rightarrow \infty .$$
 This comes in contrast with the behavior of SGD, for which the gradient estimator $v_i^t = \nabla f_i(x^t, \xi_i^{t})$ does not necessarily tend to zero over iterations. Such effect of asymptotic improvement of the estimation error of the gradient estimator is reminiscent to the analogous effect known in the literature on variance reduction (VR) methods. In particular, the classical momentum step \ref{alg:eq:sgdm} of Algorithm~\ref{alg:EF21-M} may be contrasted with a \algname{STORM} variance reduced estimator proposed by \citet{Cutkosky_STORM_2019}, which updates the gradient estimator via
\begin{eqnarray}\label{eq:storm_est}
	w_i^{t+1} =   \nabla f_{i}(x^{t+1}, \xi_{i}^{t+1})  +  (1-\eta) ( w_i^{t} - \nabla f_i(x^{t}, \xi_i^{t+1}) )  , \quad w_i^{0} = \nabla f_i(x^0, \xi_i^0)
\end{eqnarray}
It is known that the class of VR methods (and \algname{STORM}, in particular) can show faster asymptotic convergence in terms of $T$ (or $\varepsilon$) compared to \algname{SGD} and \algname{SGDM} under some additional assumptions. However, we would like to point out the important differences (and limitations) of \eqref{eq:storm_est} compared to the classical Polyak's momentum used on line \ref{alg:eq:sgdm} of Algorithm~\ref{alg:EF21-M}. First, the estimator $w_i^{t+1}$ is different from the momentum update rule $v_i^{t+1}$ in that it is unbiased for any $t\geq0$, i.e., $\Exp{w_i^{t+1} - \nabla f_i(x^{t+1}) } = 0$,\footnote{ Notice that $\Exp{w_i^0 - \nabla f_i(x^0) } = 0$. Let $\Exp{w_i^t - \nabla f_i(x^t) } = 0 $ hold, then $$\Exp{w_i^{t+1} - \nabla f_{i}(x^{t+1}) } =  (1-\eta)  \Exp{ w_i^{t}    -  \nabla f_i(x^{t} , \xi_i^{t+1} )  } = 0 . $$ }
which greatly facilitates the analysis of this method. Notice that, in particular, in the deterministic case ($\sigma=0$, $\alpha = 1$), the method with estimator \eqref{eq:storm_est} reduces to vanilla gradient descent with $w_i^{t+1} = \nabla f_i(x^{t+1})$. Second, the computation of $w_i^{t+1}$ requires access to two stochastic gradients $\nabla f_{i}(x^{t+1}, \xi_{i}^{t+1}) $ and $\nabla f_{i}(x^{t}, \xi_{i}^{t+1}) $ under the same realization of noise $\xi_i^{t+1}$ at each iteration, and requires the additional storage of control variate $x^{t}$. This is a serious limitation, which can make the method impractical or even not implementable for certain applications such as federated RL \citep{Mitra_TD_EF_2023}, multi-agent  RL \citep{Doan_FT_TD_4_MARL_2019} or operations research problems~\citep{Chen_Network_Revenue_2022}. Third, the analysis of variance reduced methods such as \algname{STORM} requires an additional assumptions such as individual smoothness of stochastic functions (or its averaged variants) (Assumption~\ref{as:ind_smoothness}), i.e., $\norm{\nabla f_{i}(x,\xi_i) - \nabla f_{i}(y, \xi_i)} \le \ell_i\norm{x - y} $ for all $x,y\in \R^d$, $\xi_i \sim \cD_i$, $i\in[n]$, while our  \algname{EF21-SGDM}  only needs  smoothness of (deterministic) local functions $f_i(x)$. While this assumption is satisfied for some loss functions in supervised learning, it can also be very limiting. Even if  Assumption~\ref{as:ind_smoothness} is satisfied, the constant $\wt \ell$ (which always satisfies $\wt \ell \geq \wt L$) can be much larger than $\wt L$ canceling the speed-up in terms of $T$ (or $\varepsilon$). For completeness, we provide the sample complexity analysis of our error compensated method combined with estimator \eqref{eq:storm_est}, which is deferred to Appendix~\ref{sec:appendix_STORM}.

\clearpage
\section{Additional Experiments and Details of Experimental Setup}
\label{sec:add_experiments}

\paragraph{Divergence of \algname{EF21-SGD} with time-varying step-sizes.} We complement our Figure~\ref{fig:divergence} in the main part of the paper, which shows divergence of \algname{EF21-SGD} \citep{EF21BW_2021} with small (constant) step-size. Here, in Figure~\ref{fig:divergence_var}, we see that the similar divergence is observed when using time varying step-sizes $\gamma_t = \frac{0.1}{\sqrt{t+1}}$.  Also, \algname{EF21-SGD} with time-varying step-size does not improve convergence when $n$ is increased. 

\begin{figure}
	\centering
	\begin{subfigure}{.5\textwidth}
		\centering
		\includegraphics[width=0.9\linewidth]{./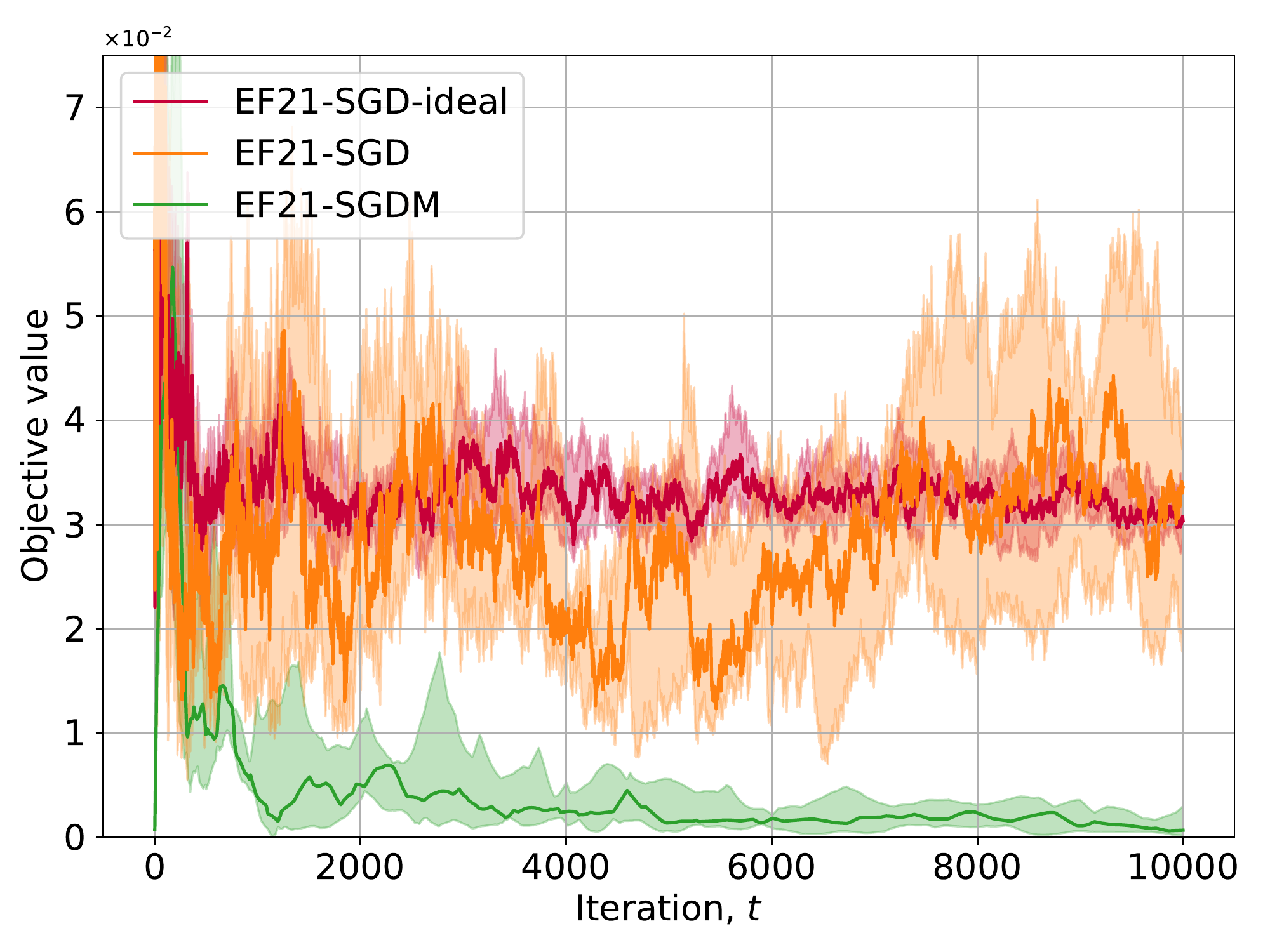}
		\caption{Divergence in single node setting, $n=1$. }
		\label{fig:diverg_var_sz}
	\end{subfigure}%
	\begin{subfigure}{.5\textwidth}
		\centering
		\includegraphics[width=0.9\linewidth]{./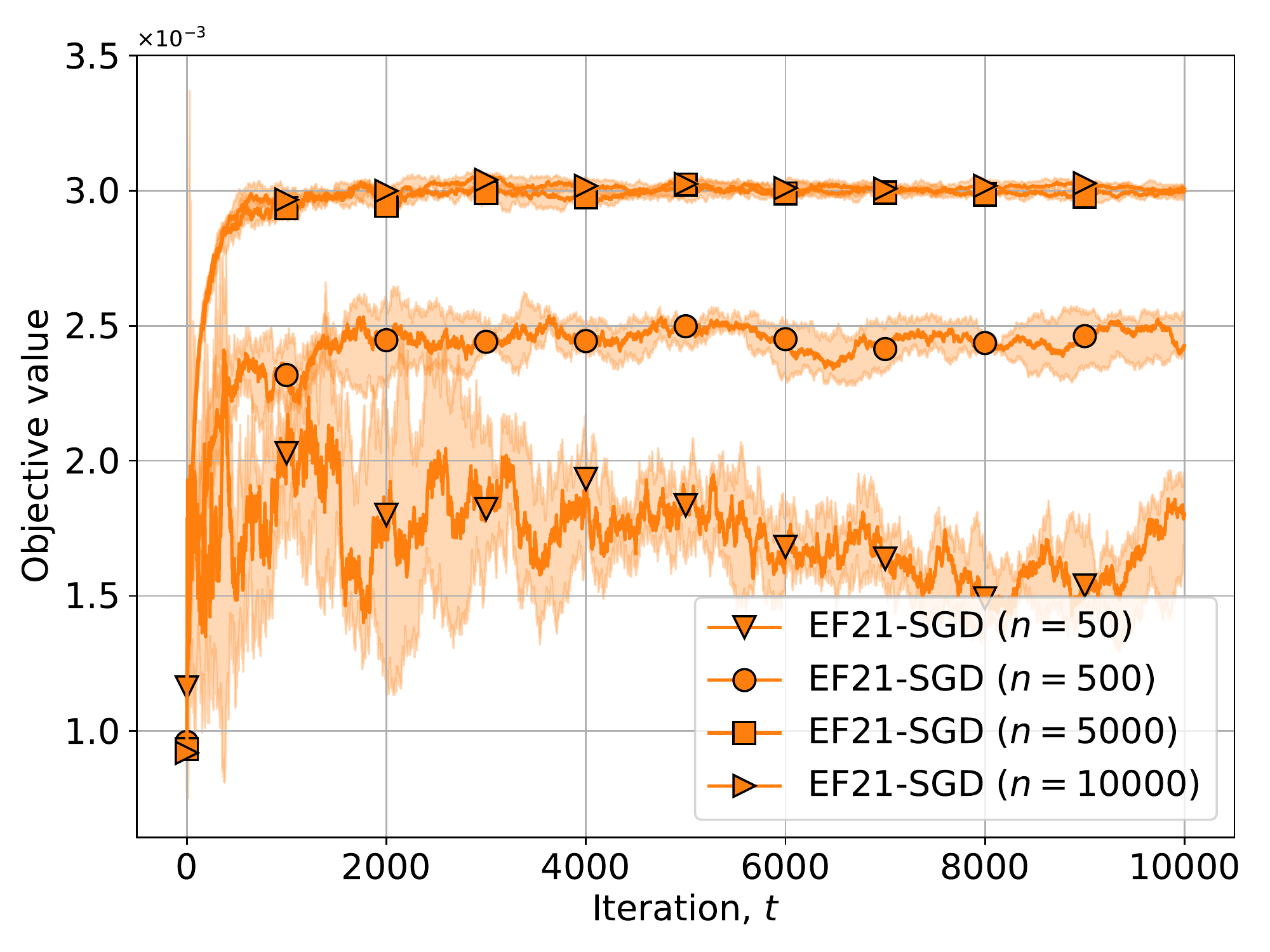}
		\caption{No improvement with $n$.}
		\label{fig:no_speedup_var_sz}
	\end{subfigure}
	\caption{Divergence of \algname{EF21-SGD} on a quadratic function $\frac{1}{2}\sqnorm{x}$ with Top$1$ compressor. See the proof of Therem~\ref{thm:non_convergence_ef21_like_SGD} for details on the construction of noise $\xi$, we use $\sigma = 1$, $B = 1$. The starting point for all methods is $x^0  = (0,-0.01)^{\top}$. %Unlike \algname{EF21-SGD}, the proposed method  \algname{EF21-SGDM}  does not suffer from divergence and is well stabilized near optimum. 
	Unlike Figure~\ref{fig:divergence}, these experiments use time varying step-sizes and momentum parameters $\gamma_t = \eta_t = \frac{0.1}{\sqrt{t+1}} $. Each method is run $10$ times and the plot shows the median performance alongside the $25\%$ and $75\%$ quantiles. }
	\label{fig:divergence_var}
\end{figure}

\paragraph{Implementation Details.}

The experiments were implemented in Python 3.7.9. The distributed environment was emulated on machines with Intel(R) Xeon(R) Gold 6248 CPU @ 2.50GHz. In all experiments with \textit{MNIST}, we split the dataset across nodes by labels to simulate the heterogeneous setting.

\subsection{Extra plots for experiments 1 and 2}
In Figures~\ref{fig:log_reg_batch_increase_n_100} and \ref{fig:log_reg_real_sim_mini_batch_B_1}, we provide extra experiments for the setup from Section~\ref{sec:experiments_main}.

\begin{figure}[h]
	\centering
	\begin{subfigure}{.33\textwidth}
		\centering
		\includegraphics[width=\textwidth]{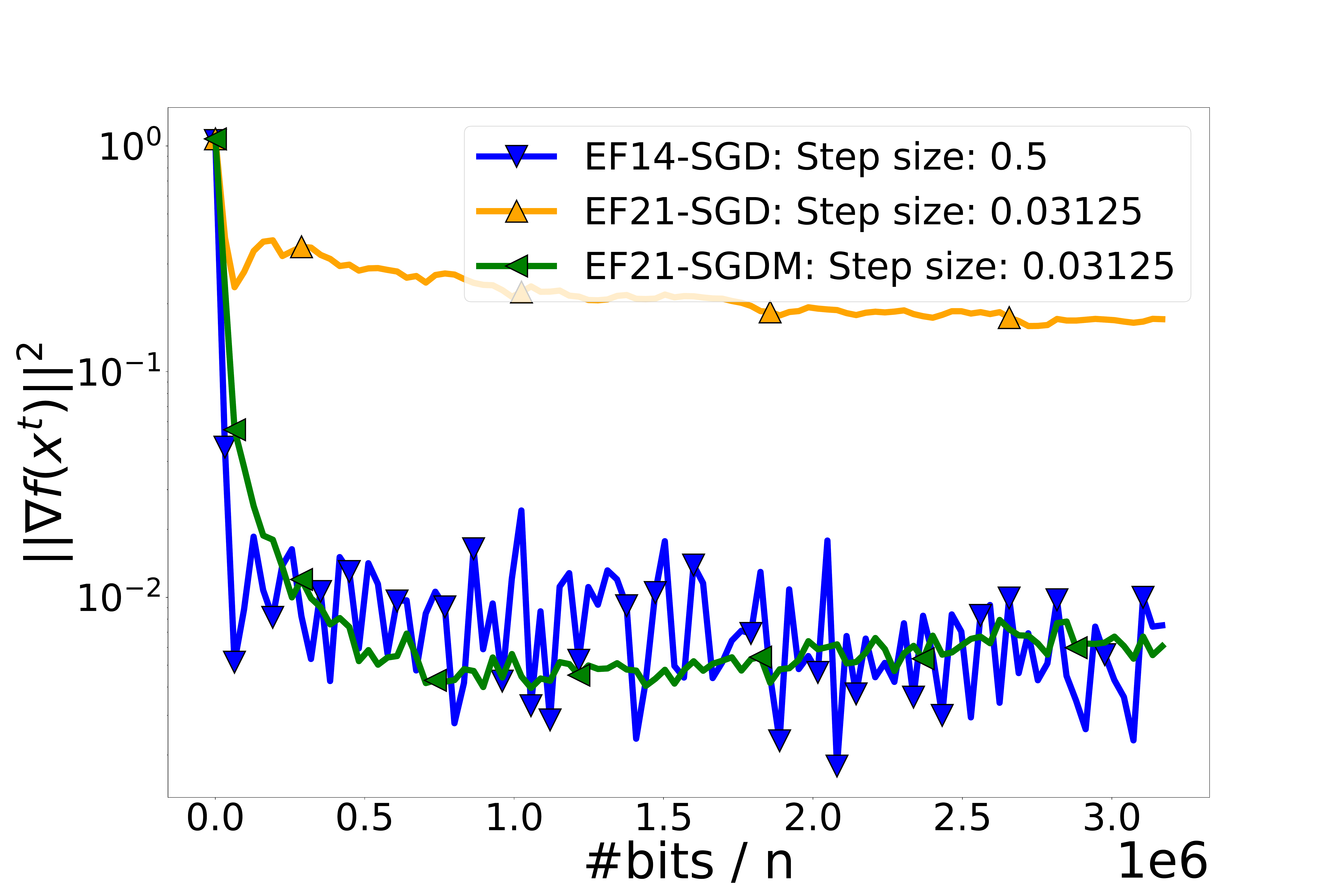}
		\caption{ $B = 1$}
	\end{subfigure}\hfill
	\begin{subfigure}{.33\textwidth}
		\centering
		\includegraphics[width=\textwidth]{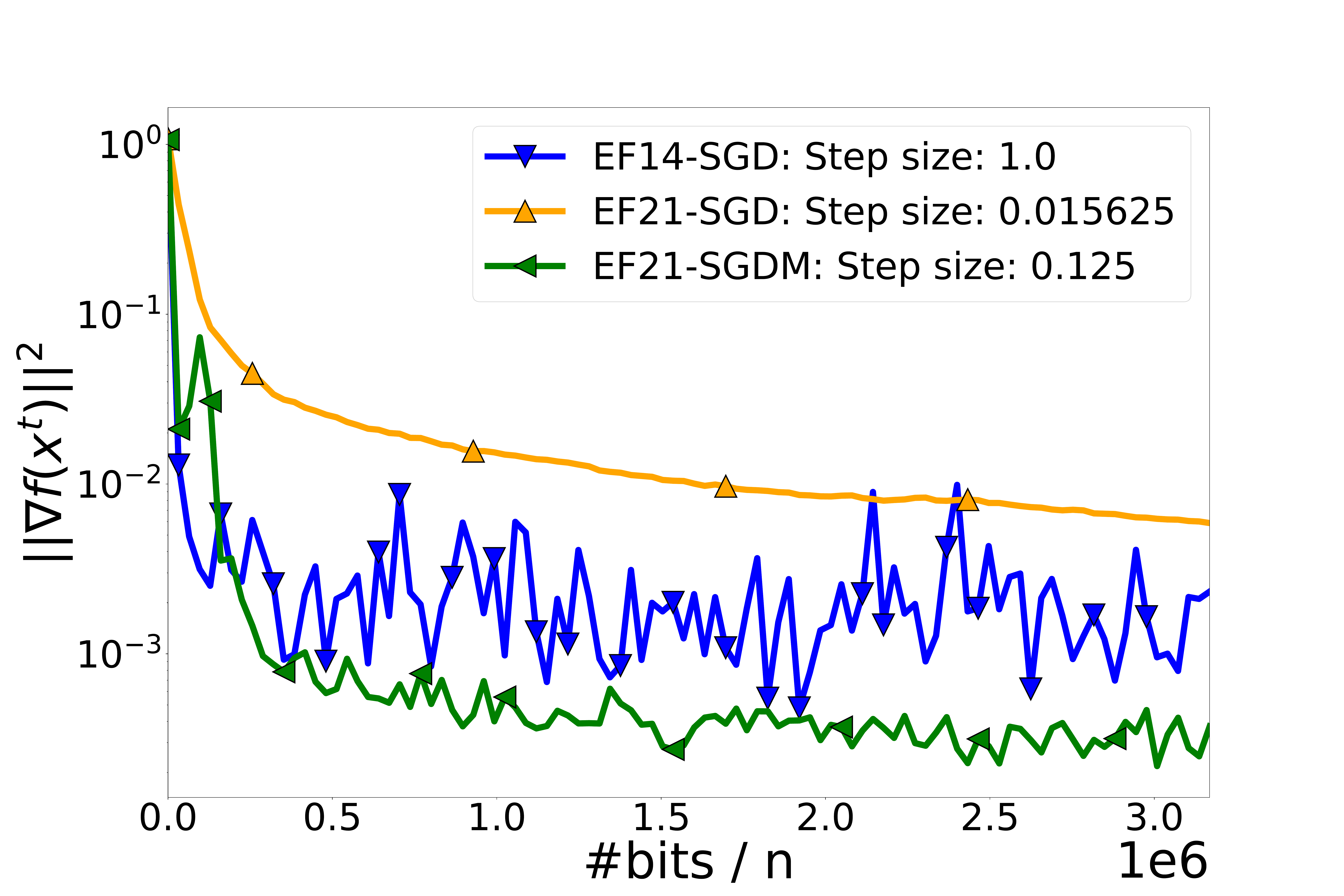}
		\caption{ $B = 32$}
	\end{subfigure}\hfill
	\begin{subfigure}{.33\textwidth}
		\centering
		\includegraphics[width=\textwidth]{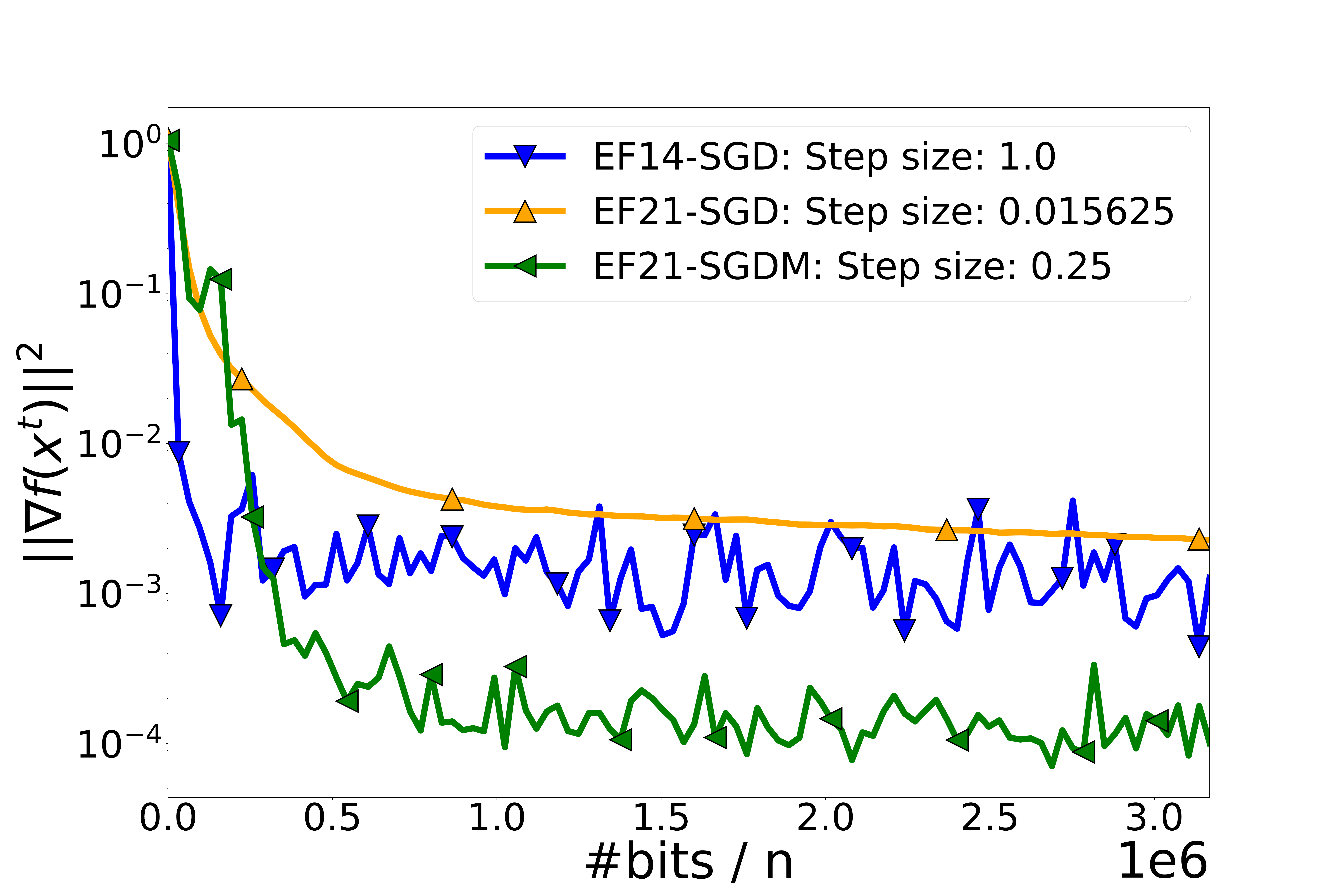}
		\caption{ $B = 128$}
	\end{subfigure}\hfill
	\caption{Performance of algorithms on \textit{MNIST} dataset with $n = 100$, and Top$10$ compressor.}
	\label{fig:log_reg_batch_increase_n_100}
\end{figure}

\begin{figure}[h]
	\centering
	\begin{subfigure}{.33\textwidth}
		\centering
		\includegraphics[width=\textwidth]{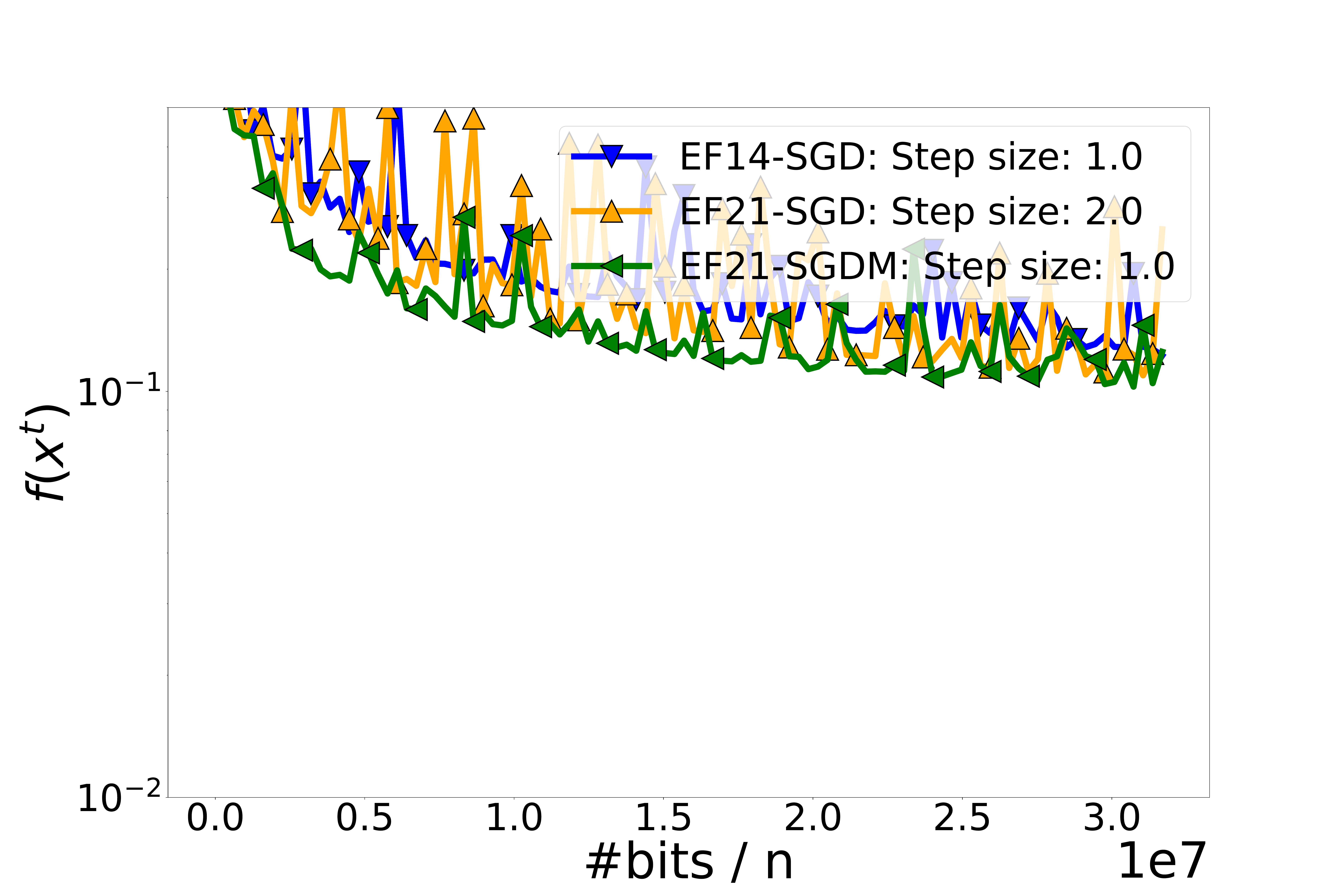}
		\caption{ $n = 1$}
	\end{subfigure}\hfill
	\begin{subfigure}{.33\textwidth}
		\centering
		\includegraphics[width=\textwidth]{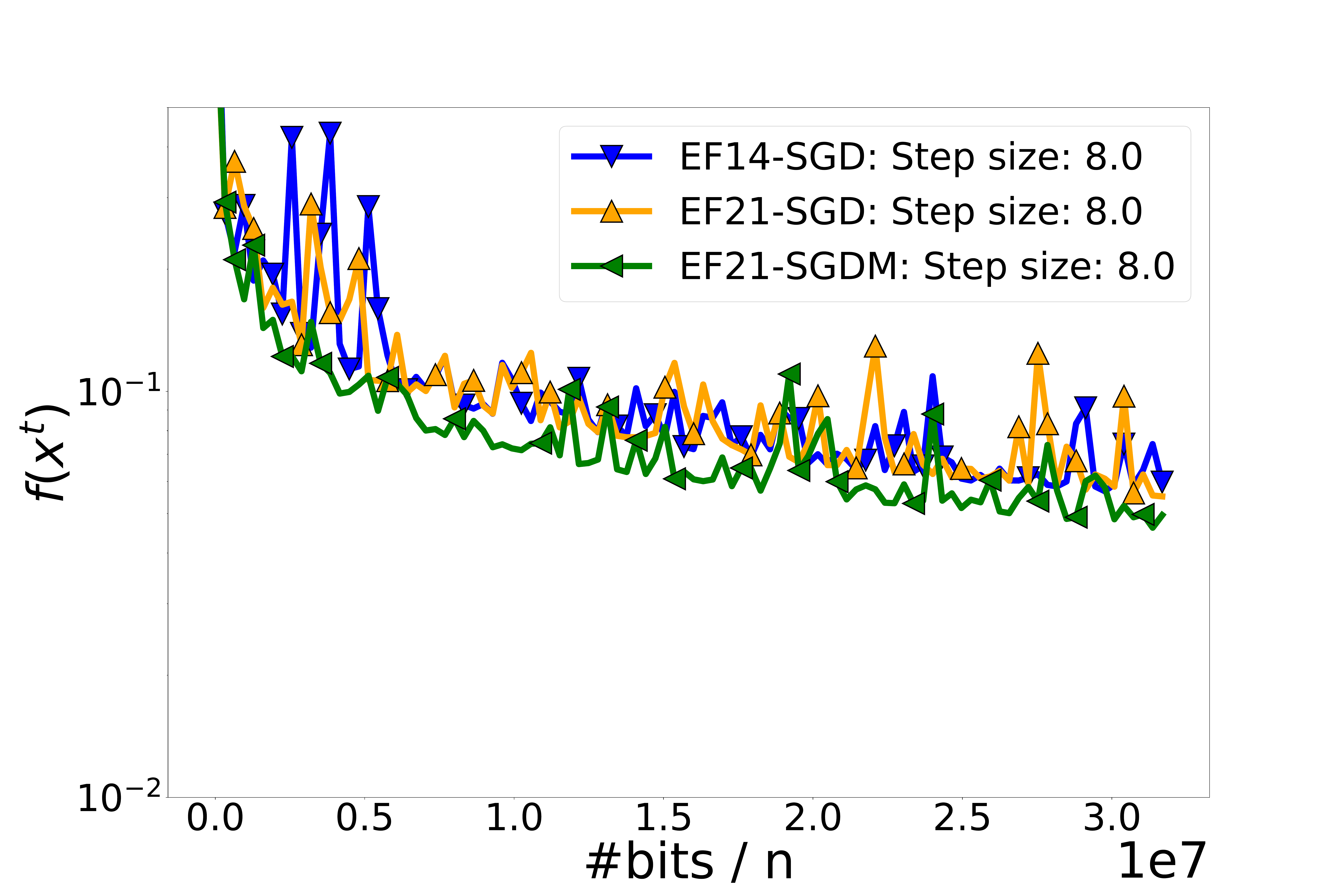}
		\caption{ $n = 10$}
	\end{subfigure}\hfill
	\begin{subfigure}{.33\textwidth}
		\centering
		\includegraphics[width=\textwidth]{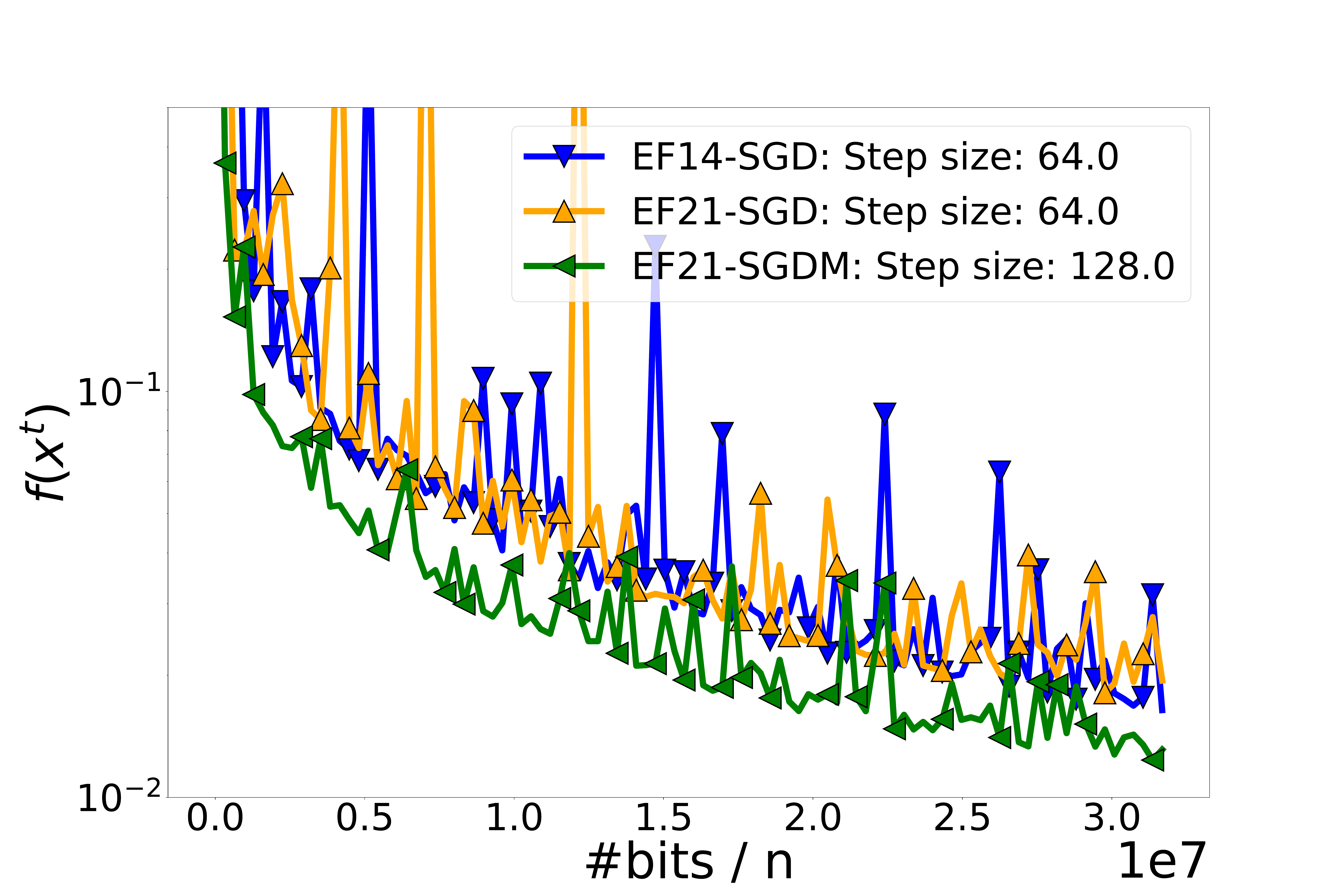}
		\caption{  $n = 100$}
	\end{subfigure}\hfill
	\caption{Performance of algorithms on \textit{real-sim} dataset with batch-size $B = 1$, and Top$100$ compressor.}
	\label{fig:log_reg_real_sim_mini_batch_B_1}
\end{figure}

%\subsection{Experiments with an Autoencoder and the MNIST dataset}
%We now investigate a completely different nonconvex optimization task. We train a linear autoencoder on \textit{MNIST}. We define $f_i(\mD, \mE) = \frac{1}{m}\sum_{j = 1}^m \norm{\mD \mE a_{ij} - a_{ij}}^2$ for all $i \in [n],$ where $\mD \in \R^{d_f \times d_e}$, $\mE \in \R^{d_e \times d_f}$, and $a_{ij} \in \R^{d_f}$ are \textit{MNIST} images splitted across $n = 10$ nodes, $d_f = 784$ is the number of features, and $d_e = 16$ is the size of encoding space, $m = 6000.$ We take the batch size $B = 128.$ In Figure~\ref{fig:mnist_auto_enc}, one can see that  \algname{EF21-SGDM}  converges significantly faster than \algname{EF21-SGD}. The performance of \algname{EF} is very close to  \algname{EF21-SGDM} . However, in the next section, we will show that \algname{EF} can have much worse rates on elementary quadratic optimization tasks.

%\begin{figure}[h]
%	\centering
%	\begin{subfigure}{.6\textwidth}
%		\centering
%		\includegraphics[width=\textwidth]{}
%	\end{subfigure}
%	\caption{Autoencoder Problem with $K = 1000$}
%	\label{fig:mnist_auto_enc}
%\end{figure}

\subsection{Experiment 3: stochastic quadratic optimization}
We now consider a synthetic $\lambda$--strongly convex quadratic function problem $f(x) = \frac{1}{n}\sum_{i=1}^{n} f_i(x),$ where the functions $\textstyle f_i(x) = \frac{1}{2}x^\top \mQ_i x - x^\top b_i
$ are (not necessarily convex) quadratic functions for all $i \in [n]$ and $x \in \R^d.$
The matrices $\mQ_1, \cdots, \mQ_n$, vectors $b_1, \cdots, b_n,$ and a starting point $x^0$ are generated by Algorithm~\ref{alg:quad_gen} with the number of nodes $n = 100,$ dimension $d = 1000,$ regularizer $\lambda = 0.01$, and scale $s = 1.$ For all $i \in [n]$ and $x \in \R^d,$ we consider stochastic gradients $\nabla f_i(x, \xi) = \nabla f_i(x) + \xi_i,$ where $\xi_i$ are i.i.d. samples from $\mathcal{N}(0, \sigma)$ with $\sigma \in \{0.001, 0.01\}.$ In Figure~\ref{fig:quad_exp}, we present the comparison of  \algname{EF21-SGDM}  and \algname{EF14-SGD} with three different step sizes. The behavior of methods for other step sizes from the set $\{2^k\,|\, k \in [-20, 20]\}$ follows a similar trend. For every step size, we observe that at the beginning, the methods have almost the same linear rates, but then \algname{EF14-SGD} gets stuck at high accuracies, while  \algname{EF21-SGDM}  continues converging to the lower accuracies.

\begin{figure}[h]
	\centering
	\begin{subfigure}{.49\textwidth}
		\centering
		\includegraphics[width=\textwidth]{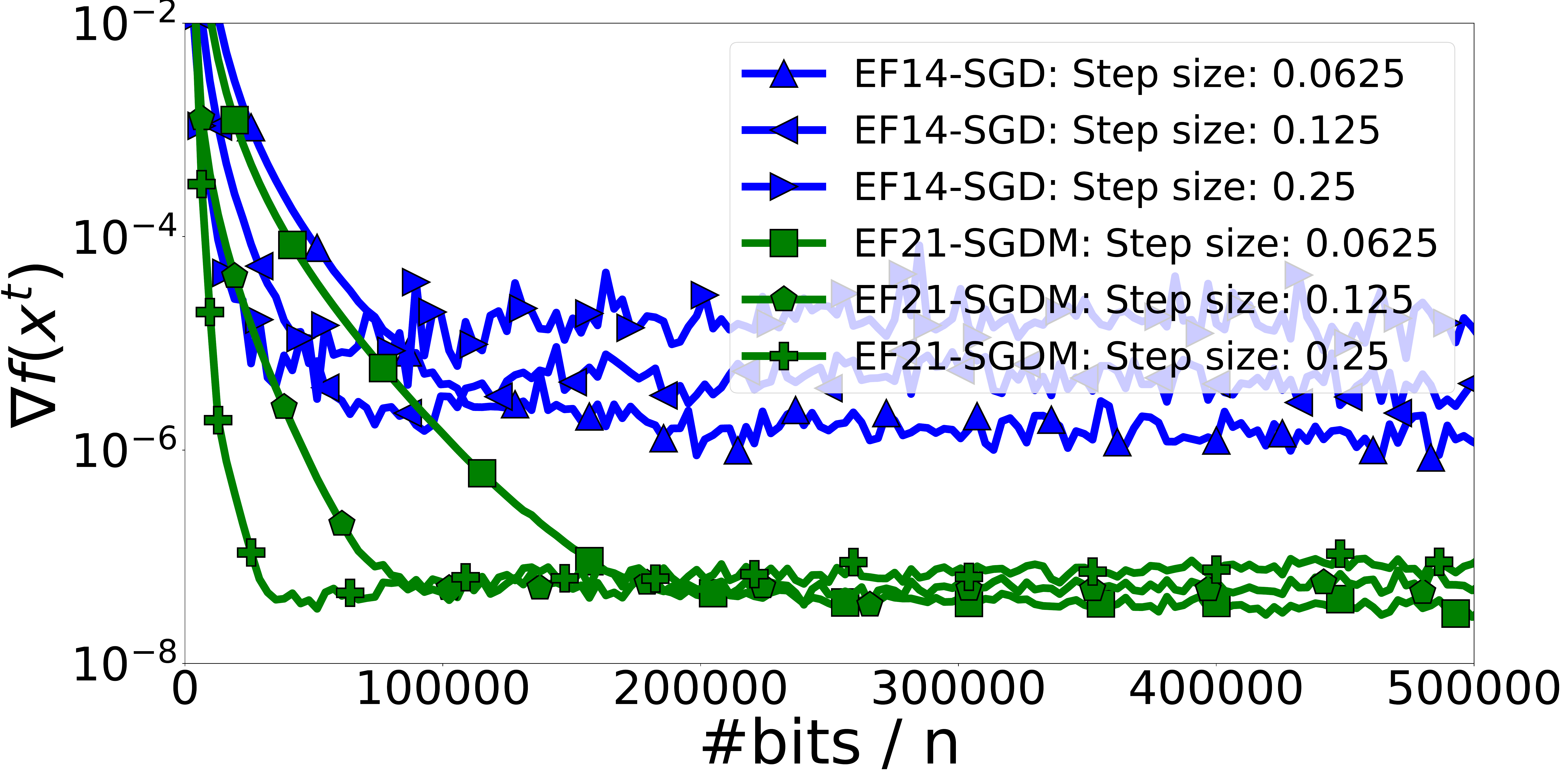}
	\end{subfigure}
	\begin{subfigure}{.49\textwidth}
		\centering
		\includegraphics[width=\textwidth]{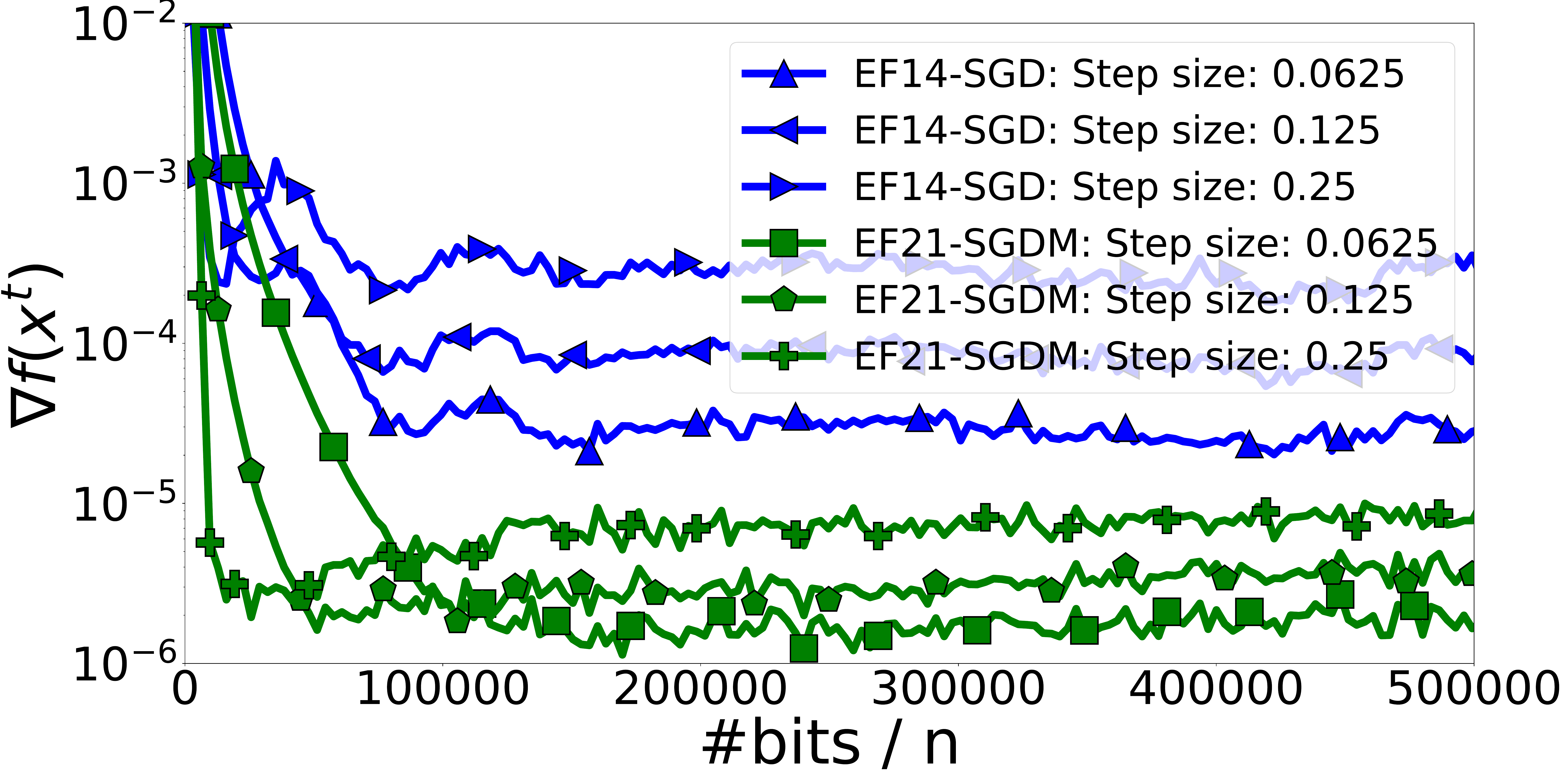}
	\end{subfigure}\hfill
	\caption{Stochastic Quadratic Optimization Problem with $\sigma = 0.001$ (left figure) and $\sigma = 0.01$ (right figure)}
	\label{fig:quad_exp}
\end{figure}

\begin{algorithm}[!h]
	\caption{Quadratic Optimization Task Generation Procedure}
	\label{alg:quad_gen}
	\begin{algorithmic}[1]
		\label{algorithm:matrix_generation}
		\State \textbf{Parameters:} number nodes $n$, dimension $d$, regularizer $\lambda$, and scale $s$.
		\For{$i = 1, \dots, n$}
		\State Calculate Guassian noises $\mu_i^s = 1 + s \xi_i^s$ and $\mu_i^b = s \xi_i^b,$ i.i.d. $\xi_i^s, \xi_i^b \sim \mathcal{N}(0, 1)$
		\State $b_i = \frac{\mu_i^s}{4}(-1 + \mu_i^b, 0, \cdots, 0) \in \R^{d}$
		\State Scale the predefined tridiagonal matrix
		\[\mQ_i = \frac{\mu_i^s}{4}\left( \begin{array}{cccc}
			2 & -1 & & 0\\
			-1 & \ddots & \ddots & \\
			& \ddots & \ddots & -1 \\
			0 & & -1 & 2 \end{array} \right) \in \R^{d \times d}\]
		\EndFor
		\State Find the mean of matrices $\mQ = \frac{1}{n}\sum_{i=1}^n \mQ_i$
		\State Find the minimum eigenvalue $\lambda_{\min}(\mQ)$
		\For{$i = 1, \dots, n$}
		\State Normalize matrix $\mQ_i = \mQ_i + (\lambda - \lambda_{\min}(\mQ)) \mI$
		\EndFor
		\State Find a starting point $x^0 = (\sqrt{d}, 0, \cdots, 0)$
		\State \textbf{Output a new problem:} matrices $\mQ_1, \cdots, \mQ_n$, vectors $b_1, \cdots, b_n$, starting point $x^0$
	\end{algorithmic}
\end{algorithm}

\paragraph{A procedure to generate stochastic quadratic optimization problems.}

In this section, we present an algorithm that generates quadratic optimization tasks. The formal description is provided in Algorithm~\ref{alg:quad_gen}. The idea is to take a predefined tridiagonal matrix and add noises to simulate the heterogeneous setting. Algorithm~\ref{alg:quad_gen} returns matrices $\mQ_1, \cdots, \mQ_n$, vectors $b_1, \cdots, b_n,$ and a starting point $x^0$ such that the matrix $\mQ = \frac{1}{n} \sum_{i=1}^n \mQ_i$ has the minimum eigenvalue $\lambda_{\min}(\mQ) = \lambda,$ where $\lambda \geq 0$ is a parameter. Next, we define the functions $f_i$ and stochastic gradients in the following way:
\begin{align*}
	f_i(x) \eqdef \frac{1}{2}x^\top \mQ_i x - x^\top b_i
\end{align*}
and
\begin{align*}
	\nabla f_i(x, \xi) \eqdef \nabla f_i(x) + \xi_i,
\end{align*}
for all $x \in \R^d$ and $i \in [n].$ The noises $\xi_i$ are i.i.d. samples from $\mathcal{N}(0, \sigma),$ where $\sigma$ is a parameter.

\begin{figure}
	\centering
	\begin{subfigure}{.49\textwidth}
		\centering
		\includegraphics[width=1.0\textwidth]{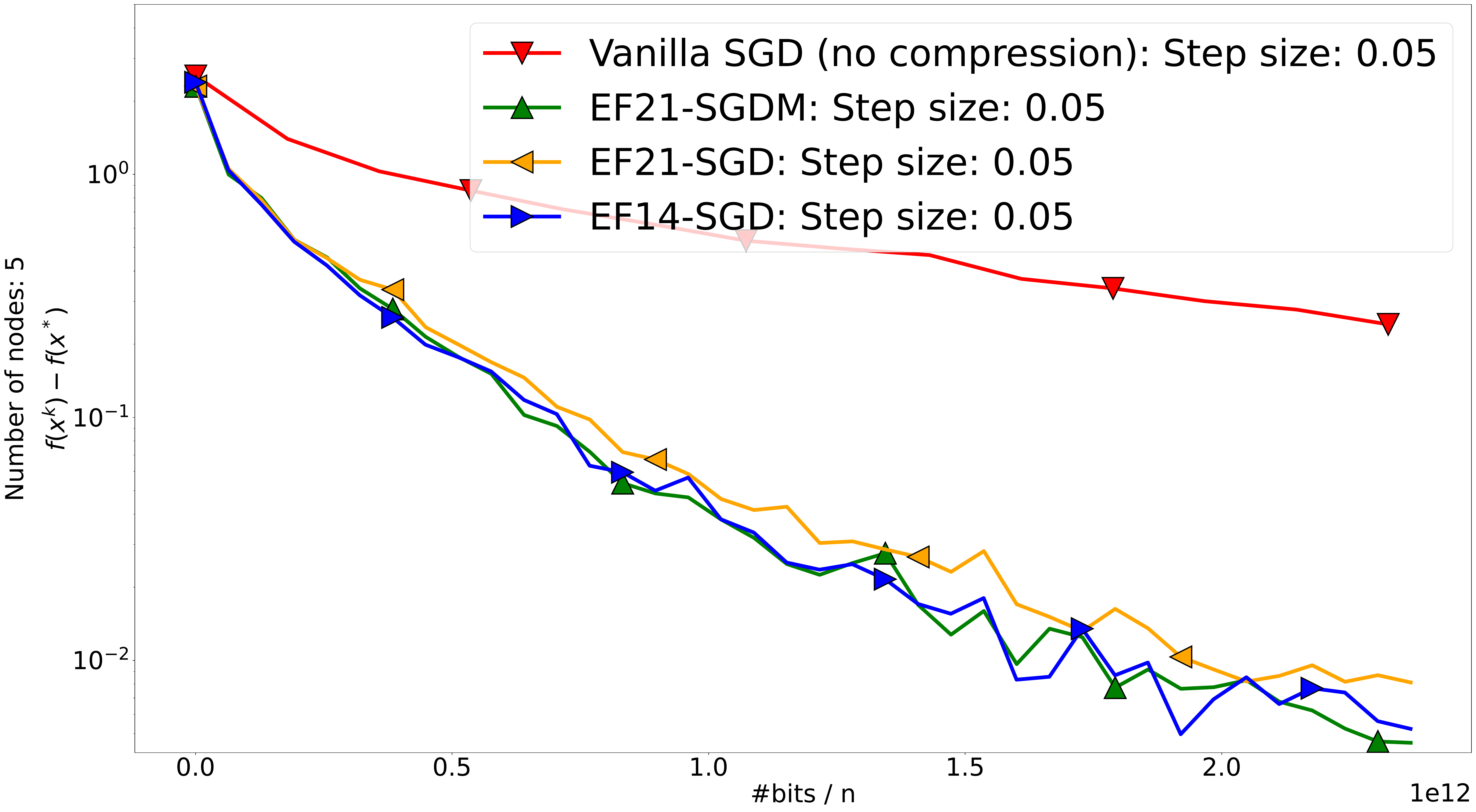}
		\caption{ $B = 8$}
	\end{subfigure}
	\begin{subfigure}{.49\textwidth}
		\centering
		\includegraphics[width=1.0\textwidth]{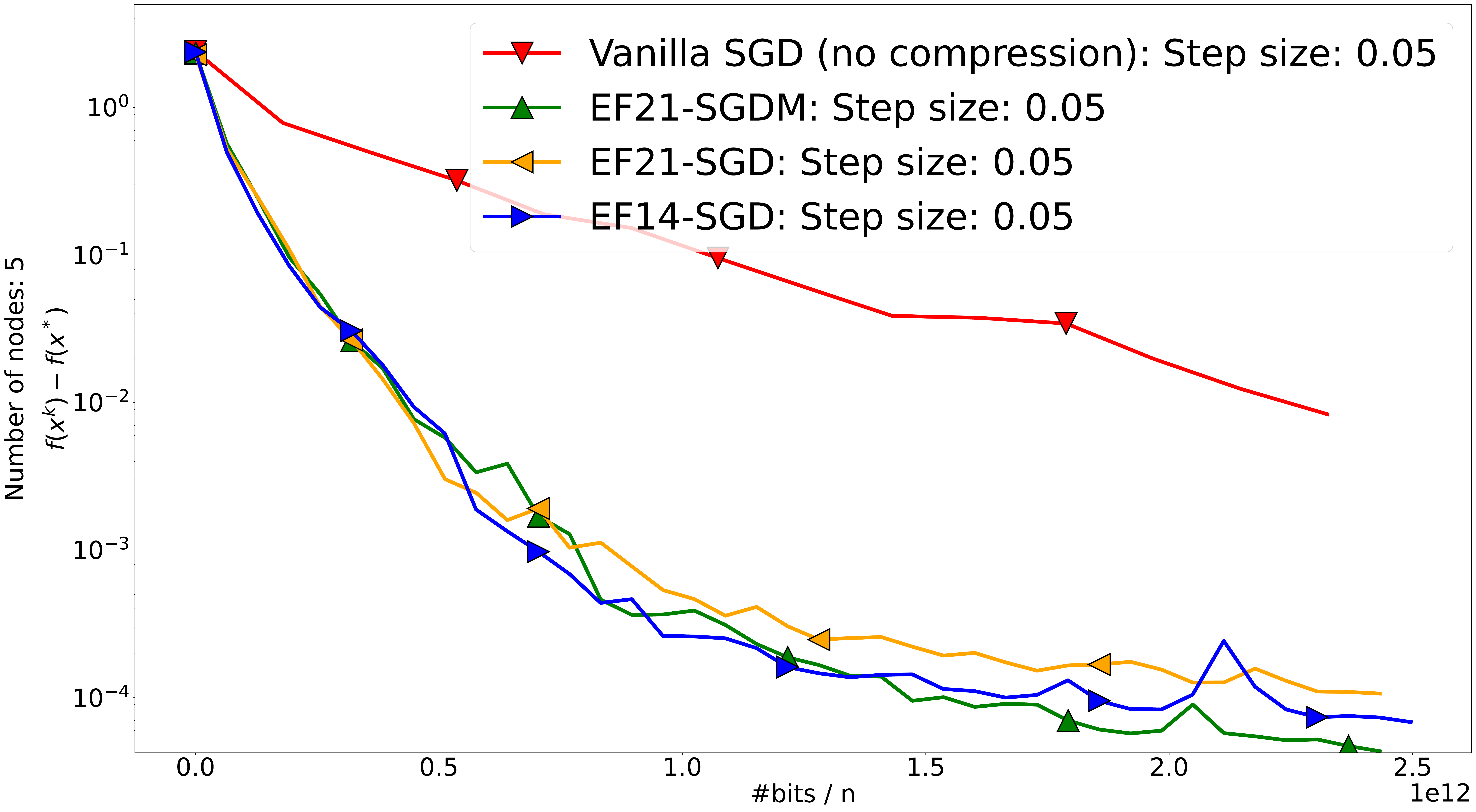}
		\caption{ $B = 25$}
		\label{fig:nn:b25}
	\end{subfigure}
	\caption{\textit{ResNet-18} on \textit{CIFAR10} dataset with $n = 5.$}
	\label{fig:nn}
	\end{figure}
	
	\begin{figure}
		\centering
		\begin{tabular}{|c c|} 
			\hline
			Algorithm & Test Accuracy \\ [0.5ex] 
			\hline
			\algname{SGD} & 81.5 \% \\ 
			\hline
			\algname{EF21-SGD} & 82.5 \% \\
			\hline
			\algname{EF14-SGD} & 83.1 \% \\
			\hline
			\algname{EF21-SGDM} & {\bf 83.3} \% \\
			\hline
		\end{tabular}
		\caption{Accuracy on the \textit{CIFAR10} test split.}
		\label{table:cifar10_test}
	\end{figure}

\subsection{Experiment 4: training neural network}

We test algorithms on an image recognition task, \textit{CIFAR10} \citep{krizhevsky2009learning}, with the \textit{ResNet-18} \citep{he2016deep} deep neural network (the number of parameters $d \approx 10^7$). We split \textit{CIFAR10} among 5 nodes, and take $K = 2 \times 10^6$ in Top$K$. In all methods we finetune the step sizes. One can see that our findings in the low-scale experiments translate into large-scale experiments in Figure~\ref{fig:nn}. With different batch sizes, \algname{EF21-SGD} converges slower than \algname{EF21-SGDM} and \algname{EF14-SGD}, and our new method \algname{EF21-SGDM} improves over \algname{EF14-SGD} in Figure~\ref{fig:nn:b25}. We checked the accuracies on the test dataset (see Table~\ref{table:cifar10_test}) and observed the same relations between algorithms (note that accuracies are far from the real SOTA because we turned off all augmentations and regularizations in training).

\newpage
\section{Descent Lemma}

Let us state the following lemma that is used in the analysis of nonconvex optimization methods.

\begin{lemma}[\citep{PAGE2021}]\label{le:descent}
	Let the function $f(\cdot)$ be $L$-smooth and let $x^{t+1} = x^t -\gamma g^t$ for some vector $g^t \in \R^d$ and a step-size $\gamma > 0$. Then we have 
	\begin{equation}\label{eq:descent}
		f(x^{t+1})  \leq f(x^{t}) - \fr{\g}{2} \sqnorm{\nabla f(x^t)} - \left(\frac{1}{2 \gamma} - \frac{L}{2}\right) \sqnorm{x^{t+1} - x^{t}}  + \frac{\gamma}{2} \sqnorm{g^t - \nabla f(x^t) } .
		%		\delta_{t+1} \leq \delta_{t} - \fr{\g}{2} \sqnorm{\nabla f(x^t)} - \left(\frac{1}{2 \gamma} - \frac{L}{2}\right) R_t  + \gamma \wt V_t   + \gamma P_t  .	
	\end{equation}
\end{lemma}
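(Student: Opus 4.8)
The plan is to start from the one inequality that actually uses a hypothesis, namely the standard quadratic upper bound implied by $L$-smoothness,
\begin{equation*}
f(x^{t+1}) \leq f(x^{t}) + \langle \nabla f(x^t), x^{t+1} - x^t \rangle + \frac{L}{2}\sqnorm{x^{t+1}-x^t}.
\end{equation*}
This holds for the arbitrary point $x^{t+1}$ and requires nothing beyond $L$-smoothness of $f$. Everything that follows is exact algebraic rearrangement engineered so that the three target quantities appear: a negative multiple of $\sqnorm{\nabla f(x^t)}$, a negative multiple of $\sqnorm{x^{t+1}-x^t}$, and the positive error term $\tfrac{\gamma}{2}\sqnorm{g^t-\nabla f(x^t)}$.

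Next I would substitute the update rule $x^{t+1} - x^t = -\gamma g^t$, so the cross term becomes $-\gamma\langle \nabla f(x^t), g^t\rangle$. The key manipulation is to expand this inner product via the polarization identity
\begin{equation*}
\langle \nabla f(x^t), g^t\rangle = \frac{1}{2}\sqnorm{\nabla f(x^t)} + \frac{1}{2}\sqnorm{g^t} - \frac{1}{2}\sqnorm{g^t - \nabla f(x^t)}.
\end{equation*}
Multiplying through by $-\gamma$ produces precisely $-\tfrac{\gamma}{2}\sqnorm{\nabla f(x^t)}$, the desired error term $+\tfrac{\gamma}{2}\sqnorm{g^t-\nabla f(x^t)}$, and a leftover term $-\tfrac{\gamma}{2}\sqnorm{g^t}$ which I would rewrite using the update relation $\sqnorm{g^t} = \gamma^{-2}\sqnorm{x^{t+1}-x^t}$, turning it into $-\tfrac{1}{2\gamma}\sqnorm{x^{t+1}-x^t}$.

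Finally I would collect the two coefficients multiplying $\sqnorm{x^{t+1}-x^t}$ — the $-\tfrac{1}{2\gamma}$ just obtained and the $+\tfrac{L}{2}$ inherited from the smoothness bound — into the single factor $-\bigl(\tfrac{1}{2\gamma}-\tfrac{L}{2}\bigr)$, which reproduces \eqref{eq:descent} verbatim. There is no genuine obstacle here: the argument consists of one inequality (the smoothness bound) followed by exact identities, so no slack is introduced and the constants come out exactly. The only point deserving care is to use the \emph{symmetric} polarization identity rather than a one-sided Young-type inequality, since only the symmetric form makes the gradient-squared term, the step-squared term, and the error term emerge simultaneously with their precise coefficients instead of up to unspecified constants.
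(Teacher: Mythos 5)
Your proof is correct and is essentially the canonical argument for this lemma (which the paper does not reprove but cites from \citet{PAGE2021}): the $L$-smoothness quadratic upper bound, substitution of $x^{t+1}-x^t=-\gamma g^t$, and the exact identity $\langle \nabla f(x^t), g^t\rangle = \tfrac{1}{2}\sqnorm{\nabla f(x^t)} + \tfrac{1}{2}\sqnorm{g^t} - \tfrac{1}{2}\sqnorm{g^t-\nabla f(x^t)}$, with $\sqnorm{g^t}=\gamma^{-2}\sqnorm{x^{t+1}-x^t}$. All constants come out exactly as in \eqref{eq:descent}, so there is nothing to fix.
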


\section{EF21-SGDM-ideal (Proof of Theorem~\ref{thm:non_convergence_ef21_like_SGD} and  Proposition~\ref{prop:ef21_m_SGDv0})}
We now state a slighly more general result than Theorem~\ref{thm:non_convergence_ef21_like_SGD}, which holds for \algname{EF21-SGDM-ideal} method with any $\eta \in (0, 1]$. The statement of Theorem~\ref{thm:non_convergence_ef21_like_SGD} follows by setting $\eta = 1$, since in that case \algname{EF21-SGDM-ideal} coincides with \algname{EF21-SGD-ideal} \eqref{eq:ef21_like_sgd_1}, \eqref{eq:ef21_like_sgd_2}. Recall that \algname{EF21-SGDM-ideal} (distributed variant) has the following update rule:
\begin{equation}
	\label{eq:x_update_appendix}
	x^{t+1} = x^t - \gamma g^t, \qquad g^t = \suminn g_i^t , 
\end{equation}
\begin{align}
	\text{\algname{EF21-SGDM-ideal:}}&\qquad
	\begin{split}
		v_i^{t+1} &= \textcolor{mygreen}{\nabla f_i(x^{t+1}) } + \eta (\nabla f_i(x^{t+1}, \xi_i^{t+1})  - \textcolor{mygreen}{\nabla f_i(x^{t+1}) }  ) , \\
		g_i^{t+1} &= \textcolor{blue}{ \nabla f_i(x^{t+1}) } + \cC\rb{  v_i^{t+1}   - \textcolor{blue}{ \nabla f_i(x^{t+1}) } } . 
	\end{split} \label{eq:EF21-SGDMv0_dist_appendix} 
\end{align}
\begin{theorem}\label{thm:non_convergence_EF21_SGDMv0_appendix}
	Let $L, \sigma >0$, $0<\gamma\leq1/L$, $0 <\eta \leq 1$ and $n=1.$ There exists a convex, $L$-smooth function $f(\cdot)$, a contractive compressor $\cC(\cdot)$ satisfying Definition~\ref{def:contractive_compressor}, and an unbiased stochastic gradient with bounded variance $\sigma^2$ such that if the method \eqref{eq:x_update_appendix}, \eqref{eq:EF21-SGDMv0_dist_appendix} is  run with a step-size $\gamma$, then for all $T \geq 0$ and for all $x^0 \in \{(0,  x_{(2)}^{0})^{\top} \in \R^2 \,|\, x_{(2)}^0 < 0\},$ we have
	$$
	\Exp{\sqnorm{ \nabla f(x^T) } } \geq  \frac{1}{60} \min\left\{ \eta^2  \sigma^2, \sqnorm{ \nabla f(x^0) }\right\} . 
	$$
	Fix $0 < \varepsilon \leq L /\sqrt{60}$ and $x^0 = (0,  -1)^{\top}.$ Additionally assume that $n \geq 1$ 
	and the variance of unbiased stochastic gradient is controlled by $\nicefrac{\sigma^2}{B}$ for some $B\geq1$. If $B < \frac{ \eta^2 \sigma^2}{60 \varepsilon^2}$, then we have $\Exp{\norm{\nabla f(x^T)}} > \varepsilon $ for all $T \geq 0$. 
\end{theorem}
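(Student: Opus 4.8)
The plan is to exhibit an explicit instance on which the compressed noise carries a \emph{persistent bias} that pins the second coordinate of the iterate, and hence the gradient, away from zero. I will take $f(x) = \frac{L}{2}\sqnorm{x}$ on $\R^2$ (convex, $L$-smooth, with $\nabla f(x) = Lx$), let $\cC$ be the Top$1$ sparsifier (which satisfies Definition~\ref{def:contractive_compressor} with $\alpha = \nicefrac12$), and use additive noise $\nabla f(x,\xi) = \nabla f(x) + \xi$ for a fixed, state-independent, zero-mean random vector $\xi$ with $\Exp{\sqnorm{\xi}} = \sigma^2$. The point of the ``ideal'' update \eqref{eq:EF21-SGDMv0_dist_appendix} is that $v^{t+1} - \nabla f(x^{t+1}) = \eta\xi^{t+1}$, so the compressor only ever sees pure noise, and by positive homogeneity of Top$1$ we get $g^{t} = \nabla f(x^t) + \cC(\eta\xi^t) = L x^t + \eta\,\cC(\xi^t)$.

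The crux is to choose $\xi$ so that, although $\Exp{\xi} = 0$, the \emph{compressed} noise has a strictly positive mean in the second coordinate, $\Exp{[\cC(\xi)]_{(2)}} = b_0 > 0$ with $b_0 = \Theta(\sigma)$; this is possible precisely because Top$1$ discards a coordinate selectively. Concretely I will use a three-atom distribution: with probability $q$ each, $\xi = (\pm\beta, -\delta)$ with $\beta \ge \delta > 0$ (so Top$1$ keeps coordinate $1$ and throws away the \emph{negative} second entry), and with the remaining probability $1-2q$, $\xi = (0, \delta')$ with $\delta' = \nicefrac{2q\delta}{(1-2q)} > 0$ (so Top$1$ keeps the \emph{positive} second entry). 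One checks directly that this is zero-mean in both coordinates, that $\Exp{\sqnorm{\xi}} \le \sigma^2$ can be enforced, and that $\Exp{[\cC(\xi)]_{(2)}} = (1-2q)\delta' = 2q\delta =: b_0$. The main obstacle is exactly this construction: a two-atom distribution provably cannot produce such a bias, because the zero-mean constraint forces the second atom to be antiparallel to the first and hence to inherit the same coordinate-dominance pattern under Top$1$, so at least three atoms are genuinely needed. Optimizing $q$ (the maximizer is $q = 1 - \nicefrac{\sqrt2}{2}$, giving $1-2q = \sqrt2-1>0$) yields $b_0^2 \ge \nicefrac{\sigma^2}{60}$ with room to spare.

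Given the bias, the analysis collapses to a scalar recursion. With the natural initialization $g^0 = \nabla f(x^0) + \cC(\eta\xi^0)$, and using that $\xi^t$ is independent of $x^t$, the second-coordinate update $x^{t+1}_{(2)} = (1-\gamma L) x^t_{(2)} - \gamma\eta[\cC(\xi^t)]_{(2)}$ gives, for $m_t := \Exp{x^t_{(2)}}$, the recursion $m_{t+1} = (1-\gamma L) m_t - \gamma\eta b_0$, a linear contraction toward $m_\star = -\nicefrac{\eta b_0}{L} < 0$. Since $0 \le 1-\gamma L < 1$ for $\gamma \le \nicefrac1L$, the sequence $m_t$ moves monotonically from $m_0 = x^0_{(2)} < 0$ toward $m_\star$ and stays in the closed interval between them; in particular every $m_t$ is negative and $|m_t| \ge \min\{|x^0_{(2)}|,\ \nicefrac{\eta b_0}{L}\}$ for all $t$.

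Finally I will convert this into the two stated bounds. For the second-moment claim I drop the first coordinate and apply Jensen to the second: $\Exp{\sqnorm{\nabla f(x^T)}} \ge \Exp{(\nabla f(x^T)_{(2)})^2} \ge (\Exp{\nabla f(x^T)_{(2)}})^2 = L^2 m_T^2 \ge \min\{L^2 (x^0_{(2)})^2,\ \eta^2 b_0^2\} = \min\{\sqnorm{\nabla f(x^0)},\ \eta^2 b_0^2\}$, and $b_0^2 \ge \nicefrac{\sigma^2}{60}$ finishes it. For the batch statement, replacing the oracle variance by $\nicefrac{\sigma^2}{B}$ rescales the atoms of $\xi$ by $\nicefrac{1}{\sqrt B}$ and hence $b_0$ by $\nicefrac{1}{\sqrt B}$; plugging in $x^0 = (0,-1)^\top$ (so $\norm{\nabla f(x^0)} = L$), using the first-moment inequality $\Exp{\norm{\nabla f(x^T)}} \ge |\Exp{\nabla f(x^T)_{(2)}}| = L|m_T|$, and invoking $\varepsilon \le \nicefrac{L}{\sqrt{60}}$ together with $B < \nicefrac{\eta^2\sigma^2}{(60\varepsilon^2)}$ makes both terms of the resulting minimum strictly exceed $\varepsilon$, which is the desired conclusion.
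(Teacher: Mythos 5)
Your proposal is correct and takes essentially the same route as the paper's own proof: the same quadratic $f(x)=\frac{L}{2}\sqnorm{x}$, the Top$1$ compressor, and a state-independent three-atom zero-mean noise whose Top$1$ compression has strictly positive mean in the second coordinate, followed by the linear recursion for $\Exp{x_{(2)}^t}$ and a $\min$-type lower bound (the paper uses the fixed atoms $(2,0)^{\top}$, $(0,1)^{\top}$, $(-2,-1)^{\top}$ scaled by $\sqrt{3\sigma^2/10}$ and minimizes a quadratic in $(1-\gamma L)^T$, whereas you parametrize the atoms and use the monotone-interval bound on the mean recursion — a cosmetic difference, and your constants $b_0^2\approx 0.17\,\sigma^2 \gg \sigma^2/60$ indeed go through). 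Two trivial touch-ups: take $\beta>\delta$ strictly (or fix Top$1$'s tie-breaking rule, which you are free to do since the compressor is part of the construction), and state explicitly that for $n\geq 1$ the averaged compressed noise $\frac{1}{n}\sum_{i=1}^n\cC(\eta\,\xi_i^t)$ has the same mean by linearity of expectation — the paper's ``independent of $n$'' observation — so your scalar recursion for the mean of the second coordinate carries over verbatim to the distributed case.
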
	
\begin{proofof}{Theorem~\ref{thm:non_convergence_ef21_like_SGD}}
	\textbf{Part I.}
	Consider $f(x) = \frac{L}{2} \sqnorm{x}$, $x \in \R^2$. 
	For each iteration $t\geq 0$, let the random vector $\xi^{t+1}$ be sampled uniformly at random from the set of vectors: 
	$$
	z_1 = \begin{pmatrix} 2 \\  0 \end{pmatrix} \sqrt{\frac{3 \sigma^2}{10}}, \quad z_2 = \begin{pmatrix} 0 \\ 1 \end{pmatrix} \sqrt{\frac{3 \sigma^2}{10}} , \quad  z_3 = \begin{pmatrix} -2 \\  -1 \end{pmatrix} \sqrt{\frac{3 \sigma^2}{10}} .
	$$
	Define the stochastic gradient as $\nabla f(x^{t}, \xi^{t})  \eqdef  \nabla f(x^{t}) + \xi^{t}=  L x^{t} + \xi^{t}$. Notice that $\Exp{\nabla f(x^{t}, \xi^{t})} = \nabla f(x^{t})$, and  $\Exp{\sqnorm{\nabla f(x^{t}, \xi^{t}) - \nabla f(x^t)} } = \sigma^2 $. The update rule of method \eqref{eq:x_update_appendix}, \eqref{eq:EF21-SGDMv0_dist_appendix} with such estimator is $$x^{t+1} = x^t - \gamma  g^t = x^t - L \gamma  x^t - \gamma \cC\rb{ \eta \, \xi^{t} },$$ where we choose $\cC(\cdot)$ as a Top$1$ compressor. Notice that $\Exp{\xi^{t}} = (0, 0)^{\top}$, but $$\Exp{\cC(\xi^{t})} = \eta \, \sqrt{\frac{3 \sigma^2}{10}} (0, 1/3)^{\top} \neq (0, 0)^{\top}.$$ By setting the initial iterate to $x^0 = (0,  x_{(2)}^{0})^{\top}$ for any $x_{(2)}^0 < 0$, we can derive
	\begin{eqnarray}
		\Exp{x^T} &=& (1- L \gamma)^{T} x^0 - \eta \, \sqrt{\frac{3 \sigma^2}{10}}\begin{pmatrix} 0 \\  \fr{1}{3} \end{pmatrix} \gamma \sum_{t=0}^{T-1} (1-L\gamma)^t \notag \\
		& =& (1- L \gamma)^{T} \begin{pmatrix} 0 \\  x_{(2)}^0 \end{pmatrix} + \frac{\eta }{L} \sqrt{\frac{ \sigma^2}{30}} \begin{pmatrix} 0 \\  -1 \end{pmatrix} (1 -  (1-L\gamma)^T ) \neq \begin{pmatrix} 0 \\  0 \end{pmatrix}  \label{eq:exp_x^T}
	\end{eqnarray}
	for any $0 \leq  \gamma \leq 1/L$ and any $x_{(2)}^0 < 0$. The inequality in \eqref{eq:exp_x^T} is because the first vector has strictly negative component $x_{(2)}^0$, and the second vector has non-positive second component when $\gamma > 0$ and $\sigma^2 > 0$.  Therefore, since $\sqnorm{\nabla f(x) } = \sqnorm{L x}$, we have 
	\begin{eqnarray*}
		\Exp{\sqnorm{ \nabla f(x^T) } } &=&   \Exp{\sqnorm{L x^T } }  \\
		&=&   \sqnorm{\Exp{ L x^T } }  + \Exp{\sqnorm{ L x^T -  \Exp{ L x^T } }  }\\
		& \geq &   \sqnorm{\Exp{ L x^T } }   \\
		&\overset{(i)}{=}&  \rb{ (1-L\gamma)^{T}  \norm{L x^0} + \eta \, \sqrt{\frac{ \sigma^2}{30}} (1 -  (1- L\gamma)^T )  }^2  \\
		&\overset{(ii)}{\geq}&   (1-L\gamma)^{2T}  \sqnorm{\nabla f( x^0) } + \frac{ \eta^2 \sigma^2}{30} (1 -  (1- L\gamma)^T )^2  \\
		%& = &  (1-L\gamma)^{2T }  L^2 + \frac{ \sigma^2}{30} (1 -  (1- L\gamma)^T )^2 \\
		& \geq &   \frac{  \sqnorm{ \nabla f(x^0) } \eta^2 \sigma^2}{30  \sqnorm{ \nabla f(x^0) } + \eta^2 \sigma^2 }  
	\end{eqnarray*}
	for all $T \geq 1$, where in $(i)$ we used the form of vector $\Exp{x^T}$ in \eqref{eq:exp_x^T}, in $(ii)$ we drop a non-negative cross term, and use $\nabla f(x^0) =L x^0$. The last inequality follows by lower bounding a univariate quadratic function with respect to $z  \eqdef (1- L\gamma)^T$ for $0\leq z \leq 1$, where optimal choice is $z = \eta^2 \sigma^2 / (30  \sqnorm{ \nabla f(x^0) } + \eta^2 \sigma^2)  $. It is left to note that $\frac{x y}{x + y} \geq \frac{1}{2}\min\{x, y\}$ for all $x, y > 0.$
	%Similar argument can be made with time varying $\gamma_t$. 
	
	\textbf{Part II.} Fix $n\geq 1$ and $B \geq 1$. Let at each node $i = 1, \ldots, n$, the random vectors $\xi_i^{t}$ be sampled independently and uniformly form the set of vectors:
	$$
	z_1 = \begin{pmatrix} 2 \\  0 \end{pmatrix} \sqrt{\frac{3 \sigma^2}{10 B }}, \quad z_2 = \begin{pmatrix} 0 \\ 1 \end{pmatrix} \sqrt{\frac{3 \sigma^2}{10 B }} , \quad  z_3 = \begin{pmatrix} -2 \\  -1 \end{pmatrix} \sqrt{\frac{3 \sigma^2}{10 B }} .
	$$
	Define a random matrix $\xi^t  \eqdef (\xi_1^{t}, \ldots, \xi_n^{t})^{\top}$. Then $\Exp{\sqnorm{\nabla f(x^{t}, \xi^{t}) - \nabla f(x^t)} } = \frac{\sigma^2}{B} $. The update of the method \eqref{eq:x_update_appendix}, \eqref{eq:EF21-SGDMv0_dist_appendix} on the same function instance will take the form 
	$$x^{t+1} = x^t - \gamma  \suminn g_i^t = x^t - L \gamma  x^t - \gamma \suminn \cC\rb{ \eta\, \xi_i^{t} } . %= - \gamma \suminn \cC\rb{ \xi_i^{t+1} }.
	$$
	Notice that in this case, we still have 
	$$
	\Exp{\suminn \cC(\eta \, \xi_i^{t})} = \suminn \Exp{\cC( \eta \, \xi_i^{t})}  = \eta \, \sqrt{\frac{3 \sigma^2}{10}} (0, 1/3)^{\top} \neq (0, 0)^{\top},
	$$
	which is independent (!) of $n$. Therefore, by similar derivations, we can conclude that 
	\begin{eqnarray*}
		\Exp{\sqnorm{ \nabla f(x^T) } }   &\geq&  \frac{1}{60} \min\left\{\frac{\eta^2 \sigma^2}{B}, \sqnorm{ \nabla f(x^0) }\right\} > \varepsilon^2  ,
	\end{eqnarray*}
	where we use that $B < \frac{\eta^2 \sigma^2}{60 \varepsilon^2}$, $\varepsilon \leq L / \sqrt{60},$ and $x^0 = (0,  -1)^{\top}.$
	% \begin{eqnarray*}
	% 	\Exp{\sqnorm{ \nabla f(x^T) } }   &\geq&  \frac{ \sqnorm{ \nabla f(x^0) } \sigma^2 / B}{30  \sqnorm{ \nabla f(x^0) }  + \sigma^2 / B  } \\
	% 	&=&  \frac{ \sigma^2 }{  30 B  + \sigma^2 / \sqnorm{ \nabla f(x^0) }   }  \\
	% 	&>& \frac{ \sigma^2}{ \frac{ \sigma^2}{2 \varepsilon^2} + \sigma^2 /  \sqnorm{ \nabla f(x^0) } } \\
	% 	&\geq& \frac{ \sigma^2}{ \frac{\sigma^2}{2 \varepsilon^2} + \frac{ \sigma^2}{2 \varepsilon^2} }  = \varepsilon^2  ,
	% \end{eqnarray*}
	% where in the second inequality we use $B < \frac{\sigma^2}{60 \varepsilon^2}$, and in the last step the assumption $\varepsilon \leq L / \sqrt{2}$ .
\end{proofof}

\begin{proofof}{Proposition~\ref{prop:ef21_m_SGDv0}}
	By smoothness (Assumption~\ref{as:main}) of $f(\cdot)$ it follows from Lemma~\ref{le:descent} that for $\gamma \leq 1/L$ we have 
	\begin{eqnarray}\label{eq:descent_ef21-Mvo}
		f(x^{t+1})  \leq f(x^{t}) - \fr{\g}{2} \sqnorm{\nabla f(x^t)} + \frac{\gamma}{2} \sqnorm{g^t - \nabla f(x^t) } .
	\end{eqnarray}
	Now it remains to control the last term, which is due to the error introduced by a contractive compressor and stochastic gradients. We have %By the update rule of Algorithm~\ref{alg:EF21-Mv0-one-node}, we have  
	\begin{eqnarray*}
		\Exp{\sqnorm{g^t - \nabla f(x^t)}} &\overset{(i)}{=}& 	\Exp{\sqnorm{\cC\rb{v^t - \nabla f(x^t)}}} \overset{(ii)}{=}  \Exp{\sqnorm{\cC\rb{ \eta \rb{ \nabla f(x^t, \xi^{t}) - \nabla f(x^t)}}}} \\
		&\overset{(iii)}{\leq}&  2 \Exp{\sqnorm{\cC\rb{ \eta \rb{   \nabla f(x^t, \xi^{t}) - \nabla f(x^t) }} - \eta (\nabla f(x^t, \xi^{t}) - \nabla f(x^t))}} \\
		&& \qquad + 2\eta^2 \Exp{\sqnorm{ \nabla f(x^t, \xi^{t}) - \nabla f(x^t)}}\\
		& \leq  & 2 (2-\alpha) \eta^2 \Exp{\sqnorm{\nabla f(x^t, \xi^{t}) - \nabla f(x^t) }} \\
		& \leq  &4\eta^2 \sigma^2 , 
	\end{eqnarray*}	
	where $(i)$ and $(ii)$ use the update rule~\eqref{eq:EF21-SGDMv0}, $(iii)$ holds by Young's inequality, and the last two steps hold by Definition~\ref{def:contractive_compressor} and Assumption~\ref{as:BV}. 
	
	Subtracting $f^*$ from both sides of \eqref{eq:descent_ef21-Mvo}, taking expectation and defining $\delta_t  \eqdef \Exp{f(x^t) - f^*}$, we derive
	$$
	\Exp{\sqnorm{\nabla f(\hat x^T)}} = \frac{1}{T} \sum_{t=0}^{T-1} \Exp{\sqnorm{\nabla f(x^t)}} \leq  \fr{2 \delta_0 }{\gamma T} + 4\eta^2 \sigma^2  . 
	$$
\end{proofof}	

\clearpage
\section{EF21-SGDM (Proof of Theorems~\ref{thm:ef21-sgdm-one-node} and~\ref{thm:main-distrib})}
The statement of Theorem~\ref{thm:ef21-sgdm-one-node} follows directly from Theorem~\ref{thm:main-distrib} and Remark~\ref{rem:after_main_thm}. Let us prove Theorem~\ref{thm:main-distrib}.

\begin{proofof}{Theorem~\ref{thm:main-distrib}}
	In order to control the error between $g^t$ and $\nabla f (x^t)$, we decompose it into two terms
	$$
	\sqnorm{g^t - \nabla f(x^t)} \leq 2 \sqnorm{g^t - v^t} + 2 \sqnorm{v^t - \nabla f(x^t)} \leq 2 \suminn \sqnorm{g_i^t - v_i^t} + 2 \sqnorm{v^t - \nabla f(x^t)},
	$$
	and develop a recursion for each term above separately.
	
	\textbf{Part I (a). Controlling  the error of momentum estimator for each $v_i^t$.} Recall that by Lemma~\ref{le:key_HB_recursion}-\eqref{eq:wtP_rec}, we have for each $i = 1, \ldots, n$, and any $0 < \eta \leq 1$ and $t\geq 0$
	\begin{eqnarray}\label{eq:wtPi_rec}
		\Exp{ \sqnorm{v_i^{t+1} - \nabla f_i(x^{t+1})} }  \leq (1- \eta)   \Exp{ \sqnorm{v_i^{t} - \nabla f_i(x^{t})} }  +  \fr{3  L_i^2 }{\eta } \Exp{ \sqnorm{ x^{t+1} - x^{t}}  } + \eta^2 \sigma^2 .
	\end{eqnarray}
	Averaging inequalities \eqref{eq:wtPi_rec} over $i=1,\ldots,n$ and denoting $\wt P_t : = \suminn \Exp{ \sqnorm{v_i^{t} - \nabla f_i(x^{t})} } $, $R_t  \eqdef \Exp{\sqnorm{x^{t} - x^{t+1} } } $ we have 
	\begin{eqnarray*}
		\wt P_{t+1}  \leq (1- \eta)   \wt P_t  +  \fr{3  \wt L^2 }{\eta } R_t + \eta^2 \sigma^2 .
	\end{eqnarray*}
	
	Summing up the above inequality for $t=0, \ldots, T-1$, we derive 
	
	\begin{eqnarray}\label{eq:wt_mom_est_avg_dist}
		\frac{1}{T} \sum_{t=0}^{T-1} \wt P_t  \leq  \fr{3 \wt L^2 }{\eta^2 } \frac{1}{T} \sum_{t=0}^{T-1}  R_t +  \eta \sigma^2 + \frac{1}{\eta T} \wt P_0.
	\end{eqnarray}
	
	\textbf{Part I (b). Controlling  the error of momentum estimator for $v^t$ (on average).} Similarly by Lemma~\ref{le:key_HB_recursion}-\eqref{eq:P_rec}, we have for any $0 < \eta \leq 1$ and $t\geq 0$
	
	\begin{eqnarray*}
		\Exp{ \sqnorm{v^{t+1} - \nabla f(x^{t+1})} }  \leq (1- \eta)   \Exp{ \sqnorm{v^{t} - \nabla f(x^{t})} }  +  \fr{3 L^2 }{\eta } \Exp{ \sqnorm{ x^{t+1} - x^{t}}  } + \frac{\eta^2 \sigma^2 }{n} ,
	\end{eqnarray*}
	where $v^t  \eqdef \suminn v_i^t$ is an auxiliary sequence. 
	
	Summing up the above inequality for $t=0, \ldots, T-1$, and denoting $P_t  \eqdef   \Exp{ \sqnorm{ v^t - \nabla f(x^{t})} }$, we derive 
	
	\begin{eqnarray}\label{eq:mom_est_avg_dist}
		\frac{1}{T} \sum_{t=0}^{T-1} P_t  \leq  \fr{3 L^2 }{\eta^2 } \frac{1}{T} \sum_{t=0}^{T-1}  R_t +  \frac{\eta \sigma^2 }{n} + \frac{1}{\eta T} P_0.
	\end{eqnarray}

	\textbf{Part II. Controlling  the error of contractive compressor and momentum estimator.} By Lemma~\ref{le:EF21} we have for each $i = 1, \ldots, n$, and any $0 < \eta \leq 1$ and $t\geq 0$
	
	\begin{eqnarray}\label{eq:wtVi_rec}
		\Exp{ \sqnorm{g_i^{t+1} - v_i^{t+1}} }  &\leq& \rb{ 1-\fr{\al}{2} } \Exp{ \sqnorm{g_i^{t} - v_i^{t}} }  + \frac{4 \eta^2 }{\alpha}  \Exp{\sqnorm{v_i^{t} - \nabla f_i (x^{t})} }  \notag \\
		&& \qquad +  \frac{4 L_i^2 \eta^2 }{\alpha}   \Exp{\sqnorm{x^{t+1} - x^{t}} } +  \eta^2 \sigma^2 .
	\end{eqnarray}
	
	Averaging inequalities \eqref{eq:wtVi_rec} over $i=1,\ldots,n$,  denoting $ \wt V_t  \eqdef \suminn  \Exp{ \sqnorm{g_i^t - v_i^t} }$, and summing up the resulting inequality for $t=0, \ldots, T-1$, we obtain 
	\begin{eqnarray}\label{eq:ef21_mom_est_avg_dist}
		\frac{1}{T} \sum_{t=0}^{T-1} \wt V_t  &\leq&   \fr{8 \eta^2 }{\al^2} \frac{1}{T} \sum_{t=0}^{T-1}  \wt P_t +  \fr{8  \wt L^2 \eta^2 }{\al^2}  \frac{1}{T} \sum_{t=0}^{T-1} R_t +   \fr{2\eta^2 \sigma^2 }{\al} + \frac{2}{\alpha T} \wt V_0 .
	\end{eqnarray}
	
	\textbf{Part III. Combining steps I and II with descent lemma.}	
	By smoothness (Assumption~\ref{as:main}) of $f(\cdot)$ it follows from Lemma~\ref{le:descent} that for any $\gamma \leq 1/(2 L) $ we have 
	\begin{eqnarray}\label{eq:descent_ef21-M_dist}
		f(x^{t+1}) &\leq& f(x^{t}) - \fr{\g}{2} \sqnorm{\nabla f(x^t)}  - \frac{1}{4\gamma} \sqnorm{x^{t+1} - x^t } + \frac{\gamma}{2} \sqnorm{g^t - \nabla f(x^t) } \\
		&\leq& f(x^{t}) - \fr{\g}{2} \sqnorm{\nabla f(x^t)}  -  \frac{1}{4\gamma} \sqnorm{x^{t+1} - x^t } + \gamma \suminn\sqnorm{g_i^t - v_i^t } + \gamma \sqnorm{v^t - \nabla f(x^t) }  . \notag
	\end{eqnarray}
	Subtracting $f^*$ from both sides of \eqref{eq:descent_ef21-M_dist}, taking expectation and defining $\delta_t  \eqdef \Exp{f(x^t) - f^*}$, we derive
	\begin{eqnarray}
		\Exp{\sqnorm{\nabla f(\hat x^T)}} &=& \frac{1}{T} \sum_{t=0}^{T-1} \Exp{\sqnorm{\nabla f(x^t)}} \notag \\
		&\leq&  \fr{2 \delta_0 }{\gamma T} + 2 \frac{1}{T}\sum_{t=0}^{T-1} \wt V_t +  2 \frac{1}{T} \sum_{t=0}^{T-1} P_t  - \frac{1 }{2\gamma^2} \frac{1}{T} \sum_{t=0}^{T-1} R_t \notag  \\
		&\overset{(i)}{\leq}&  \fr{2 \delta_0 }{\gamma T} +  \fr{16 \eta^2 }{\al^2}  \frac{1}{T} \sum_{t=0}^{T-1}  \wt P_t + 2 \frac{1}{T} \sum_{t=0}^{T-1}  P_t   +   \fr{4\eta^2 \sigma^2 }{\al} \notag  \\
		&& \qquad  - \frac{\frac{1}{2} -  \fr{16 \gamma^2 \wL^2 \eta^2 }{\al^2}  }{\gamma^2} \frac{1}{T} \sum_{t=0}^{T-1} R_t \notag  \\
		&\overset{(ii)}{\leq}&  \fr{2 \delta_0 }{\gamma T} +   \fr{16 \eta^3 \sigma^2 }{\al^2}  +   \fr{4\eta^2 \sigma^2 }{\al}   + \frac{2 \eta \sigma^2}{n}  + \frac{4}{\alpha T} \wt V_0 \notag \\
		&& \qquad   - \frac{ \frac{1}{2} - \frac{16 \gamma^2 \wL^2 \eta^2}{\alpha^2} - \frac{6 \gamma^2 L^2}{\eta^2} - \frac{48 \gamma^2 \wL^2 }{\alpha^2} }{\gamma^2} \frac{1}{T} \sum_{t=0}^{T-1} R_t + \frac{2}{\eta T}P_0 + \frac{16 \eta}{\alpha^2 T} \widetilde{P}_0 \notag \\
		&\overset{(iii)}{\leq}&  \fr{2 \delta_0 }{\gamma T} +   \fr{16 \eta^3 \sigma^2 }{\al^2}  +   \fr{4\eta^2 \sigma^2 }{\al}   + \frac{2 \eta \sigma^2}{n}    - \frac{ \frac{1}{2} - \frac{6 \gamma^2 L^2}{\eta^2} - \frac{64 \gamma^2 \wL^2 }{\alpha^2} }{\gamma^2} \frac{1}{T} \sum_{t=0}^{T-1} R_t \notag \\
		&& \qquad + \frac{2}{\eta T}P_0 + \frac{16 \eta}{\alpha^2 T} \widetilde{P}_0 + \frac{4}{\alpha T} \wt V_0\notag  \\
		& \leq &  \fr{2 \delta_0 }{\gamma T} +   \fr{16 \eta^3 \sigma^2 }{\al^2}  +   \fr{4\eta^2 \sigma^2 }{\al}   + \frac{2 \eta \sigma^2}{n} + \frac{2}{\eta T}P_0 + \frac{16 \eta}{\alpha^2 T} \widetilde{P}_0 + \frac{4}{\alpha T} \wt V_0 .\notag
		% & \leq &  \fr{40 \wL \delta_0 }{\alpha T}  + \fr{14 L \delta_0 }{\eta_0 T}  + \fr{14 L \delta_0 }{\eta T}  +   \fr{16 \eta^3 \sigma^2 }{\al^2}  +   \fr{4\eta^2 \sigma^2 }{\al}   + \frac{2 \eta \sigma^2}{n} + \frac{2}{\eta T}P_0 + \frac{16 \eta}{\alpha^2 T} \widetilde{P}_0 , \label{eq:conv_ef21_sgdm_eta}
	\end{eqnarray}
	where $(i)$ holds due to \eqref{eq:ef21_mom_est_avg_dist}, $(ii)$ utilizes \eqref{eq:mom_est_avg_dist}, and $(iii)$ follows by $\eta \leq 1$, and the last step holds due to the assumption on the step-size. We proved \eqref{eq:main-distrib:general}.
	% The choice of $\eta$ completes the proof. 
	
	We now find the particular values of parameters. Since $g_i = v_i$ for all $i\in [n]$, we have $\wt V_0 = 0$. Using $v_i^0 = \frac{1}{B_{\textnormal{init}}} \sum_{j=1}^{B_{\textnormal{init}}} \nabla f_{i}(x^0, \xi_{i, j}^{0})$ for all $i = 1, \ldots,  n$, we have
	\begin{align*}
		P_0 = \Exp{\norm{v^0 - \nabla f(x^0)}^2} \leq \frac{\sigma^2}{n B_{\textnormal{init}}} \textnormal{ and } \widetilde{P}_0 = \frac{1}{n} \sum_{i=1}^n \Exp{\norm{v^0_i - \nabla f_i(x^0)}^2} \leq \frac{\sigma^2}{B_{\textnormal{init}}}.
	\end{align*}
	We can substitute the choice of $\gamma$ and obtain
	\begin{eqnarray*}
		\Exp{  \sqnorm{\nabla f(\hat x^T) }  } &= & \cO\rb{ \fr{\wL \delta_0 }{\alpha T} + \fr{L \delta_0 }{\eta T}  +   \fr{\eta^3 \sigma^2 }{\al^2}  +   \fr{\eta^2 \sigma^2 }{\al}   + \frac{\eta \sigma^2}{n} + \frac{\sigma^2}{\eta n B_{\textnormal{init}} T} + \frac{\eta \sigma^2}{\alpha^2 B_{\textnormal{init}} T}}.
	\end{eqnarray*}
	Since $B_{\textnormal{init}} \geq \frac{\sigma^2}{L \delta_0 n},$ we have
	\begin{eqnarray*}
		\Exp{  \sqnorm{\nabla f(\hat x^T) }  } &= & \cO\rb{ \fr{\wL \delta_0 }{\alpha T} + \fr{L \delta_0 }{\eta T}  +   \fr{\eta^3 \sigma^2 }{\al^2}  +   \fr{\eta^2 \sigma^2 }{\al}  + \frac{\eta \sigma^2}{n} + \frac{\eta \sigma^2}{\alpha^2 B_{\textnormal{init}} T}}.
	\end{eqnarray*}
	
	Notice that the choice of the momentum parameter such that $\eta \leq \rb{ \fr{ L \delta_0 \al^2 }{\sigma^2 T}}^{\nfr{1}{4} }$, $\eta \leq \rb{ \fr{ L \delta_0 \al }{\sigma^2 T}}^{\nfr{1}{3} }$, $\eta \leq \rb{ \fr{ L \delta_0 n  }{\sigma^2 T}}^{\nfr{1}{2} }$ and $\eta \leq \frac{\alpha \sqrt{L \delta_0 B_{\textnormal{init}}}}{\sigma}$ ensures that $\fr{ \eta^3 \sigma^2 }{\al^2}   \leq \fr{L \delta_0 }{\eta T}  $, $\fr{\eta^2 \sigma^2 }{\al} \leq \fr{L \delta_0 }{\eta T}$, $\frac{ \eta \sigma^2}{n} \leq \fr{L \delta_0 }{\eta T},$ and $\frac{\eta \sigma^2}{\alpha^2 B_{\textnormal{init}} T} \leq \fr{L \delta_0 }{\eta T}.$ Therefore, we have
	%  we can guarantee that the choice $\eta =  \min\cb{ \eta_0,  \rb{ \fr{ L \delta_0 \al^2 }{\sigma^2 T}}^{\nfr{1}{4} }, \rb{ \fr{ L \delta_0 \al }{\sigma^2 T}}^{\nfr{1}{3} }, \rb{ \fr{ L \delta_0 n  }{\sigma^2 T}}^{\nfr{1}{2} } }$ ensures that 
	\begin{eqnarray*}
		\Exp{  \sqnorm{\nabla f(\hat x^T) }  } &\leq & \cO\rb{\fr{\wL \delta_0}{\alpha  T }  +  \rb{\fr{   L \delta_0 \sigma^{2/3} }{\al^{2/3}  T } }^{\nfr{3}{4}} +  \rb{\fr{   L \delta_0 \sigma }{\sqrt{\al}  T } }^{\nfr{2}{3}}  + \rb{\fr{  L \delta_0 \sigma^2}{ n T } }^{\nfr{1}{2}} + \frac{\sigma \sqrt{L \delta_0}}{\alpha \sqrt{B_{\textnormal{init}}} T}}.
	\end{eqnarray*}
	Using $B_{\textnormal{init}} \geq \min\left\{\frac{\sigma^2 L}{\widetilde{L}^2 \delta_0}, \frac{\sigma}{\alpha \sqrt{L \delta_0 T}}, \frac{\sigma^{2/3}}{\alpha^{4/3} T^{2/3} (L \delta_0)^{1/3}}, \frac{n}{\alpha^2 T}\right\},$ we obtain
	\begin{eqnarray*}
		\Exp{  \sqnorm{\nabla f(\hat x^T) }  } &\leq & \cO\rb{\fr{\wL \delta_0}{\alpha  T }  +  \rb{\fr{   L \delta_0 \sigma^{2/3} }{\al^{2/3}  T } }^{\nfr{3}{4}} +  \rb{\fr{   L \delta_0 \sigma }{\sqrt{\al}  T } }^{\nfr{2}{3}}  + \rb{\fr{  L \delta_0 \sigma^2}{ n T } }^{\nfr{1}{2}}}.
	\end{eqnarray*}
	It remains to notice that $\left\lceil\max\left\{\min\left\{\frac{\sigma^2 L}{\widetilde{L}^2 \delta_0}, \frac{\sigma}{\alpha \sqrt{L \delta_0 T}}, \frac{\sigma^{2/3}}{\alpha^{4/3} T^{2/3} (L \delta_0)^{1/3}}, \frac{n}{\alpha^2 T}\right\},  \frac{\sigma^2}{L \delta_0 n}\right\}\right\rceil \leq \left\lceil\frac{\sigma^2}{L \delta_0}\right\rceil $.
	
	%Case II: Notice that since $\eta \leq \eta_0 \leq \alpha$ and $n \geq 1$, we have $\fr{16 \eta^3 \sigma^2 }{\al^2} \leq 16 \eta \sigma^2$ ,  $  \fr{4\eta^2 \sigma^2 }{\al}  \leq 4\eta \sigma^2$, and $\frac{2 \eta \sigma^2}{n} \leq 2\eta \sigma^2$.  Therefore, it follows by \eqref{eq:conv_ef21_sgdm_eta} that  
	%\begin{eqnarray*}
	%\Exp{  \sqnorm{\nabla f(\hat x^T) }  } &\leq & \cO\rb{ \fr{ \wL \delta_0 }{\alpha T}  + \fr{ L \delta_0 }{\eta_0 T}  + \fr{ L \delta_0 }{\eta T}     +  \eta \sigma^2  } \\
	% &\leq & \cO\rb{ \fr{ \wL \delta_0 }{\alpha T}  + \fr{ L \delta_0 }{\eta_0 T}  + \rb{\fr{  L \delta_0 \sigma^2}{  T } }^{\nfr{1}{2}}} .
	%\end{eqnarray*}
	%where the last inequality follows by the choice $\eta = \min\cb{\eta_0, \rb{ \fr{ L \delta_0   }{\sigma^2 T}}^{\nfr{1}{2} } } $.
	%\gamma \leq 1 / (8 \wL)
	% \gamma \leq \alpha / (20 \wL)
	%\gamma \leq \eta / 7 L 
\end{proofof}

\subsection{Controlling  the error of momentum estimator }
\begin{lemma}\label{le:key_HB_recursion}
	Let Assumption~\ref{as:main} be satisfied, and suppose $0 < \eta \leq 1$. For every $i = 1,\ldots, n$, let the sequence $\cb{v_i^{t}}_{t\geq0}$ be updated via 
	$$v_i^{t} =  v_i^{t-1} + \eta \rb{ \nabla f_i (x^{t}, \xi_i^{t} ) -  v_i^{t-1} } , $$
	Define the sequence $v^t  \eqdef \suminn v_i^t$. Then for every $i = 1,\ldots, n$ and $t\geq0$ it holds
	\begin{eqnarray}\label{eq:wtP_rec}
		%- R_t & \leq  &  - \fr{a}{4 L^2 } V_{t+1} + \fr{a (1-a) }{4 L^2 } V_{t}  + \fr{a^3  \sigma^2 }{4 L^2 }   \notag .
		%\wt P_{t+1} \leq (1- \eta)  \wt P_t + \fr{4\wL^2 }{\eta } R_t + \eta^2 \sigma^2 .
		\Exp{ \sqnorm{v_i^{t} - \nabla f_i(x^{t})} }  \leq (1- \eta)   \Exp{ \sqnorm{v_i^{t-1} - \nabla f_i(x^{t-1})} }  +  \fr{3 L_i^2 }{\eta } \Exp{ \sqnorm{ x^t - x^{t-1}}  } + \eta^2 \sigma^2 ,
	\end{eqnarray}
	\begin{eqnarray}\label{eq:P_rec}
		%- R_t & \leq  &  - \fr{a}{4 L^2 } V_{t+1} + \fr{a (1-a) }{4 L^2 } V_{t}  + \fr{a^3  \sigma^2 }{4 L^2 }   \notag .
		%P_{t+1} \leq (1- \eta)  P_t + \fr{4 L^2 }{\eta } R_t + \fr{\eta^2 \sigma^2}{n} .
		\Exp{ \sqnorm{v^{t} - \nabla f(x^{t})} }  \leq (1- \eta)   \Exp{ \sqnorm{v^{t-1} - \nabla f(x^{t-1})} }  +  \fr{3 L^2 }{\eta } \Exp{ \sqnorm{ x^t - x^{t-1}}  } + \fr{\eta^2 \sigma^2}{n} .
	\end{eqnarray}
\end{lemma}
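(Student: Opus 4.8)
The plan is to track the estimation error $e_i^t := v_i^t - \nabla f_i(x^t)$ of the momentum sequence and derive a one-step contraction for $\Exp{\sqnorm{e_i^t}}$. First I would rewrite the update in the equivalent form $v_i^t = (1-\eta)v_i^{t-1} + \eta\nabla f_i(x^t,\xi_i^t)$ and perform the algebraic decomposition
$$e_i^t = (1-\eta)\big(v_i^{t-1} - \nabla f_i(x^{t-1})\big) + (1-\eta)\big(\nabla f_i(x^{t-1}) - \nabla f_i(x^t)\big) + \eta\big(\nabla f_i(x^t,\xi_i^t) - \nabla f_i(x^t)\big),$$
splitting the error into a \emph{contraction} term (the old error scaled by $1-\eta$), a \emph{drift} term (coming from moving the reference point from $x^{t-1}$ to $x^t$), and a \emph{noise} term.

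The second key step is conditioning. I would condition on the natural filtration $\mathcal{F}_{t-1}$, noting that both $x^t = x^{t-1}-\gamma g^{t-1}$ and $v_i^{t-1}$ are $\mathcal{F}_{t-1}$-measurable, whereas $\xi_i^t$ is a fresh sample, so the noise term is conditionally mean-zero. Consequently, expanding the square, the cross terms between the noise and the (contraction + drift) part vanish in expectation, leaving
$$\Exp{\sqnorm{e_i^t}} = \Exp{\sqnorm{(1-\eta)\big(v_i^{t-1}-\nabla f_i(x^{t-1})\big) + (1-\eta)\big(\nabla f_i(x^{t-1})-\nabla f_i(x^t)\big)}} + \eta^2\,\Exp{\sqnorm{\nabla f_i(x^t,\xi_i^t)-\nabla f_i(x^t)}},$$
and the last term is bounded by $\eta^2\sigma^2$ via Assumption~\ref{as:BV}.

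Third, I would apply Young's inequality $\sqnorm{u+w}\le(1+s)\sqnorm{u}+(1+s^{-1})\sqnorm{w}$ to the remaining square with the tuned parameter $s=\eta/(1-\eta)$ (valid for $\eta\in(0,1)$; the case $\eta=1$ is trivial since the contraction and drift terms then vanish). This choice makes the coefficient of $\sqnorm{v_i^{t-1}-\nabla f_i(x^{t-1})}$ equal to $(1-\eta)^2(1+s)=1-\eta$ exactly, while the drift coefficient is $(1-\eta)^2(1+s^{-1})=(1-\eta)^2/\eta\le 1/\eta\le 3/\eta$. Bounding the drift by $L_i$-smoothness, $\sqnorm{\nabla f_i(x^{t-1})-\nabla f_i(x^t)}\le L_i^2\sqnorm{x^t-x^{t-1}}$, then yields \eqref{eq:wtP_rec}. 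For the averaged recursion \eqref{eq:P_rec} I would repeat the argument with $e^t=v^t-\nabla f(x^t)=\tfrac1n\sum_i e_i^t$; the only differences are that the noise term becomes $\tfrac{\eta}{n}\sum_i(\nabla f_i(x^t,\xi_i^t)-\nabla f_i(x^t))$, whose conditional variance is $\tfrac{\eta^2}{n^2}\sum_i\Exp{\sqnorm{\nabla f_i(x^t,\xi_i^t)-\nabla f_i(x^t)}}\le\tfrac{\eta^2\sigma^2}{n}$ by the independence of the $\xi_i^t$ across nodes, and the drift is controlled by $L$-smoothness of $f$.

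The main obstacle is the probabilistic bookkeeping rather than the inequalities: one must argue carefully that $x^t$ and $v_i^{t-1}$ are measurable with respect to the conditioning $\sigma$-algebra so that the noise is genuinely centered and the cross terms drop, and—crucially for the $\tfrac1n$ variance reduction in \eqref{eq:P_rec}—one must invoke the conditional independence of $\{\xi_i^t\}_{i=1}^n$ to kill the cross terms in the averaged noise. Once these are in place, the contraction factor follows from the exact Young tuning and the remaining terms from smoothness and bounded variance.
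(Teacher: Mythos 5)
Your proposal is correct and follows essentially the same route as the paper's own proof: the same contraction/drift/noise decomposition, conditioning on the past so the stochastic cross terms vanish, Young's inequality on the remaining square, $L_i$-smoothness (resp.\ $L$-smoothness) for the drift, and conditional independence of the $\xi_i^t$ across nodes to obtain the $\nicefrac{1}{n}$ variance reduction in \eqref{eq:P_rec}. The only cosmetic difference is your Young parameter $s=\nicefrac{\eta}{(1-\eta)}$ (exact tuning, with $\eta=1$ handled separately) versus the paper's $s=\nicefrac{\eta}{2}$ (which covers $\eta\in(0,1]$ uniformly); both yield the stated constants $(1-\eta)$ and $\nicefrac{3}{\eta}$.
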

\begin{proof}
	By the update rule of $v_i^t$, we have
	\begin{eqnarray}
		\Exp{ \sqnorm{v_i^{t} - \nabla f_i(x^{t})} } 
		& = & \Exp{ \sqnorm{ v_i^{t-1} - \nabla f_i(x^{t}) + \eta ( \nabla f_{i}(x^{t}, \xi_i^{t})  - v_i^{t-1}  )}  } \notag \\
		%& = & \Exp{ \Expu{\xi_i^{t}}{ \sqnorm{ v_i^{t-1} - \nabla f_i(x^{t}) + \eta ( \nabla f_{i}(x^{t}, \xi_i^{t})  - v_i^{t-1}  )}  } } \notag \\
		& = & \Exp{ \Expu{\xi_i^{t}}{ \sqnorm{ (1-\eta) ( v_i^{t-1} - \nabla f_i(x^{t}) ) + \eta ( \nabla f_{i}(x^{t}, \xi_i^{t})  - \nabla f_i(x^{t})  )}  } } \notag \\
		& = & (1-\eta)^2 \Exp{ \sqnorm{  v_i^{t-1} - \nabla f_i(x^{t}) } } +  \eta^2 \Exp{\sqnorm{\nabla f_{i}(x^{t}, \xi_i^{t}) - \nabla f_i(x^{t}) }} \notag \\
		& \leq  & (1 - \eta)^2 \rb{ 1 + \fr{\eta}{2}} \Exp{ \sqnorm{  v_i^{t-1} - \nabla f_i(x^{t-1}) } }  \notag \\
		&& \qquad + \rb{1+\fr{2}{\eta} } \Exp{ \sqnorm{  \nabla f_i(x^{t-1}) - \nabla f_i(x^{t}) }  } + \eta^2 \sigma^2  \notag \\
		& \leq  & (1 - \eta) \Exp{ \sqnorm{  v_i^{t-1} - \nabla f_i(x^{t-1}) } }  + \fr{3 L_i^2}{\eta} \Exp{ \sqnorm{  x^{t} - x^{t+1}  } } + \eta^2 \sigma^2  \notag ,
	\end{eqnarray}
where the first inequality holds by Young's inequality, and the last step uses smoothness of $f_i(\cdot)$ (Assumption~\ref{as:main}), which concludes the proof of \eqref{eq:wtP_rec}. %Averaging over $i = 1, \ldots, n$, we derive \eqref{eq:wtP_rec}. 

For each $t = 0, \ldots, T-1$, define a random vector $\xi^{t}  \eqdef (\xi_1^{t}, \ldots, \xi_n^{t})$ and denote by $\nabla f (x^{t}, \xi^{t})  \eqdef \suminn \nabla f_{i} (x^{t}, \xi_i^{t})$. Note that the entries of the random vector $\xi^{t}$ are independent and $\Expu{\xi^{t}}{ \nabla f (x^{t}, \xi^{t}) } = \nabla f (x^{t})$, then we have 
$$v^{t} =  v^{t-1} + \eta \rb{ \nabla f (x^{t}, \xi^{t} ) -  v^{t-1} } , $$
where $v^t  \eqdef \suminn v_i^t$ is an auxiliary sequence. Therefore, we can similarly derive
	
	\begin{eqnarray}
		\Exp{ \sqnorm{v^{t} - \nabla f(x^{t})} } 
		& = & \Exp{ \sqnorm{ v^{t-1}   - \nabla f (x^{t})  + \eta \rb{ \nabla f (x^{t}, \xi^{t}) - v^{t-1}} }  } \notag \\
%		& = & \Exp{ \Expu{\xi^{t}}{ \sqnorm{ v^{t-1}   - \nabla f (x^{t})  + \eta \rb{ \nabla f (x^{t}, \xi^{t}) - v^{t-1}} }  } } \notag \\
		& = & \Exp{ \Expu{\xi^{t}}{ \sqnorm{ (1-\eta) (v^{t-1}   - \nabla f (x^{t}) ) + \eta \rb{ \nabla f (x^{t}, \xi^{t}) -  \nabla f (x^{t}) } }  } } \notag \\
		& = & (1-\eta)^2 \Exp{ \sqnorm{  v^{t-1} - \nabla f (x^{t}) } } +  \eta^2 \Exp{\sqnorm{\nabla f (x^{t}, \xi^{t}) - \nabla f (x^{t}) }} \notag \\
		& \leq  & (1 - \eta)^2 \rb{ 1 + \fr{\eta}{2}} \Exp{ \sqnorm{  v^{t-1} - \nabla f(x^{t-1}) } }  \notag \\
		&& \qquad + \rb{1+\fr{2}{\eta} } \Exp{ \sqnorm{  \nabla f (x^{t-1}) - \nabla f (x^{t}) }  } + \frac{\eta^2 \sigma^2}{n}  \notag \\
		& \leq  & (1 - \eta) \Exp{ \sqnorm{  v^{t-1} - \nabla f (x^{t-1}) } }  + \fr{3 L^2}{\eta} \Exp{ \sqnorm{  x^{t} - x^{t-1}  } } + \frac{\eta^2 \sigma^2}{n}  \notag ,
	\end{eqnarray}
	where the last step uses smoothness of $f (\cdot)$ (Assumption~\ref{as:main}), which concludes the proof of \eqref{eq:P_rec}.
\end{proof}

\subsection{Controlling  the error  of contractive compression and momentum estimator}

\begin{lemma}\label{le:EF21} 
	Let Assumption~\ref{as:main} be satisfied, and suppose $\cC$ is a contractive compressor with $\al \leq \fr{1}{2}$. For every $i = 1,\ldots, n$, let the sequences $\cb{v_i^{t}}_{t\geq0}$ and $\cb{g_i^{t}}_{t\geq0}$ be updated via 
	$$v_i^{t} =  v_i^{t-1} + \eta \rb{ \nabla f_i (x^{t}, \xi_i^{t} ) -  v_i^{t-1} } , $$
	$$g_i^{t} =  g_i^{t-1} +  \cC\rb{ v_i^t - g_i^{t-1}  }, $$
	Then for every $i = 1,\ldots, n$ and $t\geq0$ it holds
	\begin{eqnarray}\label{eq:rec_1_avg} 
		%\wt V_{t+1}  &\leq& \rb{ 1-\fr{\al}{2} } \wt  V_t  + \fr{2\eta^2 }{\al} \wt P_t +  \fr{4\eta^2 \wL^2 }{\al}  R_t +   \eta^2 \sigma^2  .
		\Exp{ \sqnorm{g_i^t - v_i^{t}} }  &\leq& \rb{ 1-\fr{\al}{2} } \Exp{ \sqnorm{g_i^{t-1} - v_i^{t-1}} }  + \frac{4 \eta^2 }{\alpha}  \Exp{\sqnorm{v_i^{t-1} - \nabla f_i(x^{t-1})} }  \notag \\
		&& \qquad +  \frac{4 L_i^2 \eta^2 }{\alpha}   \Exp{\sqnorm{x^{t} - x^{t-1}} } +  \eta^2 \sigma^2 .
	\end{eqnarray}
\end{lemma}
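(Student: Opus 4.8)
The plan is to turn the compressor update into the contractive identity $g_i^t - v_i^t = \cC(v_i^t - g_i^{t-1}) - (v_i^t - g_i^{t-1})$ and then isolate the fresh gradient noise before estimating the compression error. First I would condition on all randomness up to and including $v_i^t$, so that the argument $v_i^t - g_i^{t-1}$ is frozen and only the compressor's internal randomness remains. Definition~\ref{def:contractive_compressor} then yields $\Exp{\sqnorm{g_i^t - v_i^t}\mid \cdot} \le (1-\al)\sqnorm{v_i^t - g_i^{t-1}}$, and taking total expectation gives $\Exp{\sqnorm{g_i^t - v_i^t}} \le (1-\al)\Exp{\sqnorm{v_i^t - g_i^{t-1}}}$.

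The crucial second step is to peel off the stochastic noise. I would introduce the ``denoised'' iterate $\bar v_i^t := (1-\eta) v_i^{t-1} + \eta \nfi{x^t}$ and the increment $\zeta_i^t := \eta(\nfi{x^t, \xi_i^t} - \nfi{x^t})$, so that $v_i^t - g_i^{t-1} = (\bar v_i^t - g_i^{t-1}) + \zeta_i^t$. Here $\bar v_i^t - g_i^{t-1}$ is measurable with respect to the past (it contains no $\xi_i^t$), while $\zeta_i^t$ is conditionally mean-zero with $\Exp{\sqnorm{\zeta_i^t}} \le \eta^2 \sigma^2$ by Assumption~\ref{as:BV}. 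The cross term therefore vanishes, giving $\Exp{\sqnorm{v_i^t - g_i^{t-1}}} = \Exp{\sqnorm{\bar v_i^t - g_i^{t-1}}} + \Exp{\sqnorm{\zeta_i^t}} \le \Exp{\sqnorm{\bar v_i^t - g_i^{t-1}}} + \eta^2 \sigma^2$. Combined with $(1-\al)\le 1$, this produces the clean additive term $\eta^2\sigma^2$ rather than $\eta^2\sigma^2/\al$, which would result from the naive approach of absorbing the variance into the Young's step below.

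Next I would split $\bar v_i^t - g_i^{t-1} = (v_i^{t-1} - g_i^{t-1}) + (\bar v_i^t - v_i^{t-1})$ and apply Young's inequality with parameter $\al/2$, using that $(1-\al)(1+\al/2) \le 1 - \al/2$ and $(1-\al)(1+2/\al) \le 2/\al$ (the hypothesis $\al \le \tfrac12$ is only needed to keep these constants clean). This gives $\Exp{\sqnorm{g_i^t - v_i^t}} \le (1-\tfrac{\al}{2})\Exp{\sqnorm{g_i^{t-1} - v_i^{t-1}}} + \tfrac{2}{\al}\Exp{\sqnorm{\bar v_i^t - v_i^{t-1}}} + \eta^2\sigma^2$. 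Finally, since $\bar v_i^t - v_i^{t-1} = \eta(\nfi{x^t} - v_i^{t-1})$, I would bound $\sqnorm{\nfi{x^t} - v_i^{t-1}} \le 2\sqnorm{\nfi{x^t} - \nfi{x^{t-1}}} + 2\sqnorm{\nfi{x^{t-1}} - v_i^{t-1}} \le 2 L_i^2 \sqnorm{x^t - x^{t-1}} + 2\sqnorm{v_i^{t-1} - \nfi{x^{t-1}}}$ using $L_i$-smoothness (Assumption~\ref{as:main}). Multiplying by $\eta^2$ and then $2/\al$ turns the increment term into exactly $\tfrac{4 L_i^2 \eta^2}{\al}\Exp{\sqnorm{x^t - x^{t-1}}} + \tfrac{4\eta^2}{\al}\Exp{\sqnorm{v_i^{t-1} - \nfi{x^{t-1}}}}$, which assembles into \eqref{eq:rec_1_avg}.

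The main obstacle is not any single estimate but the bookkeeping of the two independent randomness sources (the stochastic gradient sample $\xi_i^t$ and the compressor) and, specifically, extracting the mean-zero noise $\zeta_i^t$ before invoking the contraction. If one instead folds the variance into the same Young's split that handles the compression error, the $\sigma^2$ term gets amplified by $1/\al$, which would propagate through the Lyapunov analysis and ultimately worsen the $\al$-dependence of the final sample complexity. Everything else is routine smoothness and Young's inequality manipulation.
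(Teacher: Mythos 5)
Your proposal is correct and takes essentially the same route as the paper's proof: apply the contraction property first, then perform the exact bias--variance split that isolates the conditionally mean-zero noise $\eta\rb{\nabla f_i(x^t,\xi_i^t)-\nabla f_i(x^t)}$ (so the variance enters as $\eta^2\sigma^2$ rather than $\eta^2\sigma^2/\al$), then Young's inequality with parameter $\al/2$ followed by the factor-$2$ split and $L_i$-smoothness, producing identical constants $(1-\nicefrac{\al}{2})$, $\nicefrac{4\eta^2}{\al}$, and $\nicefrac{4L_i^2\eta^2}{\al}$. Your ``denoised iterate'' $\bar v_i^t$ is simply explicit notation for the quantity $v_i^{t-1}-g_i^{t-1}+\eta\rb{\nabla f_i(x^t)-v_i^{t-1}}$ that the paper manipulates inline, and your observation that $\al\leq\nicefrac{1}{2}$ is not actually needed for the constant bounds is also consistent with the paper's choice $\rho=\nicefrac{\al}{2}$.
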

\begin{proof}
	By the update rules of $g_i^t$ and $v_i^t$, we derive
	\begin{eqnarray*}
		\Exp{\sqnorm{g_i^{t} - v_i^{t}} } & =& \Exp{ \sqnorm{ g_i^{t-1}  - v_i^{t}  +  \cC(  v_i^{t} - g_i^{t-1} )  } }  \\
		& =&  \Exp{ \Expu{\cC}{\sqnorm{ \cC(  v_i^{t} - g_i^{t-1})   - ( v_i^{t} - g_i^{t-1})    } } }  \\
		&\overset{(i)}{ \leq} & (1-\al) \Exp{ \sqnorm{  v_i^{t} - g_i^{t-1}  } }  \\
		&\overset{(ii)}{ = } & (1-\al) \Exp{ \sqnorm{  v_i^{t-1} - g_i^{t-1} + \eta ( \nabla f_i(x^{t}, \xi_i^{t}) - v_i^{t-1} )  } }  \\
		&{ = } & (1-\al) \Exp{ \Expu{\xi_i^{t}}{ \sqnorm{  v_i^{t-1} - g_i^{t-1} + \eta ( \nabla f_i(x^{t}, \xi_i^{t}) - v_i^{t-1} )  } }  } \\
		& = & (1-\al) \Exp{ \sqnorm{ v_i^{t-1} - g_i^{t-1} + \eta ( \nabla f_i(x^{t}) - v_i^{t-1} ) }  } \\
		&& \qquad + (1 - \alpha) \eta^2 \Exp{\sqnorm{\nabla f_i(x^{t}, \xi_i^{t}) - \nabla f_i(x^{t})} } \\
		&\overset{(iii)}{ \leq} & \left(1 - \alpha\right) (1 + \rho ) \Exp{ \sqnorm{v_i^{t-1} -  g_i^{t-1}} } + \left(1 - \alpha\right) (1 + \rho^{-1} ) \eta^2 \Exp{\sqnorm{v_i^{t-1} - \nabla f_i(x^{t})} } \notag \\
		&& \qquad  + (1 - \alpha) \eta^2 \sigma^2\\
		&\overset{(iv)}{ = } & (1-\theta)\Exp{ \sqnorm{ g_i^{t-1} - v_i^{t-1} } } + \beta \eta^2 \Exp{\sqnorm{v_i^{t-1} - \nabla f_i(x^{t})} }  + (1 - \alpha) \eta^2 \sigma^2\\
		&\overset{(v)}{ \leq} & (1-\theta)\Exp{ \sqnorm{ g_i^{t-1} - v_i^{t-1} } } + 2 \beta \eta^2 \Exp{\sqnorm{v_i^{t-1} - \nabla f_i(x^{t-1})} }  \notag \\
		&& \qquad + 2 \beta \eta^2 \Exp{\sqnorm{\nabla f_i(x^{t}) - \nabla f_i(x^{t-1})} } +  \eta^2 \sigma^2\\
		& \leq  & (1-\theta)\Exp{ \sqnorm{ g_i^{t-1} - v_i^{t-1} } } + 2 \beta \eta^2 \Exp{\sqnorm{v_i^{t-1} - \nabla f_i(x^{t-1})} }  \notag \\
		&& \qquad + 2 \beta L_i^2 \eta^2  \Exp{\sqnorm{x^{t} - x^{t-1}} } +  \eta^2 \sigma^2 ,
	\end{eqnarray*}
where $(i)$ is due to the definition of a contractive compressor (Definition~\ref{def:contractive_compressor}), $(ii)$ follows by the update rule of $v_i^t$, $(iii)$ and $(v)$ hold by Young's inequality for any $\rho >0$. In $(iv)$, we introduced the notation $\theta  \eqdef 1 - (1-\alpha)(1 + \rho)$, and $\beta  \eqdef (1-\alpha) (1 + \rho^{-1})$. The last step follows by smoothness of $f_i(\cdot)$ (Assumption~\ref{as:main}). The proof is complete by the choice $\rho = \alpha / 2$, which guarantees $1-\theta \leq 1-\alpha/2$, and $2 \beta \leq 4/\alpha$ . 
\end{proof}

\clearpage
\section{Further Improvement Using Double Momentum (Proof of Corollary~\ref{cor:EF21-DM_conv})}\label{sec:appendix_DM}

%In this section, we formally introduce the double momentum variant of our method, Algorithm~\ref{alg:EF21-DM}, and provide convergence result with its proof. 

\begin{algorithm}[H]
	\centering
	\caption{\algname{EF21-SGD2M} (Error Feedback  2021 Enhanced with \textit{Double Momentum})}\label{alg:EF21-DM}
	\begin{algorithmic}[1]
		\State \textbf{Input:} starting point $x^{0}$, step-size $\gamma>0$, parameter $\eta \in (0, 1]$, initial batch size $B_{\textnormal{init}}$
		\State Initialize $u_i^{0} = v_i^{0} = g_i^{0} = \frac{1}{B_{\textnormal{init}}} \sum_{j=1}^{B_{\textnormal{init}}} \nabla f_{i}(x^0, \xi_{i, j}^{0})$ for  $ i = 1, \ldots,  n$; $g^{0} = \frac{1}{n} \sum_{i=1}^n  g_i^{0}$
		\For{$t=0,1, 2, \dots , T-1 $}
		\State Master computes $x^{t+1} = x^t - \gamma g^t$ and broadcasts $x^{t+1}$ to all nodes 
		\For{{\bf all nodes $i =1,\dots, n$ in parallel}}
		%\State Compute a stochastic gradient  $\nabla f_{i}(\xtpo, \xi_{i}^{t+1})$
		\State \textcolor{mygreen}{ Compute the first momentum estimator $v_i^{t+1} = (1-\eta) v_i^{t}+ \eta  \nabla f_{i}(x^{t+1}, \xi_{i}^{t+1})  $ }
		\State \textcolor{mygreen}{ Compute the second momentum estimator $u_i^{t+1} = (1-\eta) u_i^{t}+ \eta  v_i^{t+1}  $ }
		\State Compress $c_i^{t+1} =  \cC(  u_i^{t+1} - g_i^{t}   )$ and send $c_i^{t+1} $ to the master
		\State Update local state $g_i^{t+1} = g_i^{t} +  c_i^{t+1}$
		\EndFor
		\State Master computes $g^{t+1} = \frac{1}{n} \sum_{i=1}^n  g_i^{t+1}$ via  $g^{t+1} = g^{t} + \frac{1}{n} \sum_{i=1}^n   c_i^{t+1} $
		\EndFor
	\end{algorithmic}
\end{algorithm}

In this section, we state the detailed version of Corollary~\ref{cor:EF21-DM_conv} in Theorem~\ref{thm:EF21-DM-SGD}, followed by its formal proof. Notice that the key reason for the sample complexity improvement of the double momentum variant compared to \algname{EF21-SGDM} (Theorem~\ref{thm:main-distrib}) is that in \eqref{eq:EF21-DM-SGD:general}, one of the terms has better dependence on $\eta$  compared to \eqref{eq:main-distrib:general} in Theorem~\ref{thm:main-distrib}, i.e., $\eta^4 \sigma^2 / \alpha$ instead of $\eta^2 \sigma^2 / \alpha$. As a result, this term is dominated by other terms and vanishes in Corollary~\ref{cor:EF21-DM_conv}.

\begin{theorem}\label{thm:EF21-DM-SGD}
		Let Assumptions~\ref{as:main} and \ref{as:BV} hold. Let $\hat x^T$ be sampled uniformly at random from the iterates of the method. Let Algorithm~\ref{alg:EF21-DM} run with a contractive compressor. For all $\eta \in (0, 1]$ and $B_{\textnormal{init}} \geq 1$, with $\gamma \leq \min\cb{ \fr{\al}{60 \wL}, \fr{\eta}{16 L}  },$ we have
	\begin{align}
		\label{eq:EF21-DM-SGD:general}
		\Exp{  \sqnorm{\nabla f(\hat x^T) }  } \leq \cO\rb{\fr{\Psi_0}{\gamma T} + \fr{\eta^3 \sigma^2 }{\al^2}  +  \fr{\eta^4 \sigma^2 }{\al} + \frac{\eta \sigma^2}{n}},
	\end{align}
	where $\Psi_0 \eqdef \delta_0 + \frac{\gamma}{\eta}\Exp{\norm{v^0 - \nabla f(x^0)}^2} + \frac{\gamma \eta^4}{\alpha^2} \frac{1}{n} \sum_{i=1}^n \Exp{\norm{v^0_i - \nabla f_i(x^0)}^2}$. Setting initial batch size $B_{\textnormal{init}} =  \left\lceil\frac{\sigma^2}{L \delta_0}\right\rceil $, step-size and momentum parameters
	%With the following step-size, momentum parameter, and initial batch size 
	\begin{equation} \label{eqEF21-DM-SGD-stepsize}
		\gamma = \min\cb{ \fr{\al}{60 \wL}, \fr{\eta}{16 L}  } , \qquad  \eta = \min\cb{ 1 ,  \rb{ \fr{ L \delta_0 \al^2 }{\sigma^2 T}}^{\nfr{1}{4} }, \rb{ \fr{ L \delta_0 n  }{\sigma^2 T}}^{\nfr{1}{2} } , \frac{\alpha \sqrt{L \delta_0 B_{\textnormal{init}}}}{\sigma} } ,
	\end{equation}
	%and $B_{\textnormal{init}} = \left\lceil\max\left\{\min\left\{\frac{\sigma^2 L}{\widetilde{L}^2 \delta_0}, \frac{\sigma}{\alpha \sqrt{L \delta_0 T}}, \frac{\sigma^{2/3}}{\alpha^{4/3} T^{2/3} (L \delta_0)^{1/3}}, \frac{n}{\alpha^2 T}\right\},  \frac{\sigma^2}{L \delta_0 n}\right\}\right\rceil \leq \left\lceil\frac{\sigma^2}{L \delta_0}\right\rceil \leq \left\lceil\frac{2 \sigma^2}{\varepsilon^2}\right\rceil,$
	we get
	\begin{eqnarray*}
		\fr{1}{T}\sum_{t=0}^{T-1} \Exp{  \sqnorm{\nabla f(x^t) }  } &\leq&  
		\cO\rb{ \fr{\wL \delta_0}{\alpha  T }  +     \rb{\fr{   L \delta_0 \sigma^{2/3} }{\al^{2/3}  T } }^{\nfr{3}{4}}  + \rb{\fr{   L \delta_0 \sigma^{2} }{ n  T } }^{\nfr{1}{2}}  }.
	\end{eqnarray*}
	
\end{theorem}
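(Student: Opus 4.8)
The plan is to follow the proof of Theorem~\ref{thm:main-distrib} for \algname{EF21-SGDM} and to insert one extra ``layer'' accounting for the second momentum estimator $u_i^t$ of \eqref{eq:double_mom}. The single-momentum proof tracks two discrepancies, $g_i^t-v_i^t$ and $v_i^t-\nabla f_i(x^t)$; the double-momentum proof (Algorithm~\ref{alg:EF21-DM}) must track three: the compression gap $g_i^t-u_i^t$, the momentum gap $u_i^t-v_i^t$, and the estimation error $v_i^t-\nabla f_i(x^t)$. In the descent step I would therefore split the error as
\[ \sqnorm{g^t - \nabla f(x^t)} \leq 3\,\frac{1}{n}\sum_{i=1}^n \sqnorm{g_i^t - u_i^t} + 3\,\frac{1}{n}\sum_{i=1}^n\sqnorm{u_i^t - v_i^t} + 3\sqnorm{v^t - \nabla f(x^t)}, \]
writing $u^t\eqdef\frac{1}{n}\sum_i u_i^t$, $v^t\eqdef\frac{1}{n}\sum_i v_i^t$; the last (averaged) term retains the $1/n$ variance saving exactly as in \eqref{eq:P_rec}. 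Denote $R_t\eqdef\Exp{\sqnorm{x^{t+1}-x^t}}$, $\wt P_t\eqdef\frac{1}{n}\sum_i\Exp{\sqnorm{v_i^t-\nabla f_i(x^t)}}$, $\wt D_t\eqdef\frac{1}{n}\sum_i\Exp{\sqnorm{u_i^t-v_i^t}}$, and $\wt A_t\eqdef\frac{1}{n}\sum_i\Exp{\sqnorm{g_i^t-u_i^t}}$.

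Next I would assemble four one-step recursions and sum them over $t$. For $\wt P_t$ and for the average $P_t\eqdef\Exp{\sqnorm{v^t-\nabla f(x^t)}}$ I reuse Lemma~\ref{le:key_HB_recursion} unchanged, since the first momentum update is identical; these contract at rate $1-\eta$ with noise $\eta^2\sigma^2$ (resp.\ $\eta^2\sigma^2/n$). For the new momentum gap I use the identity $u_i^t-v_i^t=(1-\eta)\bigl[(u_i^{t-1}-v_i^{t-1})+\eta(v_i^{t-1}-\nabla f_i(x^t,\xi_i^t))\bigr]$, take the expectation over $\xi_i^t$ conditionally on the past, and apply Young's inequality with $\rho=\eta/2$; this yields a recursion for $\wt D_t$ contracting at rate $1-\eta/2$, with driving term of order $\eta(\wt P_{t-1}+\wL^2 R_{t-1})$ and noise injection of order $\eta^2\sigma^2$. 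For the compression gap I repeat Lemma~\ref{le:EF21} with $u_i^t$ in place of $v_i^t$: contractivity gives $(1-\alpha)\Exp{\sqnorm{u_i^t-g_i^{t-1}}}$, and the crucial point is that $\xi_i^t$ enters $u_i^t=(1-\eta)u_i^{t-1}+\eta(1-\eta)v_i^{t-1}+\eta^2\nabla f_i(x^t,\xi_i^t)$ only through the doubly-averaged coefficient $\eta^2$, so the fresh-noise injection is $\eta^4\sigma^2$ rather than the $\eta^2\sigma^2$ of \algname{EF21-SGDM}. A Young split with $\rho=\alpha/2$ then closes the recursion for $\wt A_t$ at rate $1-\alpha/2$, feeding back terms of order $\frac{\eta^2}{\alpha}\bigl(\wt D_{t-1}+\eta^2\wt P_{t-1}+\eta^2\wL^2 R_{t-1}\bigr)$.

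I would then combine the accumulated recursions with Lemma~\ref{le:descent}, as in Part III of the proof of Theorem~\ref{thm:main-distrib}. The telescoping factors are $2/\alpha$ for $\wt A_t$, $2/\eta$ for $\wt D_t$, and $1/\eta$ for $\wt P_t$, and the negative $-\frac{1}{4\gamma}R_t$ term from descent must dominate the sum of all displacement terms produced by the three gap recursions; enforcing this is what tightens the step-size requirement to $\gamma\leq\min\{\alpha/(60\wL),\,\eta/(16L)\}$. Tracking where each variance term lands gives \eqref{eq:EF21-DM-SGD:general}: the direct $\eta^4\sigma^2$ from $\wt A_t$ accumulates to $\eta^4\sigma^2/\alpha$; the momentum-gap layer, whose $\eta^2\sigma^2$ noise accumulates to $\eta\sigma^2$ and then enters $\wt A_t$ with coefficient $\eta^2/\alpha$ (accumulated to $\eta^2/\alpha^2$), produces $\eta^3\sigma^2/\alpha^2$; and the averaged estimation error gives $\eta\sigma^2/n$. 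The stated rate follows by substituting $\gamma$, balancing $\eta$ through Young's inequality so that each noise term is dominated by $L\delta_0/(\eta T)$ — this reproduces \eqref{eqEF21-DM-SGD-stepsize}, and one checks that $\eta^4\sigma^2/\alpha\leq L\delta_0/(\eta T)$ is automatic once $\eta\leq(L\delta_0\alpha^2/(\sigma^2T))^{1/4}$ because $\alpha\eta\leq1$ — and by bounding the initialization terms in $\Psi_0$ via $\Exp{\sqnorm{v^0-\nabla f(x^0)}}\leq\sigma^2/(nB_{\textnormal{init}})$ and $\frac{1}{n}\sum_i\Exp{\sqnorm{v_i^0-\nabla f_i(x^0)}}\leq\sigma^2/B_{\textnormal{init}}$; note that $\wt A_0=\wt D_0=0$ since $u_i^0=v_i^0=g_i^0$.

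The main obstacle is the new three-term coupling. Unlike in \algname{EF21-SGDM}, where the compression gap is driven directly by the estimation error, here it is driven through the intermediate momentum gap $u_i^t-v_i^t$, so the Lyapunov coefficients — in particular the weight $\gamma\eta^4/\alpha^2$ on the estimation-error block appearing in $\Psi_0$ — must be chosen so that the three contraction rates $1-\alpha/2$, $1-\eta/2$, $1-\eta$ interlock and every cross-term telescopes without any $R_t$ coefficient becoming positive. The quantitative payoff — the single place where double momentum strictly beats single momentum — is that the fresh-noise injection in the compression recursion drops from $\eta^2\sigma^2$ to $\eta^4\sigma^2$, which is exactly what replaces the $\eta^2\sigma^2/\alpha$ term of \eqref{eq:main-distrib:general} by the smaller $\eta^4\sigma^2/\alpha$ and thereby removes the intermediate $\bigl(L\delta_0\sigma/(\sqrt\alpha\,T)\bigr)^{2/3}$ term from the final sample complexity.
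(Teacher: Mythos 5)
Your overall architecture matches the paper's proof: the three-gap decomposition, the recursions for the estimation error, momentum gap and compression gap, the observation that fresh noise enters the compression recursion only with coefficient $\eta^{2}$ (hence $\eta^{4}\sigma^{2}$ injection), and the parameter balancing at the end are all exactly the paper's route. However, there is one genuine gap, and it kills the main claim of the theorem: in your descent-step decomposition you bound the middle term by Jensen's inequality, $\sqnorm{u^t - v^t} \leq \frac{1}{n}\sum_{i=1}^n\sqnorm{u_i^t - v_i^t} = \wt D_t$, and then work only with the per-node recursion for $\wt D_t$, whose noise injection is $\eta^{2}\sigma^{2}$ per step (no $1/n$). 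Since $\wt D_t$ enters the descent bound \emph{directly} with coefficient $3$, its accumulated noise floor of order $\eta\sigma^{2}$ appears verbatim in the final bound, so your argument proves at best
\begin{equation*}
\Exp{\sqnorm{\nabla f(\hat x^T)}} \leq \cO\rb{\fr{\Psi_0}{\gamma T} + \fr{\eta^3\sigma^2}{\alpha^2} + \fr{\eta^4\sigma^2}{\alpha} + \eta\sigma^2},
\end{equation*}
which is strictly weaker than \eqref{eq:EF21-DM-SGD:general} and destroys the linear speedup: after optimizing $\eta$ the last term yields $\rb{L\delta_0\sigma^2/T}^{\nicefrac{1}{2}}$ instead of $\rb{L\delta_0\sigma^2/(nT)}^{\nicefrac{1}{2}}$. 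Your own accounting betrays the slip: you track the $\eta^{2}\sigma^{2}$ noise of the momentum-gap layer only through its propagation into $\wt A_t$ (correctly producing $\eta^{3}\sigma^{2}/\alpha^{2}$), but never account for the direct appearance of $\wt D_t$ in your decomposition.

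The fix is what the paper does: keep the middle term as $3\sqnorm{u^t - v^t}$ without applying Jensen, and derive a \emph{separate} recursion for $Q_t \eqdef \Exp{\sqnorm{u^t - v^t}}$. Because $u^t = (1-\eta)u^{t-1} + \eta v^t$ and $v^t = (1-\eta)v^{t-1} + \eta\,\frac{1}{n}\sum_{i=1}^n\nabla f_i(x^t,\xi_i^t)$, the averaged gap is driven by the averaged stochastic gradient, whose variance is $\sigma^{2}/n$ by independence of the $\xi_i^t$ across nodes; this gives a recursion identical in form to yours but with noise injection $\eta^{2}\sigma^{2}/n$, hence a direct contribution of $\eta\sigma^{2}/n$ to the descent bound. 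The per-node quantity $\wt D_t$ is still needed, but only \emph{inside} the compression-gap recursion for $\wt A_t$, where its $\eta\sigma^{2}$ accumulation is multiplied by $\eta^{2}/\alpha$ and telescoped with $2/\alpha$, so it surfaces only at the harmless order $\eta^{3}\sigma^{2}/\alpha^{2}$. With this two-track treatment of the momentum gap (averaged for the descent term, per-node for the compression term), the rest of your argument goes through as you describe.
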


\begin{proof}
	In order to control the error between $g^t$ and $\nabla f (x^t)$, we decompose it into three terms
	\begin{eqnarray*}
	\sqnorm{g^t - \nabla f(x^t)} &\leq& 3 \sqnorm{g^t - u^t} + 3 \sqnorm{u^t - v^t}  + 3 \sqnorm{v^t - \nabla f(x^t)} \\
	&\leq& 3 \suminn \sqnorm{g_i^t - u_i^t} + 3 \sqnorm{u^t - v^t}  + 3 \sqnorm{v^t - \nabla f(x^t)}, 
	\end{eqnarray*}
	where we define the sequences $v^t  \eqdef \suminn v_i^t$ and $u^t  \eqdef \suminn u_i^t$. In the following, we develop a recursion for each term above separately.
	
	\textbf{Part I. Controlling  the error  of momentum estimator for each $v_i^t$ and on average for $v^t$.}
	Denote $P_t  \eqdef   \Exp{ \sqnorm{ v^t - \nabla f(x^{t})} }$, $\wt P_t : = \suminn \Exp{ \sqnorm{v_i^{t} - \nabla f_i(x^{t})} } $, $R_t  \eqdef \Exp{\sqnorm{x^{t} - x^{t+1} } } $. Similarly to Part I  of the proof of Theorem~\ref{thm:main-distrib}, we have 
	\begin{eqnarray}\label{eq:wt_mom_est_avg_dist_DM}
		\frac{1}{T} \sum_{t=0}^{T-1} \wt P_t  \leq  \fr{3 \wt L^2 }{\eta^2 } \frac{1}{T} \sum_{t=0}^{T-1}  R_t +  \eta \sigma^2  + \frac{1}{\eta T} \wt P_0 ,
	\end{eqnarray}
	\begin{eqnarray}\label{eq:mom_est_avg_dist_DM}
	\frac{1}{T} \sum_{t=0}^{T-1} P_t  \leq  \fr{3 L^2 }{\eta^2 } \frac{1}{T} \sum_{t=0}^{T-1}  R_t +  \frac{\eta \sigma^2 }{n}  + \frac{1}{\eta T}  P_0 .
\end{eqnarray}

	\textbf{Part II (a). Controlling  the error of the second momentum estimator for each $u_i^t$.} Recall that by Lemma~\ref{le:key_HB_recursion_DM}-\eqref{eq:lem_wtQ_rec_DM}, we have for each $i = 1, \ldots, n$, and any $0 < \eta \leq 1$ and $t\geq 0$
	\begin{eqnarray}\label{eq:wtQi_rec_DM}
		\Exp{ \sqnorm{u_i^{t+1} - v_i^{t+1}  } }  &\leq& (1- \eta)   \Exp{ \sqnorm{u_i^{t} - v_i^{t} } }  + 6 \eta \Exp{\sqnorm{v_i^{t} - \nabla f_i(x^{t}) } } \notag \\
		&&	\qquad +  6 L_i^2 \eta  \Exp{ \sqnorm{ x^{t+1} - x^t }  } + \eta^2 \sigma^2 ,
	\end{eqnarray}
	Averaging inequalities \eqref{eq:wtQi_rec_DM} over $i=1,\ldots,n$ and denoting $\wt Q_t : = \suminn \Exp{ \sqnorm{ u_i^{t} - v_i^{t} } } $, we have 
	\begin{eqnarray*}
		\wt Q_{t+1}  &\leq& (1- \eta)   \wt Q_t  + 6 \eta \wt P_t  +  6 \wt L^2 \eta  R_t + \eta^2 \sigma^2 .
	\end{eqnarray*}
	Summing up the above inequalities for $t=0, \ldots, T-1$, we derive 
	\begin{eqnarray}\label{eq:wt_doub_mom_est_avg_dist_DM}
		\frac{1}{T} \sum_{t=0}^{T-1} \wt Q_t  &\leq& \frac{6}{T} \sum_{t=0}^{T-1} \wt P_t + 6 \wt L^2  \frac{1}{T} \sum_{t=0}^{T-1}  R_t +  \eta \sigma^2 + \frac{1}{\eta T} \wt Q_0 \notag \\
		&\leq& \rb{\fr{6 \cdot 3 \wt L^2 }{\eta^2 }   + 6 \wt L^2 } \frac{1}{T}  \sum_{t=0}^{T-1}  R_t +  7 \eta \sigma^2  + \frac{1}{\eta T} \wt Q_0 + \frac{6}{\eta T} \wt P_0 \notag \\
		&\leq& \fr{19 \wt L^2 }{\eta^2 }   \frac{1}{T}  \sum_{t=0}^{T-1}  R_t +  7 \eta \sigma^2 + \frac{6}{\eta T} \wt P_0 ,
	\end{eqnarray}
	where we used \eqref{eq:wt_mom_est_avg_dist_DM}, the bound $\eta \leq 1$, and $u_i^0 = v_i^0$ for $i = 1, \ldots, n$.  
	
		\textbf{Part II (b). Controlling  the error of the second momentum estimator  $u^t$ (on average).} Similarly by Lemma~\ref{le:key_HB_recursion_DM}-\eqref{eq:lem_Q_rec_DM}, we have for any $0 < \eta \leq 1$ and $t\geq 0$
	\begin{eqnarray*}
		\Exp{ \sqnorm{u^{t+1} - v^{t+1}  } }  &\leq& (1- \eta)   \Exp{ \sqnorm{u^{t} - v^{t} } }  + 6 \eta \Exp{\sqnorm{v^{t} - \nabla f(x^{t}) } } \notag \\
	&&	\qquad +  6 L^2 \eta  \Exp{ \sqnorm{ x^{t+1} - x^t }  } + \frac{\eta^2 \sigma^2}{n} ,
	\end{eqnarray*}
	Summing up the above inequalities for $t=0, \ldots, T-1$, and denoting  $ Q_t : =  \Exp{ \sqnorm{ u^{t} - v^{t} } } $, we derive
	\begin{eqnarray}\label{eq:doub_mom_est_avg_dist_DM}
		\frac{1}{T} \sum_{t=0}^{T-1} Q_t  &\leq& \frac{6}{T} \sum_{t=0}^{T-1} P_t + 6  L^2  \frac{1}{T} \sum_{t=0}^{T-1}  R_t +  \eta \sigma^2  + \frac{1}{\eta T} Q_0  \notag \\
		&\leq& \rb{\fr{6 \cdot 3  L^2 }{\eta^2 }   + 6  L^2 } \frac{1}{T}  \sum_{t=0}^{T-1}  R_t +   \frac{7\eta \sigma^2 }{n} + \frac{1}{\eta T}  Q_0 + \frac{6}{\eta T}  P_0 \notag \\
		&\leq& \fr{19  L^2 }{\eta^2 }   \frac{1}{T}  \sum_{t=0}^{T-1}  R_t +   \frac{7\eta \sigma^2 }{n}  + \frac{6}{\eta T}  P_0 ,
	\end{eqnarray}
	where we used \eqref{eq:mom_est_avg_dist_DM}, the bound $\eta \leq 1$, and $u^0 = v^0$.

%%%%%%%%%%%%%%%%%%%%%%%%%%%%%%
%%%%%%%%%%%%%%%%%%%%%%%%%%%%%%
	\textbf{Part III. Controlling  the error of contractive compressor and the double momentum estimator.} By Lemma~\ref{le:EF21-DM} we have for each $i = 1, \ldots, n$, and any $0 < \eta \leq 1$ and $t\geq 0$
	\begin{eqnarray}\label{eq:wtVi_rec_DM}
		\Exp{ \sqnorm{g_i^{t+1} - u_i^{t+1}} }  &\leq& \rb{ 1-\fr{\al}{2} } \Exp{ \sqnorm{g_i^{t} - u_i^{t}} }  + \frac{6 \eta^2 }{\alpha}  \Exp{\sqnorm{u_i^{t} - v_i^{t} } }   \\
	&& \quad + \frac{6 \eta^4 }{\alpha}  \Exp{\sqnorm{v_i^{t} - \nabla f_i(x^{t}, \xi_i^{t})} }  +  \frac{6 L_i^2 \eta^4 }{\alpha}   \Exp{\sqnorm{x^{t} - x^{t+1}} } +  \eta^4 \sigma^2 . \notag
	\end{eqnarray}
	
	Averaging inequalities \eqref{eq:wtVi_rec_DM} over $i=1,\ldots,n$,  denoting $ \wt V_t  \eqdef \suminn  \Exp{ \sqnorm{g_i^t - u_i^t} }$, and summing up the resulting inequality for $t=0, \ldots, T-1$, we obtain 
	\begin{eqnarray}\label{eq:ef21_mom_est_avg_dist_DM}
		\frac{1}{T} \sum_{t=0}^{T-1} \wt V_t  &\leq&   \fr{12 \eta^2 }{\al^2} \frac{1}{T} \sum_{t=0}^{T-1}  \wt Q_t  + \fr{12 \eta^4 }{\al^2} \frac{1}{T} \sum_{t=0}^{T-1}  \wt P_t +  \fr{12  \wt L^2 \eta^4 }{\al^2}  \frac{1}{T} \sum_{t=0}^{T-1} R_t + \fr{2\eta^4 \sigma^2 }{\al} \notag \\
		&\leq&   \fr{12 \eta^2 }{\al^2} \rb{ \fr{19 \wt L^2 }{\eta^2 }   \frac{1}{T}  \sum_{t=0}^{T-1}  R_t +  7 \eta \sigma^2 }  + \fr{12 \eta^4 }{\al^2} \rb{ \fr{3 \wt L^2 }{\eta^2 } \frac{1}{T} \sum_{t=0}^{T-1}  R_t +  \eta \sigma^2 } \notag \\
		&& \qquad +  \fr{12  \wt L^2 \eta^4 }{\al^2}  \frac{1}{T} \sum_{t=0}^{T-1} R_t + \fr{2\eta^4 \sigma^2 }{\al} +  \frac{12 \eta^4}{\alpha^2  T} \wt P_0 \notag \\
		&\leq&    \fr{12 \cdot 19 \wt L^2 }{\alpha^2 }   \frac{1}{T}  \sum_{t=0}^{T-1}  R_t  +     \fr{12 \cdot 7 \eta^3 \sigma^2 }{\al^2}  +   \fr{12 \cdot 3 \wt L^2  \eta^2 }{\alpha^2} \frac{1}{T} \sum_{t=0}^{T-1}  R_t +  \frac{12 \eta^5 \sigma^2}{\alpha^2}  \notag \\
		&& \qquad +  \fr{12  \wt L^2 \eta^4 }{\al^2}  \frac{1}{T} \sum_{t=0}^{T-1} R_t + \fr{2\eta^4 \sigma^2 }{\al} +  \frac{12 \eta^4}{\alpha^2  T} \wt P_0\notag \\
		&\leq&    \fr{276 \wt L^2 }{\alpha^2 }   \frac{1}{T}  \sum_{t=0}^{T-1}  R_t  +     \fr{84 \eta^3 \sigma^2 }{\al^2}  +  \frac{12 \eta^5 \sigma^2}{\alpha^2}   + \fr{2\eta^4 \sigma^2 }{\al} +  \frac{12 \eta^4}{\alpha^2  T} \wt P_0 \notag \\
	\end{eqnarray}
	
	\textbf{Part IV. Combining steps I, II and III with descent lemma.}	
	By smoothness (Assumption~\ref{as:main}) of $f(\cdot)$ it follows from Lemma~\ref{le:descent} that for any $\gamma \leq 1/(2 L) $ we have 
	\begin{eqnarray}\label{eq:descent_ef21-DM_dist}
		f(x^{t+1}) &\leq& f(x^{t}) - \fr{\g}{2} \sqnorm{\nabla f(x^t)}  - \frac{1}{4\gamma} \sqnorm{x^{t+1} - x^t } + \frac{\gamma}{2} \sqnorm{g^t - \nabla f(x^t) } \\
		&\leq& f(x^{t}) - \fr{\g}{2} \sqnorm{\nabla f(x^t)}  -  \frac{1}{4\gamma} \sqnorm{x^{t+1} - x^t } \notag \\
		&& \qquad + \frac{3 \gamma}{2} \suminn\sqnorm{g_i^t - u_i^t } + \frac{3 \gamma}{2} \sqnorm{u^t - v^t  }  + \frac{3 \gamma}{2} \sqnorm{v^t - \nabla f(x^t) }  . \notag
	\end{eqnarray}
	Subtracting $f^*$ from both sides of \eqref{eq:descent_ef21-DM_dist}, taking expectation and defining $\delta_t  \eqdef \Exp{f(x^t) - f^*}$, we derive
	\begin{eqnarray}
		\Exp{\sqnorm{\nabla f(\hat x^T)}} &=& \frac{1}{T} \sum_{t=0}^{T-1} \Exp{\sqnorm{\nabla f(x^t)}} \notag \\
		&\leq&  \fr{2 \delta_0 }{\gamma T} + 3 \frac{1}{T}\sum_{t=0}^{T-1} \wt V_t + 3 \frac{1}{T} \sum_{t=0}^{T-1} Q_t +  3 \frac{1}{T} \sum_{t=0}^{T-1} P_t    - \frac{1 }{2\gamma^2} \frac{1}{T} \sum_{t=0}^{T-1} R_t \notag  \\
		&\overset{(i)}{\leq}&  \fr{2 \delta_0 }{\gamma T} + \fr{3 \cdot 276 \wt L^2 }{\alpha^2 }  \frac{1}{T}  \sum_{t=0}^{T-1}  R_t  +    \fr{3\cdot 84 \eta^3 \sigma^2 }{\al^2}  +  \frac{3\cdot 12 \eta^5 \sigma^2}{\alpha^2}   + \fr{3\cdot 2\eta^4 \sigma^2 }{\al}  \notag  \\
		&& \qquad + \fr{3\cdot 19  L^2 }{\eta^2 }   \frac{1}{T}  \sum_{t=0}^{T-1}  R_t +   \frac{3\cdot 7\eta \sigma^2 }{n} \notag  \\ 
		&& \qquad + \fr{3 L^2 }{\eta^2 } \frac{1}{T} \sum_{t=0}^{T-1}  R_t +  \frac{\eta \sigma^2 }{n} - \frac{1 }{2\gamma^2} \frac{1}{T} \sum_{t=0}^{T-1} R_t  \notag \\
		&& \qquad +  \frac{36  \eta^4}{\alpha^2  T} \wt P_0 + \frac{18}{\eta T}  P_0 + \frac{3}{\eta T}  P_0 \notag \\
		& = &  \fr{2 \delta_0 }{\gamma T}   +    \fr{3\cdot 84 \eta^3 \sigma^2 }{\al^2}  +  \frac{3\cdot 12 \eta^5 \sigma^2}{\alpha^2}   + \fr{3\cdot 2\eta^4 \sigma^2 }{\al} +   \frac{22\eta \sigma^2 }{n}   \notag  \\
		&& \qquad + \rb{ \fr{60  L^2 }{\eta^2 }   + \fr{3 \cdot 276 \wt L^2 }{\alpha^2 }  - \frac{1 }{2\gamma^2} }  \frac{1}{T} \sum_{t=0}^{T-1} R_t  \notag \\
		&& \qquad +  \frac{36  \eta^4}{\alpha^2  T} \wt P_0 + \frac{21}{\eta T}  P_0 \notag \\
		& = &  \fr{2 \delta_0 }{\gamma T}   +    \fr{288 \eta^3 \sigma^2 }{\al^2}     + \fr{ 6  \eta^4 \sigma^2 }{\al} +   \frac{22\eta \sigma^2 }{n}   +  \frac{36  \eta^4}{\alpha^2  T} \wt P_0 + \frac{21}{\eta T}  P_0 \notag . \label{eq:conv_ef21_sgdm_DM_eta}
		%& \leq &  \fr{60 \wL \delta_0 }{\alpha T}  + \fr{32 L \delta_0 }{\eta_0 T}  + \fr{32 L \delta_0 }{\eta T}  +     \fr{ 288 \eta^3 \sigma^2 }{\al^2}  + \fr{6 \eta^4 \sigma^2 }{\al} +   \frac{22\eta \sigma^2 }{n} \notag \\
		%& \leq &  \fr{60 \wL \delta_0 }{\alpha T}  + \fr{32 L \delta_0 }{\eta_0 T}  + \fr{32 L \delta_0 }{\eta T}  +     \fr{ 294 \eta^3 \sigma^2 }{\al^2}  +   \frac{22\eta \sigma^2 }{n} , \label{eq:conv_ef21_sgdm_DM_eta}
	\end{eqnarray}
	where $(i)$ holds due to \eqref{eq:mom_est_avg_dist_DM}, \eqref{eq:ef21_mom_est_avg_dist_DM} and \eqref{eq:doub_mom_est_avg_dist_DM}, the last two steps hold because of the assumption on the step-size, and $\eta \leq 1$, which completes the proof of the first part of Theorem.  %The choice of $\eta$ completes the proof. 
	% \gamma \leq \alpha /(60 \wt L)
	% \eta \leq \eta / (16 L)  
	
	Notice that it suffices to take the same initial batch-size as in the proof of the Theorem~\ref{thm:main-distrib} in order to "remove" $\wt P_0$ and $P_0$ terms, since the power of $\eta$ in front of $\wt P_0$ is larger here compared to the proof of Theorem~\ref{thm:main-distrib}. The choice of the momentum parameter such that $\eta \leq \rb{ \fr{ L \delta_0 \al^2 }{\sigma^2 T}}^{\nfr{1}{4} }$, $\eta \leq \rb{ \fr{ L \delta_0 n  }{\sigma^2 T}}^{\nfr{1}{2} }$ ensures that $\fr{ \eta^3 \sigma^2 }{\al^2}   \leq \fr{L \delta_0 }{\eta T}  $,  and $\frac{ \eta \sigma^2}{n} \leq \fr{L \delta_0 }{\eta T} $. Therefore, we can guarantee that the choice $\eta =  \min\cb{ \frac{\alpha \sqrt{L \delta_0 B_{\textnormal{init}}}}{\sigma},  \rb{ \fr{ L \delta_0 \al^2 }{\sigma^2 T}}^{\nfr{1}{4} }, \rb{ \fr{ L \delta_0 n  }{\sigma^2 T}}^{\nfr{1}{2} } }$ ensures that 
	\begin{eqnarray*}
		\Exp{  \sqnorm{\nabla f(\hat x^T) }  } &\leq & \cO\rb{ \fr{\wL \delta_0}{\alpha  T }  +  \rb{\fr{   L \delta_0 \sigma^{2/3} }{\al^{2/3}  T } }^{\nfr{3}{4}} + \rb{\fr{  L \delta_0 \sigma^2}{ n T } }^{\nfr{1}{2}} }  .
	\end{eqnarray*}

\end{proof}

\subsection{Controlling  the error of second momentum estimator }
\begin{lemma}\label{le:key_HB_recursion_DM}
	Let Assumption~\ref{as:main} be satisfied, and suppose $0 < \eta \leq 1$. For every $i = 1,\ldots, n$, let the sequences $\cb{v_i^{t}}_{t\geq0}$ and  $\cb{u_i^{t}}_{t\geq0}$ be updated via 
	$$v_i^{t} =  v_i^{t-1} + \eta \rb{ \nabla f_i (x^{t}, \xi_i^{t} ) -  v_i^{t-1} } , $$
	$$u_i^{t} =  u_i^{t-1} + \eta \rb{ v_i^t -  u_i^{t-1} } . $$
	Define the sequences $v^t  \eqdef \suminn v_i^t$ and $u^t  \eqdef \suminn u_i^t$. Then for every $i = 1,\ldots, n$ and $t\geq0$ it holds
	\begin{eqnarray}\label{eq:lem_wtQ_rec_DM}
		%- R_t & \leq  &  - \fr{a}{4 L^2 } V_{t+1} + \fr{a (1-a) }{4 L^2 } V_{t}  + \fr{a^3  \sigma^2 }{4 L^2 }   \notag .
		%\wt P_{t+1} \leq (1- \eta)  \wt P_t + \fr{4\wL^2 }{\eta } R_t + \eta^2 \sigma^2 .
		\Exp{ \sqnorm{u_i^{t} - v_i^{t}  } }  \leq (1- \eta)   \Exp{ \sqnorm{u_i^{t-1} - v_i^{t-1} } }  + 6 \eta \Exp{\sqnorm{v_i^{t-1} - \nabla f_i(x^{t-1}) } } \notag \\
		\qquad +  6 L_i^2 \eta  \Exp{ \sqnorm{ x^t - x^{t-1}}  } + \eta^2 \sigma^2 ,
	\end{eqnarray}
	\begin{eqnarray}\label{eq:lem_Q_rec_DM}
		%- R_t & \leq  &  - \fr{a}{4 L^2 } V_{t+1} + \fr{a (1-a) }{4 L^2 } V_{t}  + \fr{a^3  \sigma^2 }{4 L^2 }   \notag .
		%P_{t+1} \leq (1- \eta)  P_t + \fr{4 L^2 }{\eta } R_t + \fr{\eta^2 \sigma^2}{n} .
		\Exp{ \sqnorm{u^{t} - v^{t}  } }  \leq (1- \eta)   \Exp{ \sqnorm{u^{t-1} - v^{t-1} } }  + 6 \eta \Exp{\sqnorm{v^{t-1} - \nabla f (x^{t-1}) } } \notag \\
		\qquad +  6 L^2 \eta  \Exp{ \sqnorm{ x^t - x^{t-1}}  } + \fr{\eta^2 \sigma^2}{n} .
	\end{eqnarray}
\end{lemma}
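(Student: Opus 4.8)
The plan is to mirror the proof of Lemma~\ref{le:key_HB_recursion}, exploiting a clean algebraic identity for the second-momentum residual. First I would observe that, since $u_i^t = (1-\eta)u_i^{t-1} + \eta v_i^t$, the residual collapses to $u_i^t - v_i^t = (1-\eta)(u_i^{t-1} - v_i^t)$. Next I would rewrite $u_i^{t-1} - v_i^t$ purely in terms of quantities at time $t-1$: using $v_i^t = (1-\eta)v_i^{t-1} + \eta\nabla f_i(x^t,\xi_i^t)$, one gets $u_i^{t-1} - v_i^t = (u_i^{t-1} - v_i^{t-1}) + \eta(v_i^{t-1} - \nabla f_i(x^t,\xi_i^t))$, so that $u_i^t - v_i^t = (1-\eta)\big[(u_i^{t-1}-v_i^{t-1}) + \eta(v_i^{t-1} - \nabla f_i(x^t,\xi_i^t))\big]$.

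Second, I would isolate the stochastic part by writing $v_i^{t-1} - \nabla f_i(x^t,\xi_i^t) = (v_i^{t-1} - \nabla f_i(x^t)) - (\nabla f_i(x^t,\xi_i^t) - \nabla f_i(x^t))$. Conditioning on the history up to (but not including) the draw of $\xi_i^t$, under which $x^t$, $v_i^{t-1}$ and $u_i^{t-1}$ are all measurable, the noise term $\nabla f_i(x^t,\xi_i^t) - \nabla f_i(x^t)$ is zero-mean with second moment at most $\sigma^2$ by Assumption~\ref{as:BV}. Taking $\Expu{\xi_i^t}{\sqnorm{u_i^t - v_i^t}}$ then makes the cross term vanish and produces $(1-\eta)^2\sqnorm{(u_i^{t-1}-v_i^{t-1}) + \eta(v_i^{t-1}-\nabla f_i(x^t))}$ plus a noise contribution bounded by $(1-\eta)^2\eta^2\sigma^2 \le \eta^2\sigma^2$.

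Third, I would apply Young's inequality $\sqnorm{a+b} \le (1+\rho)\sqnorm{a} + (1+\rho^{-1})\sqnorm{b}$ with $\rho = \eta$ to the remaining quadratic, using $(1-\eta)^2(1+\eta) = (1-\eta)(1-\eta^2) \le 1-\eta$ to contract the $\sqnorm{u_i^{t-1}-v_i^{t-1}}$ term, while the second piece carries a coefficient of order $\eta$ on $\sqnorm{v_i^{t-1}-\nabla f_i(x^t)}$. Splitting $v_i^{t-1} - \nabla f_i(x^t) = (v_i^{t-1}-\nabla f_i(x^{t-1})) + (\nabla f_i(x^{t-1}) - \nabla f_i(x^t))$ and invoking $L_i$-smoothness (Assumption~\ref{as:main}) to bound $\sqnorm{\nabla f_i(x^{t-1})-\nabla f_i(x^t)} \le L_i^2\sqnorm{x^t - x^{t-1}}$ yields the $6\eta\,\Exp{\sqnorm{v_i^{t-1}-\nabla f_i(x^{t-1})}}$ and $6L_i^2\eta\,\Exp{\sqnorm{x^t-x^{t-1}}}$ terms (the constants being intentionally loose), which proves \eqref{eq:lem_wtQ_rec_DM} after taking total expectation. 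Inequality \eqref{eq:lem_Q_rec_DM} follows from the identical argument applied to the averaged sequences $v^t = \suminn v_i^t$ and $u^t = \suminn u_i^t$, using $L$-smoothness of $f$; the only genuinely different step is the noise bound, where independence of the $\xi_i^t$ across $i$ gives $\Exp{\sqnorm{\nabla f(x^t,\xi^t)-\nabla f(x^t)}} = \frac{1}{n^2}\sum_i \Exp{\sqnorm{\nabla f_i(x^t,\xi_i^t)-\nabla f_i(x^t)}} \le \sigma^2/n$, accounting for the $\eta^2\sigma^2/n$ term.

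The computation itself is essentially routine once the telescoping identity $u_i^t - v_i^t = (1-\eta)(u_i^{t-1} - v_i^t)$ is spotted. The only points requiring care are the bookkeeping of the conditioning, that is, ensuring $x^t$ and the time-$(t-1)$ states are treated as constants when averaging over the fresh noise $\xi_i^t$, and the choice of the Young parameter $\rho=\eta$, which is exactly what contracts the $\sqnorm{u_i^{t-1}-v_i^{t-1}}$ term by the factor $(1-\eta)$ while keeping the remaining coefficients of order $\eta$.
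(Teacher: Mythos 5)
Your proposal is correct and takes essentially the same route as the paper's own proof: the same collapse identity $u_i^t - v_i^t = (1-\eta)(u_i^{t-1} - v_i^t)$, the same substitution and bias--noise split $v_i^{t-1} - \nabla f_i(x^t,\xi_i^t) = (v_i^{t-1} - \nabla f_i(x^t)) - (\nabla f_i(x^t,\xi_i^t) - \nabla f_i(x^t))$ with conditioning on $\xi_i^t$ to kill the cross term, Young's inequality to contract the $\sqnorm{u_i^{t-1}-v_i^{t-1}}$ coefficient below $1-\eta$, a second split plus $L_i$-smoothness for the movement term, and the independence-across-nodes argument for the $\eta^2\sigma^2/n$ term in the averaged bound. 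The only (immaterial) difference is your choice of Young parameter $\rho = \eta$ versus the paper's $\rho = \eta/2$; both give coefficients within the stated $6\eta$ and $6L_i^2\eta$.
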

\begin{proof}
	By the update rule of $v_i^t$, we have
	\begin{eqnarray}
		\Exp{ \sqnorm{u_i^{t} - v_i^{t}  } } 
		& = & \Exp{ \sqnorm{ u_i^{t-1} - v_i^{t} + \eta ( v_i^{t} - u_i^{t-1}  )}  } \notag \\
		& = & (1-\eta)^2\Exp{ \sqnorm{  v_i^{t} - u_i^{t-1} }  } \notag \\
		& = & (1-\eta)^2\Exp{ \sqnorm{   (1-\eta) v_i^{t-1}  + \eta  \nabla f_{i}(x^{t}, \xi_i^{t}) - u_i^{t-1}  }  } \notag \\
		& = & (1-\eta)^2\Exp{ \sqnorm{   ( u_i^{t-1} - v_i^{t-1} ) + \eta (v_i^{t-1} - \nabla f_{i}(x^{t}, \xi_i^{t}) ) }  } \notag \\
		& = & (1-\eta)^2\Exp{ \sqnorm{   ( u_i^{t-1} - v_i^{t-1} ) + \eta (v_i^{t-1} - \nabla f_{i}(x^{t}) ) + \eta (\nabla f_{i}(x^{t}) - \nabla f_{i}(x^{t}, \xi_i^{t})) }  } \notag \\
		& = & (1-\eta)^2\Exp{ \Expu{\xi_i^{t}}{  \sqnorm{   ( u_i^{t-1} - v_i^{t-1} ) + \eta (v_i^{t-1} - \nabla f_{i}(x^{t}) ) + \eta (\nabla f_{i}(x^{t}) - \nabla f_{i}(x^{t}, \xi_i^{t})) }  } } \notag \\
		& = & (1-\eta)^2 \left(\Exp{ \sqnorm{  u_i^{t-1} - v_i^{t-1}  + \eta (v_i^{t-1} - \nabla f_{i}(x^{t}) )  } } +  \eta^2 \Exp{\sqnorm{\nabla f_{i}(x^{t}, \xi_i^{t}) - \nabla f_i(x^{t}) }} \right)  \notag \\
		& \leq & (1-\eta)^2 \Exp{ \sqnorm{  u_i^{t-1} - v_i^{t-1}  + \eta (v_i^{t-1} - \nabla f_{i}(x^{t}) )  } } +  \eta^2 \sigma^2 \notag \\
		& \leq  & (1 - \eta)^2 \rb{ 1 + \fr{\eta}{2}} \Exp{ \sqnorm{  u_i^{t-1} - v_i^{t-1}  } }  \notag \\
		&& \qquad + \rb{1+\fr{2}{\eta} } \eta^2 \Exp{ \sqnorm{  v_i^{t-1} - \nabla f_i(x^{t}) }  } + \eta^2 \sigma^2  \notag \\
		& \leq  & (1 - \eta) \Exp{ \sqnorm{  u_i^{t-1} - v_i^{t-1} } }  + 3  \eta \Exp{ \sqnorm{  v_i^{t-1} - \nabla f_i(x^{t}) }  }  + \eta^2 \sigma^2  \notag  \\
		& \leq  & (1 - \eta) \Exp{ \sqnorm{  u_i^{t-1} - v_i^{t-1} } }  + 6 \eta \Exp{ \sqnorm{  v_i^{t-1} - \nabla f_i(x^{t-1}) }  } \notag \\
		&& \qquad  + 6 \eta \Exp{ \sqnorm{  \nabla f_i(x^{t})  - \nabla f_i(x^{t-1}) }  } + \eta^2 \sigma^2  \notag \\
		& \leq  & (1 - \eta) \Exp{ \sqnorm{  u_i^{t-1} - v_i^{t-1} } }  + 6 \eta \Exp{ \sqnorm{  v_i^{t-1} - \nabla f_i(x^{t-1}) }  } \notag \\
		&& \qquad  + 6 L_i^2 \eta \Exp{ \sqnorm{  x^t - x^{t-1} }  } + \eta^2 \sigma^2  \notag ,
	\end{eqnarray}
	where the first inequality holds Assumption~\ref{as:BV}, the second inequality holds by Young's inequality, and the last step uses smoothness of $f_i(\cdot)$ (Assumption~\ref{as:main}), which concludes the proof of \eqref{eq:lem_wtQ_rec_DM}. %Averaging over $i = 1, \ldots, n$, we derive \eqref{eq:wtP_rec}. 
	
	For each $t = 0, \ldots, T-1$, define a random vector $\xi^{t}  \eqdef (\xi_1^{t}, \ldots, \xi_n^{t})$ and denote by $\nabla f (x^{t}, \xi^{t+1})  \eqdef \suminn \nabla f_{i} (x^{t}, \xi_i^{t})$. Note that the entries of the random vector $\xi^{t}$ are independent and $\Expu{\xi^{t}}{ \nabla f (x^{t}, \xi^{t}) } = \nabla f (x^{t})$, then we have 
	$$v^{t} =  v^{t-1} + \eta \rb{ \nabla f (x^{t}, \xi^{t} ) -  v^{t-1} } , $$
	$$u_i^{t} =  u_i^{t-1} + \eta \rb{ v_i^t -  u_i^{t-1} } , $$
	where $v^t  \eqdef \suminn v_i^t$, $u^t  \eqdef \suminn u_i^t$ are auxiliary sequences. Therefore, we can similarly derive
	
	\begin{eqnarray}
		\Exp{ \sqnorm{u^{t} - v^{t}  } } 
		& = & \Exp{ \sqnorm{ u^{t-1} - v^{t} + \eta ( v^{t} - u^{t-1}  )}  } \notag \\
		& = & (1-\eta)^2\Exp{ \sqnorm{  v^{t} - u^{t-1} }  } \notag \\
		& = & (1-\eta)^2\Exp{ \sqnorm{   (1-\eta) v^{t-1}  + \eta  \nabla f(x^{t}, \xi^{t}) - u^{t-1}  }  } \notag \\
		& = & (1-\eta)^2\Exp{ \sqnorm{   ( u^{t-1} - v^{t-1} ) + \eta (v^{t-1} - \nabla f(x^{t}, \xi^{t}) ) }  } \notag \\
		& = & (1-\eta)^2\Exp{ \sqnorm{   ( u^{t-1} - v^{t-1} ) + \eta (v^{t-1} - \nabla f(x^{t}) ) + \eta (\nabla f(x^{t}) - \nabla f(x^{t}, \xi^{t})) }  } \notag \\
		& = & (1-\eta)^2\Exp{ \Expu{\xi^{t}}{  \sqnorm{   ( u^{t-1} - v^{t-1} ) + \eta (v^{t-1} - \nabla f(x^{t}) ) + \eta (\nabla f(x^{t}) - \nabla f(x^{t}, \xi^{t})) }  } } \notag \\
		& = & (1-\eta)^2 \left(\Exp{ \sqnorm{  u^{t-1} - v^{t-1}  + \eta (v^{t-1} - \nabla f(x^{t}) )  } } +  \eta^2 \Exp{\sqnorm{\nabla f(x^{t}, \xi^{t}) - \nabla f(x^{t}) }} \right)  \notag \\
		& \leq & (1-\eta)^2 \Exp{ \sqnorm{  u^{t-1} - v^{t-1}  + \eta (v^{t-1} - \nabla f(x^{t}) )  } } +  \frac{\eta^2 \sigma^2}{n}  \notag \\
		& \leq  & (1 - \eta)^2 \rb{ 1 + \fr{\eta}{2}} \Exp{ \sqnorm{  u^{t-1} - v^{t-1}  } }  \notag \\
		&& \qquad + \rb{1+\fr{2}{\eta} } \eta^2 \Exp{ \sqnorm{  v^{t-1} - \nabla f(x^{t}) }  } + \frac{\eta^2 \sigma^2}{n}   \notag \\
		& \leq  & (1 - \eta) \Exp{ \sqnorm{  u^{t-1} - v^{t-1} } }  + 3  \eta \Exp{ \sqnorm{  v^{t-1} - \nabla f(x^{t}) }  }  + \frac{\eta^2 \sigma^2}{n}   \notag  \\
		& \leq  & (1 - \eta) \Exp{ \sqnorm{  u^{t-1} - v^{t-1} } }  + 6 \eta \Exp{ \sqnorm{  v^{t-1} - \nabla f(x^{t-1}) }  } \notag \\
		&& \qquad  + 6 \eta \Exp{ \sqnorm{  \nabla f(x^{t})  - \nabla f(x^{t-1}) }  } + \frac{\eta^2 \sigma^2}{n}   \notag \\
		& \leq  & (1 - \eta) \Exp{ \sqnorm{  u^{t-1} - v^{t-1} } }  + 6 \eta \Exp{ \sqnorm{  v^{t-1} - \nabla f(x^{t-1}) }  } \notag \\
		&& \qquad  + 6 L^2 \eta \Exp{ \sqnorm{  x^t - x^{t-1} }  } + \frac{\eta^2 \sigma^2}{n}  \notag ,
	\end{eqnarray}
	where the first inequality holds Assumption~\ref{as:BV}, the second inequality holds by Young's inequality, and the last step uses smoothness of $f(\cdot)$ (Assumption~\ref{as:main}), which concludes the proof of \eqref{eq:lem_Q_rec_DM}.
\end{proof}

\subsection{Controlling the error of contractive compression and double  momentum estimator}

\begin{lemma}\label{le:EF21-DM} 
	Let Assumption~\ref{as:main} be satisfied, and suppose $\cC$ is a contractive compressor. For every $i = 1,\ldots, n$, let the sequences $\cb{v_i^{t}}_{t\geq0}$, $\cb{u_i^{t}}_{t\geq0}$, and $\cb{g_i^{t}}_{t\geq0}$ be updated via 
	\begin{eqnarray*}
		v_i^{t} &=&  v_i^{t-1} + \eta \rb{ \nabla f_i (x^{t}, \xi_i^{t} ) -  v_i^{t-1} } , \\
	u_i^{t} &=&  u_i^{t-1} + \eta \rb{ v_i^{t}  -  u_i^{t-1} } ,  \\
	g_i^{t} &=&  g_i^{t-1} +  \cC\rb{  u_i^t - g_i^{t-1}  } .
\end{eqnarray*}
	Then for every $i = 1,\ldots, n$ and $t\geq0$ it holds
	\begin{eqnarray}\label{eq:wtV_rec_DM} 
		%\wt V_{t+1}  &\leq& \rb{ 1-\fr{\al}{2} } \wt  V_t  + \fr{2\eta^2 }{\al} \wt P_t +  \fr{4\eta^2 \wL^2 }{\al}  R_t +   \eta^2 \sigma^2  .
		\Exp{ \sqnorm{g_i^t - u_i^{t}} }  &\leq& \rb{ 1-\fr{\al}{2} } \Exp{ \sqnorm{g_i^{t-1} - u_i^{t-1}} }  + \frac{6 \eta^2 }{\alpha}  \Exp{\sqnorm{u_i^{t-1} - v_i^{t-1} } }   \\
		&& \qquad + \frac{6 \eta^4 }{\alpha}  \Exp{\sqnorm{v_i^{t-1} - \nabla f_i(x^{t-1}, \xi_i^{t-1})} }  +  \frac{6 L_i^2 \eta^4 }{\alpha}   \Exp{\sqnorm{x^{t} - x^{t-1}} } +  \eta^4 \sigma^2 . \notag
	\end{eqnarray}
\end{lemma}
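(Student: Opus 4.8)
The plan is to follow the architecture of the single-momentum bound in Lemma~\ref{le:EF21}, but to insert the second momentum stage so that the fresh stochastic noise is filtered \emph{twice} before it reaches the compression error. Because of this double filtering the sample $\xi_i^t$ will enter the recursion with weight $\eta^2$ rather than $\eta$, and hence will contribute $\eta^4\sigma^2$ (instead of $\eta^2\sigma^2$) after squaring; this is exactly the gain recorded in \eqref{eq:wtV_rec_DM} and the mechanism behind the improved middle term of Corollary~\ref{cor:EF21-DM_conv}.

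First I would discard the compressor. Writing $g_i^t - u_i^t = (g_i^{t-1}-u_i^t)+\cC(u_i^t-g_i^{t-1})$ and taking expectation over the internal randomness of $\cC$, Definition~\ref{def:contractive_compressor} gives $\Exp{\sqnorm{g_i^t-u_i^t}}\le(1-\al)\Exp{\sqnorm{u_i^t-g_i^{t-1}}}$, as in the opening steps of Lemma~\ref{le:EF21}. Next I would substitute both update rules: using $u_i^t=u_i^{t-1}+\eta(v_i^t-u_i^{t-1})$ together with $v_i^t=v_i^{t-1}+\eta(\nabla f_i(x^t,\xi_i^t)-v_i^{t-1})$, the argument expands into $u_i^t-g_i^{t-1}=(u_i^{t-1}-g_i^{t-1})+\eta(v_i^{t-1}-u_i^{t-1})+\eta^2(\nabla f_i(x^t,\xi_i^t)-v_i^{t-1})$, which exhibits the promised $\eta^2$ weight on the new sample.

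I would then condition on $\xi_i^t$ and split $\nabla f_i(x^t,\xi_i^t)=\nabla f_i(x^t)+(\nabla f_i(x^t,\xi_i^t)-\nabla f_i(x^t))$. The zero-mean fluctuation is independent of everything else in the expression, so its cross term drops and it contributes only $\eta^4\Exp{\sqnorm{\nabla f_i(x^t,\xi_i^t)-\nabla f_i(x^t)}}\le\eta^4\sigma^2$ by Assumption~\ref{as:BV}, leaving a part that is deterministic given the history. To that deterministic part I would apply Young's inequality with $\rho=\al/2$, peeling off the contraction $(1-\al)(1+\rho)\sqnorm{g_i^{t-1}-u_i^{t-1}}\le(1-\tfrac{\al}{2})\sqnorm{g_i^{t-1}-u_i^{t-1}}$ and collecting the rest with weight $(1-\al)(1+\rho^{-1})\le 3/\al$. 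Splitting that remainder once more via $\sqnorm{a+b}\le2\sqnorm{a}+2\sqnorm{b}$ produces the $\tfrac{6\eta^2}{\al}\sqnorm{u_i^{t-1}-v_i^{t-1}}$ term and a gradient-approximation term of weight $\tfrac{6\eta^4}{\al}$; invoking $L_i$-smoothness of $f_i$ (Assumption~\ref{as:main}) to transport $\nabla f_i(x^t)$ back to the previous iterate then peels off the displacement term $\tfrac{6L_i^2\eta^4}{\al}\sqnorm{x^t-x^{t-1}}$ and leaves the residual gradient mismatch at step $t-1$, delivering \eqref{eq:wtV_rec_DM}.

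The hard part is not any individual inequality but keeping the noise bookkeeping honest: one must condition on $\xi_i^t$ at precisely the moment the expansion is written, so that the variance collapses to a clean $\eta^4\sigma^2$ \emph{without} picking up a spurious $1/\al$ factor, while simultaneously arranging the order of the Young splits and the choice $\rho=\al/2$ so that all three residual pieces emerge with the common constant $6/\al$ (respectively $6L_i^2/\al$). A secondary subtlety is the exact form of the gradient-mismatch residual: the natural smoothness step transports $\nabla f_i(x^t)$ to the deterministic $\nabla f_i(x^{t-1})$, and matching the stochastic $\nabla f_i(x^{t-1},\xi_i^{t-1})$ written in \eqref{eq:wtV_rec_DM} requires choosing the pivot in the final triangle inequality with care so as not to generate an extra variance term.
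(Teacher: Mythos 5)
Your proposal follows the same skeleton as the paper's proof: contract out the compressor via Definition~\ref{def:contractive_compressor}, expand $u_i^t-g_i^{t-1}=(u_i^{t-1}-g_i^{t-1})+\eta(v_i^{t-1}-u_i^{t-1})+\eta^2\rb{\nabla f_i(x^t,\xi_i^t)-v_i^{t-1}}$, condition on $\xi_i^t$ so the fresh noise decouples and contributes $(1-\al)\eta^4\sigma^2\le\eta^4\sigma^2$, then Young with $\rho=\al/2$ and smoothness. However, your final step as described does not deliver the constants of \eqref{eq:wtV_rec_DM}. After your first two-way split you are holding $2\beta\eta^4\,\Exp{\sqnorm{\nabla f_i(x^t)-v_i^{t-1}}}$, where $\beta\eqdef(1-\al)(1+\rho^{-1})$; to "transport $\nabla f_i(x^t)$ back to the previous iterate" you must perform \emph{another} Young split, which costs at least a further factor of $2$, landing you at $4\beta\eta^4\le\fr{12\eta^4}{\al}$ (with your bound $\beta\le\fr{3}{\al}$; $\fr{8\eta^4}{\al}$ with the sharper $\beta\le\fr{2}{\al}$) on \emph{both} the gradient-mismatch and the displacement terms. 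Your claim that the displacement is peeled off at weight $\fr{6L_i^2\eta^4}{\al}$ while the mismatch keeps weight $\fr{6\eta^4}{\al}$ implicitly uses $\sqnorm{a+b}\le\sqnorm{a}+\sqnorm{b}$, which is false. The paper avoids this by inserting the pivot \emph{before} splitting: it rewrites the deterministic part as the three-term sum
\begin{equation*}
\eta\rb{v_i^{t-1}-u_i^{t-1}}+\eta^2\rb{\nabla f_i(x^{t-1})-v_i^{t-1}}+\eta^2\rb{\nabla f_i(x^{t})-\nabla f_i(x^{t-1})}
\end{equation*}
and applies a single three-way Young, so every piece carries the common factor $3\beta\le\fr{6}{\al}$, after which smoothness converts the last piece into $\fr{6L_i^2\eta^4}{\al}\Exp{\sqnorm{x^t-x^{t-1}}}$. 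With this reordering your argument becomes exactly the paper's; without it, you prove a correct but strictly weaker recursion (constants $12/\al$ in place of $6/\al$), which would propagate into slightly worse absolute constants in Theorem~\ref{thm:EF21-DM-SGD}.

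Your closing concern about the pivot is well placed, and it resolves in your favor: the paper's own proof terminates with the \emph{deterministic} quantity $\Exp{\sqnorm{v_i^{t-1}-\nabla f_i(x^{t-1})}}$, and this is also how the recursion is consumed downstream, through $\wt P_t\eqdef\suminn\Exp{\sqnorm{v_i^t-\nabla f_i(x^t)}}$ in the proof of Theorem~\ref{thm:EF21-DM-SGD}. The stochastic gradient $\nabla f_i(x^{t-1},\xi_i^{t-1})$ appearing in the printed statement \eqref{eq:wtV_rec_DM} is an inconsistency between the lemma and its own proof, not a target your argument needs to hit; the natural smoothness step producing the deterministic gradient is the right one.
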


\begin{proof}
	By the update rules of $g_i^t$, $u_i^t$ and $v_i^t$, we derive
	\begin{eqnarray*}
		\Exp{\sqnorm{g_i^{t} - u_i^{t}} } & =& \Exp{ \sqnorm{ g_i^{t-1}  - u_i^{t}  +  \cC(  u_i^{t} - g_i^{t-1}  )  } }  \\
		%& =& \Exp{ \sqnorm{ g_i^{t-1}  - u_i^{t}  + \gamma \cC( ( u_i^{t} - g_i^{t-1}) \gamma^{-1} )  } }  \\
		%& =& \gamma^2 \Exp{ \Expu{\cC}{\sqnorm{ \cC( ( v_i^{t} - g_i^{t-1}) \gamma^{-1} ) - ( v_i^{t} - g_i^{t-1}) \gamma^{-1}   } } }  \\
		&\overset{(i)}{ \leq} & (1-\al) \Exp{ \sqnorm{  u_i^{t} - g_i^{t-1}  } }  \\
		&\overset{(ii)}{ = } & (1-\al) \Exp{ \sqnorm{  u_i^{t-1} - g_i^{t-1} + \eta ( v_i^{t-1} - u_i^{t-1} ) + \eta^2 (\nabla f_i (x^t, \xi_i^{t}) - v_i^{t-1} )  } }  \\
		& = & (1-\al) \mathbb E \big[ \|  u_i^{t-1} - g_i^{t-1} + \eta ( v_i^{t-1} - u_i^{t-1} ) + \eta^2 (\nabla f_i (x^t) - v_i^{t-1} )   \\
		&& \qquad  + \eta^2 (\nabla f_i (x^t, \xi_i^{t}) - \nabla f_i (x^t) ) \|^2  \big] \\
		& = & (1-\al) \mathbb E \big[ \| \mathbb E_{\xi_i^{t}}\big[ \|  u_i^{t-1} - g_i^{t-1} + \eta ( v_i^{t-1} - u_i^{t-1} ) + \eta^2 (\nabla f_i (x^t) - v_i^{t-1} )   \\
		&& \qquad  + \eta^2 (\nabla f_i (x^t, \xi_i^{t}) - \nabla f_i (x^t) ) \|^2  \big] \big] \\
		& = & (1-\al) \Exp{ \sqnorm{ u_i^{t-1} - g_i^{t-1} + \eta ( v_i^{t-1} - u_i^{t-1} ) + \eta^2 (\nabla f_i (x^t) - v_i^{t-1} )   }  } \\
		&& \qquad + (1 - \alpha) \eta^4 \Exp{\sqnorm{\nabla f_i(x^{t}, \xi_i^{t}) - \nabla f_i(x^{t})} } \\
		& \overset{(iii)}{\leq} & (1-\al) (1+\rho) \Exp{ \sqnorm{ u_i^{t-1} - g_i^{t-1}}} \notag \\
		&& \qquad +(1-\alpha) (1+\rho^{-1}) \Exp{\sqnorm{ \eta ( v_i^{t-1} - u_i^{t-1} ) + \eta^2 (\nabla f_i (x^t) - v_i^{t-1} )   }  } \\
		&& \qquad +  \eta^4 \sigma^2  \\
		&\overset{(iv)}{ = } & (1-\theta) \Exp{ \sqnorm{ u_i^{t-1} - g_i^{t-1}}}  +  \eta^4 \sigma^2 \notag \\
		&& \qquad + \beta \Exp{\sqnorm{ \eta ( v_i^{t-1} - u_i^{t-1} ) + \eta^2 (\nabla f_i (x^{t-1}) - v_i^{t-1} ) +  \eta^2 (\nabla f_i (x^{t}) - \nabla f_i (x^{t-1} )   }  }   \\
		&\overset{(v)}{ \leq} & \left(1 - \theta\right)  \Exp{ \sqnorm{u_i^{t-1} -  g_i^{t-1}} } + 3\beta \eta^2 \Exp{\sqnorm{ v_i^{t-1} - u_i^{t-1} } } \notag \\
		&& \qquad  + 3 \beta \eta^4 \Exp{\sqnorm{ v_i^{t-1} - \nabla f_i(x^{t-1}) } } \notag \\
		&& \qquad + 3 \beta \eta^4 \Exp{\sqnorm{ \nabla f_i(x^{t}) - \nabla f_i(x^{t-1}) } } + \eta^4 \sigma^2\\
		& \leq & \left(1 - \theta\right)  \Exp{ \sqnorm{u_i^{t-1} -  g_i^{t-1}} } + 3\beta \eta^2 \Exp{\sqnorm{ v_i^{t-1} - u_i^{t-1} } } \notag \\
		&& \qquad  + 3 \beta \eta^4 \Exp{\sqnorm{ v_i^{t-1} - \nabla f_i(x^{t-1}) } } \notag \\
		&& \qquad + 3 \beta L_i^2 \eta^4 \Exp{\sqnorm{ x^t - x^{t-1} } } + \eta^4 \sigma^2\\
	\end{eqnarray*}
	where $(i)$ is due to definition of a contractive compressor (Definition~\ref{def:contractive_compressor}), $(ii)$ follows by the update rule of $v_i^t$ and $u_i^t$, $(iii)$ and $(v)$ hold by Young's inequality for any $\rho >0$. In $(iv)$, we introduced the notation $\theta  \eqdef 1 - (1-\alpha)(1 + \rho)$, and $\beta  \eqdef (1-\alpha) (1 + \rho^{-1})$. The last step follows by smoothness of $f_i(\cdot)$ (Assumption~\ref{as:main}). The proof is complete by the choice $\rho = \alpha / 2$, which guarantees $1-\theta \leq 1-\alpha/2$, and $3 \beta \leq 6/\alpha$ . 
\end{proof}

\clearpage
\section{EF21-SGDM with Absolute Compressor}\label{sec:appendix_abs}

In this section, we complement our theory by analyzing \algname{EF21-SGDM} under a different class of widely used biased compressors, namely, absolute compressors, which are defined as follows.

\begin{definition}[Absolute compressors]\label{def:absolute_compressor}
	We say that a (possibly randomized) map $\cC: \R^{d} \rightarrow \R^{d}$ is an {\em absolute compression operator} if   there exists a constant $\Delta > 0$ such that	
	\begin{eqnarray}\label{eq:abs_compressor}
		\Exp{\|\cC(x) - x\|^{2}} \leq \Delta^{2}, \qquad \forall x\in \R^d.
	\end{eqnarray}
\end{definition}

This class includes important examples of compressors such as hard-threshold sparsifier \citep{Sahu_abs_comp_2021}, (stochatsic) rounding schemes with bounded error \citep{Gupta_DL_limited_presition_2015} and scaled integer rounding \citep{Sapio_Scaling_Dist_ML_2021}.

 \begin{algorithm}[H]
	\centering
	\caption{\algname{EF21-SGDM (abs)}}\label{alg:EF21-SGDM-ABS}
	\begin{algorithmic}[1]
		%\State \textbf{Input:} $x^{0}$,  $v_i^{0} = g_i^{0} = \nabla f_i(x^0)$ for $i=1,\dots, n$, step-size $\gamma>0$, parameter $\eta \in (0, 1]$
		\State \textbf{Input:} starting point $x^{0}$, step-size $\gamma>0$, momentum $\eta \in (0, 1],$ initial batch size $B_{\textnormal{init}} $
		\State Initialize $v_i^{0} = g_i^{0} = \frac{1}{B_{\textnormal{init}}} \sum_{j=1}^{B_{\textnormal{init}}} \nabla f_{i}(x^0, \xi_{i, j}^{0})$ for  $ i = 1, \ldots,  n$; $g^{0} = \frac{1}{n} \sum_{i=1}^n  g_i^{0}$
		\For{$t=0,1, 2, \dots , T-1 $}
		\State Master computes $x^{t+1} = x^t - \gamma g^t$ and broadcasts $x^{t+1}$ to all nodes 
		\For{{\bf all nodes $i =1,\dots, n$ in parallel}}
		%\State Compute a stochastic gradient  $\nabla f_{i}(\xtpo, \xi_{i}^{t+1})$
		\State \textcolor{mygreen}{ Compute momentum estimator $v_i^{t+1} = (1-\eta) v_i^{t}+ \eta  \nabla f_{i}(x^{t+1}, \xi_{i}^{t+1})  $ }
		\State Compress $c_i^{t+1} =  \cC\rb{ \frac{ v_i^{t+1} - g_i^{t}} {\gamma} } $ and send $c_i^{t+1} $ to the master
		\State Update local state $g_i^{t+1} = g_i^{t} + \gamma c_i^{t+1}$
		\EndFor
		\State Master computes $g^{t+1} = \frac{1}{n} \sum_{i=1}^n  g_i^{t+1}$ via  $g^{t+1} = g^{t} + \frac{1}{n} \sum_{i=1}^n \gamma  c_i^{t+1} $
		\EndFor
	\end{algorithmic}
\end{algorithm}

To accomodate absolute compressors into our \algname{EF21-SGDM} method, we need to make a slight modification to our algorithm, see Algorithm~\ref{alg:EF21-SGDM-ABS}. At each iteration, before compressing the difference $ v_i^{t+1} - g_i^{t}$, we divide it by the step-size $\gamma $. Later, we multiply the compressed vector $c_i^{t+1}$ by $\gamma$, i.e., have $$g_i^{t+1} = g_i^{t} + \gamma \, \cC\rb{ \fr{ v_i^{t+1} - g_i^{t} } {\gamma} } .$$ Such modification is necessary for absolute compressors because by Definition~\ref{def:absolute_compressor} the compression error is not proportional to $\sqnorm{x}$, but merely an absolute constant $\Delta^2$. In fact, Algorithm~\ref{alg:EF21-SGDM-ABS} is somewhat more universal in the sense that it can be also applied for contractive compressors.\footnote{It is straightforward to modify the proof of our Theorem~\ref{thm:main-distrib} for the case when Algorithm~\ref{alg:EF21-SGDM-ABS} is applied with a contractive compressor.} We derive the following result for \algname{EF21-SGDM (abs)}.

\begin{theorem}\label{thm:ef21-m-abs-comp}
	Let Assumptions~\ref{as:main} and \ref{as:BV} hold. Let $\hat x^T$ be sampled uniformly at random from the iterates of the method. Let Algorithm~\ref{alg:EF21-SGDM-ABS} run with an absolute compressor (Definition~\ref{def:absolute_compressor}). For all $\eta \in (0, 1]$ and $B_{\textnormal{init}} \geq 1$, with $\gamma \leq  \fr{\eta}{4 L},$ we have
	\begin{align}
		\label{eq:main-distrib:general-ABS}
		\Exp{  \sqnorm{\nabla f(\hat x^T) }  } \leq \cO\rb{\fr{\Psi_0}{\gamma T} + \gamma^2 \Delta^2  + \frac{\eta \sigma^2}{n}},
	\end{align}
	where $\Psi_0 \eqdef \delta_0 + \frac{\gamma}{\eta}\Exp{\norm{v^0 - \nabla f(x^0)}^2} $ is a Lyapunov function. With the following step-size, momentum parameter, and initial batch size 
	\begin{equation} \label{eq:ef21-m-abs-stepsize}
		\gamma =  \fr{\eta}{4 L}  , \qquad  \eta = \min\cb{ 1 , \rb{ \fr{  L^3 \delta_0  }{\Delta^2 T}}^{\nfr{1}{3}}, \rb{ \fr{ L \delta_0 n  }{\sigma^2 T}}^{\nfr{1}{2} } }, \qquad B_{init} = \frac{\sigma^2}{L \delta_0 n }
	\end{equation}
we have 
	\begin{eqnarray*}
		\Exp{  \sqnorm{\nabla f(\hat x^T) }  } &\leq&  
		\cO\rb{ \fr{ L \delta_0}{ T }  +  \rb{\fr{\delta_0 \Delta }{T }}^{\nfr{2}{3}}   + \rb{\fr{  L \delta_0 \sigma^2}{ n T } }^{\nfr{1}{2}} }  .
	\end{eqnarray*}
	
\end{theorem}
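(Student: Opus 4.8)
The plan is to mirror the descent-lemma-plus-recursion framework used in the proof of Theorem~\ref{thm:main-distrib}, while exploiting a crucial simplification afforded by the absolute compressor together with the rescaling built into Algorithm~\ref{alg:EF21-SGDM-ABS}. As in that proof, I would set $x^{t+1}=x^t-\gamma g^t$ with $g^t=\suminn g_i^t$, introduce the averaged momentum sequence $v^t\eqdef\suminn v_i^t$, and split the gradient-estimation error as
\[
\sqnorm{g^t-\nabla f(x^t)}\le 2\sqnorm{g^t-v^t}+2\sqnorm{v^t-\nabla f(x^t)},
\]
controlling the compression error (first term) and the momentum error (second term) separately.

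The decisive observation is that for the scaled update $g_i^t=g_i^{t-1}+\gamma\,\cC\rb{(v_i^t-g_i^{t-1})/\gamma}$ one has, with $w_i^t\eqdef(v_i^t-g_i^{t-1})/\gamma$, the identity $g_i^t-v_i^t=\gamma\rb{\cC(w_i^t)-w_i^t}$, so Definition~\ref{def:absolute_compressor} gives immediately $\Exp{\sqnorm{g_i^t-v_i^t}}\le\gamma^2\Delta^2$ for every $i$ and $t\ge1$ (the term vanishes at $t=0$ since $g_i^0=v_i^0$). By convexity of $\sqnorm{\cdot}$ this yields $\Exp{\sqnorm{g^t-v^t}}\le\gamma^2\Delta^2$. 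Unlike the contractive case, no geometric recursion for the compression error is needed, and in particular the per-node momentum recursion \eqref{eq:wtP_rec} is no longer required --- only the averaged recursion \eqref{eq:P_rec} of Lemma~\ref{le:key_HB_recursion}. Summing \eqref{eq:P_rec} over $t=0,\dots,T-1$ and writing $P_t\eqdef\Exp{\sqnorm{v^t-\nabla f(x^t)}}$, $R_t\eqdef\Exp{\sqnorm{x^{t+1}-x^t}}$ gives
\[
\frac1T\sum_{t=0}^{T-1}P_t\le\frac{3L^2}{\eta^2}\frac1T\sum_{t=0}^{T-1}R_t+\frac{\eta\sigma^2}{n}+\frac{P_0}{\eta T}.
\]

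Next I would invoke the descent lemma (Lemma~\ref{le:descent}); since $\gamma\le\eta/(4L)\le 1/(2L)$, the coefficient of $\sqnorm{x^{t+1}-x^t}$ is at least $1/(4\gamma)$. Summing over $t$ (telescoping $\delta_t$), dividing by $\gamma/2$, and inserting the two error bounds gives
\[
\frac1T\sum_{t=0}^{T-1}\Exp{\sqnorm{\nabla f(x^t)}}\le\frac{2\delta_0}{\gamma T}+2\gamma^2\Delta^2+\frac{2\eta\sigma^2}{n}+\frac{2P_0}{\eta T}-\frac{1}{\gamma^2}\rb{\frac12-\frac{6\gamma^2L^2}{\eta^2}}\frac1T\sum_{t=0}^{T-1}R_t.
\]
The constraint $\gamma\le\eta/(4L)$ forces $6\gamma^2L^2/\eta^2\le 3/8<1/2$, so the $R_t$ terms --- the only place the displacement reappears --- are absorbed and dropped; grouping $\delta_0$ and $\tfrac{\gamma}{\eta}P_0$ into $\Psi_0$ produces exactly \eqref{eq:main-distrib:general-ABS}.

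For the explicit rate I would substitute $\gamma=\eta/(4L)$ to obtain $\cO\rb{\tfrac{L\delta_0}{\eta T}+\tfrac{\eta^2\Delta^2}{L^2}+\tfrac{\eta\sigma^2}{n}}$, after noting that $B_{\textnormal{init}}=\sigma^2/(L\delta_0 n)$ forces $P_0\le\sigma^2/(nB_{\textnormal{init}})=L\delta_0$ and hence $\Psi_0=\cO(\delta_0)$. Balancing the three terms over $\eta\in(0,1]$ --- the first against the second gives $\eta=(L^3\delta_0/(\Delta^2T))^{1/3}$, the first against the third gives $\eta=(L\delta_0n/(\sigma^2T))^{1/2}$ --- singles out the minimum of $1$ and these two quantities, matching \eqref{eq:ef21-m-abs-stepsize}, and back-substitution yields the claimed $\cO\rb{\tfrac{L\delta_0}{T}+(\tfrac{\delta_0\Delta}{T})^{2/3}+(\tfrac{L\delta_0\sigma^2}{nT})^{1/2}}$. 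I expect the only genuinely delicate point to be the bookkeeping that makes the displacement terms $R_t$ cancel; this is precisely where the absolute-compressor simplification pays off, since without the coupled error-feedback recursion the coefficient in front of $R_t$ is simply $3L^2/\eta^2$ rather than the compressor-dependent expression of Theorem~\ref{thm:main-distrib}, so the clean condition $\gamma\le\eta/(4L)$ suffices and the remaining parameter tuning is routine.
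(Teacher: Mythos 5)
Your proposal is correct and follows essentially the same route as the paper's own proof: the same splitting into compression error plus momentum error, the same one-line absolute-compressor bound $\Exp{\sqnorm{g_i^t - v_i^t}} \leq \gamma^2\Delta^2$ (this is exactly Lemma~\ref{le:EF21_Cabs}), the averaged momentum recursion \eqref{eq:P_rec} of Lemma~\ref{le:key_HB_recursion}, the descent lemma with the displacement terms $R_t$ absorbed under $\gamma \leq \nicefrac{\eta}{(4L)}$, and the identical tuning of $\gamma$, $\eta$, and $B_{\textnormal{init}}$. Your observations that no geometric recursion is needed for the compression error and that only the averaged (not per-node) momentum recursion is required are precisely the simplifications the paper exploits.
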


\begin{corollary}\label{cor:ef21sgdm-abs}
	Under the setting of Theorem~\ref{thm:ef21-m-abs-comp}, we have  $\Exp{\norm{\nabla f(\hat{x}^T) }} \leq \varepsilon$ after $T = \cO\rb{ \fr{L \delta_0 }{ \varepsilon^2} + \fr{\Delta \delta_0 }{ \varepsilon^{3}}  +  \fr{ \sigma^2 L \delta_0   }{n \varepsilon^4} }$ iterations.
\end{corollary}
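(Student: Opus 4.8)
The plan is to derive Corollary~\ref{cor:ef21sgdm-abs} directly from the rate bound already established in Theorem~\ref{thm:ef21-m-abs-comp}; no new algorithmic analysis is required, and the task reduces to inverting the convergence rate to read off the iteration count.

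First I would invoke Theorem~\ref{thm:ef21-m-abs-comp} with exactly the step-size, momentum, and initial batch size prescribed in \eqref{eq:ef21-m-abs-stepsize}, which yields
$$\Exp{\sqnorm{\nabla f(\hat x^T)}} \leq \cO\rb{\fr{L\delta_0}{T} + \rb{\fr{\delta_0 \Delta}{T}}^{\nfr{2}{3}} + \rb{\fr{L\delta_0 \sigma^2}{nT}}^{\nfr{1}{2}}}.$$
Since stationarity is measured through $\Exp{\norm{\nabla f(\hat x^T)}}$ rather than its square, I would then apply Jensen's inequality, $\Exp{\norm{\nabla f(\hat x^T)}} \leq \sqrt{\Exp{\sqnorm{\nabla f(\hat x^T)}}}$, so that it suffices to drive the right-hand side above below $\varepsilon^2$ (up to the absolute constants absorbed into the $\cO$).

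Next I would force each of the three additive terms to be at most $\varepsilon^2$ individually. The first term obeys $L\delta_0/T \leq \varepsilon^2$ as soon as $T \gtrsim L\delta_0/\varepsilon^2$; the second obeys $(\delta_0\Delta/T)^{\nfr{2}{3}} \leq \varepsilon^2$, i.e.\ $\delta_0\Delta/T \leq \varepsilon^3$, as soon as $T \gtrsim \Delta\delta_0/\varepsilon^3$; and the third obeys $(L\delta_0\sigma^2/(nT))^{\nfr{1}{2}} \leq \varepsilon^2$, i.e.\ $L\delta_0\sigma^2/(nT) \leq \varepsilon^4$, as soon as $T \gtrsim \sigma^2 L\delta_0/(n\varepsilon^4)$. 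Taking $T$ to be a constant multiple of the maximum of these three thresholds (equivalently, their sum) makes the entire right-hand side $\cO(\varepsilon^2)$, hence $\Exp{\norm{\nabla f(\hat x^T)}} \leq \varepsilon$, and produces precisely
$$T = \cO\rb{\fr{L\delta_0}{\varepsilon^2} + \fr{\Delta\delta_0}{\varepsilon^3} + \fr{\sigma^2 L\delta_0}{n\varepsilon^4}}.$$

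There is essentially no hard step here: the only points requiring minor care are the use of Jensen's inequality to pass from the squared-norm guarantee to the norm guarantee, and the standard observation that a finite sum of nonnegative terms is bounded by $\varepsilon^2$ once each summand is, so the required $T$ is the maximum of the per-term thresholds. One should also verify consistency with the constraints on $\eta$ and $B_{\textnormal{init}}$ in \eqref{eq:ef21-m-abs-stepsize}; since the corollary inherits exactly the parameter setting of Theorem~\ref{thm:ef21-m-abs-comp}, these hold automatically and contribute only $\varepsilon$-independent additive terms that are dominated by the displayed complexity.
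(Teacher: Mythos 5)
Your proposal is correct and is exactly how the paper obtains Corollary~\ref{cor:ef21sgdm-abs}: the corollary is stated as an immediate inversion of the rate in Theorem~\ref{thm:ef21-m-abs-comp}, requiring each of the three terms to be $\cO(\varepsilon^2)$ and passing from $\Exp{\sqnorm{\nabla f(\hat x^T)}}$ to $\Exp{\norm{\nabla f(\hat x^T)}}$ via Jensen's inequality. No gaps; your handling of the $T$-dependent parameter choices is also consistent with the theorem's statement.
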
	

	\begin{remark}
		The sample complexity result in Corollary~\ref{cor:ef21sgdm-abs} matches the one derived for \algname{DoubleSqueeze} algorithm~\citep{DoubleSqueeze}, which is different from Algorithm~\ref{alg:EF21-SGDM-ABS}. 
	\end{remark}

\begin{proof}
	Similarly to the proof of Theorem~\ref{thm:main-distrib}, we control the error between $g^t$ and $\nabla f (x^t)$ by decomposing it into two terms
	$$
	\sqnorm{g^t - \nabla f(x^t)} \leq 2 \sqnorm{g^t - v^t} + 2 \sqnorm{v^t - \nabla f(x^t)} \leq 2 \suminn \sqnorm{g_i^t - v_i^t} + 2 \sqnorm{v^t - \nabla f(x^t)} . 
	$$
	Again, for the second term above we can use the recursion developed for momentum estimator Lemma~\ref{le:key_HB_recursion}. However, since we use a different compressor here, we need to bound $\sqnorm{g_i^t - v_i^t}$ term differently, thus we invoke Lemma~\ref{le:EF21_Cabs} for absolute compressor.  
	
	\textbf{Part I. Controlling  the error of momentum estimator on average for $v^t$.}
	Denote $P_t  \eqdef   \Exp{ \sqnorm{ v^t - \nabla f(x^{t})} }$, $R_t  \eqdef \Exp{\sqnorm{x^{t} - x^{t+1} } } $. Similarly to Part I  of the proof of Theorem~\ref{thm:main-distrib}, we have by Lemma~\ref{le:key_HB_recursion}
	\begin{eqnarray}\label{eq:mom_est_avg_dist_ABS}
		\frac{1}{T} \sum_{t=0}^{T-1} P_t  \leq  \fr{3 L^2 }{\eta^2 } \frac{1}{T} \sum_{t=0}^{T-1}  R_t +  \frac{\eta \sigma^2 }{n}  + \frac{1}{\eta T}  P_0 .
	\end{eqnarray}
	
	\textbf{Part II. Controlling the error of absolute compressor and momentum estimator.} By Lemma~\ref{le:EF21_Cabs} we have for any $0 < \eta \leq 1$ and $t\geq 0$
	\begin{eqnarray}\label{eq:var_abs_compressor}
		\wt V_t  \eqdef \suminn \Exp{\sqnorm{g_i^{t} - v_i^{t}} }  \leq \gamma^2 \Delta^2  .
	\end{eqnarray}
	\textbf{Part III. Combining steps I and II with descent lemma.}	
	By smoothness (Assumption~\ref{as:main}) of $f(\cdot)$ it follows from Lemma~\ref{le:descent} that for any $\gamma \leq 1/(2 L) $ we have 
	\begin{eqnarray}\label{eq:descent_ef21-M_dist_ABS}
		f(x^{t+1}) &\leq& f(x^{t}) - \fr{\g}{2} \sqnorm{\nabla f(x^t)}  - \frac{1}{4\gamma} \sqnorm{x^{t+1} - x^t } + \frac{\gamma}{2} \sqnorm{g^t - \nabla f(x^t) } \\
		&\leq& f(x^{t}) - \fr{\g}{2} \sqnorm{\nabla f(x^t)}  -  \frac{1}{4\gamma} \sqnorm{x^{t+1} - x^t } + \gamma \wt V_t + \gamma P_t   . \notag
	\end{eqnarray}
	Subtracting $f^*$ from both sides of \eqref{eq:descent_ef21-M_dist_ABS}, taking expectation and defining $\delta_t  \eqdef \Exp{f(x^t) - f^*}$, we derive
	\begin{eqnarray}
		\Exp{\sqnorm{\nabla f(\hat x^T)}} &=& \frac{1}{T} \sum_{t=0}^{T-1} \Exp{\sqnorm{\nabla f(x^t)}} \notag \\
		&\leq&  \fr{2 \delta_0 }{\gamma T} + 2 \frac{1}{T}\sum_{t=0}^{T-1} \wt V_t +  2 \frac{1}{T} \sum_{t=0}^{T-1} P_t  - \frac{1 }{2\gamma^2} \frac{1}{T} \sum_{t=0}^{T-1} R_t \notag  \\
		&\overset{(i)}{\leq}&  \fr{2 \delta_0 }{\gamma T} + 2 \gamma^2 \Delta^2 +  2 \frac{1}{T} \sum_{t=0}^{T-1} P_t  - \frac{1 }{2\gamma^2} \frac{1}{T} \sum_{t=0}^{T-1} R_t \notag  \\
		&\overset{(ii)}{\leq}&  \fr{2 \delta_0 }{\gamma T} +  2 \gamma^2 \Delta^2 + \rb{  \fr{6 L^2 }{\eta^2 }   - \frac{1 }{2\gamma^2} }  \frac{1}{T} \sum_{t=0}^{T-1} R_t  +  \frac{2\eta \sigma^2 }{n}  + \frac{1}{\eta T}  P_0 \notag  \\
		&\leq&  \fr{2 \delta_0 }{\gamma T} +  2 \gamma^2 \Delta^2  +  \frac{2\eta \sigma^2 }{n}  + \frac{1}{\eta T}  P_0 \notag  \\
	\end{eqnarray}
	where in $(i)$ and $(ii)$ we apply \eqref{eq:mom_est_avg_dist_ABS}, \eqref{eq:var_abs_compressor}, and in the last step we use the assumption on the step-size $\gamma \leq \eta / (4 L)$.
	
	Setting $\gamma =  \fr{\eta}{4 L} $, and taking $\eta \leq  \rb{ \fr{  L^3 \delta_0  }{\Delta^2 T}}^{\nfr{1}{3}}$ we can ensure that $\frac{\eta^2 \Delta^2}{L^2} \leq \frac{L \delta_0}{\eta T}$, since $ \eta \leq \rb{ \fr{ L \delta_0 n  }{\sigma^2 T}}^{\nfr{1}{2} } $ we have $\frac{\eta \sigma^2}{n} \leq \frac{L \delta_0}{\eta T}$. Finally, by setting the initial batch-size to $B_{init} = \frac{\sigma^2}{L \delta_0 n }$, we have $\frac{1}{\eta T} P_0 = \frac{\sigma^2}{\eta T n B_{init}} \leq \frac{L \delta_0}{\eta T}$. Therefore, we derive 
	\begin{eqnarray}
		\fr{1}{T}\sum_{t=0}^{T-1} \Exp{  \sqnorm{\nabla f(x^t) }  } &\leq&  
		\fr{2\delta_0}{\gamma T }  + 2 \gamma^2 \Delta^2      + \fr{2 \eta \sigma^2}{n}  + \frac{1}{\eta T}  P_0 \notag \\
		&=&  
		\fr{8 L \delta_0}{\eta T }  +  \fr{\eta^2 \Delta^2}{8 L^2 }      + \fr{2 \eta \sigma^2}{n}  + \frac{\sigma^2}{\eta T B_{init}}   \notag \\
		&=&  
		\cO\rb{ \fr{ L \delta_0}{ T }  +  \fr{\delta_0^{\nfr{2}{3}} \Delta^{\nfr{2}{3}}}{T^{\nfr{2}{3}} }   + \fr{\sigma (L\delta_0)^{\nfr{1}{2}} }{(n T)^{\nfr{1}{2}}} }  . \notag \\
	\end{eqnarray}
	
\end{proof}

\subsection{Controlling the error of absolute compression}

\begin{lemma}\label{le:EF21_Cabs} 
	Let $\cC$ be an absolute compressor and $g_i^{t+1}$ be updated according to Algorithm~\ref{alg:EF21-SGDM-ABS}, then for $t \geq 0$, we have  $\suminn \Exp{\sqnorm{g_i^{t} - v_i^{t}} }  \leq \gamma^2 \Delta^2  $.
	%\begin{eqnarray}\label{eq:EF21_Cabs} 
%		\wt V_{t}  &\leq& \gamma^2 \Delta^2  .
%	\end{eqnarray}
\end{lemma}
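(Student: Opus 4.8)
The plan is to exploit the specific scaling built into Algorithm~\ref{alg:EF21-SGDM-ABS}: the difference $v_i^{t+1} - g_i^t$ is divided by $\gamma$ before compression, and the compressed output is multiplied by $\gamma$ before being added to $g_i^t$. First I would write out the state update $g_i^{t+1} = g_i^{t} + \gamma\, \cC\!\left(\frac{v_i^{t+1} - g_i^{t}}{\gamma}\right)$ and subtract $v_i^{t+1}$ from both sides to obtain the identity
\[
g_i^{t+1} - v_i^{t+1} = \gamma\left[ \cC\!\left(\frac{v_i^{t+1} - g_i^{t}}{\gamma}\right) - \frac{v_i^{t+1} - g_i^{t}}{\gamma} \right].
\]

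The key step is then to apply the absolute-compressor inequality \eqref{eq:abs_compressor} from Definition~\ref{def:absolute_compressor} with the argument $x = (v_i^{t+1} - g_i^{t})/\gamma$, which bounds the bracketed quantity's squared norm in expectation by $\Delta^2$ regardless of the magnitude of $v_i^{t+1} - g_i^{t}$. Taking squared norms and expectations in the identity above thus yields $\Exp{\sqnorm{g_i^{t+1} - v_i^{t+1}}} \leq \gamma^2 \Delta^2$ for every $i$ and every $t \geq 0$.

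Finally I would handle the base case: by the initialization $g_i^0 = v_i^0$ in Algorithm~\ref{alg:EF21-SGDM-ABS}, we have $\Exp{\sqnorm{g_i^0 - v_i^0}} = 0 \leq \gamma^2 \Delta^2$, so the per-index bound holds for all $t \geq 0$. Averaging over $i = 1,\dots,n$ preserves the upper bound, giving $\suminn \Exp{\sqnorm{g_i^t - v_i^t}} \leq \gamma^2 \Delta^2$, which is the claim.

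There is essentially no analytical obstacle here---the statement follows from a one-line algebraic manipulation combined with the defining property of absolute compressors. The only point requiring care is noticing that the $1/\gamma$ rescaling inside the compressor is precisely what converts the constant absolute error $\Delta^2$ into the $\gamma^2\Delta^2$ bound on $\sqnorm{g_i^t - v_i^t}$; this is exactly the modification (relative to the contractive-compressor Algorithm~\ref{alg:EF21-M}) that makes the descent argument in the proof of Theorem~\ref{thm:ef21-m-abs-comp} go through, since there the compression-error term enters multiplied by $\gamma$ in \eqref{eq:descent_ef21-M_dist_ABS} and must therefore scale with $\gamma^2$ to be absorbed.
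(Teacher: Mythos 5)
Your proof is correct and takes essentially the same route as the paper: both rest on the identity $g_i^{t+1} - v_i^{t+1} = \gamma\rb{ \cC\rb{ \fr{v_i^{t+1} - g_i^{t}}{\gamma} } - \fr{v_i^{t+1} - g_i^{t}}{\gamma} }$ followed by a direct application of Definition~\ref{def:absolute_compressor} with argument $(v_i^{t+1} - g_i^{t})/\gamma$, yielding the $\gamma^2\Delta^2$ bound. Your explicit handling of the base case $t=0$ via the initialization $g_i^0 = v_i^0$ (and the averaging over $i$) is a small completeness point that the paper's proof leaves implicit, since its displayed bound formally covers only $t \geq 1$.
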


\begin{proof}
	By the update rule for $g_i^{t+1}$ in Algorithm~\ref{alg:EF21-SGDM-ABS} and  Definition~\ref{def:absolute_compressor}, we can bound
	\begin{eqnarray*}
		\Exp{\sqnorm{g_i^{t+1} - v_i^{t+1}} } & =& \Exp{ \sqnorm{ \gamma \cC\rb{ \fr{v_i^{t+1} - g_i^t}{\gamma} } - (v_i^{t+1} - g_i^t ) } }  \\
		& =& \gamma^2 \Exp{ \sqnorm{  \cC\rb{ \fr{v_i^{t+1} - g_i^t}{\gamma} } - \fr{ v_i^{t+1} - g_i^t }{\gamma} } }  \leq  \gamma^2 \Delta^2 .
	\end{eqnarray*}
\end{proof}

\clearpage
\section{EF21-STORM/MVR }\label{sec:appendix_STORM}

\begin{algorithm}[H]
	\centering
	\caption{\algname{EF21-STORM/MVR}}\label{alg:EF21-STORM}
	\begin{algorithmic}[1]
		\State \textbf{Input:} $x^{0}$,  step-size $\gamma>0$, parameter $\eta \in (0, 1]$, $B_{\textnormal{init}}\geq1$
		\State Initialize $w_i^{0} = g_i^{0} = \frac{1}{B_{\textnormal{init}}} \sum_{j=1}^{B_{\textnormal{init}}} \nabla f_{i}(x^0, \xi_{i, j}^{0})$ for  $ i = 1, \ldots,  n$; $g^{0} = \frac{1}{n} \sum_{i=1}^n  g_i^{0}$
		\For{$t=0,1, 2, \dots , T-1 $}
		\State Master computes $x^{t+1} = x^t - \gamma g^t$ and broadcasts $x^{t+1}$ to all nodes 
		\For{{\bf all nodes $i =1,\dots, n$ in parallel}}
		\State Draw $\xi_i^{t+1}$ and compute two (stochastic) gradients $\nabla f_i(x^{t}, \xi_i^{t+1})$ and $\nabla f_i(x^{t+1}, \xi_i^{t+1})$
		\State \textcolor{mygreen}{ Compute variance reduced STORM/MVR estimator 
		\State $w_i^{t+1} =   \nabla f_{i}(x^{t+1}, \xi_{i}^{t+1})  +  (1-\eta) ( w_i^{t} - \nabla f_i(x^{t}, \xi_i^{t+1}) )  $ }
		\State Compress $c_i^{t+1} =  \cC( w_i^{t+1} - g_i^{t} )$ and send $c_i^{t+1} $ to the master
		\State Update local state $g_i^{t+1} = g_i^{t} +  c_i^{t+1}$
		\EndFor
		\State Master computes $g^{t+1} = \frac{1}{n} \sum_{i=1}^n  g_i^{t+1}$ via  $g^{t+1} = g^{t} + \frac{1}{n} \sum_{i=1}^n   c_i^{t+1} $
		\EndFor
	\end{algorithmic}
\end{algorithm}

\begin{assumption}[Individual smoothness\footnote{This assumption can be also relaxed to so-called expected smoothness.}]\label{as:ind_smoothness}	 
	For each $i = 1, \ldots, n$, every realization of $\xi_i \sim \cD_i $, the stochastic gradient $\nabla f_{i}(x,\xi_i)$ is $\ell_i$-Lipschitz, i.e.,  for all $  x, y \in \R^d$
	$$\norm{\nabla f_{i}(x,\xi_i) - \nabla f_{i}(y, \xi_i)} \le \ell_i\norm{x - y} .$$
	We denote $\wt \ell^2 \eqdef \suminn \ell_i^2$	
\end{assumption}

\begin{theorem}\label{thm:ef21_storm}
	Let Assumptions~\ref{as:main}, \ref{as:BV} and \ref{as:ind_smoothness} hold. Let $\hat x^T$ be sampled uniformly at random from the iterates of the method. Let Algorithm~\ref{alg:EF21-STORM} run with a contractive compressor. For all $\eta \in (0, 1]$ and $B_{\textnormal{init}} \geq 1$, with $\gamma \leq \min\cb{\fr{\alpha}{8 \wL}, \fr{\sqrt{\al}}{6\wt{\ell}},  \fr{\sqrt{n \eta}}{8 \wt \ell}  },$ we have
	\begin{align}
		\label{eq:ef21_storm_eta}
		\Exp{  \sqnorm{\nabla f(\hat x^T) }  } \leq \cO\rb{\fr{\Psi_0}{\gamma T} + \fr{\eta^3 \sigma^2 }{\al^2}  +  \fr{\eta^2 \sigma^2 }{\al} + \frac{\eta \sigma^2}{n}},
	\end{align}
	where $\Psi_0 \eqdef \delta_0 + \frac{\gamma}{\eta}\Exp{\norm{v^0 - \nabla f(x^0)}^2} + \frac{\gamma \eta}{\alpha^2} \frac{1}{n} \sum_{i=1}^n \Exp{\norm{v^0_i - \nabla f_i(x^0)}^2}$. % is a Lyapunov function. 
	With the following step-size, momentum parameter, and initial batch size  
	\begin{equation} \label{eq:storm-stepsize}
		\gamma = \min\cb{\fr{\alpha}{8 \wL}, \fr{\sqrt{\al}}{6\wt{\ell}}, \fr{\sqrt{n \eta}}{8 \wt \ell}  } , \quad  \eta = \min\cb{ \alpha  ,  \rb{ \fr{  \wt \ell \delta_0 \al^2 }{\sigma^2 \sqrt{n} T}}^{\nfr{2}{7} }, \rb{ \fr{  \wt \ell \delta_0 \al }{\sigma^2 \sqrt{n} T}}^{\nfr{2}{5} }, \rb{ \fr{  \wt \ell \delta_0 \sqrt{n}  }{\sigma^2 T}}^{\nfr{2}{3} } }  \notag , 
	\end{equation}
  	and $B_{\textnormal{init}} = \max\cb{ \frac{\sigma^2}{L \delta_0 n} , \frac{\alpha n }{T}}$, we have 
	\begin{eqnarray*}
		\Exp{  \sqnorm{\nabla f(\hat x^T) }  } &\leq&  
		\cO\rb{ \fr{ \wL \delta_0}{ \al  T } + \fr{ \wt{\ell} \delta_0}{ \sqrt{\al}  T } + \rb{\fr{  \wt \ell \delta_0 \sigma^{\nicefrac{1}{3}} }{\alpha^{\nicefrac{1}{3}} \sqrt{n}  T } }^{\nfr{6}{7}}  + \rb{\fr{  \wt \ell \delta_0 \sigma^{\nfr{1}{2}} }{\alpha^{\nicefrac{1}{4}} \sqrt{n}  T } }^{\nfr{4}{5}} + \rb{\fr{  \wt \ell \delta_0 \sigma }{ n  T } }^{\nfr{2}{3}}  } .
	\end{eqnarray*}
\end{theorem}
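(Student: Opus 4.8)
The plan is to mirror the three-part structure of the proof of Theorem~\ref{thm:main-distrib}, keeping the contractive-compressor (EF21) machinery intact but replacing the Polyak-momentum recursions of Lemmas~\ref{le:key_HB_recursion} and~\ref{le:EF21} by STORM/MVR analogs; the individual-smoothness Assumption~\ref{as:ind_smoothness} enters exactly at the point where the estimator moves between $x^t$ and $x^{t+1}$. Writing $e_i^t \eqdef w_i^t - \nabla f_i(x^t)$ and $R_t \eqdef \Exp{\sqnorm{x^{t+1}-x^t}}$, the first step is the STORM variance recursion. Expanding the update and using that the fresh-noise increment $\zeta_i^{t+1} \eqdef [\nabla f_i(x^{t+1},\xi_i^{t+1}) - \nabla f_i(x^{t+1})] - (1-\eta)[\nabla f_i(x^t,\xi_i^{t+1}) - \nabla f_i(x^t)]$ is conditionally mean-zero (the unbiasedness noted in Appendix~\ref{sec:appendix_STORM}), I obtain $\Exp{\sqnorm{e_i^{t+1}}} = (1-\eta)^2\Exp{\sqnorm{e_i^t}} + \Exp{\sqnorm{\zeta_i^{t+1}}}$, and then bound $\Exp{\sqnorm{\zeta_i^{t+1}}} \le 2\eta^2\sigma^2 + 2\ell_i^2 R_t$ by splitting $\zeta_i^{t+1}$ into an $\eta$-scaled pure-noise part and a stochastic-gradient-difference part controlled by Assumption~\ref{as:ind_smoothness}. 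The decisive feature is that the movement term carries coefficient $\cO(\ell_i^2)$ with \emph{no} extra $1/\eta$ factor, in contrast with the $\cO(L_i^2/\eta)$ of \eqref{eq:wtP_rec}; after telescoping this leaves a sum whose $R_t$-coefficient is only $\cO(\wt\ell^2/\eta)$ per node and $\cO(\wt\ell^2/(n\eta))$ for the averaged estimator $w^t\eqdef\suminn w_i^t$ (the $1/n$ from independence of per-node noises). This is what ultimately permits a step-size $\gamma\propto\sqrt{\eta}$ rather than the $\gamma\propto\eta$ forced in the momentum case, and hence an optimization term scaling like $\eta^{-1/2}$ instead of $\eta^{-1}$.

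Next I would establish the STORM version of the EF21 compressor recursion (the analog of Lemma~\ref{le:EF21}), bounding $\Exp{\sqnorm{g_i^{t+1}-w_i^{t+1}}} \le (1-\al)\Exp{\sqnorm{w_i^{t+1}-g_i^t}}$ from Definition~\ref{def:contractive_compressor}, then expanding $w_i^{t+1}-g_i^t$ through the STORM update and applying Young's inequality with $\rho=\al/2$ exactly as in Lemma~\ref{le:EF21}. Here both smoothness constants appear: the deterministic $\wL$ through the standard gradient-difference split $\Exp{\sqnorm{\nabla f_i(x^{t+1})-\nabla f_i(x^t)}}\le L_i^2 R_t$, and the individual $\wt\ell$ through the variance-reduced stochastic increment. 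Averaging over $i$, summing over $t$, and combining with the descent inequality \eqref{eq:descent} applied to $\sqnorm{g^t-\nabla f(x^t)}\le 2\suminn\sqnorm{g_i^t-w_i^t}+2\sqnorm{w^t-\nabla f(x^t)}$, I collect all $R_t$-contributions. The three-branch step-size $\gamma=\min\{\al/(8\wL),\sqrt\al/(6\wt\ell),\sqrt{n\eta}/(8\wt\ell)\}$ is then chosen precisely so that the aggregate coefficient of $\tfrac1T\sum_t R_t$ is nonpositive: the $\al/\wL$ branch kills the $\eta$-free $\wL^2/\al^2$ part coming from the deterministic split inside the compressor recursion, the $\sqrt\al/\wt\ell$ branch absorbs the per-node STORM error routed through the EF21 telescoping, and the $\sqrt{n\eta}/\wt\ell$ branch absorbs the averaged STORM error entering the descent directly. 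Dropping the movement terms yields the general bound \eqref{eq:ef21_storm_eta}, with all initial errors gathered into $\Psi_0$.

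The explicit rate then follows by substituting $\gamma$ into $\Psi_0/(\gamma T)$, which produces the $\eta$-free terms $\tfrac{\wL\delta_0}{\al T}$ and $\tfrac{\wt\ell\delta_0}{\sqrt\al T}$ together with the $\eta$-dependent $\tfrac{\wt\ell\delta_0}{\sqrt{n\eta}\,T}$, and then tuning $\eta$ by balancing this last term against each of $\tfrac{\eta^3\sigma^2}{\al^2}$, $\tfrac{\eta^2\sigma^2}{\al}$, and $\tfrac{\eta\sigma^2}{n}$; the three balances produce the exponents $\tfrac27,\tfrac25,\tfrac23$ in the stated $\eta$ and the three middle/last terms of the claimed rate, while the large initial batch $B_{\textnormal{init}}$ absorbs the $\frac{\gamma}{\eta}\Exp{\sqnorm{w^0-\nabla f(x^0)}}$ and $\frac{\gamma\eta}{\al^2}\suminn\Exp{\sqnorm{w_i^0-\nabla f_i(x^0)}}$ parts of $\Psi_0$, exactly as in Theorem~\ref{thm:main-distrib}. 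I expect the main obstacle to be the bookkeeping in the compressor recursion: tracking the correct powers of $\eta$ and $\al$ on the STORM error and movement terms so that, after the EF21 telescoping factor $2/\al$, every $R_t$-coefficient is simultaneously dominated by $1/(2\gamma^2)$ under all three branches of the $\min$—this is what pins down the $\sqrt\al$ and $\sqrt{n\eta}$ scalings, and any slack here would lose the variance-reduction gain that distinguishes this rate from that of \algname{EF21-SGDM}.
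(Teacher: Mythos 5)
Your proposal is correct and follows essentially the same route as the paper's proof: the same error decomposition $\sqnorm{g^t-\nabla f(x^t)}\le 2\suminn\sqnorm{g_i^t-w_i^t}+2\sqnorm{w^t-\nabla f(x^t)}$, the same STORM recursion (the paper's Lemma~\ref{le:key_STORM_recursion}, whose key point is exactly your observation that the movement term carries $\cO(\ell_i^2)$ with no $1/\eta$ factor, enabling $\gamma\propto\sqrt{n\eta}$), the same EF21 compressor recursion with Young's parameter $\rho=\al/2$ (Lemma~\ref{le:EF21-storm}), and the same three-branch step-size and $\eta$-balancing that yields the exponents $\nicefrac{2}{7},\nicefrac{2}{5},\nicefrac{2}{3}$. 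Your mean-zero increment $\zeta_i^{t+1}$ is just the paper's $\eta\cV_i^{t+1}+(1-\eta)\cW_i^t$ written as a single term, so the argument is the same up to cosmetic reorganization.
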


\begin{corollary}\label{cor:ef21_storm}
	Under the setting of Theorem~\ref{thm:ef21_storm}.  we have $\Exp{\norm{\nabla f(\hat{x}^T) }} \leq \varepsilon$ after $T = \cO\rb{ \fr{\wt \ell \delta_0 }{\al \varepsilon^2} + \fr{  \wt \ell \delta_0 \sigma^{\nicefrac{1}{3}} }{\alpha^{\nicefrac{1}{3}} \sqrt{n}  \varepsilon^{\nicefrac{7}{3}} } + \fr{\wt \ell \delta_0 \sigma^{\nfr{1}{2}}  }{  \alpha^{\nfr{1}{4}} \sqrt{n} \varepsilon^{\nfr{5}{2}}}  +  \fr{\wt \ell \delta_0 \sigma  }{n \varepsilon^3} }$ iterations.
\end{corollary}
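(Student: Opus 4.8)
The plan is to obtain the iteration complexity as a direct consequence of the rate bound already established in Theorem~\ref{thm:ef21_storm}; no new control of the algorithm's dynamics is needed. First I would convert the second-moment guarantee into a first-moment one: since $t\mapsto\sqrt{t}$ is concave, Jensen's inequality gives $\Exp{\norm{\nabla f(\hat x^T)}}\le\sqrt{\Exp{\sqnorm{\nabla f(\hat x^T)}}}$, so it suffices to choose $T$ large enough that $\Exp{\sqnorm{\nabla f(\hat x^T)}}\le\varepsilon^2$. Substituting the five-term bound from Theorem~\ref{thm:ef21_storm}, it is then enough to force each additive term to be at most a fixed fraction (say one fifth) of $\varepsilon^2$, solve each inequality for $T$, and take the maximum of the five resulting lower bounds, which up to constants equals their sum.

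The term-by-term inversion is routine algebra. The two $\cO(1/T)$ terms give $T\gtrsim \fr{\wL\delta_0}{\al\varepsilon^2}$ and $T\gtrsim \fr{\wt\ell\delta_0}{\sqrt{\al}\varepsilon^2}$; both are dominated by $\fr{\wt\ell\delta_0}{\al\varepsilon^2}$, using $\wL\le\wt\ell$ for the first and $\al\le\sqrt{\al}$ (valid because $0<\al\le1$) for the second, which recovers the leading corollary term. For the fractional-power terms I would raise each to the reciprocal power: the requirement $\rb{\fr{\wt\ell\delta_0\sigma^{1/3}}{\al^{1/3}\sqrt{n}\,T}}^{6/7}\le\varepsilon^2$ becomes $T\gtrsim \fr{\wt\ell\delta_0\sigma^{1/3}}{\al^{1/3}\sqrt{n}\,\varepsilon^{7/3}}$ after taking the $7/6$ power; likewise the $4/5$-power term yields $T\gtrsim \fr{\wt\ell\delta_0\sigma^{1/2}}{\al^{1/4}\sqrt{n}\,\varepsilon^{5/2}}$ and the $2/3$-power term yields $T\gtrsim \fr{\wt\ell\delta_0\sigma}{n\varepsilon^3}$. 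Collecting these four surviving bounds reproduces exactly the claimed complexity.

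There is no substantive obstacle here: the argument is a mechanical inversion of a rate bound into a complexity. The only points requiring a word of justification are the Jensen step used to pass from $\sqnorm{\cdot}$ to $\norm{\cdot}$ (which is precisely what raises the $\varepsilon$-exponents to $7/3$, $5/2$ and $3$), and the absorption of the first two theorem terms into $\fr{\wt\ell\delta_0}{\al\varepsilon^2}$ via $\wL\le\wt\ell$ and $\al\le\sqrt{\al}$. As is standard for results of this kind, the parameter choices of Theorem~\ref{thm:ef21_storm} presuppose that $T$ is fixed in advance, so one simply reads off the smallest $T$ for which all five terms fall below $\varepsilon^2$.
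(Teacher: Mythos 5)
Your proposal is correct and takes essentially the same route as the paper: Corollary~\ref{cor:ef21_storm} is obtained there directly from Theorem~\ref{thm:ef21_storm} by exactly this mechanical inversion---Jensen's inequality to pass from $\Exp{\sqnorm{\nabla f(\hat x^T)}}$ to $\Exp{\norm{\nabla f(\hat x^T)}}$, solving each of the five terms for $T$, and absorbing the two $\cO(1/T)$ terms into $\fr{\wt\ell\delta_0}{\al\varepsilon^2}$ via $\wt L\le\wt\ell$ and $\al\le\sqrt{\al}$. No gap to report.
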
	
Recently, \citet{Yau_DoCoM_SGT_2022} propose and analyze a DoCoM-SGT algorithm for decentralized optimization with contractive compressor under the above Assumption~\ref{as:ind_smoothness}. When their method is specialized to centralized setting (with mixing constant $\rho = 1$), their total sample complexity becomes $\cO\rb{ \frac{\wt \ell }{\al \varepsilon^2} + \frac{n^{4/5} \sigma^{3/2} }{\alpha^{9/4} \varepsilon^{3/2}} + \frac{\sigma^3}{n \varepsilon^3} } $ (see Table 1 or Theorem 4.1 in \citep{Yau_DoCoM_SGT_2022}). In contrast, the sample complexity given in our Corollary~\ref{cor:ef21_storm} improves the dependence on $\sigma$ in the last term and, moreover, achieves the linear speedup in terms of $n$ for all stochastic terms in the sample complexity. 

\begin{comment}
\begin{corollary}\label{cor:ef21_storm}
	Let Assumptions~\ref{as:main}, \ref{as:BV} and \ref{as:ind_smoothness} hold. Let the update rule of $v_i^t$ in Algorithm~\ref{alg:EF21-M} is replaced by $w_i^t$ (Algorithm~\ref{alg:EF21-STORM} in the Appendix) given in \eqref{eq:storm_est}. Then with appropriate choice of $\gamma$ and $\eta$ (given in Theorem~\ref{thm:ef21_storm}), we have $\Exp{\norm{\nabla f(\hat{x}^T) }} \leq \varepsilon$ after $T = \cO\rb{ \fr{\wt \ell \delta_0 }{\al \varepsilon^2} + \fr{  \wt \ell \delta_0 \sigma^{\nicefrac{1}{3}} }{\alpha^{\nicefrac{1}{3}} \sqrt{n}  \varepsilon^{\nicefrac{7}{3}} } + \fr{\wt \ell \delta_0 \sigma^{\nfr{1}{2}}  }{  \alpha^{\nfr{1}{4}} \sqrt{n} \varepsilon^{\nfr{5}{2}}}  +  \fr{\wt \ell \delta_0 \sigma  }{n \varepsilon^3} }$ iterations.
\end{corollary}	
\end{comment}
\begin{proof}
	Similarly to the proof of Theorem~\ref{thm:main-distrib}, we control the error between $g^t$ and $\nabla f (x^t)$ by decomposing it into two terms
	$$
	\sqnorm{g^t - \nabla f(x^t)} \leq 2 \sqnorm{g^t - w^t} + 2 \sqnorm{w^t - \nabla f(x^t)} \leq 2 \suminn \sqnorm{g_i^t - w_i^t} + 2 \sqnorm{w^t - \nabla f(x^t)} .
	$$
	In the following, we develop a recursive bound for each term above separately.
	
		\textbf{Part I. Controlling  the variance of STORM/MVR estimator for each $w_i^t$ and on average $w^t$.} Recall that by Lemma~\ref{le:key_STORM_recursion}-\eqref{eq:wtP_rec_storm}, we have for each $i = 1, \ldots, n$, and any $0 < \eta \leq 1$ and $t\geq 0$
	\begin{eqnarray}\label{eq:wtPi_rec_storm}
			\Exp{ \sqnorm{w_i^{t+1} - \nabla f_i(x^{t+1})} } \leq (1- \eta) 
		\Exp{ \sqnorm{w_i^{t} - \nabla f_i(x^{t})} }  + 2 \ell_i^2  \Exp{\sqnorm{x^{t} - x^{t+1}}} + 2 \eta^2 \sigma^2 .
	\end{eqnarray}
	Averaging inequalities \eqref{eq:wtPi_rec_storm} over $i=1,\ldots,n$ and denoting $\wt P_t : = \suminn \Exp{ \sqnorm{w_i^{t} - \nabla f_i(x^{t})} } $, $R_t  \eqdef \Exp{\sqnorm{x^{t} - x^{t+1} } } $ we have 
	\begin{eqnarray*}
		\wt P_{t+1}  \leq (1- \eta)   \wt P_t  +  2  \wt \ell^2  R_t + 2 \eta^2 \sigma^2 .
	\end{eqnarray*}
	Summing up the above inequality for $t=0, \ldots, T-1$, we derive 
	\begin{eqnarray}\label{eq:wt_mom_est_avg_dist_storm}
		\frac{1}{T} \sum_{t=0}^{T-1} \wt P_t  \leq  \fr{2 \wt \ell^2 }{\eta } \frac{1}{T} \sum_{t=0}^{T-1}  R_t +  2 \eta \sigma^2 + \frac{1}{\eta T} \wt P_0.
	\end{eqnarray}
	Similarly by Lemma~\ref{le:key_STORM_recursion}-\eqref{eq:P_rec_storm} denoting $ P_t : =  \Exp{ \sqnorm{w^{t} - \nabla f(x^{t})} } $, we have 
	\begin{eqnarray}\label{eq:mom_est_avg_dist_storm}
		\frac{1}{T}\sum_{t=0}^{T-1} P_t  \leq  \fr{2 \wt \ell^2 }{\eta n } \frac{1}{T}\sum_{t=0}^{T-1}  R_t +  \fr{2 \eta \sigma^2}{n}  + \fr{1}{\eta T } P_0 .
	\end{eqnarray}

	\textbf{Part II. Controlling  the variance of contractive compressor and STORM/MVR estimator.} By Lemma~\ref{le:EF21-storm} we have for each $i = 1, \ldots, n$, and any $0 < \eta \leq 1$ and $t\geq 0$
	\begin{eqnarray}\label{eq:wtVi_rec_storm}
		\Exp{ \sqnorm{ g_i^{t+1} - w_i^{t+1} }}   &\leq& \rb{ 1-\fr{\al}{2} } \Exp{ \sqnorm{ g_i^{t} - w_i^{t} }}    +  \frac{4 \eta^2}{\alpha} \Exp{\sqnorm{w_i^{t} - \nabla f_i(x^{t})} } \notag \\
	&& \qquad + \rb{ \frac{4 L_i^2}{\alpha} + \ell_i^2} \Exp{ \sqnorm{ x^{t+1} -x^{t} } }  + 2 \eta^2 \sigma^2 .
	\end{eqnarray}
	
	Averaging inequalities \eqref{eq:wtVi_rec_storm} over $i=1,\ldots,n$,  denoting $ \wt V_t  \eqdef \suminn  \Exp{ \sqnorm{g_i^t - w_i^t} }$, and summing up the resulting inequality for $t=0, \ldots, T-1$, we obtain 
	\begin{eqnarray}\label{eq:ef21_mom_est_avg_dist_storm}
		\frac{1}{T} \sum_{t=0}^{T-1} \wt V_t  &\leq&  \fr{8 \eta^2 }{\al^2} \frac{1}{T} \sum_{t=0}^{T-1} \wt P_t +   \rb{ \frac{8 \wL^2}{\alpha^2} + \fr{2\wt{\ell}^2}{\al}  } \frac{1}{T} \sum_{t=0}^{T-1} R_t +   \fr{2\eta^2 \sigma^2 }{\al} \notag \\
		&\leq&    \rb{ \frac{8 \wL^2}{\alpha^2} + \fr{2\wt{\ell}^2}{\al} +  \fr{ 16 \eta \wt \ell^2 }{ \alpha^2   } } \frac{1}{T} \sum_{t=0}^{T-1} R_t  \notag \\ 
		&& \qquad +  \frac{16  \eta^3 \sigma^2}{\alpha^2}  +   \fr{2\eta^2 \sigma^2 }{\al} + \frac{8 \eta }{\alpha^2 T} \wt P_0  .
	\end{eqnarray}
	
		\textbf{Part III. Combining steps I and II with descent lemma.}	
	By smoothness (Assumption~\ref{as:main}) of $f(\cdot)$ it follows from Lemma~\ref{le:descent} that for any $\gamma \leq 1/(2 L) $ we have 
	\begin{eqnarray}\label{eq:descent_ef21_strom}
		f(x^{t+1}) &\leq& f(x^{t}) - \fr{\g}{2} \sqnorm{\nabla f(x^t)}  - \frac{1}{4\gamma} \sqnorm{x^{t+1} - x^t } + \frac{\gamma}{2} \sqnorm{g^t - \nabla f(x^t) } \\
		&\leq& f(x^{t}) - \fr{\g}{2} \sqnorm{\nabla f(x^t)}  -  \frac{1}{4\gamma} \sqnorm{x^{t+1} - x^t } + \gamma \suminn\sqnorm{g_i^t - w_i^t } + \gamma \sqnorm{w^t - \nabla f(x^t) }  . \notag
	\end{eqnarray}
	Subtracting $f^*$ from both sides of \eqref{eq:descent_ef21_strom}, taking expectation and defining $\delta_t  \eqdef \Exp{f(x^t) - f^*}$, we derive
	\begin{eqnarray}
		\Exp{\sqnorm{\nabla f(\hat x^T)}} &=& \frac{1}{T} \sum_{t=0}^{T-1} \Exp{\sqnorm{\nabla f(x^t)}} \notag \\
		&\leq&  \fr{2 \delta_0 }{\gamma T} + 2 \frac{1}{T}\sum_{t=0}^{T-1} \wt V_t +  2 \frac{1}{T} \sum_{t=0}^{T-1} P_t  - \frac{1 }{2\gamma^2} \frac{1}{T} \sum_{t=0}^{T-1} R_t \notag  \\
		&\overset{(i)}{\leq}&  \fr{2 \delta_0 }{\gamma T} +  \rb{ \frac{16 \wL^2}{\alpha^2} + \fr{4\wt{\ell}^2}{\al} +  \fr{ 32 \eta \wt \ell^2 }{ \alpha^2   } } \frac{1}{T} \sum_{t=0}^{T-1} R_t  +  2 \frac{1}{T} \sum_{t=0}^{T-1} P_t  - \frac{1 }{2\gamma^2} \frac{1}{T} \sum_{t=0}^{T-1} R_t \notag  \\
		&&\qquad  + \frac{32  \eta^3 \sigma^2}{\alpha^2}  +   \fr{4\eta^2 \sigma^2 }{\al} + \frac{16 \eta }{\alpha^2 T} \wt P_0  \notag \\
		&\overset{(ii)}{\leq}&  \fr{2 \delta_0 }{\gamma T} +  \rb{ \frac{16 \wL^2}{\alpha^2} + \fr{4\wt{\ell}^2}{\al} +  \fr{ 32 \eta \wt \ell^2 }{ \alpha^2   } + \fr{4 \wt \ell^2 }{\eta n } - \frac{1 }{2\gamma^2}  } \frac{1}{T} \sum_{t=0}^{T-1} R_t  \notag  \\
		&&\qquad  + \frac{32  \eta^3 \sigma^2}{\alpha^2}  +   \fr{4\eta^2 \sigma^2 }{\al}  +  \fr{4 \eta \sigma^2}{n} + \frac{16 \eta }{\alpha^2 T} \wt P_0  + \fr{2}{\eta T } P_0    \notag \\
		& \leq & \fr{2 \delta_0 }{\gamma T} +   \frac{32  \eta^3 \sigma^2}{\alpha^2}  +   \fr{4\eta^2 \sigma^2 }{\al}  +  \fr{4 \eta \sigma^2}{n} + \frac{16 \eta }{\alpha^2 T} \wt P_0  + \fr{2}{\eta T } P_0  ,  \notag 
	\end{eqnarray}
		where in $(i)$ we apply \eqref{eq:ef21_mom_est_avg_dist_storm}, in $(ii)$ we use \eqref{eq:mom_est_avg_dist_storm}, and the last step follows by assumption on the step-size, which proves \eqref{eq:ef21_storm_eta}. 
	
	%Let Algorithm~\ref{alg:EF21-STORM} be initialized with $g_i^0 = \nabla f_i (x^0)$, $w_i^t = \nabla f_i(x^0)$. Then the choice of step-sizes implies
	
	We now find the particular values of parameters. Using $w_i^0 = \frac{1}{B_{\textnormal{init}}} \sum_{j=1}^{B_{\textnormal{init}}} \nabla f_{i}(x^0, \xi_{i, j}^{0})$ for all $i = 1, \ldots,  n$, we have
	\begin{align*}
		P_0 = \Exp{\norm{w^0 - \nabla f(x^0)}^2} \leq \frac{\sigma^2}{n B_{\textnormal{init}}} \textnormal{ and } \widetilde{P}_0 = \frac{1}{n} \sum_{i=1}^n \Exp{\norm{w^0_i - \nabla f_i(x^0)}^2} \leq \frac{\sigma^2}{B_{\textnormal{init}}}.
	\end{align*}
	We can substitute the choice of $\gamma$ and obtain
\begin{eqnarray*}
	\Exp{\sqnorm{\nabla f(\hat x^T)}}  & = &  
	\cO\rb{ \fr{\delta_0}{\gamma T }  +  \fr{  \eta^3  \sigma^2 }{\al^2}     +    \fr{\eta^2 \sigma^2 }{\al}     + \fr{ \eta \sigma^2}{n} + \frac{\sigma^2}{\eta n B_{\textnormal{init}} T} + \frac{\eta \sigma^2}{\alpha^2 B_{\textnormal{init}} T} } \\
	& = &  
	\cO\rb{  \fr{ \wL \delta_0}{ \al  T } + \fr{ \wt{\ell} \delta_0}{ \sqrt{\al}  T } + \fr{ \wt \ell \delta_0}{\sqrt{n \eta }  T }  +  \fr{  \eta^3  \sigma^2 }{\al^2}     +    \fr{\eta^2 \sigma^2 }{\al}     + \fr{ \eta \sigma^2}{n}  + \frac{\sigma^2}{\eta n B_{\textnormal{init}} T} + \frac{\eta \sigma^2}{\alpha^2 B_{\textnormal{init}} T} } . 
\end{eqnarray*}

	Since $B_{\textnormal{init}} \geq \frac{\sigma^2}{L \delta_0 n},$ we have
	\begin{eqnarray*}
		\Exp{  \sqnorm{\nabla f(\hat x^T) }  } &= & \cO\rb{  \fr{ \wL \delta_0}{ \al  T } + \fr{ \wt{\ell} \delta_0}{ \sqrt{\al}  T } + \fr{ \wt \ell \delta_0}{\sqrt{n \eta }  T }  +  \fr{  \eta^3  \sigma^2 }{\al^2}     +    \fr{\eta^2 \sigma^2 }{\al}     + \fr{ \eta \sigma^2}{n}  + \frac{\eta \sigma^2}{\alpha^2 B_{\textnormal{init}} T} } . 
	\end{eqnarray*}
	
	Notice that the choice of the momentum parameter such that $\eta \leq \rb{ \fr{  \wt \ell \delta_0 \al^2 }{\sigma^2 \sqrt{n} T}}^{\nfr{2}{7} }$,  $\eta \leq \rb{ \fr{  \wt \ell \delta_0 \al }{\sigma^2 \sqrt{n} T}}^{\nfr{2}{5} }$, $\eta \leq \rb{ \fr{  \wt \ell \delta_0 \sqrt{n}  }{\sigma^2 T}}^{\nfr{2}{3} } $,  and $\eta \leq \rb{\frac{\wt \ell \delta_0 \alpha^2 B_{\textnormal{init}}}{\sigma^2 \sqrt{n}}}^{\nfr{2}{3} }  $ ensures that $\fr{ \eta^3 \sigma^2 }{\al^2}   \leq\fr{ \wt \ell \delta_0}{\sqrt{n \eta }  T }  $, $\fr{\eta^2 \sigma^2 }{\al} \leq \fr{ \wt \ell \delta_0}{\sqrt{n \eta }  T } $, $\frac{ \eta \sigma^2}{n} \leq \fr{ \wt \ell \delta_0}{\sqrt{n \eta }  T } ,$ and $\frac{\eta \sigma^2}{\alpha^2 B_{\textnormal{init}} T} \leq \fr{ \wt \ell \delta_0}{\sqrt{n \eta }  T }  .$ Therefore, we have
	%\frac{\alpha \sqrt{L \delta_0 B_{\textnormal{init}}}}{\sigma}
	%  we can guarantee that the choice $\eta =  \min\cb{ \eta_0,  \rb{ \fr{ L \delta_0 \al^2 }{\sigma^2 T}}^{\nfr{1}{4} }, \rb{ \fr{ L \delta_0 \al }{\sigma^2 T}}^{\nfr{1}{3} }, \rb{ \fr{ L \delta_0 n  }{\sigma^2 T}}^{\nfr{1}{2} } }$ ensures that 
		\begin{eqnarray*}
		\Exp{\sqnorm{\nabla f(\hat x^T)}}   = 
		\cO\rb{ \fr{ \wL \delta_0}{ \al  T } + \fr{ \wt{\ell} \delta_0}{ \sqrt{\al}  T } + \rb{\fr{  \wt \ell \delta_0 \sigma^{\nicefrac{1}{3}} }{\alpha^{\nicefrac{1}{3}} \sqrt{n}  T } }^{\nfr{6}{7}}  + \rb{\fr{  \wt \ell \delta_0 \sigma^{\nfr{1}{2}} }{\alpha^{\nicefrac{1}{4}} \sqrt{n}  T } }^{\nfr{4}{5}} + \rb{\fr{  \wt \ell \delta_0 \sigma }{ n  T } }^{\nfr{2}{3}}  + \rb{\frac{\wt \ell \delta_0 \sigma  }{ \sqrt{n}}}^{\nfr{2}{3} } \frac{\alpha^{1/3}}{B_{\textnormal{init}}^{1/3} T } }.
	\end{eqnarray*}

	Using $B_{\textnormal{init}} \geq \frac{\alpha n }{T },$ we obtain
	\begin{eqnarray*}
		\Exp{\sqnorm{\nabla f(\hat x^T)}}  & = & 
		\cO\rb{ \fr{ \wL \delta_0}{ \al  T } + \fr{ \wt{\ell} \delta_0}{ \sqrt{\al}  T } + \rb{\fr{  \wt \ell \delta_0 \sigma^{\nicefrac{1}{3}} }{\alpha^{\nicefrac{1}{3}} \sqrt{n}  T } }^{\nfr{6}{7}}  + \rb{\fr{  \wt \ell \delta_0 \sigma^{\nfr{1}{2}} }{\alpha^{\nicefrac{1}{4}} \sqrt{n}  T } }^{\nfr{4}{5}} + \rb{\fr{  \wt \ell \delta_0 \sigma }{ n  T } }^{\nfr{2}{3}} }.
	\end{eqnarray*}

\end{proof}

%\paragraph{Controlling the variance of STORM/MVR estimator. }
\subsection{Controlling the variance of STORM/MVR estimator}
\begin{lemma}\label{le:key_STORM_recursion}
	Let Assumptions~\ref{as:BV} and \ref{as:ind_smoothness} be satisfied, and suppose $0 < \eta \leq 1$. For every $i = 1,\ldots, n$, let the sequence $\cb{w_i^{t}}_{t\geq 0}$ be updated via $w_i^{t+1} =   \nabla f_{i}(x^{t+1}, \xi_{i}^{t+1})  +  (1-\eta) ( w_i^{t} - \nabla f_i(x^{t}, \xi_i^{t+1}) )  $ 
	Define the sequence $w^t  \eqdef \suminn w_i^t$. Then for every $i = 1,\ldots, n$ and $t\geq0$ it holds
	
	\begin{eqnarray}\label{eq:wtP_rec_storm}
		\Exp{ \sqnorm{w_i^{t+1} - \nabla f_i(x^{t+1})} } \leq (1- \eta) 
	  \Exp{ \sqnorm{w_i^{t} - \nabla f_i(x^{t})} }  + 2 \ell_i^2  \Exp{\sqnorm{x^{t} - x^{t+1}}} + 2 \eta^2 \sigma^2 ,
	\end{eqnarray}
	\begin{eqnarray}\label{eq:P_rec_storm}
			\Exp{ \sqnorm{w^{t+1} - \nabla f(x^{t+1})} } \leq (1- \eta) 
		\Exp{ \sqnorm{w^{t} - \nabla f(x^{t})} }  +\frac{ 2 \wt \ell^2 }{n} \Exp{\sqnorm{x^{t} - x^{t+1}}} + \frac{2 \eta^2 \sigma^2}{n} .
	\end{eqnarray}
\end{lemma}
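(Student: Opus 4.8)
The plan is to treat this as a standard STORM/MVR variance recursion, exploiting the fact that the stochastic part of the update is conditionally mean-zero. First I would introduce the error $e_i^{t+1} \eqdef w_i^{t+1} - \nabla f_i(x^{t+1})$ and, using the update rule together with the splitting $w_i^t - \nabla f_i(x^t,\xi_i^{t+1}) = (w_i^t - \nabla f_i(x^t)) + (\nabla f_i(x^t) - \nabla f_i(x^t,\xi_i^{t+1}))$, rewrite it as $e_i^{t+1} = (1-\eta) e_i^t + B_i$, where $B_i \eqdef [\nabla f_i(x^{t+1},\xi_i^{t+1}) - \nabla f_i(x^{t+1})] - (1-\eta)[\nabla f_i(x^t,\xi_i^{t+1}) - \nabla f_i(x^t)]$. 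The key observation is that, conditionally on the history up to time $t$ (which fixes $x^{t+1} = x^t - \gamma g^t$, since $g^t$ depends only on the noise up to time $t$), we have $\Expu{\xi_i^{t+1}}{B_i} = 0$ by unbiasedness of the stochastic gradients, while $(1-\eta)e_i^t$ is deterministic.

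Then I would use the resulting orthogonality to split the conditional second moment as $\Expu{\xi_i^{t+1}}{\sqnorm{e_i^{t+1}}} = (1-\eta)^2 \sqnorm{e_i^t} + \Expu{\xi_i^{t+1}}{\sqnorm{B_i}}$, the cross term vanishing. It then remains to bound $\Exp{\sqnorm{B_i}}$. Since $B_i = Y_i - \Exp{Y_i}$ for $Y_i \eqdef \nabla f_i(x^{t+1},\xi_i^{t+1}) - (1-\eta)\nabla f_i(x^t,\xi_i^{t+1})$, its second moment is at most $\Exp{\sqnorm{Y_i - c}}$ for any deterministic $c$; choosing $c = \eta\,\nabla f_i(x^{t+1})$ and applying $\sqnorm{a+b}\le 2\sqnorm{a}+2\sqnorm{b}$ yields $\Exp{\sqnorm{B_i}} \le 2\eta^2 \Exp{\sqnorm{\nabla f_i(x^{t+1},\xi_i^{t+1})-\nabla f_i(x^{t+1})}} + 2(1-\eta)^2 \Exp{\sqnorm{\nabla f_i(x^{t+1},\xi_i^{t+1})-\nabla f_i(x^t,\xi_i^{t+1})}}$. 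The first term is at most $2\eta^2\sigma^2$ by Assumption~\ref{as:BV}, and the second is at most $2\ell_i^2 \Exp{\sqnorm{x^{t+1}-x^t}}$ by individual smoothness (Assumption~\ref{as:ind_smoothness}) together with $(1-\eta)^2\le 1$. Substituting and using $(1-\eta)^2 \le (1-\eta)$ produces exactly \eqref{eq:wtP_rec_storm}.

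For the averaged recursion \eqref{eq:P_rec_storm}, I would observe that averaging the node updates gives $w^{t+1} = \nabla f(x^{t+1},\xi^{t+1}) + (1-\eta)(w^t - \nabla f(x^t,\xi^{t+1}))$ with $\nabla f(x^t,\xi^t)\eqdef \suminn \nabla f_i(x^t,\xi_i^t)$, so that $e^{t+1} \eqdef w^{t+1}-\nabla f(x^{t+1}) = (1-\eta)e^t + \suminn B_i$. Because the noises $\xi_i^{t+1}$ are independent across $i$ and each $B_i$ is conditionally mean-zero, the cross terms vanish and $\Exp{\sqnorm{\suminn B_i}} = \frac{1}{n^2}\sum_{i=1}^n \Exp{\sqnorm{B_i}}$; plugging in the per-node bound and using $\frac{1}{n^2}\sum_{i=1}^n \ell_i^2 = \frac{\wt{\ell}^2}{n}$ delivers the stated $\frac{2\wt{\ell}^2}{n}$ and $\frac{2\eta^2\sigma^2}{n}$ factors, the argument being otherwise a verbatim repetition of the single-node computation.

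There is no serious obstacle here; the only points requiring care are (i) correctly identifying that $B_i$ is conditionally mean-zero, so that the cross term drops and the contraction factor is $(1-\eta)^2$ rather than $1$, and (ii) the variance-reduction split that routes one piece to the bounded-variance bound and the other to the individual-smoothness bound, which is precisely where Assumption~\ref{as:ind_smoothness} (rather than mere smoothness of the deterministic $f_i$) is essential. The $1/n$ improvement in the averaged bound is a direct consequence of cross-node independence and contributes the asymptotic linear speedup.
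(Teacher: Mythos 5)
Your proposal is correct and follows essentially the same route as the paper's proof: your $B_i$ is exactly the paper's $\eta\,\cV_i^{t+1} + (1-\eta)\,\cW_i^t$, and both arguments use conditional unbiasedness to kill the cross term (giving the $(1-\eta)^2$ contraction), a Young/variance split routing one piece to Assumption~\ref{as:BV} and the other to Assumption~\ref{as:ind_smoothness}, and cross-node independence to obtain the $\nicefrac{1}{n}$ factors in \eqref{eq:P_rec_storm}. The only difference is cosmetic ordering — you minimize the variance over the reference point $c$ before applying Young's inequality, while the paper applies Young's first and then bounds $\Exp{\sqnorm{\cW_i^t}}$ by the second moment — and the resulting constants coincide.
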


\begin{proof}
	For each $t = 0, \ldots, T-1$, define a random vector $\xi^{t}  \eqdef (\xi_1^{t}, \ldots, \xi_n^{t})$ and denote by $\nabla f (x^{t}, \xi^{t})  \eqdef \suminn \nabla f_{i} (x^{t}, \xi_i^{t})$. Note that the entries of the random vector $\xi^{t}$ are independent and $\Expu{\xi^{t}}{ \nabla f (x^{t}, \xi^{t}) } = \nabla f (x^{t})$, then we have 
	$$w^{t+1} =  \nabla f (x^{t+1}, \xi^{t+1} ) + (1 - \eta ) \rb{  w^{t} - \nabla f (x^{t}, \xi^{t+1} ) } , $$
	where $w^t = \suminn w_i^t$ is an auxiliary sequence. 
	
	We define
	$$
	\cV_i^t \eqdef  \nabla f_{i}(x^{t}, \xi_i^{t}) - \nabla f_i(x^{t}) , \qquad \cV^t \eqdef  \suminn \cV_i^t,
	$$  
	$$
	\cW_i^t \eqdef  \nabla f_i(x^t)  - \nabla f_{i}(x^t, \xi_i^{t+1}) +  \nabla f_{i}(x^{t+1}, \xi_i^{t+1}) - \nabla f_i(x^{t+1}) , \qquad \cW^t \eqdef  \suminn \cW_i^t .
	$$
	Then by Assumptions~\ref{as:BV}, we have 
	\begin{equation}\label{eq:storm_unbiased_V_W}
	\Exp{ \cV_i^t } = \Exp{ \cW_i^t  } =\Exp{ \cV^t  } = \Exp{ \cW^t  } = 0, 
	\end{equation}
	\begin{equation}\label{eq:storm_variance}
	\Exp{ \sqnorm{\cV_i^t}  } \leq {\sigma^2},  \qquad \Exp{ \sqnorm{\cV^t}  } \leq \fr{\sigma^2}{n}.
	\end{equation}
	Furthermore, we can derive
	\begin{eqnarray*}
	\Exp{ \sqnorm{\cW^t} } & = & \Exp{ \sqnorm{ \suminn \cW_i^t }} \\
	&=&   \frac{1}{n^2} \Exp{ \sqnorm{ \sumin \cW_i^t } } \\
	&=&   \frac{1}{n^2} \sumin \Exp{ \sqnorm{ \cW_i^t } } + \frac{1}{n^2} \sum_{i\neq j} \Exp{ \langle \cW_i^t, \cW_j^t \rangle   } \\
		&\overset{(i)}{=}&   \frac{1}{n^2} \sumin \Exp{ \sqnorm{ \cW_i^t } } + \frac{1}{n^2} \sum_{i\neq j} \langle \Exp{ \cW_i^t}, \Exp{ \cW_j^t}  \rangle    \\
	&=&   \frac{1}{n^2} \sumin \Exp{ \sqnorm{ \cW_i^t } }  \\
	&\leq&   \frac{1}{n^2} \sumin \Exp{ \sqnorm{ \nabla f_{i}(x^{t+1}, \xi_i^{t+1})  -  \nabla f_{i}(x^t, \xi_i^{t+1}) } }  \\
	&\leq&   \frac{1}{n^2} \sumin \ell_i^2 \Exp{ \sqnorm{ x^{t+1} - x^t } }   = \frac{\wt \ell^2}{n} \Exp{ \sqnorm{ x^{t+1} - x^t } } ,
	\end{eqnarray*}
where $(i)$ holds by the conditional independence of $\cW_i^t$ and  $\cW_j^t$, and the last inequality follows by the individual smoothness of stochastic functions (Assumption~\ref{as:ind_smoothness}). Therefore, we have
\begin{equation}\label{eq:storm_W_variance}
\Exp{ \sqnorm{\cW_i^t} } \leq {\ell_i^2} \Exp{\sqnorm{ x^{t+1} - x^t } } ,  \qquad \Exp{ \sqnorm{\cW^t} } \leq \fr{\wt \ell^2}{n} \Exp{ \sqnorm{ x^{t+1} - x^t } } ,
\end{equation}
where the first inequality is obtained by using a similar derivation.

	By the update rule for $w^t$, we can also derive 
	\begin{eqnarray*}
		w^{t+1} - \nabla f(x^{t+1})  &= & (1 - \eta) \rb{ w^t - \nabla f(x^t, \xi^{t+1}) } +  \rb{ \nabla f(x^{t+1}, \xi^{t+1}) - \nabla f(x^{t+1}) } \notag \\
		&= & (1 - \eta) \rb{ w^t - \nabla f(x^t) } + \eta  \rb{ \nabla f(x^{t+1}, \xi^{t+1} ) - \nabla f(x^{t+1}) } \notag \\
		&& \quad + (1 - \eta) \rb{  \rb{ \nabla f(x^t)  - \nabla f(x^t, \xi^{t+1}) +  \nabla f(x^{t+1}, \xi^{t+1}) - \nabla f(x^{t+1}) } } \notag \\
		&= & (1 - \eta) \rb{ w^t - \nabla f(x^t) } + \eta \cV^{t+1} + (1 - \eta) \cW^t  .
	\end{eqnarray*}
	
	Therefore, we have
	\begin{eqnarray*}
		\Exp{\sqnorm{ w^{t+1} - \nabla f(x^{t+1}) } } & \leq & \Exp{\Expu{\xi^{t+1}}{\sqnorm{ (1 - \eta) \rb{ w^t - \nabla f(x^t) } + \eta \cV_{t+1} + (1 - \eta) \cW_t }  } }  \notag \\
		& = &  (1 - \eta)^2 \Exp{  \sqnorm{ w^t - \nabla f(x^t) } }  + \Exp{ \sqnorm{ \eta \cV^{t+1} + (1 - \eta) \cW^t } } \notag \\
		& \leq &  (1 - \eta) \sqnorm{ w^t - \nabla f(x^t) } + 2 \eta^2 \Exp{ \sqnorm{  \cV^{t+1} } }  + 2 \Exp{ \sqnorm{ \cW^t } } \notag \\
		& \leq &  (1 - \eta) \Exp{ \sqnorm{ w^t - \nabla f(x^t) } } +\fr{ 2 \sigma^2 \eta^2}{n}  + \fr{ 2 \wt\ell^2}{n}  \Exp{\sqnorm{x^{t+1} - x^t}}, 
	\end{eqnarray*}
where the last inequality holds by \eqref{eq:storm_variance} and \eqref{eq:storm_W_variance}.  
	Similarly for each $i = 1, \ldots, n$, we have
	\begin{eqnarray}\label{le:storm_vec_recursion_i}
		w_i^{t+1} - \nabla f_i(x^{t+1})  &= &  (1 - \eta) \rb{ w_i^t - \nabla f_i(x^t) } + \eta \cV_i^{t+1} + (1 - \eta) \cW_i^t  .
	\end{eqnarray}
	Thus,
	\begin{eqnarray*}
		\Exp{\sqnorm{ w_i^{t+1} - \nabla f_i(x^{t+1}) } } & \leq &   (1 - \eta) \Exp{ \sqnorm{ w_i^t - \nabla f_i(x^t) } } + 2 \sigma^2 \eta^2   + 2 \ell_i^2   R_t. 
	\end{eqnarray*}
\end{proof}	

%\paragraph{Controlling  the variance of contractive compression and STORM/MVR estimator.}
\subsection{Controlling  the variance of contractive compression and STORM/MVR estimator}
\begin{lemma}\label{le:EF21-storm} 
Let Assumptions~\ref{as:main}, \ref{as:BV} and \ref{as:ind_smoothness} be satisfied, and suppose $0 < \eta \leq 1$. For every $i = 1,\ldots, n$, let the sequences $\cb{w_i^{t}}_{t\geq 0}$ and $\cb{g_i^t}_{t\geq 0}$ be updated via 
\begin{eqnarray}
	w_i^{t+1}& =&   \nabla f_{i}(x^{t+1}, \xi_{i}^{t+1})  +  (1-\eta) ( w_i^{t} - \nabla f_i(x^{t}, \xi_i^{t+1}) )  , \notag \\
	g_i^{t+1} &=&  g_i^{t} +  \cC\rb{  w_i^{t+1} - g_i^{t}  } . \notag  
\end{eqnarray}
 Then for every $i = 1,\ldots, n$ and $t\geq0$ it holds
	\begin{eqnarray}\label{eq:rec_ef21_storm_avg} 
		\Exp{ \sqnorm{ g_i^{t+1} - w_i^{t+1} }}   &\leq& \rb{ 1-\fr{\al}{2} } \Exp{ \sqnorm{ g_i^{t} - w_i^{t} }}    +  \frac{4 \eta^2}{\alpha} \Exp{\sqnorm{w_i^{t} - \nabla f_i(x^{t})} } \notag \\
		&& \qquad + \rb{ \frac{4 L_i^2}{\alpha} + \ell_i^2} \Exp{ \sqnorm{ x^{t+1} -x^{t} } }  + 2 \eta^2 \sigma^2 .
	\end{eqnarray}
\end{lemma}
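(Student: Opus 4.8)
The plan is to mimic the structure of the proof of Lemma~\ref{le:EF21} (the plain-momentum contraction lemma), replacing the momentum update by the STORM/MVR update, and to control the conditional variance of the new estimator using \emph{both} the bounded-variance Assumption~\ref{as:BV} and the individual-smoothness Assumption~\ref{as:ind_smoothness}. First I would expand the compressor step: since $g_i^{t+1} - w_i^{t+1} = \cC(w_i^{t+1} - g_i^t) - (w_i^{t+1} - g_i^t)$, the contractive property (Definition~\ref{def:contractive_compressor}), applied conditionally on the compression randomness, gives $\Exp{\sqnorm{g_i^{t+1} - w_i^{t+1}}} \le (1-\al)\Exp{\sqnorm{w_i^{t+1} - g_i^t}}$.

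Next I would isolate the conditional mean and the conditional noise of $w_i^{t+1} - g_i^t$ given the history $\mathcal{F}_t$, in which $x^{t+1}$ is measurable and the only fresh randomness is $\xi_i^{t+1}$. Writing $w_i^{t+1} - g_i^t = M_t + N_t$ with conditional mean $M_t \eqdef (w_i^t - g_i^t) + (\nabla f_i(x^{t+1}) - \nabla f_i(x^t)) - \eta(w_i^t - \nabla f_i(x^t))$ and zero-mean part $N_t \eqdef \eta\,\cV_i^{t+1} + (1-\eta)\,\cW_i^t$ (with $\cV_i^{t+1}, \cW_i^t$ exactly as defined in the proof of Lemma~\ref{le:key_STORM_recursion}), orthogonality yields $\Exp{\sqnorm{w_i^{t+1} - g_i^t}} = \Exp{\sqnorm{M_t}} + \Exp{\sqnorm{N_t}}$. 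The noise term is then bounded by $\Exp{\sqnorm{N_t}} \le 2\eta^2\Exp{\sqnorm{\cV_i^{t+1}}} + 2\Exp{\sqnorm{\cW_i^t}} \le 2\eta^2\sigma^2 + 2\ell_i^2\Exp{\sqnorm{x^{t+1}-x^t}}$ using the variance bounds \eqref{eq:storm_variance} and \eqref{eq:storm_W_variance}; this is the only place individual smoothness enters, and it is precisely the term absent from Lemma~\ref{le:EF21}.

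For the mean term I would apply Young's inequality with a free parameter $\rho>0$ to peel off the previous compression error, $\sqnorm{M_t} \le (1+\rho)\sqnorm{w_i^t - g_i^t} + (1+\rho^{-1})\sqnorm{(\nabla f_i(x^{t+1}) - \nabla f_i(x^t)) - \eta(w_i^t - \nabla f_i(x^t))}$, then split the second term once more and bound $\sqnorm{\nabla f_i(x^{t+1}) - \nabla f_i(x^t)} \le L_i^2\sqnorm{x^{t+1}-x^t}$ via Assumption~\ref{as:main}. Introducing $\theta \eqdef 1-(1-\al)(1+\rho)$ and $\beta \eqdef (1-\al)(1+\rho^{-1})$ and choosing $\rho = \al/2$ gives $1-\theta \le 1-\al/2$ and $\beta \le 2/\al$, exactly as in Lemma~\ref{le:EF21}; this converts the $(1-\al)(1+\rho^{-1})$ factor into the stated coefficient $\tfrac{4\eta^2}{\al}$ on $\sqnorm{w_i^t - \nabla f_i(x^t)}$ and $\tfrac{4L_i^2}{\al}$ on $\sqnorm{x^{t+1}-x^t}$. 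Collecting the mean and noise contributions then yields \eqref{eq:rec_ef21_storm_avg}.

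The main obstacle is that, unlike the plain-momentum case, the conditional variance of the STORM estimator splits into two qualitatively different pieces: a genuine gradient-noise piece scaling like $\eta^2\sigma^2$, and a finite-difference piece $\nabla f_i(x^{t+1},\xi_i^{t+1}) - \nabla f_i(x^t,\xi_i^{t+1})$ that can only be controlled, through individual smoothness, in terms of $\sqnorm{x^{t+1}-x^t}$. Keeping these two contributions separate while simultaneously extracting the contraction factor $1-\al/2$ through Young's inequality, so that neither the $\sigma^2$ term nor the displacement term inherits an unnecessary blow-up in $1/\al$, is the delicate bookkeeping; the identity $N_t = \eta\,\cV_i^{t+1} + (1-\eta)\,\cW_i^t$ inherited from Lemma~\ref{le:key_STORM_recursion} is what makes this transparent.
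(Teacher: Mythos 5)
Your proposal is correct and follows essentially the same route as the paper's proof: the contractive bound $(1-\al)\Exp{\sqnorm{w_i^{t+1}-g_i^t}}$, the conditional mean/noise split $M_t + N_t$ with $N_t = \eta\,\cV_i^{t+1} + (1-\eta)\,\cW_i^t$ handled by conditional orthogonality and the bounds \eqref{eq:storm_variance}, \eqref{eq:storm_W_variance}, followed by Young's inequality with $\rho=\al/2$ and the same $\theta,\beta$ bookkeeping giving $1-\theta\le 1-\al/2$ and $2\beta\le 4/\al$. The only differences are cosmetic (you drop the $(1-\al)$ and $(1-\eta)^2$ factors on the noise slightly earlier than the paper does), so no gap remains.
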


\begin{proof}
	By the update rule of $w_i^{t}$, $g_i^t$, and definition of $\cV_i^t $, $\cW_i^t$ given in the proof of Lemma~\ref{le:key_STORM_recursion}, we can derive 
	\begin{eqnarray*}
		\Exp{\sqnorm{g_i^{t+1} - w_i^{t+1}} } & =& \Exp{ \sqnorm{ \cC( w_i^{t+1} - g_i^t) - (w_i^{t+1} - g_i^t ) } }  \\
		& \overset{(i)}{\leq} & (1-\al) \Exp{ \sqnorm{  w_i^{t+1} - g_i^t  } }  \\
		& \overset{(ii)}{=} & (1-\al) \Exp{ \sqnorm{  (1 - \eta) \rb{ w_i^t - \nabla f_i(x^t) } + \eta \cV_i^{t+1} + (1 - \eta) \cW_i^t  + \nabla f_i(x^{t+1}) - g_i^t  } }  \\
		& =& (1-\al) \Exp{\Expu{\xi_i^{t+1}}{ \sqnorm{  (1 - \eta) \rb{ w_i^t - \nabla f_i(x^t) } + \eta \cV_i^{t+1} + (1 - \eta) \cW_i^t  + \nabla f_i(x^{t+1}) - g_i^t  } } } \\
		& \overset{(iii)}{=} & (1-\al) \Exp{ \sqnorm{ (1 - \eta) \rb{ w_i^t - \nabla f_i(x^t) }  + \nabla f_i(x^{t+1}) - g_i^t } }  \\
		&& \qquad + (1 - \alpha)\Exp{ \sqnorm{ \eta \cV_i^{t+1} + (1 - \eta) \cW_i^t } } \\
		& = & (1-\al) \Exp{ \sqnorm{ \rb{ w_i^t - g_i^t }  + \rb{ \nabla f_i(x^{t+1}) -  \nabla f_i(x^t) } - \eta \rb{ w_i^t - \nabla f_i(x^t) }  } } \\
		&& \qquad + (1 - \alpha) \Exp{ \sqnorm{ \eta \cV_i^{t+1} + (1 - \eta) \cW_i^t } } \\
		& \overset{(iv)}{\leq}&  ( 1- \alpha) \left(1 + \rho \right) \Exp{ \sqnorm{w_i^t - g_i^t} } \\
		&& \qquad  + ( 1- \alpha) \left(1 + \rho^{-1} \right) \Exp{ \sqnorm{  \rb{ \nabla f_i(x^{t+1}) -  \nabla f_i(x^t) } - \eta \rb{ w_i^t - \nabla f_i(x^t) }  } } \\ 
		&& \qquad + 2 (1 - \alpha) \eta^2 \Exp{\sqnorm{\cV_i^{t+1}}}+  2 (1 - \alpha) (1-\eta)^2 \Exp{ \sqnorm{\cW_i^t}  }\\
		& \overset{(v)}{=}&  (1-\theta) \Exp{ \sqnorm{w_i^t - g_i^t} } \\
		&& \qquad  + \beta \Exp{ \sqnorm{  \rb{ \nabla f_i(x^{t+1}) -  \nabla f_i(x^t) } - \eta \rb{ w_i^t - \nabla f_i(x^t) }  } } \\ 
		&& \qquad + 2 (1 - \alpha) \eta^2 \Exp{\sqnorm{\cV_i^{t+1}}}+ 2(1 - \alpha) (1-\eta)^2 \Exp{ \sqnorm{\cW_i^t}  }\\
		& \overset{(vi)}{\leq}&(1-\theta) \Exp{\sqnorm{w_i^t - g_i^t} } + 2 \beta \eta^2 \Exp{\sqnorm{  w_i^t - \nabla f_i(x^t) }  } \\
		&& \qquad +  2 \beta  \Exp{\sqnorm{  \nabla f_i(x^{t+1}) -  \nabla f_i(x^t) }}  +  2 \ell_i^2 \Exp{\sqnorm{x^{t+1} - x^t}} +  2 \eta^2 \sigma^2 \\
		& \leq & (1-\theta) \Exp{\sqnorm{w_i^t - g_i^t}}  + 2\beta \eta^2 \Exp{\sqnorm{w_i^t - \nabla f_i(x^{t})} } \\
		&& \qquad + \rb{ 2\beta L_i^2 + \ell_i^2} \Exp{ \sqnorm{ x^{t+1} -x^{t} } }  + 2 \eta^2 \sigma^2 ,
	\end{eqnarray*}
where $(i)$ holds by Definition~\ref{def:contractive_compressor}, $(ii)$ follows from \eqref{le:storm_vec_recursion_i}, $(iii)$ holds by unbiasedness of $\cV_i^{t+1}$ and $\cW_i^t$ \eqref{eq:storm_unbiased_V_W}. In $(iv)$ we use Young's inequality twice, in $(v)$ we introduce the notation $\theta  \eqdef 1 - (1-\alpha) (1+\rho)$ and $\beta  \eqdef (1-\alpha) (1+\rho^{-1})$, in $(vi)$ we again use Young's inequality and the bound \eqref{eq:storm_variance} and \eqref{eq:storm_W_variance}. The last step holds by smoothness of $f_i(\cdot)$ (Assumption~\ref{as:main}). The proof is complete by the choice $\rho = \alpha / 2$, which guarantees $1-\theta \leq 1-\alpha/2$, and $2 \beta \leq 4/\alpha$ . 
\end{proof}

\clearpage
\section{Simplified Proof of SGDM: Time Varying Parameters and No Tuning for Momentum Sequence}\label{sec:appendix_mom_simple}
In this section, we give a simplified proof of \algname{SGDM} in the single node setting ($n=1$) without compression ($\alpha = 1$). The following theorem shows that the momentum parameter can be chosen in a parameter agnostic\footnote{ That is, independently of the problem specific parameters} way as $\eta_t = 1 / \sqrt{t+1}$ (or $\eta_t = 1/ \sqrt{T+1}$), instead of being a constant depending on problem parameters as it is suggested in our main Theorem~\ref{thm:main-distrib}. In other words, using \algname{SGDM} with time varying momentum does not introduce any additional tuning of hyper-parameters.  

\begin{theorem}\label{thm:SGDM_simple}
	Let Assumptions~\ref{as:main}, \ref{as:BV} hold. Let $n=1$ and  Algorithm~\ref{alg:EF21-M} run with identity compressor $\cC$, i.e., $\alpha = 1$, and (possibly) time varying momentum $\eta_t \in (0, 1]$ and step-size paramters $\gamma_t = \gamma \eta_t$ with $\gamma \in (0,  \nfr{1}{(3L)} ]$. Let $\hat x^T$ be sampled from the iterates of the algorithm with probabilities $p_t = \eta_t / (\sum_{t=0}^{T-1} \eta_t)$, then
	\begin{eqnarray*}
		\Exp{\sqnorm{\nabla f(\hat x^T)}}  
		&\leq& \frac{2 \Lambda_0 \gamma^{-1} + 2 \sigma^2 \sum_{t=0}^{T-1} \eta_t^2}{\sum_{t=0}^{T-1} \eta_t} ,
	\end{eqnarray*}
where $\Lambda_0  \eqdef f(x^0) - f^* + \gamma \Exp{\sqnorm{v^0 - \nabla f(x^{0})}}$ is the Lyapunov function.  
\end{theorem}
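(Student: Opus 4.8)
The plan is to run a one-step Lyapunov argument with exactly the function announced in the statement, $\Lambda_t \eqdef (f(x^t) - f^*) + \gamma\,\Exp{\sqnorm{v^t - \nabla f(x^t)}}$, whose coefficient in front of the momentum error is the fixed constant $\gamma$ rather than a free tunable parameter. Since $n=1$ and the compressor is the identity ($\alpha=1$), the state vectors collapse, $g^t = v^t$, so the master step becomes $x^{t+1} = x^t - \gamma_t v^t$ with $\gamma_t = \gamma\eta_t$, and the only quantity to track besides the function gap is $e_t \eqdef \Exp{\sqnorm{v^t - \nabla f(x^t)}}$.

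First I would apply the descent lemma (Lemma~\ref{le:descent}) with step $\gamma_t$, crucially retaining the negative curvature term, to obtain
$$\Exp{f(x^{t+1})} \leq \Exp{f(x^t)} - \fr{\gamma_t}{2}\Exp{\sqnorm{\nabla f(x^t)}} - \rb{\fr{1}{2\gamma_t} - \fr{L}{2}}R_t + \fr{\gamma_t}{2}e_t,$$
where $R_t \eqdef \Exp{\sqnorm{x^{t+1}-x^t}}$. Second, I would reuse the momentum recursion of Lemma~\ref{le:key_HB_recursion}, which passes through verbatim for the time-varying parameter (its Young's-inequality split uses only $\eta_t \leq 1$), to get $e_{t+1} \leq (1-\eta_t)e_t + \fr{3L^2}{\eta_t}R_t + \eta_t^2\sigma^2$.

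The decisive step is to assemble $\Lambda_{t+1} - \Lambda_t$ from these two bounds and verify the sign of each accumulated coefficient. The $e_t$ contributions combine to $\fr{\gamma_t}{2} - \gamma\eta_t = -\fr{\gamma\eta_t}{2} \leq 0$, and the $R_t$ contributions combine to $-\fr{1}{2\gamma_t} + \fr{L}{2} + \fr{3\gamma L^2}{\eta_t}$, whose nonpositivity is equivalent (after multiplying by $2\gamma\eta_t$) to $6\gamma^2 L^2 + \gamma\eta_t L \leq 1$; using $\gamma \leq \nfr{1}{(3L)}$ and $\eta_t \leq 1$ this is precisely $\fr{2}{3} + \fr{1}{3} \leq 1$. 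Discarding both nonpositive terms leaves the clean recursion
$$\Lambda_{t+1} - \Lambda_t \leq -\fr{\gamma\eta_t}{2}\Exp{\sqnorm{\nabla f(x^t)}} + \gamma\eta_t^2\sigma^2.$$

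Finally I would telescope over $t=0,\dots,T-1$, use $\Lambda_T \geq 0$ to drop the terminal term, and divide by $\fr{\gamma}{2}\sum_{t}\eta_t$; recognizing the weights $p_t = \eta_t/\sum_s\eta_s$ as exactly the prescribed sampling distribution for $\hat x^T$ converts the weighted average into $\Exp{\sqnorm{\nabla f(\hat x^T)}}$ and produces the claimed bound. I expect the only genuinely delicate point to be the coefficient bookkeeping in the third step: it is precisely the rigidity of the fixed coefficient $\gamma$ (a free constant $c$ would only need $c\in[\nfr{1}{2},1]$, whereas $\gamma$ sits at the boundary $c=1$) that forces both sign conditions to hold simultaneously, and this is exactly what pins down the constant $3$ in the step-size restriction $\gamma \leq \nfr{1}{(3L)}$.
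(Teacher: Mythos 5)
Your proof is correct and follows essentially the same route as the paper: the same Lyapunov function $\Lambda_t = \delta_t + \gamma P_t$, the same two ingredients (Lemma~\ref{le:descent} and the one-step momentum recursion of Lemma~\ref{le:key_HB_recursion}, which indeed goes through verbatim with time-varying $\eta_t$), the same coefficient bookkeeping leading to the condition $6\gamma^2 L^2 + \gamma_t L \leq 1$, and the same weighted telescoping against the sampling distribution $p_t$. Your sign analysis is in fact slightly more careful than the paper's, which contains a typo claiming $1 - \gamma L - 6\gamma^2 L^2 \leq 0$ under $\gamma \leq \nicefrac{1}{(3L)}$, whereas $\geq 0$ is both what actually holds and what is needed to discard the $R_t$ term.
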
	
\begin{proof}
By Lemma~\ref{le:key_HB_recursion} denoting $P_t  \eqdef \Exp{ \sqnorm{v^{t} - \nabla f(x^{t})} } $, $R_t  \eqdef \Exp{\sqnorm{x^t - x^{t+1}}}$,   we have
\begin{eqnarray}\label{eq:P_rec_SGDM_simpl}
	P_{t+1} \leq P_t - \eta_t P_t +  \fr{3 L^2 }{\eta_t } R_t + \eta_t^2 \sigma^2 .
\end{eqnarray}
By descent Lemma~\ref{le:descent}, we have for any $\gamma_t > 0 $
\begin{eqnarray}\label{eq:SGDM_descent_simpl}
	\delta_{t+1} \leq \delta_t - \frac{\gamma_t}{2} \Exp{\sqnorm{\nabla f(x^t)}} -  \fr{1}{2 \gamma_t } \rb{1 - \gamma_t L } R_t + \frac{\gamma_t}{2} P_t ,
\end{eqnarray}
where $\delta_0  \eqdef \Exp{f(x^t) - f^*}$. Define the Lyapunov function as $\Lambda_t = \delta_0 + \gamma P_t$. Then summing up \eqref{eq:SGDM_descent_simpl} with a $\gamma$ multiple of \eqref{eq:P_rec_SGDM_simpl} and noticing that $\gamma_t \leq \gamma$, we get
$$
\Lambda_{t+1} \leq \Lambda_t - \frac{\gamma_t}{2} \Exp{\sqnorm{\nabla f(x^t)}} - \frac{1}{2 \gamma_t}\rb{ 1 -  \gamma L - 6 \gamma^2 L^2} R_t + \gamma \eta_t^2 \sigma^2 .
$$
Since $\gamma \leq 1 / (3 L)$, we have $ 1 -  \gamma L - 6 \gamma^2 L^2\leq 0$, and, therefore, by telescoping we can derive
\begin{eqnarray*}
\Exp{\sqnorm{\nabla f(\hat x^T)}}  &=& \rb{\sum_{t=0}^{T-1} \eta_t }^{-1}\sum_{t=0}^{T-1} \eta_t \Exp{\sqnorm{\nabla f( x^t)}} \\
&\leq& \frac{2 \Lambda_0 \gamma^{-1} + 2 \sigma^2 \sum_{t=0}^{T-1} \eta_t^2}{\sum_{t=0}^{T-1} \eta_t} . 
\end{eqnarray*}
\end{proof}

The above theorem suggests that to ensure convergence, we can select any momentum sequence such that $\sigma^2 \sum_{t=0}^{\infty} \eta_t^2 < \infty$, and $ \sum_{t=0}^{\infty} \eta_t^2 \rightarrow \infty$ for $t\rightarrow\infty$. The parameter $\gamma$, which determines the step-size $\gamma_t = \gamma \eta_t$, should be set to $\gamma = 1 / (3 L)$ (to minimize the upper bound). Let us now consider some special cases. 
\paragraph{Deterministic case.}
If $\sigma = 0$, we can set it to be any constant $\eta_t = \eta \in (0, 1]$ and derive 
$$\Exp{\sqnorm{\nabla f(\hat x^T)}}  \leq \frac{2 \delta_0 }{\gamma \eta T} = \cO\rb{ \frac{L \delta_0 }{ \eta T} }. $$

\paragraph{Stochastic case.}
For $\sigma^2 > 0$, we can select time-varying $\eta_t = \frac{1}{\sqrt{t+1}}$ or constant $\eta_t = \frac{1}{\sqrt{T+1}}$, which gives $\sum_{t=0}^{T-1} \eta_t^2 = \cO\rb{\log(T)}$, and $\sum_{t=0}^{T-1} \eta_t = \Omega\rb{\sqrt{T}}$. Thus
$$
\Exp{\sqnorm{\nabla f(\hat x^T)}}  = \wt \cO\rb{ \frac{L \Lambda_0 + \sigma^2  }{ \sqrt{T}} }. 
$$
Notice that if we set $\eta_t$ as above, we do not need any tuning of momentum parameter. Only tuning of paramter $\gamma$ is required to ensure convergence with optimal dependence on $T$, as in \algname{SGD} without momentum. Of course, this rate is not yet optimal in other parameters, e.g., $\sigma^2$ and $L$. To make it optimal in all problem parameters, we can set $\eta = \max\cb{1, \rb{\frac{L\Lambda_0}{\sigma^2 T}}^{\nfr{1}{2}} }$ similarly to the statement of Theorem~\ref{thm:ef21-sgdm-one-node}.

\clearpage
\section{Revisiting EF14-SGD Analysis under BG and BGS Assumptions}\label{sec:revisiting_EF14}
In this section, we revisit the analysis of the original variant of error feedback (\algname{EF14-SGD}) to showcase the difficulty in avoiding BG/BGS assumptions commonly used in the nonconvex analysis of this variant. In summary, the key reason for BG/BGS assumption is to bound the second term in \eqref{eq:appendix_error_control_EF14_BG} or \eqref{eq:appendix_error_control_EF14_BGS}.  

Recall that \algname{EF14-SGD} has the update rule~\citep{Stich-EF-NIPS2018}
\begin{equation}
	\label{eq:x_update_ef_appendix}
	x^{t+1} = x^t -  g^t, \qquad  g^t = \suminn g_i^t , 
\end{equation}
\begin{align}
	\text{\algname{EF14-SGD}:}\qquad 
	\begin{split}
		e_i^{t+1} &= e_i^{t} +  \gamma \nabla f_i(x^{t}, \xi_i^{t})  -  g_i^t , \\
		g_i^{t+1} &= \cC\rb{  e_i^{t+1}   +  \gamma\nabla f_i(x^{t+1}, \xi_i^{t+1})  } ,
	\end{split} \label{eq:EF14-SGD} 
\end{align}
where $\cb{e_i^t}_{t\geq 0}$ are error/memory sequences with $e_i^0 = 0$ for each $i = 1, \ldots, n$. Let $e^t := \suminn e_i^t$. The proof of this method relies on so called perturbed iterate analysis, for which one defines a "virtual sequence": $\tilde x^t := x^t -  e^t$. Then it is verified by direct substitution that for any $t\geq 0$
$$
\tilde x^{t+1} = \tilde x^t - \gamma \suminn \nabla f_i(x^t, \xi_i^t) . 
$$
If follows from Lemma 9 in \citep{EF_delay_2021} that for any $\gamma \leq \nfr{1}{2L}$ and $t\geq 0$
$$
\Exp{f(\tilde x^{t+1})} \leq \Exp{f(\tilde x^{t})} - \frac{\gamma}{4} \Exp{\sqnorm{\nabla f(x^t)}} + \frac{\gamma L \sigma^2 }{2 n } + \frac{ L^2 }{2} \Exp{\sqnorm{e^t}}.
$$
Telescoping the recursion above and setting $\delta_0 := f(x^0) - f^*$, %and $\hat x^T$ to be sampled uniformly at random from the iterates $\cb{ x^0, \ldots, x^{T-1} }$,  \Exp{\sqnorm{\nabla f(\hat x^T)}}
we have
\begin{eqnarray}\label{eq:appendix_EF14-SGD_descent}
\frac{1}{T} \sum_{t=0}^{T-1}\Exp{\sqnorm{\nabla f( x^t)}} \leq \frac{4 \delta_0}{\gamma T } + \frac{2\gamma L \sigma^2 }{n} + 2  L^2 \frac{1}{T} \sum_{t=0}^{T-1} \Exp{\sqnorm{e^t}} .
\end{eqnarray}

Now it remains to bound efficiently the average error term $\Exp{\sqnorm{e^t}} =  \Exp{\sqnorm{\suminn e_i^t}} .$ By Jensen's inequality, we have
%First, we can reduce this task to bounding each $\Exp{\sqnorm{ e_i^t}}$ individually by using Jensen's inequality
\begin{eqnarray*}%\label{eq:appendix_ef14_jensen}
\Exp{\sqnorm{\suminn e_i^t}} \leq \suminn \Exp{\sqnorm{ e_i^t}} ,
\end{eqnarray*}
and develop a bound for each $\Exp{\sqnorm{ e_i^t}}$ individually. Denote by $z := e_i^t + \gamma \nabla f_i(x^t, \xi_i^t)$, then
\begin{eqnarray}
\Exp{\sqnorm{e_i^{t+1}}} &\leq& \Exp{\sqnorm{ \cC(z) - z }}  \notag \\
&\leq & (1-\alpha ) \Exp{\sqnorm{  e_i^t + \gamma \nabla f_i(x^t, \xi_i^t) }} \notag \\
&\leq & (1-\alpha )\rb{ 1 + \frac{\alpha}{2} } \Exp{\sqnorm{  e_i^t }} + \rb{1+\frac{2}{\alpha}}  \Exp{\sqnorm{ \gamma \nabla f_i(x^t, \xi_i^t) }} \notag  \\
&\leq & \rb{ 1 - \frac{\alpha}{2} } \Exp{\sqnorm{  e_i^t }} + \frac{3 \gamma^2 }{\alpha}  \Exp{\sqnorm{ \nabla f_i(x^t, \xi_i^t) }} , \label{eq:appendix_error_control_EF14_BG}
\end{eqnarray}
where we used Definition~\ref{def:contractive_compressor} and Young's inequality.
\paragraph{BG asssumption.} If we assume bounded (stochastic) gradients (BG), i.e., $\Exp{\sqnorm{\nabla f_i(x, \xi_i)}} \leq G^2$ for all $i = 1,\ldots, n$, then using \eqref{eq:appendix_error_control_EF14_BG} we can derive 
$$
\frac{1}{T} \sum_{t=0}^{T-1} \Exp{\sqnorm{e^t}} \leq \frac{6\gamma^2 G^2}{\alpha^2 } . 
$$
Combining this bound with \eqref{eq:appendix_EF14-SGD_descent}, we have 
$$
\frac{1}{T} \sum_{t=0}^{T-1}\Exp{\sqnorm{\nabla f( x^t)}} \leq \frac{4 \delta_0}{\gamma T } + \frac{2\gamma L \sigma^2 }{n} +  \frac{12 L^2\gamma^2 G^2}{\alpha^2 }  .
$$
The step-size choice $\gamma = \min\cb{\frac{1}{L}, \rb{ \frac{\delta_0 \alpha^2 }{T L^2 \sigma^2 } }^{\nfr{1}{3}} , \rb{ \frac{n \delta_0 }{T L \sigma^2 } }^{\nfr{1}{2}}  }$, allows us to bound the RHS by $\frac{12 \delta_0}{\gamma T }$, and guarantees   
$$
\Exp{\sqnorm{\nabla f( \hat x^T)}}  = \cO \rb{ \frac{L \delta_0 }{T} + \rb{\frac{L \delta_0  G}{\alpha T}}^{\nfr{2}{3}} + \rb{ \frac{L \delta_0 }{n T  } }^{\nfr{1}{2}}  }, 
$$
or, equivalently,  $T = \cO\rb{ \frac{L \delta_0 }{\varepsilon^2} + \frac{L \delta_0  G}{\alpha \varepsilon^3} +  \frac{L \delta_0 }{n \varepsilon^4  } }  $ sample complexity to find a stationary point. This analysis using BG assumption and derived sample complexity is essentially a simplified version of the one by \citet{Koloskova2019DecentralizedDL}.\footnote{Up to a smoothness constant and the fact that \citet{Koloskova2019DecentralizedDL} works in a more general decentralized setting. }  

\paragraph{BGS asssumption.} If we assume bounded gradient similarity (BGS), i.e., $\suminn \Exp{\sqnorm{\nabla f_i(x) - \nabla f(x)}} \leq G^2$, we can slightly modify the derivation in \eqref{eq:appendix_error_control_EF14_BG} as follows 
\begin{eqnarray}
	\Exp{\sqnorm{e_i^{t+1}}} 
	&\leq & (1-\alpha ) \Exp{\sqnorm{  e_i^t + \gamma \nabla f_i(x^t, \xi_i^t) }} \notag \\
	& = & (1-\alpha ) \Exp{\sqnorm{  e_i^t + \gamma \nabla f_i(x^t) }} + (1-\alpha ) \gamma^2 \Exp{\sqnorm{  \nabla f_i(x^t, \xi_i^t) - \nabla f_i(x^t) }} \notag \\
	&\leq & (1-\alpha )\rb{ 1 + \frac{\alpha}{2} } \Exp{\sqnorm{  e_i^t }} + \rb{1+\frac{2}{\alpha}}  \Exp{\sqnorm{ \gamma \nabla f_i(x^t) }} + \gamma^2 \sigma^2 \notag  \\
	&\leq & \rb{ 1 - \frac{\alpha}{2} } \Exp{\sqnorm{  e_i^t }} + \frac{3 \gamma^2 }{\alpha}  \Exp{\sqnorm{ \nabla f_i(x^t ) }} + \gamma^2 \sigma^2   . \label{eq:appendix_error_control_EF14_BGS}
\end{eqnarray}
Averaging the above inequalities over $i = 1, \ldots, n$ and using BGS assumption, i.e., $\suminn \Exp{\sqnorm{\nabla f_i(x) }} \leq  \sqnorm{\nabla f(x) } + G^2 $ , we can derive via averaging over $t = 0, \ldots, T-1$ 
\begin{eqnarray*}
\frac{1}{T} \sum_{t=0}^{T-1} \Exp{\sqnorm{e^t}} 
&\leq&  \frac{6\gamma^2 }{\alpha^2 } \frac{1}{T} \sum_{t=0}^{T-1} \Exp{\sqnorm{\nabla f(x^t)}} + \frac{6\gamma^2 G^2   }{\alpha^2 } + \frac{2\gamma^2 \sigma^2   }{\alpha },
\end{eqnarray*}
 Combining the above inequality with \eqref{eq:appendix_EF14-SGD_descent}, we have 
$$
\rb{1 - \frac{12 L^2 \gamma^2}{\alpha^2}} \frac{1}{T} \sum_{t=0}^{T-1}\Exp{\sqnorm{\nabla f( x^t)}} \leq \frac{4 \delta_0}{\gamma T } + \frac{2\gamma L \sigma^2 }{n}  + \frac{12  \gamma^2 L^2  G^2  }{\alpha^2 }  + \frac{4 \gamma^2 L^2 \sigma^2 }{\alpha}.
$$
By setting $\gamma = \min\cb{ \frac{\alpha}{4 L}, \rb{\frac{n \delta_0 }{L \sigma^2 T}}^{\nfr{1}{2}}, \rb{\frac{\alpha^2 \delta_0 }{L^2 G^2 T}}^{\nfr{1}{3}}, \rb{\frac{\alpha \delta_0 }{L^2 \sigma^2 T}}^{\nfr{1}{3}} }$, we have $\rb{1 - \frac{12 L^2 \gamma^2}{\alpha^2}} \geq \frac{1}{4}$ , and the RHS is at most $\frac{16 \delta_0}{\gamma T }$. Therefore,  
$$
\Exp{\sqnorm{\nabla f( \hat x^T)}}  = \cO \rb{ \frac{L \delta_0 }{\alpha T} + \rb{\frac{L \delta_0  G}{\alpha T}}^{\nfr{2}{3}} + \rb{\frac{L \delta_0  \sigma}{\sqrt{\alpha} T}}^{\nfr{2}{3}} + \rb{ \frac{L \delta_0 }{n T  } }^{\nfr{1}{2}}  } , 
$$
or, equivalently,  $T = \cO\rb{ \frac{L \delta_0 }{\alpha\varepsilon^2} + \frac{L \delta_0  G}{\alpha \varepsilon^3} + \frac{L \delta_0  \sigma}{\sqrt{\alpha} \varepsilon^3}  +  \frac{L \delta_0 }{n \varepsilon^4  } }  $ sample complexity. Notice that in the single node case ($n=1$), we have $G=0$, and by Young's inequality $\rb{\fr{   L \delta_0 \sigma }{\sqrt{\al}  T } }^{\nfr{2}{3}} \leq \frac{1}{3}\frac{L \delta_0}{\alpha T} + \frac{2}{3} \left(\frac{L\delta_0 \sigma^2}{T}\right)^{\nicefrac{1}{2}}$. Therefore, the above rate recovers the one by \citet{EF_delay_2021} in the single node setting. 

\end{document}